\documentclass[11pt,fleqn]{article}
\pdfoutput=1
\usepackage{fullpage}

\setlength{\oddsidemargin}{0pt}
\setlength{\evensidemargin}{0pt}
\setlength{\topmargin}{0pt}
\setlength{\textwidth}{6.5in}
\setlength{\textheight}{8.5in}

\usepackage{titlesec}
\titleformat{\subsubsection}[runin]{\normalfont\bfseries}{\thesubsubsection}{0.7em}{}[.]


\usepackage{amssymb,amsfonts,amsmath,amssymb,mathtools,mathrsfs,sansmath,nicefrac}
\usepackage[mathscr]{euscript}

\usepackage[table]{xcolor}
\usepackage{graphicx}
\usepackage{pgfplots}
\pgfplotsset{compat=1.16}
\usepackage{tikz}
\usepackage{tikz-3dplot}
\usetikzlibrary{decorations.markings,arrows}

\usepackage[ruled,vlined,linesnumbered]{algorithm2e}

\usepackage{mdframed}


\newcommand{\vc}[1]{{\boldsymbol{#1}}}
\newcommand{\mt}[1]{{\boldsymbol{#1}}}
\renewcommand{\cal}[1]{\mathcal{#1}}

\newcommand{\norm}[1]{\lVert#1\rVert}

\newcommand{\abs}[1]{\lvert#1\rvert}
\newcommand{\op}[1]{\operatorname{#1}}


\newcommand{\T}{\mathfrak{T}}   
\newcommand{\Kc}{\cal{K}}     

\newcommand{\FC}{\mathfrak{F}}

\newcommand{\etaone}{\overline{\eta}}
\newcommand{\etamix}{\eta^\#}
\newcommand{\etabnd}{\eta^\&}

\newcommand{\psilow}{\psi_{\rm L}}
\newcommand{\psiup}{\psi_{\rm H}}

\newcommand{\msh}{\mathfrak{W}_1} 
\newcommand{\mshbar}{\mathfrak{W}}

\newcommand{\cc}{\mathtt{c}}


\newcommand{\argmax}{\mathop{\op{argmax}}}
\newcommand{\Argmin}{\mathop{\op{Argmin}}}
\newcommand{\Argmax}{\mathop{\op{Argmax}}}

\newcommand{\dd}{\mathrm{d}}

\newcommand{\indic}{\delta}
\newcommand{\gauge}{\xi}
\newcommand{\rc}{r^\circ}

\newcommand{\Rext}{\overline{\mathbb{R}}}


\usepackage{amsthm,thmtools}
\theoremstyle{plain}
\newtheorem{proposition}{Proposition}
\newtheorem{definition}[proposition]{Definition}
\newtheorem{lemma}[proposition]{Lemma}
\newtheorem{corollary}[proposition]{Corollary} 
\newtheorem{remark}[proposition]{Remark}

\newtheoremstyle{setup}{-\topsep}{}{\normalfont}{}{\bfseries}{.}{.5em}{}
\theoremstyle{setup}
\newmdtheoremenv[backgroundcolor=black!10!white]{setup}{Setup}

\newtheoremstyle{theorem}{-\topsep}{}{\normalfont}{}{\bfseries}{.}{.5em}{}
\theoremstyle{theorem}
\newmdtheoremenv[backgroundcolor=cyan!10!white]{theorem}{Theorem}


\newcommand\blfootnote[1]{
  \begingroup
  \renewcommand\thefootnote{}\footnote{#1}
  \addtocounter{footnote}{-1}
  \endgroup
}

\newcommand{\para}{\medskip$\diamond$ }

\numberwithin{equation}{section}


\usepackage[backref=page]{hyperref}
\hypersetup{
    colorlinks=true,
    linkcolor=blue!50!black,
    filecolor=magenta,      
    urlcolor=cyan,
    citecolor=green!50!black,
}

%
%
%
%
%
%
%
%

%
%
%


\title{Persistent Reductions in Regularized Loss Minimization\\ for Variable Selection}

\author{Amin Jalali\blfootnote{Date: November 2020. Email: \texttt{amjalali16@gmail.com}}}
\date{}


\begin{document}
\maketitle
\pagenumbering{gobble}

\begin{abstract}
In the context of regularized loss minimization with polyhedral gauges, we show that for a broad class of loss functions (possibly non-smooth and non-convex) and under a simple geometric condition on the input data it is possible to efficiently identify a subset of features which are guaranteed to have zero coefficients in all optimal solutions in all problems with loss functions from said class, before any iterative optimization has been performed for the original problem. This procedure is standalone, takes only the data as input, and does not require any calls to the loss function. Therefore, we term this procedure as a persistent reduction for the aforementioned class of regularized loss minimization problems. This reduction can be efficiently implemented via an extreme ray identification subroutine applied to a polyhedral cone formed from the datapoints. We employ an existing output-sensitive algorithm for extreme ray identification which makes our guarantee and algorithm applicable in ultra-high dimensional problems.  
\end{abstract}
\paragraph{Keywords.} Persistent reduction, regularized regression, non-convex optimization, geometry of data, computational geometry, standardization, output-sensitive methods, ultra-high dimensions.

\newpage 

\setcounter{tocdepth}{2}
\tableofcontents

\newpage 

\pagenumbering{arabic}
\section{Introduction}\label{sec:intro}

\paragraph{The Problem Statement.}
In this work, we consider regularized loss minimization problems where the regularization function is a {\em polyhedral} gauge function;  
\begin{align}\label{prob:loss-reg}
	\min_{\vc{\beta}} ~~ f( \mt{X}\vc{\beta}) + \eta\, r(\vc{\beta}), 
\end{align}
where $\mt{X}\in\mathbb{R}^{n\times p}$, $f:\mathbb{R}^n\to \Rext$ is proper and lsc, $\eta>0$, and $r:\mathbb{R}^p\to\Rext_+$ is a level-bounded polyhedral gauge function; i.e., a non-negative positively homogeneous convex function with $r(\vc{0})=0$ where sublevel sets are polytopes. For example, $r$ can be the (weighted) $\ell_1$ norm, ordered weighted $\ell_1$ norms, sum-of-top-$k$ norms, their duals, as well as sum, max, or infimal convolution of these norms, among many more examples. On the other hand, we allow $f$ to be quite general and only require some mild conditions we will present in \autoref{cond:f-star}. For example, all convex functions with a minimizer and all star-convex functions satisfy our condition. We denote the corresponding `reference' point of $f$ (to be defined later; a minimizer in the case of convex functions) by $\vc{y}$.

It is possible to simplify the form of \eqref{prob:loss-reg} for our purposes without any loss in generality. Given $\mt{M}\in\mathbb{R}^{p\times q}$, the convex hull of the columns of $\mt{M}$ is given by $\op{conv}(\mt{M}) = \{ \mt{M}\vc{\beta}:~ \vc{\beta}\in\mathbb{R}_+^q ,~ \vc{1}^T\vc{\beta} = 1 \}$, 
and the corresponding gauge function is given by 
\begin{align}
	r(\vc{\theta})
	= \gauge (\vc{\theta}; \op{conv}(\mt{M}) )
	\coloneqq& \min\{ \lambda\geq 0:~ \vc{\theta} \in \lambda \cdot \op{conv}(\mt{M}) \} \nonumber\\
	=& \min\{ \lambda \geq 0:~ \vc{\theta} = \lambda \mt{M}\vc{\beta} ,~ \vc{1}^T\vc{\beta} = 1,~  \vc{\beta}\in\mathbb{R}_+^q   \} \nonumber\\
	=& \min\{  \vc{1}^T\vc{\beta} :~ \vc{\theta} = \mt{M}\vc{\beta},~ \vc{\beta}\in\mathbb{R}_+^q     \} .
	\label{eq:poly-reg}
\end{align}
Every level-bounded polyhedral gauge function has a representation as in the above for some matrix~$\mt{M}$. 
Therefore, in studying \eqref{prob:loss-reg}, it suffices to understand the following optimization problem, 
\begin{align}\label{prob:loss-reg-lin-pos}
	\min_{\vc{\beta}} ~~ f( \mt{X}\vc{\beta}) + \eta \langle \vc{1},\vc{\beta}\rangle ~~ \op{subject~to} ~~\vc{\beta} \in\mathbb{R}_+^p ,
\end{align}
Observe that by plugging $\mt{X}\mt{M}$ in place of $\mt{X}$ in \eqref{prob:loss-reg-lin-pos} we recover the more general problem in \eqref{prob:loss-reg}, using \eqref{eq:poly-reg}. Therefore, we work with $r(\vc{\beta}) = \vc{1}^T\vc{\beta} + \indic (\vc{\beta}; \mathbb{R}_+^p)$ from now on, where $\indic$ denotes the indicator function for a set. 
\label{rem:generalize}
On the other hand, and starting from \eqref{prob:loss-reg-lin-pos}, all of our results are readily generalizable to when $\langle\vc{1},\vc{\beta}\rangle$ is replaced with $\langle\vc{d},\vc{\beta}\rangle$ for an arbitrary vector $\vc{d}\in\mathbb{R}_+^p$ with $d_i>0$. 
Note that, \eqref{prob:loss-reg-lin-pos} subsumes regularization with the $\ell_1$ norm when $\begin{bmatrix}-\mt{X}&\mt{X}\end{bmatrix}$ is used in place of $\mt{X}$. 
Moreover, further generalizations with matrix-weighted $\ell_1$ norm (structured $\ell_1$ norm) or polyhedral cone constraints are possible. We leave these extensions to the reader. 
Finally, note that if the given gauge function has unbounded level sets we can always re-define $f$, by optimizing over the unbounded directions, to arrive at \eqref{prob:loss-reg-lin-pos}. To simplify our discussions, we have removed this possibility by making a level-boundedness assumption in \eqref{prob:loss-reg}.

\paragraph{The Guarantee.}
Let us provide a high-level description of our main guarantee and algorithm. To simplify the presentation, only in this part, let us absorb $\eta$ into $f$ by dividing the objective in \eqref{prob:loss-reg-lin-pos} by $\eta>0$; hence, without any loss in generality, we can consider problems of the following form; 
\begin{align}\label{prob:loss1F}
	\min_{\vc{\beta}} ~~ \overline{f}( \mt{X}\vc{\beta}) + \langle \vc{1},\vc{\beta}\rangle ~~ \op{subject~to} ~~\vc{\beta} \in\mathbb{R}_+^p .
\end{align} 

Corresponding to any choice of $\mt{X}\in\mathbb{R}^{n\times p}$ and $\vc{y}\in\mathbb{R}^n$, we first specify a broad class of loss functions, namely $\FC(\mt{X},\vc{y})$, whose description encompasses all of our assumptions on $\overline{f}$, or equivalently on $f$ and $\eta$. 
We then show that there exists an efficient procedure (we provide one) that takes $\mt{X}$ and $\vc{y}$ as inputs and, if they satisfy a geometric condition (\autoref{condn:ver-pos}), it outputs a subspace $S(\mt{X},\vc{y})$ which {\em contains all} optimal solutions of the optimization problem in \eqref{prob:loss1F} {\em for all} $\overline{f}\in \FC(\mt{X},\vc{y})$. This subspace could then be used to reduce the feasible set for optimization by augmenting \eqref{prob:loss1F} with a new constraint of the form $\vc{\beta}\in S(\mt{X},\vc{y})$. In other words, {\em for any} choice of $\overline{f}\in \FC(\mt{X},\vc{y})$, the following problem will be equivalent to \eqref{prob:loss1F};
\begin{align}\label{prob:loss1F-S}
	\min_{\vc{\beta}} ~~ \overline{f}( \mt{X}\vc{\beta}) + \langle \vc{1},\vc{\beta}\rangle ~~ \op{subject~to} ~~\vc{\beta} \in\mathbb{R}_+^p ,~ \vc{\beta}\in S(\mt{X},\vc{y}). 
\end{align}
This reduction procedure, is exact, can be executed before any optimization of the original problem which involves $\overline{f}$, and is universal for all loss functions $\overline{f}$ in the class $\FC(\mt{X},\vc{y})$. Therefore, we term such reduction as a {\em persistent reduction} for the corresponding problem class. 

In this work, we are concerned with polyhedral gauge functions which allows for turning the original problem in \eqref{prob:loss-reg} into an equivalent formulation as in \eqref{prob:loss-reg-lin-pos}. Considering the task of identifying the support of optimal solutions for \eqref{prob:loss-reg-lin-pos} for the purpose of {\em variable selection}, the proposed procedure identifies a subspace $S(\mt{X},\vc{y})$ which is aligned with the coordinate axes. In such case, we are in fact identifying a {\em superset} of the support for all optimal solutions of \eqref{prob:loss1F} for all loss functions $\overline{f}\in \FC(\mt{X},\vc{y})$. Therefore, instead of a subspace, we equivalently provide a set $\cal{I}\subseteq \{1,\ldots,p\}$, as the superset of all supports. In that case, the augmented problem is in the following format; 
 \begin{align}\label{prob:loss1F-I}
	\min_{\vc{\beta}} ~~ f( \mt{X}\vc{\beta}) + \eta \langle \vc{1},\vc{\beta}\rangle ~~ \op{subject~to} ~~\vc{\beta} \in\mathbb{R}_+^p ,~ \beta_i = 0 ~~ \forall i\not\in \cal{I}, 
\end{align}
where we switched back to the notation in \eqref{prob:loss-reg-lin-pos} for later reference. 
For much of our presentation we stick to the formulation in \eqref{prob:loss-reg-lin-pos} and plainly state the conditions (instead of gathering all under~$\FC$); while the main theorem, namely \autoref{thm:main-detailed} contains the full description of the class, we also provide theorems about specific loss functions in the class without referring to the whole class; e.g., see \autoref{thm:main} or \autoref{sec:cor}. Nonetheless, we come back to this abstraction in \autoref{def:class}, right after stating our main theorem, where we define $\FC(\mt{X},\vc{y})$ and provide one of its subsets.

\label{parag:3q-beginning}
With this result we now face two important questions on practicality and applicability, namely (i) whether the resulting $S(\mt{X},\vc{y})$ or $\cal{I}$ are small enough for the reduction to worth the overhead; and, (ii) whether the geometric condition on $\mt{X}$ and $\vc{y}$ is natural in practical settings. 
Moreover, it is natural to ask whether the class $\FC(\mt{X},\vc{y})$ is big enough for the result to be interesting and applicable beyond a few special loss functions. We indirectly discuss these questions as we go along in \autoref{sec:intro}, but we provide a summary of on page \pageref{sec:concerns} and come back to each point after we provide the corresponding formal description.

\paragraph{A Special Case of the Main Theorem.} 
We now state a special case of the main result (which is given in \autoref{sec:conds-thms}) to illustrate the nature of our guarantee, make the forthcoming discussions more concrete, and build intuition. In fact, as the subspace-outputting procedure is universal for the aforementioned class of loss functions, even this specialized theorem fully demonstrates the procedure (up  to a slight modification for the sake of presentation.) 
Let us consider  a simple loss (least-squares loss), a simple data configuration (standardized design), and a simple regularization (gauge function for the simplex.) 
Given $\mt{X} = \begin{bmatrix} \vc{x}_1 & \cdots & \vc{x}_p \end{bmatrix}\in\mathbb{R}^{n\times p}$, $\vc{y}\in\mathbb{R}^{n}$, and $\eta>0$, consider  
\begin{align}\label{eq:lasso-pos}
	\min_{\vc{\beta}} ~\{ \norm{ \mt{X}\vc{\beta} - \vc{y} }_2^2 
	+ \eta \vc{1}^T\vc{\beta} :~	\vc{\beta} \in\mathbb{R}_+^p \}. 
\end{align}
Our main result, namely \autoref{thm:main-detailed} in \autoref{sec:conds-thms}, implies the following guarantee.  

\begin{theorem}[Main result; simplified]\label{thm:main}
Given $\mt{X} = \begin{bmatrix} \vc{x}_1& \cdots& \vc{x}_p\end{bmatrix}\in\mathbb{R}^{n\times p}$, 
$\vc{y}$ in the column space of $\mt{X}$, and $\eta>0$, consider \eqref{eq:lasso-pos}. 
Assume all $\vc{x}_i$'s are on the unit sphere, and 
\begin{align}\label{eq:cond-eta-thm1}
\eta < 2  \norm{\vc{y}}_2 - 2 (\max_{i\in[p]}\, \vc{y}^T\vc{x}_i).
\end{align} 
Consider the convex cone $\T \coloneqq \{ \sum_{i=1}^p \lambda_i (\vc{x}_i - \vc{y}/\norm{\vc{y}}_2 ) :~ \lambda_i \geq 0 \}$ and assume for some $\cal{I}\subseteq [p]$, 
if $\{t(\vc{x}_i-\vc{y}/\norm{\vc{y}}_2):~ t\geq 0\}$ is an extreme ray of $\T$ then $i\in\cal{I}$. 
Then, $\cal{I}$ is a superset of the supports of all optimal solutions for \eqref{eq:lasso-pos}; i.e., we can augment \eqref{eq:lasso-pos} with the constraints $\beta_i=0$, for all $i\not\in\cal{I}$, to arrive at an equivalent optimization problem. 
\end{theorem}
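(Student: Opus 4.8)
The plan is to establish a necessary condition satisfied by every optimal solution of \eqref{eq:lasso-pos} — namely, that its support only uses indices $i$ for which the ray $\{t(\vc{x}_i - \vc{y}/\norm{\vc{y}}_2): t\geq 0\}$ is extreme in $\T$ — via a perturbation argument. Let $\vc{\beta}^\star$ be optimal for \eqref{eq:lasso-pos} with objective value $v^\star$, and write $\hat{\vc{y}} = \vc{y}/\norm{\vc{y}}_2$. Suppose toward a contradiction that some index $j\in\op{supp}(\vc{\beta}^\star)$ has the property that $\vc{x}_j-\hat{\vc{y}}$ lies in the cone generated by the {\em other} generators $\vc{x}_i-\hat{\vc{y}}$, $i\neq j$; i.e. $\vc{x}_j-\hat{\vc{y}} = \sum_{i\neq j}\mu_i(\vc{x}_i-\hat{\vc{y}})$ with $\mu_i\geq 0$. (Non-extreme rays are exactly those in the conical hull of the rest, up to the positively-spanned case, which I would handle by a short separate remark.) The idea is to replace the contribution $\beta_j^\star \vc{x}_j$ by pushing mass onto the other columns: move along the direction that decreases $\beta_j$ and simultaneously adjusts the $\beta_i$, $i\neq j$, so that $\mt{X}\vc{\beta}$ is unchanged, and show the $\ell_1$-term strictly decreases, contradicting optimality (using that $f$ — here least squares — depends only on $\mt{X}\vc{\beta}$).

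The computation I expect to carry out: set $\vc{\beta}(s)$ with $\beta_j(s) = \beta_j^\star - s$ and $\beta_i(s) = \beta_i^\star + s\,\beta_j^\star\mu_i$ for $i\neq j$, for small $s>0$. Then $\mt{X}\vc{\beta}(s) = \mt{X}\vc{\beta}^\star - s\vc{x}_j + s\beta_j^\star\sum_{i\neq j}\mu_i\vc{x}_i$; this is {\em not} constant because of the $\hat{\vc{y}}$ shift, so the genuine argument must bring in the loss. Rewriting, $\mt{X}\vc{\beta}^\star - \vc{y} = \sum_i \beta_i^\star(\vc{x}_i-\hat{\vc{y}}) + (\vc{1}^T\vc{\beta}^\star - \norm{\vc{y}}_2)\hat{\vc{y}}$, so the map $\vc{\beta}\mapsto \mt{X}\vc{\beta}-\vc{y}$ is governed by the "centered" columns $\vc{x}_i-\hat{\vc{y}}$ together with $\vc{1}^T\vc{\beta}$. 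This is exactly why the cone $\T$ built from the centered columns is the right object, and why the hypothesis \eqref{eq:cond-eta-thm1} on $\eta$ must enter — it is the quantitative condition ensuring that, at optimum, moving mass off a non-extreme generator (which is "dominated" as a direction by the others) is never worse in the loss by more than it gains in the regularizer; concretely, one bounds the change in $\norm{\mt{X}\vc{\beta}(s)-\vc{y}}_2^2$ to first order by something controlled by $2\norm{\vc{y}}_2 - 2\max_i\vc{y}^T\vc{x}_i$, and compares with the first-order change $s(\vc{1}^T(\text{new}) - \vc{1}^T(\text{old})) = s\beta_j^\star(\sum_i\mu_i - 1)$ in the penalty. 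Since \autoref{thm:main} is stated as a consequence of \autoref{thm:main-detailed}, the cleanest route is to invoke that theorem: verify that the least-squares loss $\vc{\theta}\mapsto\norm{\vc{\theta}-\vc{y}}_2^2$ (divided by $\eta$) lies in the class $\FC(\mt{X},\vc{y})$ under \eqref{eq:cond-eta-thm1} — checking \autoref{cond:f-star} with reference point $\vc{y}$ — and that the geometric condition \autoref{condn:ver-pos} reduces, for axis-aligned $S$, to the stated extreme-ray condition on $\T$.

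The main obstacle, I expect, is the bookkeeping around the $\hat{\vc{y}}$-shift: unlike a pure "columns span a cone" reduction where feasibility-preserving perturbations keep $\mt{X}\vc{\beta}$ fixed, here the affine term forces a genuine first-order loss comparison, and getting the constant in \eqref{eq:cond-eta-thm1} exactly right (factor of $2$, the role of $\max_i\vc{y}^T\vc{x}_i$ versus $\norm{\vc{x}_i}_2=1$) requires care. I would therefore structure the proof in three steps: (1) reduce to \eqref{prob:loss1F} normalization and rewrite residuals in centered coordinates; (2) invoke \autoref{thm:main-detailed} / \autoref{def:class}, reducing the claim to two verifications — membership of least squares in $\FC(\mt{X},\vc{y})$ and the translation of the geometric condition; (3) carry out those two verifications, where step (3) on membership is where \eqref{eq:cond-eta-thm1} is consumed. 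If a self-contained proof is preferred over invoking the detailed theorem, replace steps (2)–(3) by the direct perturbation argument sketched above, with the contradiction arising from strict decrease of $\overline{f}(\mt{X}\vc{\beta}) + \vc{1}^T\vc{\beta}$ along $s\mapsto\vc{\beta}(s)$.
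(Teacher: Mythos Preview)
Your plan to invoke \autoref{thm:main-detailed} and verify its hypotheses for the least-squares loss is exactly how the paper proceeds: it first derives \autoref{thm:LS-gendata} from \autoref{thm:main-detailed} by checking \autoref{cond:f-star} (trivial: $f$ is convex with minimizer $\vc{y}$, $\psilow=0$) and computing $f_\gamma$ with the specific choice $\gamma=1/2$, which gives $f_{1/2}(\vc{u})=\norm{\vc{y}}_2^2-\langle\vc{u},\vc{y}\rangle$, $\op{dom}(f_{1/2})^\star=\{-\vc{y}\}$, and $(f_{1/2})^\star(-\vc{y})=-\norm{\vc{y}}_2^2$. Plugging $\vc{h}=-\vc{y}$ and $\alpha=1/\norm{\vc{y}}_2$ into \eqref{eq:cond-eta} yields precisely \eqref{eq:cond-eta-thm1}. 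You do not mention this $\gamma=1/2$ computation, and it is the step that produces the constants in the $\eta$-bound.

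There is, however, a genuine confusion in your Step~(2)--(3): you write that ``the geometric condition \autoref{condn:ver-pos} reduces, for axis-aligned $S$, to the stated extreme-ray condition on $\T$.'' These are two different objects. \autoref{condn:ver-pos} is a \emph{hypothesis} --- the vertex non-cover condition on $\{\vc{x}_1,\ldots,\vc{x}_p,\alpha\vc{y}\}$ --- that must be verified; the extreme-ray description of $\T$ defines the \emph{output} $\cal{I}$. In the present setting, \autoref{condn:ver-pos} is verified from the unit-sphere assumption via \autoref{lem:ver-on-bdry} (the sphere is strictly convex, so every $\vc{x}_i$ and $\hat{\vc{y}}$ is an exposed point; the $\eta$-bound forces $\hat{\vc{y}}\neq\vc{x}_i$). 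Without this, the key inequality $\sum_{i\neq j}\mu_i>1$ that your perturbation argument needs is simply unavailable.

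Regarding the self-contained route: your perturbation direction has a typo ($s\mu_i$, not $s\beta_j^\star\mu_i$) and, more importantly, a first-order comparison alone cannot close the argument --- the KKT inequality already says the directional derivative along any feasible direction is nonnegative. The paper's direct proof (\autoref{lemma:normless1-genloss}) uses a \emph{finite} perturbation to the point $\widetilde{\vc{\beta}}$ to deduce $\cal{L}(\vc{\beta}^\star)\leq\psilow$ and $\vc{1}^T\vc{\beta}^\star\leq 1$, and then combines this with a \emph{lower} bound on $\cal{L}(\vc{\beta}^\star)$ coming from the optimality condition (\autoref{lem:opt-pos-gen}); the $\eta$-hypothesis makes the two bounds incompatible. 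Your sketch has the upper-bound half but not the complementary lower-bound half, and that second ingredient is where the quantity $\max_i\vc{y}^T\vc{x}_i$ enters (via $\rc(-\mt{X}^T\vc{h})$ with $\vc{h}=-\vc{y}$), rather than from the perturbation itself.
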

Using this guarantee, we can omit features outside of $\cal{I}$ from the optimization problem without changing any of the optimal solutions except for a zero-padding. Moreover, to identify such features, we do not need to solve the optimization problem \eqref{eq:lasso-pos} itself but only require an {\em extreme ray identification} routine for a polyhedral cone which we form from the datapoints. We elaborate on this reduction and on this routine in the sequel.

To assess the restrictiveness of the interval assumption on $\eta$ in \autoref{thm:main}, in \autoref{plt:sph-compare-etacv-add}, we examine whether the $5$-fold cross-validated $\eta_{\rm cv}$'s for a family of lasso problems belong to the interval or not. For this, we use {\tt cv.glmnet} and we specialize \autoref{thm:main} to lasso by symmetrizing $\mt{X}$ (see page \pageref{rem:generalize}) which changes the requirement on $\eta$ in \eqref{eq:cond-eta-thm1} to $\eta\leq 2\norm{\vc{y}}_2-2\norm{\vc{y}^T\mt{X}}_\infty$. Furthermore, since cross-validation sweeps over the complete range of $\eta$, without any loss in generality, we consider $\norm{\vc{y}}_2=1$. See \autoref{sec:exp-eta-cv-details} for details of the experiments. 
Observe that the boundaries are consistent with the regime $p=\exp(O(n^\kappa))$, $\kappa>0$, which makes our guarantee appealing in ultra-high dimensional problems \cite{fan2008sure}. Boundaries in the plots for $\eta_{\rm 1se}$ shift to the right but seem to demonstrate a similar effect.

\begin{figure}[h!]
    \centering
	    \begin{tikzpicture}
	  		\node[inner sep=0pt] (A)  {\includegraphics[width=.23\textwidth]{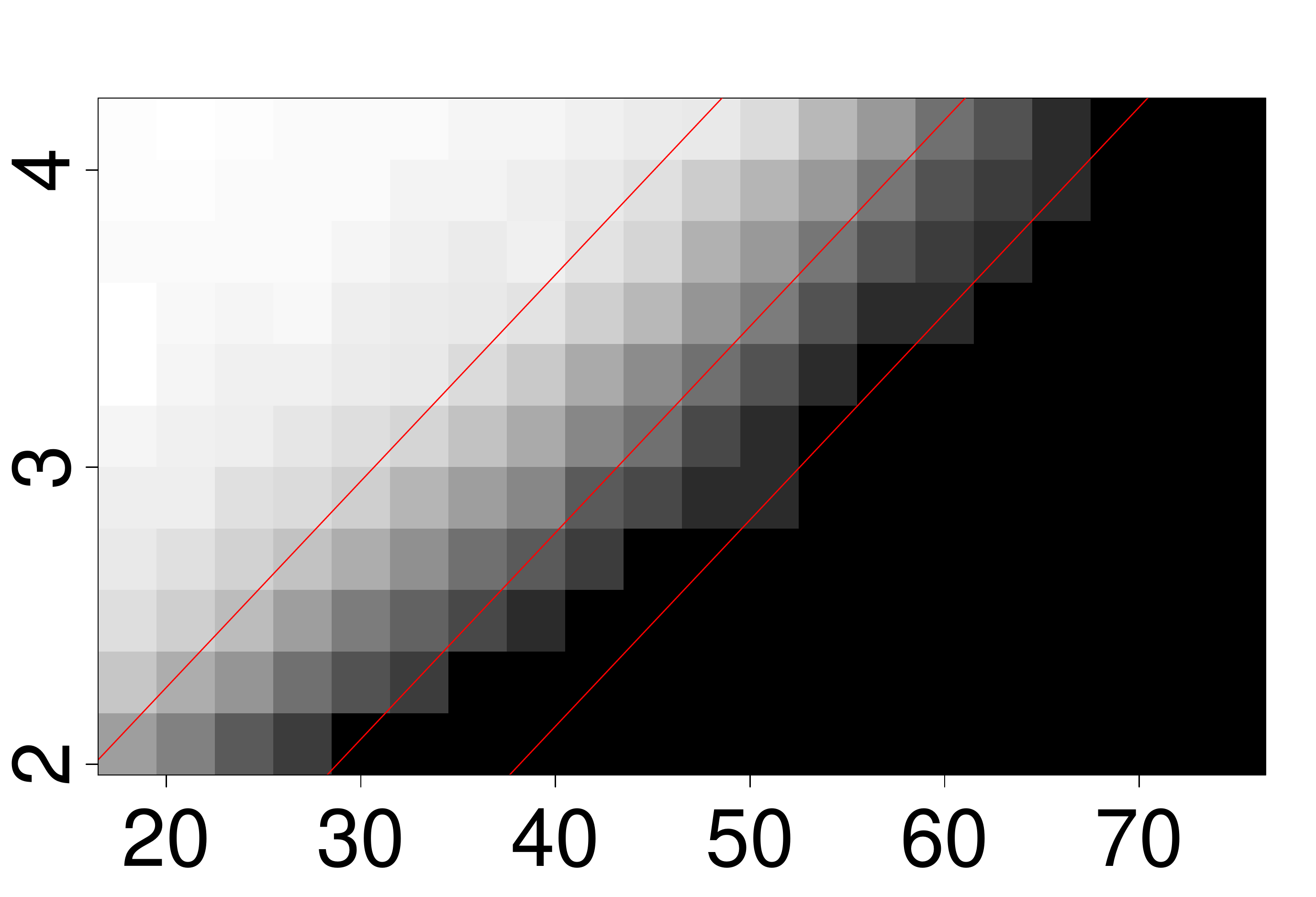}};
	  		\node[xshift=2.5pt,yshift=-11pt] (B) at ($(A.south)!.1!(A.north)$) {\scriptsize $n$};
			\node[rotate=90,yshift=12pt] (C) at ($(A.west)!.05!(A.east)$) {\scriptsize $\log_{10}(p)$};
	 	\end{tikzpicture}	
	    \begin{tikzpicture}
	  		\node[inner sep=0pt] (A)  {\includegraphics[width=.23\textwidth]{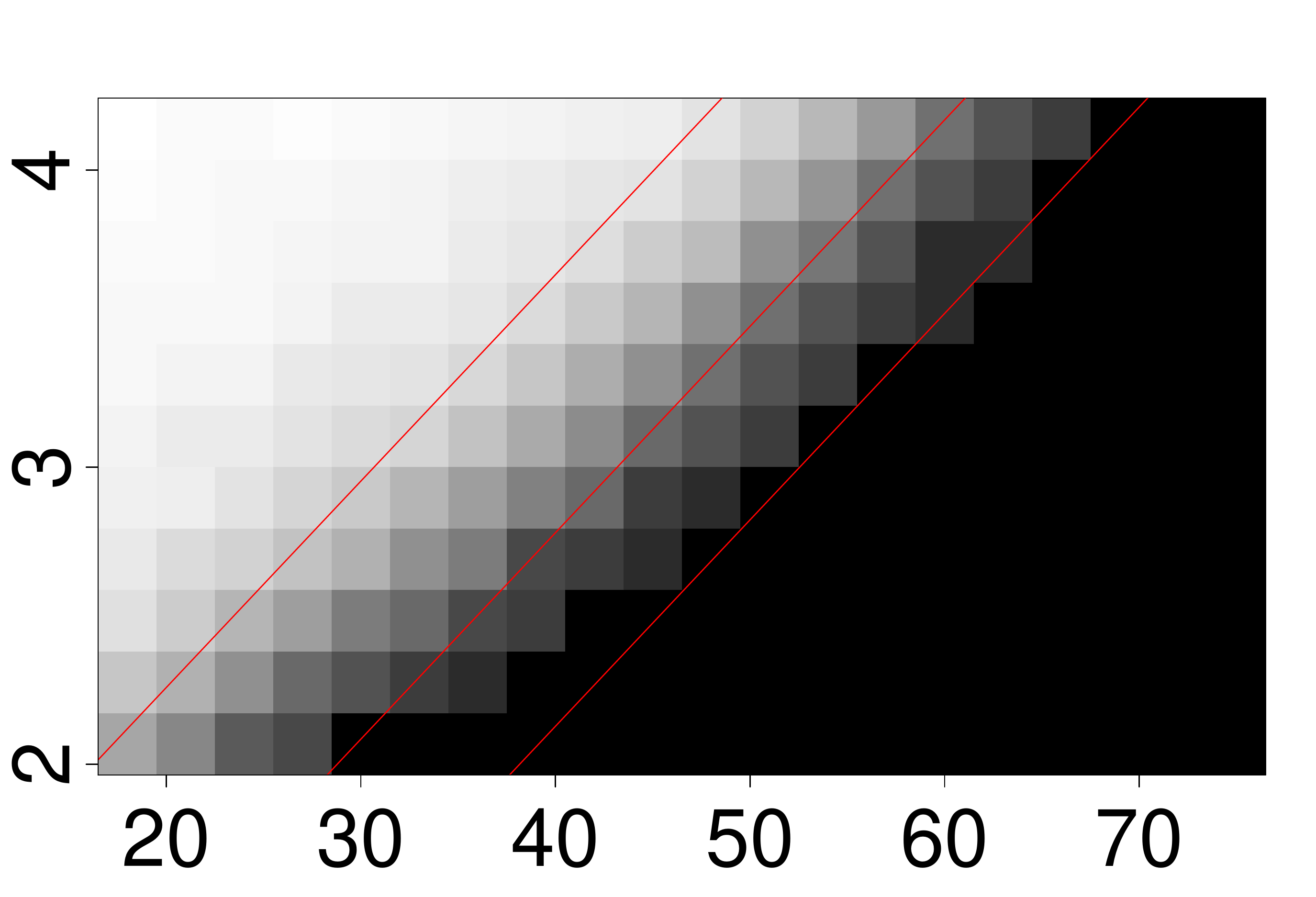}};
	  		\node[xshift=2.5pt,yshift=-11pt] (B) at ($(A.south)!.1!(A.north)$) {\scriptsize $n$};
	 	\end{tikzpicture}	
	    \begin{tikzpicture}
	  		\node[inner sep=0pt] (A)  {\includegraphics[width=.23\textwidth]{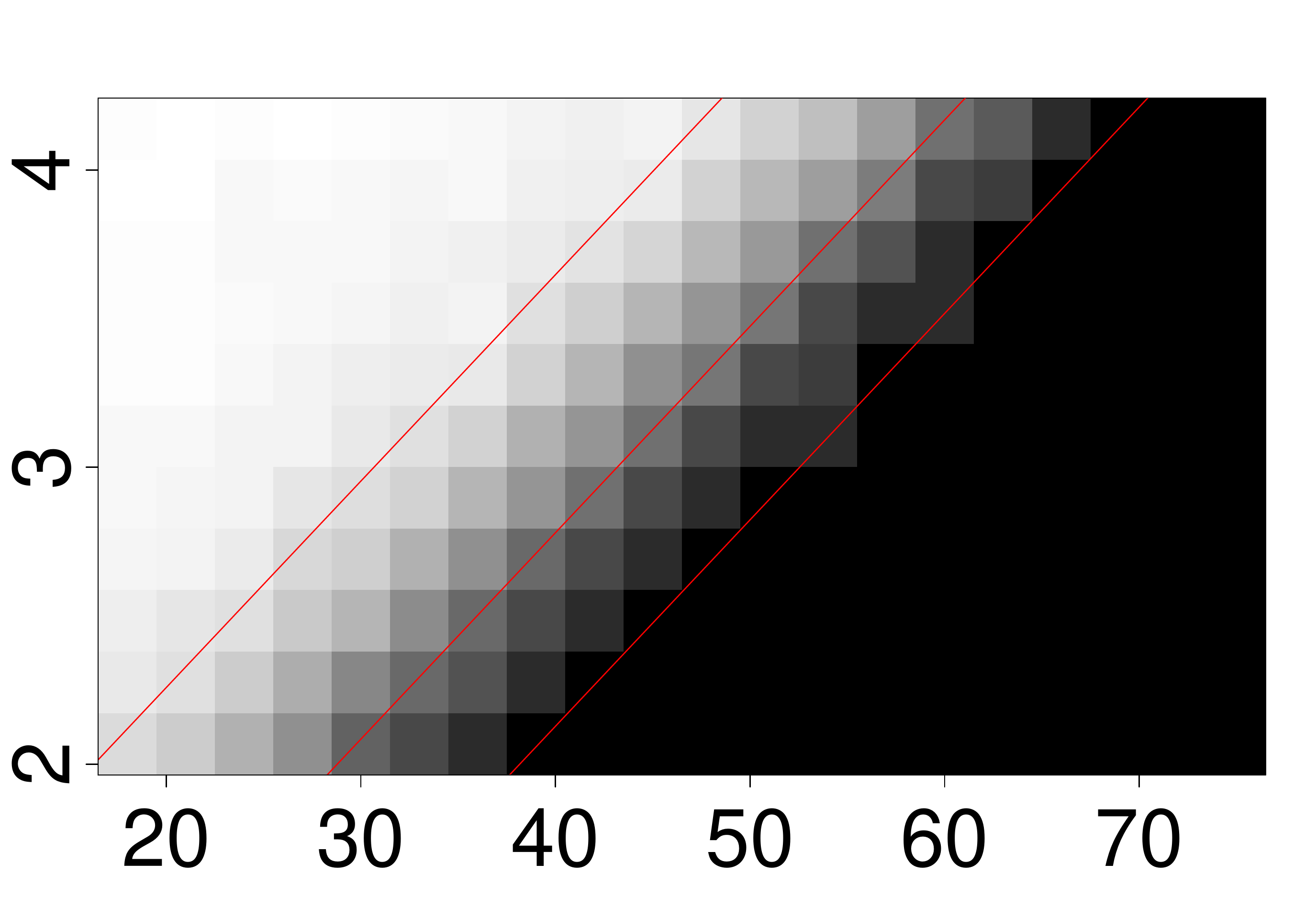}};
	  		\node[xshift=2.5pt,yshift=-11pt] (B) at ($(A.south)!.1!(A.north)$) {\scriptsize $n$};
	 	\end{tikzpicture}	
	    \begin{tikzpicture}
	  		\node[inner sep=0pt] (A)  {\includegraphics[width=.23\textwidth]{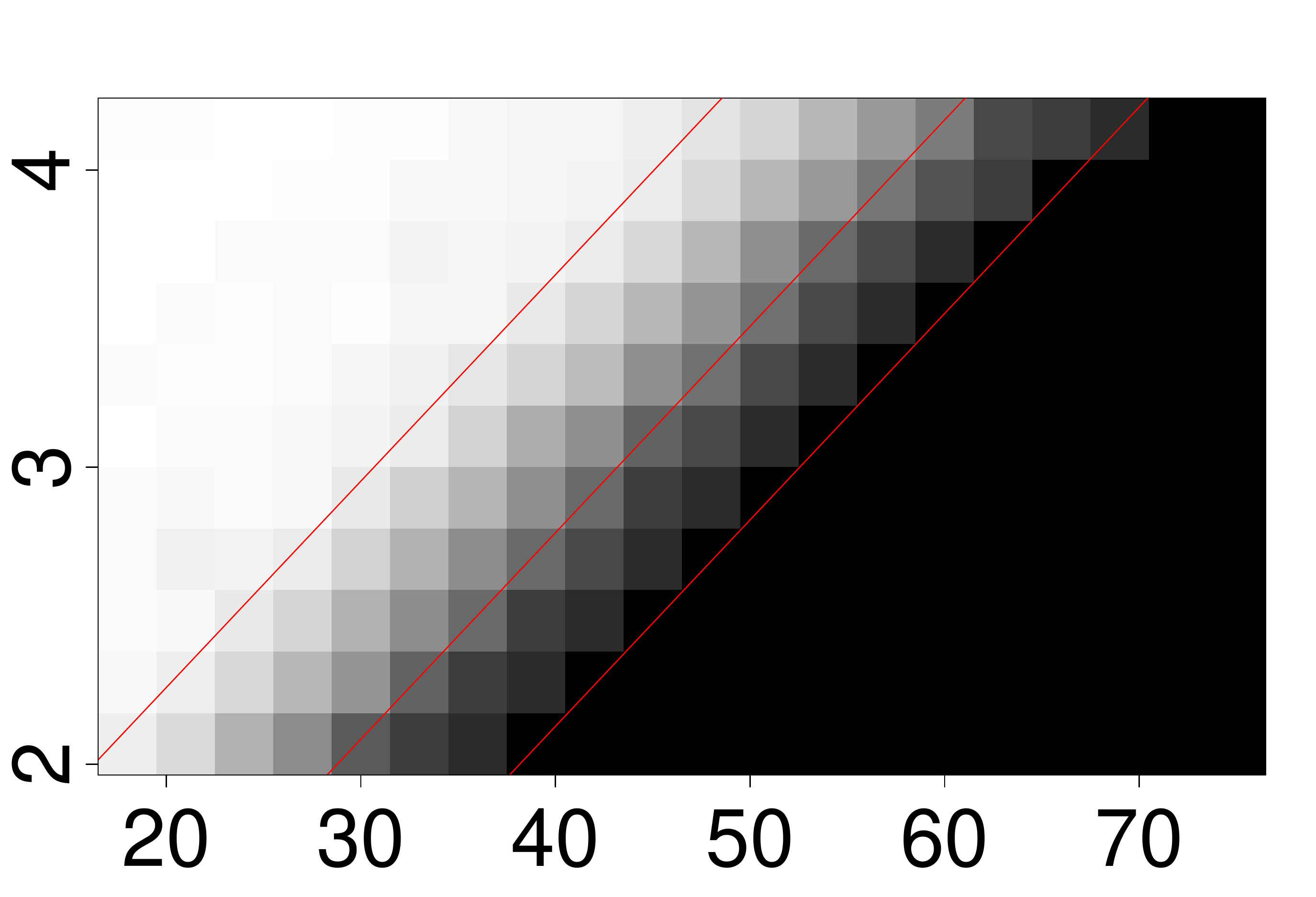}};
	  		\node[xshift=2.5pt,yshift=-11pt] (B) at ($(A.south)!.1!(A.north)$) {\scriptsize $n$};
	 	\end{tikzpicture}	
	\caption{The percentage of cases, out of $1000$ random trials, where the $5$-fold cross-validated $\eta_{\rm cv}$ for a lasso problem satisfies the requirement of our theorem. Black corresponds to $1$ and white corresponds to $0$. The horizontal and vertical axes correspond to $n$ and $\log_{10}(p)$, respectively. The lines depict $p=\exp(0.16 n-1.5)$, $p=\exp(0.16 n)$, $p=\exp(0.16 n+2)$. From left to right: $\vc{y}\sim N(\mt{X}\vc{\beta}, \sigma^2\mt{I}_n)$, for $\sigma=0.01$, $0.1$, and $1$, and $\vc{y}\sim N(\vc{0}, \mt{I}_n)$, where in all experiments, the columns of $\mt{X}$ and $\vc{y}$ were normalized before being fed into the solver. $\vc{\beta}$ draws its $k=\op{round}(\sqrt{p})$ nonzero entries independently from $N(\vc{0},\mt{I}_k)$ before being normalized to unit $\ell_2$ norm.
}
	\label{plt:sph-compare-etacv-add}
\end{figure}

\paragraph{A Persistent Reduction.}
Beyond the simple case of quadratic loss in \eqref{eq:lasso-pos}, and for a fairly broad class of loss functions which could be non-convex, discontinuous, extended real-valued, or non-smooth, we show that if the positioning of the columns of $\mt{X}$ and $\vc{y}$, in $\mathbb{R}^n$, is favorable (as formalized in \autoref{condn:ver-pos}) and if $\eta$ is smaller than an explicit threshold (and not small in the sense of a limit; as formalized in \autoref{thm:main-detailed} or as in the weaker form in \autoref{condn:eta}) then there is a simple pre-processing procedure for certifying zero entries in all optimal solutions through a subroutine that finds the extreme rays of a polyhedral cone; similar to \autoref{thm:main}. 
Note that this reduction can be directly turned into an algorithm (\autoref{proc:pseudo}). 
Any extreme ray identification routine may be employed, but we provide an appealing example in \autoref{alg:clarkson1994more}, from \cite{clarkson1994more, Ottmann1995Enumerating}, which runs in $O(sp)$ time where $s$ is the actual number of extreme rays; see \cite{chan1996output} and \autoref{sec:ray-id}. 
The resulting reduction procedure in \autoref{proc:pseudo} is {\em exact}, it can be performed before any iterative optimization of the original problem, and it is universal for all gauge-regularized loss minimization problems as in \eqref{prob:loss-reg} where the loss satisfies some mild conditions and $\eta$ is below an explicit threshold, leading us to claim: 
\begin{quote}
This work establishes that {\em extreme ray identification has a persistent reduction property} for a broad class of loss minimization problems with polyhedral regularization.
\end{quote}
We discuss in \autoref{sec:connections} how we have arrived at this nomenclature. In short, there are connections to {\em dimensionality reduction}, {\em geometric data summarization} \cite{badoiu2002approximate, agarwal2002approximation,clarkson2010coresets, feldman2020core}, {\em screening} and {\em safe elimination} \cite{fan2010selective, Ghaoui-feature-elim}, {\em persistent relaxations} in integer programming \cite{nemhauser1975vertex,hammer1984roof}, and {\em persistency} in statistics \cite{greenshtein2004persistence}. On the other hand, there are some intrinsic differences between our goal and our method and those of the aforementioned lines of work. We elaborate on these connections in \autoref{sec:connections}.

\begin{algorithm}
 \KwData{$\mt{X}\in\mathbb{R}^{n\times p}$, $\vc{y}\in\mathbb{R}^n$, the knowledge that \eqref{prob:loss-reg-lin-pos} satisfies the conditions of \autoref{thm:main-detailed}}
 \KwResult{The set of minimizers for \eqref{prob:loss-reg-lin-pos}.}
 Find the indices $\cal{I}\subseteq [p]$ corresponding to extreme rays of $\T \coloneqq \{ \sum_{i=1}^p \lambda_i (\vc{x}_i - \vc{y} ) :~ \lambda_i \geq 0 \}$; e.g., via \autoref{alg:clarkson1994more}\;
 Remove the columns of $\mt{X}$ not indexed by $\cal{I}$ and solve \eqref{prob:loss-reg-lin-pos} with the reduced matrix\;
 Map all solutions to $\mathbb{R}^p$ by zero-padding at coordinates not in $\cal{I}$.
 \caption{Persistent Reduction of \eqref{prob:loss-reg-lin-pos} via Extreme Ray Identification}
 \label{proc:pseudo}
\end{algorithm}

As we discuss in \autoref{sec:ray-id}, the extreme ray identification can be performed in an efficient manner as in \autoref{alg:clarkson1994more} where the runtime scales linearly with $p$ and the coefficient depends polynomially on the {\em actual number of extreme rays}; hence it is an {\em output-sensitive} procedure.   
The extreme ray identification procedure could be trivially distributed through a split-apply-combine approach; a parallel to the composability property \cite{indyk2014composable} for coresets, where we combine sets of extreme rays to get a superset of the optimal supports (which can be further reduced by finding the extreme rays of the union.) Many more speedups and approximations for our choice of extreme ray identification module are possible, e.g., \cite{chan1996output}, and other implementations could also be used, but we postpone these to future works.

\newcommand{\extrayid}{\mathtt{ExtRayID}}

\begin{algorithm}
 \KwData{
 		$\cal{Z} = \{\vc{z}_1,\vc{z}_2,\ldots, \vc{z}_p \}
 		\subset\mathbb{R}^n\setminus\{\vc{0}\}$, 
 		$\vc{g}\in\op{rel\,int}(\cal{Z}^\star)$
 		\tcp*[f]{\footnotesize i.e., $\langle\vc{g},\vc{z}_k\rangle >0 ~\forall k\in[p]$}
 		}
 \KwResult{$\cal{I}$ where $\vc{z}_i$ is an extreme ray of $\op{pos}\op{conv}(\cal{Z})$ if and only if $i\in\cal{I}$ }
 Initialize: $\cal{I} = \emptyset$, $\cal{R} = \{1,\dots,p\}$ \;
 \For {$j\in \cal{R}$}{
		\eIf{$\vc{z}_j\in \op{pos}(\{ \vc{z}_i:~i\in\cal{I} \})$}{\label{line:if-in-pos}
			$\cal{I}\gets \cal{I}\cup\{j\}$\label{line:Ij}
			\tcp*[r]{\footnotesize keeping the equivalent extreme rays}
			$\cal{R} \gets \cal{R} \setminus \{j\}$
		}{ 
 		  $(\theta, \vc{v})\gets 
 		  (\max_{\vc{v}}, \argmax_{\vc{v}})
 		  \{ \langle \vc{v}, \vc{z}_j \rangle :~ 
 		  \langle \vc{v}, \vc{z}_i \rangle \leq \vc{0}~\forall i\in\cal{I},~ 
 		  \langle \vc{v}, \vc{z}_j \rangle \leq 1 \}$ \;
 		  
		  \eIf{$\theta=0$}{
		   $\cal{R} \gets \cal{R} \setminus \{j\}$ \;
		   }(\tcp*[f]{\footnotesize 
		   $\theta=1$, $\vc{v}\in \cal{Z}_{\cal{I}}^\circ$, $\vc{v}\not\in \cal{Z}^\circ$}){
		   $\cal{J}^{j} \gets \Argmax_{k\not\in \cal{I}} 
		   (\langle\vc{v}, \vc{z}_k\rangle / \langle\vc{g}, \vc{z}_k\rangle)$\label{line:Jj}
		   \tcp*[r]{\footnotesize defining a face of $\cal{Z}$}
		   $\cal{I}^{j} \gets \extrayid (\{ \vc{z}_k:~k\in\cal{J}^j \} , \vc{g})$\label{line:extrayid-Ij} \;
		   $\cal{I} \gets \cal{I} \cup \cal{I}^{j}$\label{line:IIj} \;
		   $\cal{R} \gets \cal{R} \setminus \cal{J}^{j}$ \;
		  } 		
		 }
	}

 \caption{$\extrayid$ for Extreme Ray Identification (\cite{clarkson1994more}) }
 \label{alg:clarkson1994more}
\end{algorithm}

Our procedure in \autoref{proc:pseudo} may be used for problems of the form \eqref{prob:loss-reg-lin-pos} with streaming, distributed, or dynamically generated features \cite{perkins2003online, muthukrishnan2005data, zhou2006streamwise, wu2012online}, ultra-high dimensional problems in statistics as in genetic microarrays or medical imaging applications \cite{hall2005geometric, candes2007dantzig, fan2008sure,fan2009ultrahigh,fan2010selective}, 
problems requiring fine discretization \cite{bhaskar2013atomic}, 
semi-infinite programming \cite{blankenship1976infinitely, still2001discretization}, 
or when many similar problems with perturbations of the same data set are being solved,  among others. \label{parag:app}
In these examples, either $n$ is fixed and $p$ is growing, or $p$ grows at a much faster rate than $n$; e.g., exponential in powers of $n$ as in the ultra-high dimensional regime of  \cite{fan2008sure}. 
As an example, consider a problem in ultra-high dimensions with a complicated loss function. Using \autoref{proc:pseudo}, we can reduce this problem to solving $p$ linear or quadratic programs (for identifying the extreme rays), whose sizes scale with the true number of extreme rays and not $p$, followed by a potentially much smaller problem with the original loss function. These linear/quadratic programs are amenable to warm-starting, use of special data structures, etc; e.g., see \cite{chan1996output}. We view this as a major computational appeal. 
Note that some optimization algorithms solve linear programs in their iterations; e.g., the Frank-Wolfe algorithm \cite{frank1956algorithm} over a polyhedral domain. However, the output-sensitive running time of our procedure as well as the need for working with the (complicated) loss only over a reduced space could provide an important computational advantage.

\paragraph{Main Contributions.} 
(i) We establish that the extreme rays of a polyhedral cone index all coordinates that could be nonzero in any of the optimal solutions for \eqref{prob:loss-reg}. This aspect of our contribution can be seen by specializing the proofs for the quadratic loss; e.g., see \autoref{thm:LS-gendata}. The main geometric assumption on data, in \autoref{condn:ver-pos}, also reveals itself in this special case. 
(ii) 
We show that the aforementioned phenomena applies to a broad class of loss functions. The corresponding proofs are elementary, geometric, and intuitive; provided in \autoref{sec:props}. 
We provide weaker versions of these requirements in \autoref{condn:f-bnd} and \autoref{condn:eta} to aid in specializing the theorem to specific problems; e.g., as in \autoref{sec:cor}. 
(iii) Our condition on datapoints in \autoref{condn:ver-pos} is quite natural and is satisfied in many statistical or signal processing setups; e.g., when we standardize the columns of the design matrix. We offer geometric insights on \autoref{condn:ver-pos} in relation to the facial structure of $\op{conv}(\mt{X})$ as well as necessary and sufficient conditions for this assumption, in \autoref{sec:disc-geom}. 
Lastly, (iv) \autoref{proc:pseudo} offers a {\em novel computational technique} for regularized regression with complicated loss functions.

From a technical standpoint, our guarantee (i) relies on the growth of the loss function (in a certain sense exemplified in \autoref{condn:f-bnd} via an envelope condition) compared to the gauge regularizer, and (ii) assumes a mild shape condition on the loss which generalizes star-convexity. Our condition on $\eta$ seems to be mild (e.g., see \autoref{plt:sph-compare-etacv-add}) but also improvable; we provide some concrete evidence in \autoref{rem:eta-bound}. The main geometric condition on data is however both simple and intriguing. We elaborate on this condition in \autoref{sec:originT} and take a Euclidean geometric approach rather than relying on notation from variational analysis. Our result could be viewed as a substantial generalization of the result in 
\cite{jalali2017subspace} (Supplementary Material; Lemma 17) which used the cone $\T$ to provide a guarantee for $\ell_1$ minimization with an affine constraint whose matrix has normalized columns. We review this result in \autoref{sec:other-PR} and provide further insight using the results developed in \autoref{sec:originT}.

Note that the effect of regularization has been mostly illustrated and studied through the notion of tangent cones in the feature space, namely $\mathbb{R}^p$; e.g., see illustrations such as \cite[Figure 3.11]{hastie2009elements}. 
Such studies commonly rely on KKT conditions for optimality but focus on probabilistic models of data to provide guarantees for recovery. In this work, we consider similar regularized loss minimization problems (allowing for non-convexity, discontinuity, set constraints, etc, as well) but focus on $\mathbb{R}^n$. This allows us to establish a new set of properties for the optimal solutions and derive the aforementioned persistent reductions. 
On the other hand, unlike most safe screening methods, our algorithm does not operate in the dual space.

\paragraph{Practicality and Applicability.}\label{sec:concerns}
Let us revisit the questions we mentioned on page \pageref{parag:3q-beginning} on practicality and applicability of our reduction. We elaborate on these questions here and provide a summary of our arguments but we also come back to each point after we provide the corresponding formal description. We acknowledge that these questions could be pursued further, theoretically and empirically, and beyond the arguments we provide here.

\para
First, the practicality of this reduction considering the overhead caused by the extreme ray identification routine might be in question; i.e., are $S(\mt{X},\vc{y})$ and $\cal{I}$ small enough to be interesting? 
To get an intuition, we can examine the number of extreme rays in random polytopes. For example, when the columns of $\mt{X}$ and $\vc{y}$ have been drawn uniformly at random from the unit sphere, \cite[Remark 1.9]{kabluchko2020beta} establishes that the expected number of extreme rays approaches a constant, which is only a function of $n$, as $p$ grows to infinity. In preliminary experiments, we observe a similar saturation behavior when $\mt{X}$ is being generated as above and $\vc{y}=\mt{X}\vc{\beta}$ for a sparse $\vc{\beta}$. 
However, as mentioned on page \pageref{parag:app}, the reduction procedure enables working with streaming, distributed, or dynamically generated features, ultra-high dimensional problems, extremely finely (dynamically) discretized problems, and many more setups in which common algorithms for solving \eqref{prob:loss-reg-lin-pos} become inapplicable. Also see our brief overview and comparison with screening methods in \autoref{sec:connections}. Examination of real datasets is an absolute next step.

\para
Secondly, it might not be clear how restrictive the assumptions defining $\FC(\mt{X},\vc{y})$ are; in \autoref{thm:main-detailed} or \autoref{def:class}. We argue that; (i) the assumptions on $f$ (a weak form of star-convexity, as well as a growth condition) are quite mild for any practical fidelity or loss function. Star-convex functions have recently appeared in a variety of learning, statistics, and optimization studies (e.g., see \cite{nesterov2006cubic, lee2016optimizing, hinder2020near}, subsequent works, and references therein) and different forms of growth conditions are somewhat standard in many learning and optimization scenarios. (ii) The interval requirement for $\eta$ is affected by properties of $f$ and $\mt{X}$. We examine this requirement in the case of lasso with Gaussian design and additive Gaussian noise and determine regimes of $p$ and $n$ for which a K-fold cross validated parameter falls into our requirement; see \autoref{plt:sph-compare-etacv-add}. We observe that with $p\leq \exp(O(n^\kappa))$, $\kappa>0$, (the ultra-high dimensional regime \cite{fan2008sure}) the cross-validated parameter satisfies our requirement. Nonetheless, as mentioned in \autoref{sec:exp-eta-cv-details}, similar experiments can be performed to examine $\eta$ chosen by other tuning methods, beyond $K$-fold cross-validation, in connection to our requirement. Furthermore, we can go beyond the provided specification of $\FC$, and examine the support of optimal solutions from cross-validations in connection to the subspace $S(\mt{X},\vc{y})$, to in fact examine the boundaries of $\FC$ and the universality of the reduction via $S(\mt{X},\vc{y})$. With a similar goal, for the case of squared loss (or lasso), we show that the true upper bound on $\eta$ is larger than the simple upper bound given in the theorem (due to a simplistic choice we make in our proofs.) See \autoref{rem:eta-bound} for details.

\para
Third, it is not a priori clear how restrictive the geometric assumption on data in \autoref{condn:ver-pos} might be in practice. While postponing the statement of \autoref{condn:ver-pos} to the next section, we note that with standardization, a common practice in statistics, \autoref{condn:ver-pos} holds (we also need non-repetitive columns.) The standardization can be performed onto the boundary of any strictly convex set (e.g., an ellipsoid) and not just the unit sphere; \autoref{lem:ver-on-bdry}. Moreover, many matrices $\mt{X}$ in problems in signal processing (frames, dictionaries, etc) have equal-norm columns, hence automatically satisfy \autoref{condn:ver-pos}. Going one step further, we dedicate a section, \autoref{sec:originT}, to providing further understanding of this condition in relation to the facial structure of $\op{conv}(\mt{X})$.


\section{The Main Results}\label{sec:main}
We begin by reviewing notations and preliminary facts from optimization, almost entirely from \cite{rockafellar2009variational}, in \autoref{sec:notation}; expert readers may safely skip this section. In \autoref{sec:conds-thms}, we state our main conditions in \autoref{cond:f-star} and \autoref{condn:ver-pos} as well as our main theorem, namely \autoref{thm:main-detailed}. We follow up in \autoref{sec:cor} with a few corollaries which simplify the statement of \autoref{thm:main-detailed} and address a few examples; the least-squares loss, $\ell_q^q$ loss for $q\geq 1$, and a Bregman loss. The three main propositions constituting the proof of \autoref{thm:main-detailed} will be presented in \autoref{sec:props}.

\subsection{Preliminaries}\label{sec:notation}

We use the $\star$ superscript for different purposes: when used for a function it denotes the convex conjugate, when used for a norm it denotes the dual norm, when used for an optimization variable it denotes an optimal value this variable takes with respect to said problem, and when used for a cone it denotes the dual cone. We use a regular (not a superscript) symbol $\star$ to denote epi-multiplication. 

Boldface letters denote vectors ($\vc{y},\vc{x}_i,\vc{\beta},\dots$) and matrices ($\mt{X}$). Normal font is used to denote the entries of vectors ($\beta_i$), scalars ($\lambda$, $\alpha$, $\psilow$), and scalar-valued functions ($f,r,\dots$). Uppercase letters mostly denote sets ($C,A,\dots$, except for $F$ which denotes a function). 
For a natural number $p\geq 1$, define $[p] = \{1,\ldots,p\}$. The nonnegative real line is denoted by $\mathbb{R}_+$. 
The extended real line is denoted by $\Rext = \mathbb{R} \cup \{+\infty\}$. Similarly, $\Rext_+ = \mathbb{R}_+ \cup \{+\infty\}$.

For a given set $C$, denote by $\indic(\cdot\,; C)$ the indicator function for $C$. 
Its convex hull is denoted by $\op{conv}(C)$. 
Its positive hull is defined as $\op{pos}(C) \coloneqq \{\vc{0}\}\cup\{\lambda \vc{x}:~ \vc{x}\in C,~ \lambda>0\}$. 
For a convex set $C$, the tangent cone to $C$ at $\vc{x}$ is given by $T(\vc{x}; C) =\op{cl}\{\vc{w}:~ \vc{x}+\lambda\vc{w}\in C \text{ for some } \lambda>0\}$; \cite[Theorem 6.9]{rockafellar2009variational}. 
For a given matrix $\mt{X}\in\mathbb{R}^{n\times p}$, denote by $\op{col}(\mt{X})$ the column space of $\mt{X}$ in $\mathbb{R}^n$, namely 
$\op{col}(\mt{X}) = \{ \mt{X}\vc{\lambda}:~ \vc{\lambda}\in\mathbb{R}^p \}\subseteq \mathbb{R}^n$. 
Similarly, consider 
$\op{col}_+(\mt{X}) \coloneqq \{ \mt{X}\vc{\lambda}:~ \vc{\lambda}\in\mathbb{R}_+^p \}\subseteq \mathbb{R}^n$. 
Note that $\op{col}_+(\mt{X}) = \op{pos}(\op{conv}(\{\mt{X}\vc{e}_i:~i\in [p]\}))$ where $\vc{e}_i$ denotes the $i$-th standard basis vector in $\mathbb{R}^p$. 
Corresponding to \eqref{prob:loss-reg}, we define 
$\cal{L}(\vc{\beta},\eta) \coloneqq f(\mt{X}\vc{\beta}) + \eta\, r(\vc{\beta}) -\eta$. 

\paragraph{Epi-multiplication.} For $f:\mathbb{R}^n\to \Rext$, define its epi-multiplication by a scalar $\alpha>0$ as 
\begin{align*}
(\alpha \star f)(\vc{u}) \coloneqq \alpha f(\frac{1}{\alpha}\vc{u}). 
\end{align*}
Note that $(\alpha \star f)^\star = \alpha f^\star$. 

\paragraph{Gauge Functions.} A function $r:\mathbb{R}^p\to \Rext$ is called a {\em gauge} if it is a non-negative positively homogeneous convex function with $r(\vc{0})=0$; \cite[Section 15]{rockafellar1970convex}. Equivalently, $r$ is a gauge function if $r(\vc{\theta}) = \gauge(\vc{\theta}; C) \coloneqq \inf\{\lambda\geq 0:~ \vc{\theta}\in\lambda C\}$ for some non-empty convex set $C$. Note that $C$ is not unique but can be chosen as $C = \{\vc{\theta}:~ r(\vc{\theta})\leq 1\}$. 
The {\em polar} of a gauge $r$ is defined as $r^\circ(\vc{\theta}) \coloneqq \inf\{\lambda\geq 0:~ \langle \vc{\beta},\vc{\theta} \rangle \leq \lambda r(\vc{\beta})~ \forall \vc{\beta} \}$. Therefore, for all $\vc{\beta}\in \op{dom}r$ and all $\vc{\theta}\in\op{dom}r^\circ$, we have 
\begin{align}\label{eq:gauge-polar-ineq}
\langle \vc{\beta},\vc{\theta} \rangle \leq  r(\vc{\beta}) \cdot r^\circ(\vc{\theta}) .
\end{align}
As an example, $r(\vc{\beta}) = \vc{1}^T\vc{\beta} + \indic (\vc{\beta}; \mathbb{R}_+^p)$ is a gauge function with $r^\circ(\vc{\theta}) = \max\{\theta_i:~i\in[p]\}$.

\paragraph{
Optimality Conditions.}

\begin{lemma}
\label{lem:suff-attain}
Suppose $f:\mathbb{R}^n\to\mathbb{R}$ is proper, lsc, and level-bounded, with $\op{dom}f = \mathbb{R}^n$, and $C\subseteq\mathbb{R}^n$ is nonempty and closed. Then, $\min_{\vc{u}\in C}f(\vc{u})$ is finite and the solution set is nonempty and compact. 
\end{lemma}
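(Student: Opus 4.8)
The plan is to fold the constraint into the objective and then invoke the standard existence-of-minimizers result for proper, lsc, level-bounded functions, namely \cite[Theorem 1.9]{rockafellar2009variational}. Concretely, I would set $g \coloneqq f + \indic(\cdot\,;C)$ and observe that $\inf_{\vc{u}\in C} f(\vc{u}) = \inf g$ and $\Argmin_{\vc{u}\in C} f = \Argmin g$, so it suffices to check that $g$ inherits the three hypotheses of the cited theorem.

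First I would verify that $g$ is proper: since $\op{dom}f = \mathbb{R}^n$, we have $\op{dom}g = C$, which is nonempty by assumption, and $g$ never takes the value $-\infty$ because $f$ is real-valued; hence $g$ is proper (and no $\infty-\infty$ indeterminacy arises in forming the sum, precisely because $f$ is finite-valued). Next, lower semicontinuity: $f$ is lsc by hypothesis and $\indic(\cdot\,;C)$ is lsc because $C$ is closed, so $g$ is lsc; equivalently, for every $\alpha\in\mathbb{R}$ the sublevel set $\{g\leq\alpha\} = \{f\leq\alpha\}\cap C$ is an intersection of two closed sets, hence closed. Finally, level-boundedness: from $\{g\leq\alpha\} = \{f\leq\alpha\}\cap C \subseteq \{f\leq\alpha\}$ and the level-boundedness of $f$, each sublevel set of $g$ is bounded.

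With $g$ shown to be proper, lsc, and level-bounded, \cite[Theorem 1.9]{rockafellar2009variational} gives that $\inf g$ is finite and $\Argmin g$ is nonempty and compact, which is exactly the assertion of the lemma. I do not expect any real obstacle here; the only steps needing (routine) care are noting that the indicator of a closed set is lsc and confirming that all three hypotheses are genuinely inherited by $g$ — in particular the absence of an $\infty-\infty$ issue, which is ensured by $f$ being finite-valued on all of $\mathbb{R}^n$.
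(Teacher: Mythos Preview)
Your proposal is correct and follows essentially the same approach as the paper: form $g=f+\indic(\cdot\,;C)$, verify that $g$ is proper, lsc, and level-bounded, and invoke \cite[Theorem~1.9]{rockafellar2009variational}. The only cosmetic difference is that the paper cites \cite[Proposition~1.39 and Exercise~1.41]{rockafellar2009variational} for the inheritance of lsc/properness and level-boundedness, whereas you verify these directly via the sublevel-set identity $\{g\le\alpha\}=\{f\le\alpha\}\cap C$.
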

\begin{proof}[Proof of \autoref{lem:suff-attain}]
The set $C$ being nonempty and closed is equivalent to its indicator function $\indic(\cdot\,; C)$ being proper and lsc, respectively. Since $f$ is proper and lsc, using \cite[Proposition 1.39]{rockafellar2009variational} and the fact that $\op{dom} f = \mathbb{R}^n$, we have that $f (\cdot)+\indic(\cdot\,; C)$ is proper and lsc. Moreover, since $f$ is level-bounded, $f (\cdot)+\indic(\cdot\,; C)$ is level-bounded; \cite[Exercise 1.41]{rockafellar2009variational}. Therefore, applying \cite[Theorem 1.9]{rockafellar2009variational} on $f(\cdot)+\indic(\cdot\,; C)$ implies that its optimal value is finite and the solution set is nonempty and compact. 
\end{proof}

\begin{lemma}
\label{lem:uniqueness-general}
Suppose $f:\mathbb{R}^n\to\mathbb{R}$ is proper and lsc with $\op{dom}f = \mathbb{R}^n$. 
Suppose $r:\mathbb{R}^p\to\Rext$ is proper, lsc, and level-bounded. 
Suppose $\mt{X}\in\mathbb{R}^{n\times p}$. 
Then, $\min_{\vc{\beta}}  f( \mt{X}\vc{\beta}) +  r(\vc{\beta})$ is finite and the solution set is nonempty and compact. 
\end{lemma}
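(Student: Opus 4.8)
The plan is to rewrite the objective as a single function $g(\vc{\beta}):=f(\mt{X}\vc{\beta})+r(\vc{\beta})$ on $\mathbb{R}^p$ and to verify that $g$ is proper, lsc, and level-bounded; once this is done, \cite[Theorem~1.9]{rockafellar2009variational} --- the existence result already invoked in \autoref{lem:suff-attain} --- yields immediately that $\inf g$ is finite and that $\argmin g$ is nonempty and compact, which is exactly the assertion. So the proof amounts to three verifications, two of them routine and one carrying the content.

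\emph{Properness and lower semicontinuity.} Since $\vc{\beta}\mapsto\mt{X}\vc{\beta}$ is continuous and $f$ is lsc with $\op{dom}f=\mathbb{R}^n$, the composite $\vc{\beta}\mapsto f(\mt{X}\vc{\beta})$ is real-valued, everywhere finite, and lsc (lower semicontinuity is preserved under composition with a continuous map). Adding the proper lsc function $r$ preserves lower semicontinuity --- no $\infty-\infty$ arises because $f$ is real-valued; cf.\ \cite[Proposition~1.39]{rockafellar2009variational} --- and gives $\op{dom}g=\mathbb{R}^p\cap\op{dom}r=\op{dom}r\neq\emptyset$, so $g$ is proper.

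\emph{Level-boundedness, the substantive step.} Observe that $\mt{X}$ may be rank-deficient, so $f(\mt{X}\cdot)$ on its own is constant along $\ker\mt{X}$ and hence never level-bounded; the coercivity must be supplied by the regularizer. Put $m:=\inf_{\mathbb{R}^n}f$ and use that $m>-\infty$ (i.e., $f$ is bounded below, which holds for every loss we treat). Then $g\ge m+r$ pointwise, so $\{\vc{\beta}: g(\vc{\beta})\le\alpha\}\subseteq\{\vc{\beta}: r(\vc{\beta})\le\alpha-m\}$ for every $\alpha$, and the latter set is bounded because $r$ is level-bounded; hence $g$ dominates the level-bounded function $m+r$ and is itself level-bounded (\cite[Exercise~1.41]{rockafellar2009variational}). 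With properness, lower semicontinuity, and level-boundedness in place, \cite[Theorem~1.9]{rockafellar2009variational} completes the argument. The one nontrivial point is precisely this last step: bounded sublevel sets of the full objective come from level-boundedness of $r$ together with a floor on $f$, not from any coercivity of $f(\mt{X}\cdot)$; everything else is a routine assembly of standard facts from \cite{rockafellar2009variational}.
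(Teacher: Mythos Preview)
Your argument mirrors the paper's exactly: show $g:=f(\mt{X}\cdot)+r$ is proper, lsc, and level-bounded, then apply \cite[Theorem~1.9]{rockafellar2009variational}; the paper cites \cite[Exercise~1.40(a)]{rockafellar2009variational} for lsc of the composition and \cite[Exercise~1.41]{rockafellar2009variational} for level-boundedness of the sum, but the skeleton is identical.

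Where you diverge is in making explicit the assumption $m=\inf f>-\infty$, which you then justify only informally (``holds for every loss we treat''). This is not among the stated hypotheses, so as a proof of the lemma \emph{as written} there is a gap --- and the gap is real: with $n=p=1$, $\mt{X}=1$, $f(u)=-u$, $r(\beta)=|\beta|$, one gets $g(\beta)=0$ for all $\beta\ge 0$ and $\argmin g=[0,\infty)$, which is not compact. The paper's bare citation of \cite[Exercise~1.41]{rockafellar2009variational} conceals the same issue (that exercise needs the non-level-bounded summand to be bounded below), so you have in fact located a missing hypothesis in the lemma itself rather than a flaw in your reasoning; once $\inf f>-\infty$ is added, both your proof and the paper's go through verbatim.
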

\begin{proof}[Proof of \autoref{lem:uniqueness-general}]
Since $f$ is lsc and proper, $f(\mt{X}\cdot)$ is lsc and proper; \cite[Exercise 1.40(a)]{rockafellar2009variational}. 
Since $f(\mt{X}\cdot)$ and $r$ are both proper and lsc, using \cite[Proposition 1.39]{rockafellar2009variational}, we have that $f(\mt{X}\cdot)+r$ is lsc. On the other hand, since domain of $f(\mt{X}\cdot)$ is $\mathbb{R}^p$ and $r$ is proper, $f(\mt{X}\cdot)+r$ is proper. Moreover, since $r$ is level-bounded, $f(\mt{X}\cdot)+r$ is level-bounded; \cite[Exercise 1.41]{rockafellar2009variational}. Therefore, applying \cite[Theorem 1.9]{rockafellar2009variational} on $f(\mt{X}\cdot)+r$ implies that its optimal value is finite and the solution set is nonempty and compact. 
Note that the composition of $f$ and the linear function defined by $\mt{X}$ may not be level-bounded by itself. 
\end{proof}

For a function $f:\mathbb{R}^n\to \Rext$ and a point $\vc{u}\in \mathbb{R}^n$ at which $f$ is finite, the {\em subderivative} function $\dd f(\vc{u}):\mathbb{R}^n\to \Rext$ is defined as \cite[Definition 8.1]{rockafellar2009variational}, 
\[\dd f(\vc{u})(\vc{w}) \coloneqq \liminf_{\tau \searrow 0, \vc{v}\to\vc{w}}\frac{1}{\tau}(f(\vc{u}+\tau \vc{v})-f(\vc{u})).\] 
The necessary optimality condition in \eqref{eq:opt-conds-pos} can be established under different sets of conditions. Here, we choose subdifferential regularity.

\paragraph{Subdifferential Regularity.}
A function $f:\mathbb{R}^n\to \Rext$ is called {\em subdifferentially regular} (or in short, {\em regular}) at $\vc{u}$ if $f(\vc{u})$ is finite and the epigraph of $f$ is Clarke regular (see \cite[Definition 6.4]{rockafellar2009variational}) at $(\vc{u}, f(\vc{u}))$ as a subset of $\mathbb{R}^n \times \mathbb{R}$; quoted from \cite[Definition 7.25]{rockafellar2009variational}. 
For example, a proper lsc convex function $f:\mathbb{R}^n\to \Rext$ is regular; 
any smooth function is regular; a sum of separable functions, each of which regular, is regular; among many more. 
See \autoref{sec:facts} for a non-exhaustive list of examples of regular functions. 

For a function $f:\mathbb{R}^n\to \Rext$ and a point $\vc{u}\in \mathbb{R}^n$ at which $f$ is regular (for example, for a proper lsc convex function and a point $\vc{u}$ in its domain), 
the subderivative function is given by 
\begin{align}\label{eq:def-subderivative}
\dd f(\vc{u})(\vc{w}) = \sup\{ \langle \vc{g},\vc{w}:~ \vc{g}\in \partial f(\vc{u}) \rangle \} .
\end{align}
For example, see \cite[Definition 8.1, Theorem 8.30]{rockafellar2009variational}. In other words, regularity implies a very useful duality between subgradients and subderivatives.

\begin{lemma}[Optimality Condition]\label{lem:opt-conds-pos}
Consider the optimization problem in \eqref{prob:loss-reg-lin-pos}.  
Assume:
\begin{itemize}
\item The objective is proper and lsc, 
and the problem is feasible; 
\item $f$ is subdifferentially regular on $\op{col}_+(\mt{X})$;

\item Either $\mt{X}$ is of rank $n$, or, $f$ is convex and $\op{dom}f$ cannot be separated from the range of $\mt{X}$. 
\end{itemize}
Consider any local minimum $\vc{\beta}^\star$. Then, 
\begin{align}\label{eq:opt-conds-pos}
\dd f( \mt{X}\vc{\beta}^\star )(\mt{X}\vc{\beta}^\star ) + \eta \vc{1}^T\vc{\beta}^\star \geq 0.
\end{align} 
\end{lemma}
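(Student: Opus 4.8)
The plan is to produce a feasible perturbation of $\vc{\beta}^\star$ along the ray $t\mapsto (1-t)\vc{\beta}^\star$ for small $t>0$, and then examine the first-order behavior of the objective along this inward radial direction. Concretely, for $t\in(0,1)$ the point $(1-t)\vc{\beta}^\star$ lies in $\mathbb{R}_+^p$ (since $\vc{\beta}^\star\in\mathbb{R}_+^p$), hence is feasible for \eqref{prob:loss-reg-lin-pos}. Writing $\vc{\beta}^\star - t\vc{\beta}^\star$, the corresponding direction in the feasible set at $\vc{\beta}^\star$ is $\vc{w} = -\vc{\beta}^\star$, and its image under $\mt{X}$ is $-\mt{X}\vc{\beta}^\star$. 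Since $\vc{\beta}^\star$ is a local minimum, the one-sided directional behavior of the objective $\vc{\beta}\mapsto f(\mt{X}\vc{\beta}) + \eta\vc{1}^T\vc{\beta}$ in the direction $\vc{w}=-\vc{\beta}^\star$ must be nonnegative. The linear term $\eta\vc{1}^T\vc{\beta}$ contributes exactly $\eta\vc{1}^T(-\vc{\beta}^\star) = -\eta\vc{1}^T\vc{\beta}^\star$ to the subderivative (it is smooth), so the subderivative of the composite term $f(\mt{X}\cdot)$ at $\vc{\beta}^\star$ in direction $-\vc{\beta}^\star$ must be at least $\eta\vc{1}^T\vc{\beta}^\star$; rearranged, this is exactly \eqref{eq:opt-conds-pos} after I relate $\dd(f\circ\mt{X})(\vc{\beta}^\star)(-\vc{\beta}^\star)$ to $\dd f(\mt{X}\vc{\beta}^\star)(-\mt{X}\vc{\beta}^\star)$ and observe $\dd f(\vc{u})(-\vc{u}) = -$ (something), or more directly work with $\dd f(\vc{u})(\vc{u})$ via the sign conventions. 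Let me restate: the first-order necessary condition at a local minimum is $\dd\big(f(\mt{X}\cdot)+\eta\vc{1}^T\cdot\big)(\vc{\beta}^\star)(\vc{w})\ge 0$ for every $\vc{w}$ pointing into the feasible cone; taking $\vc{w}=-\vc{\beta}^\star$ and using that the subderivative of a smooth function is linear while subderivative is subadditive gives $\dd f(\mt{X}\vc{\beta}^\star)(-\mt{X}\vc{\beta}^\star) - \eta\vc{1}^T\vc{\beta}^\star \ge 0$. Finally I use positive homogeneity/linearity structure: since along the ray we can equally perturb outward is blocked but I only need the inward direction, and by the regularity-based representation \eqref{eq:def-subderivative}, $\dd f(\vc{u})(-\vc{u}) = \sup_{\vc{g}\in\partial f(\vc{u})}\langle\vc{g},-\vc{u}\rangle = -\inf_{\vc{g}\in\partial f(\vc{u})}\langle\vc{g},\vc{u}\rangle$, and $\dd f(\vc{u})(\vc{u})=\sup_{\vc{g}}\langle\vc{g},\vc{u}\rangle \ge -\dd f(\vc{u})(-\vc{u})$, so I will need to be careful about which sign of the radial direction the stated inequality \eqref{eq:opt-conds-pos} actually uses — the clean path is: feasible inward direction $-\vc{\beta}^\star$ gives $\dd f(\mt{X}\vc{\beta}^\star)(-\mt{X}\vc{\beta}^\star)\ge \eta\vc{1}^T\vc{\beta}^\star$, but the statement has $+\mt{X}\vc{\beta}^\star$, so instead I should perturb to see that the statement as written follows from combining this with subadditivity $\dd f(\vc{u})(\vc{u}) + \dd f(\vc{u})(-\vc{u}) \ge \dd f(\vc{u})(\vc{0}) = 0$ — hence $\dd f(\mt{X}\vc{\beta}^\star)(\mt{X}\vc{\beta}^\star) \ge -\dd f(\mt{X}\vc{\beta}^\star)(-\mt{X}\vc{\beta}^\star)\ge -\eta\vc{1}^T\vc{\beta}^\star$, which is precisely \eqref{eq:opt-conds-pos}.

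The remaining work is to justify the chain rule $\dd(f\circ\mt{X})(\vc{\beta})(\vc{w}) = \dd f(\mt{X}\vc{\beta})(\mt{X}\vc{w})$, which is exactly where the three bulleted hypotheses enter. When $\mt{X}$ has full row rank $n$, the linear map is surjective and the standard subderivative chain rule (e.g. \cite[Theorem 10.6 / Exercise 10.7]{rockafellar2009variational}) applies with no constraint qualification issue. When instead $f$ is convex with $\op{dom} f$ not separable from the range of $\mt{X}$, the relative-interior constraint qualification holds and the convex chain rule gives the same identity; subdifferential regularity of $f$ on $\op{col}_+(\mt{X})$ is what lets me pass between subderivatives and subgradients via \eqref{eq:def-subderivative} and guarantees the composite is regular so that the first-order necessary condition (Fermat's rule for subderivatives, \cite[Theorem 10.1]{rockafellar2009variational}) is valid at the local minimum.

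The main obstacle — and the only genuinely delicate point — is the chain rule / constraint qualification bookkeeping: making sure that under the disjunctive hypothesis (full rank \emph{or} convexity-plus-nonseparability) the subderivative of $f(\mt{X}\cdot)$ genuinely equals $\dd f(\mt{X}\vc{\beta}^\star)(\mt{X}(\cdot))$ rather than merely an inequality in one direction, and that the regularity of $f$ on $\op{col}_+(\mt{X})$ (as opposed to all of $\mathbb{R}^n$) suffices since all relevant points $\mt{X}\vc{\beta}$ for feasible $\vc{\beta}$ lie in $\op{col}_+(\mt{X})$. Everything else — feasibility of $(1-t)\vc{\beta}^\star$, linearity of the subderivative of the smooth term $\eta\vc{1}^T\vc{\beta}$, and the subadditivity inequality $\dd f(\vc{u})(\vc{u})+\dd f(\vc{u})(-\vc{u})\ge 0$ — is routine and follows directly from the definitions recalled in \autoref{sec:notation}.
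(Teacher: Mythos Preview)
Your chain of inequalities at the end is broken. From the inward perturbation $\vc{w}=-\vc{\beta}^\star$ you correctly obtain
\[
\dd f(\mt{X}\vc{\beta}^\star)(-\mt{X}\vc{\beta}^\star)\;\ge\;\eta\,\vc{1}^T\vc{\beta}^\star,
\]
and from sublinearity of the subderivative you correctly obtain
\[
\dd f(\mt{X}\vc{\beta}^\star)(\mt{X}\vc{\beta}^\star)\;\ge\;-\,\dd f(\mt{X}\vc{\beta}^\star)(-\mt{X}\vc{\beta}^\star).
\]
But combining these gives $\dd f(\mt{X}\vc{\beta}^\star)(\mt{X}\vc{\beta}^\star)\ge -\dd f(\mt{X}\vc{\beta}^\star)(-\mt{X}\vc{\beta}^\star)$ together with $-\dd f(\mt{X}\vc{\beta}^\star)(-\mt{X}\vc{\beta}^\star)\le -\eta\,\vc{1}^T\vc{\beta}^\star$; the inequalities point in opposite directions and do \emph{not} yield $\dd f(\mt{X}\vc{\beta}^\star)(\mt{X}\vc{\beta}^\star)\ge -\eta\,\vc{1}^T\vc{\beta}^\star$. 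Your final displayed chain asserts $-\dd f(\mt{X}\vc{\beta}^\star)(-\mt{X}\vc{\beta}^\star)\ge -\eta\,\vc{1}^T\vc{\beta}^\star$, which is the \emph{reverse} of what the inward perturbation gives.

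The fix is immediate once you notice that the \emph{outward} direction $\vc{w}=+\vc{\beta}^\star$ is also feasible: $(1+t)\vc{\beta}^\star\in\mathbb{R}_+^p$ for all $t\ge 0$, so $\vc{\beta}^\star\in T_{\mathbb{R}_+^p}(\vc{\beta}^\star)$. Applying Fermat's rule in this direction, together with the same sum and chain rules you already justified, gives $\dd f(\mt{X}\vc{\beta}^\star)(\mt{X}\vc{\beta}^\star)+\eta\,\vc{1}^T\vc{\beta}^\star\ge 0$ directly, with no need for the subadditivity detour. The paper instead works on the subgradient side: it uses $\vc{0}\in\partial\tilde f(\vc{\beta}^\star)=\mt{X}^T\partial f(\mt{X}\vc{\beta}^\star)+\eta\vc{1}+N_{\mathbb{R}_+^p}(\vc{\beta}^\star)$ to extract a specific $\vc{g}\in\partial f(\mt{X}\vc{\beta}^\star)$, takes the inner product with $\vc{\beta}^\star$ (the normal-cone term vanishes by complementarity), and then bounds $\langle\vc{g},\mt{X}\vc{\beta}^\star\rangle$ by the supremum in \eqref{eq:def-subderivative}. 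Both routes rely on the same regularity and chain-rule bookkeeping you described.
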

\begin{proof}[Proof of \autoref{lem:opt-conds-pos}]
Indicator function $\indic(\cdot\,; C)$ being proper and lsc is equivalent to $C$ being nonempty and closed, respectively. Therefore, the first assumption implies that $\tilde{f}(\vc{\beta}) \coloneqq f (\mt{X}\vc{\beta})+\indic(\vc{\beta}; \mathbb{R}_+^p) + \eta \langle \vc{1},\vc{\beta}\rangle$ is proper and lsc. 
Since $\mathbb{R}_+^p$ is Clarke regular, its indicator function is regular; e.g., see \cite[Example 7.28]{rockafellar2009variational}. 
Since $f$ is proper and lsc, if the third condition is satisfied then by \cite[Theorem 10.6 or Exercise 10.7]{rockafellar2009variational}, regularity of $f$ implies regularity of $f(\mt{X}\cdot)$. 
Then, by \cite[Corollary 10.9]{rockafellar2009variational}, for any $\vc{\beta}\in\mathbb{R}_+^p$ for which $\mt{X}\vc{\beta}\in \op{dom}f$ we have 
$\partial \tilde{f}(\vc{\beta}) = \mt{X}^T (\partial f)( \mt{X} \vc{\beta}) + \eta \vc{1} + \partial \indic (\vc{\beta}; \mathbb{R}_+^p)$, where we used \cite[Exercise 10.7]{rockafellar2009variational}. 
Consider any local optimal solution $\vc{\beta}^\star$. Since the objective is proper, generalized Fermat's rule \cite[Theorem 10.1]{rockafellar2009variational} implies that $\vc{0} \in \partial \tilde{f}(\vc{\beta}^\star)$ and $\dd \tilde{f}(\vc{\beta}^\star)\geq 0$. Using \eqref{eq:def-subderivative} and \cite[Exercise 8.14]{rockafellar2009variational} we get the claimed inequality.
\end{proof}

\begin{lemma}\label{lem:eta-upperlimit-general}
Consider \eqref{prob:loss-reg} where $f:\mathbb{R}^n\to \Rext$ with $\vc{0}\in\op{dom}f$, $\eta>0$, and $r:\mathbb{R}^n\to\Rext_+$ is a nonnegative function which vanishes at $\vc{0}$. 
Assume that the following supremum exists 
	\begin{align*}
	\eta_0 \coloneqq 
	\sup_{\vc{\beta}}\, \{ { (f(\vc{0}) - f( \mt{X}\vc{\beta})) }/{ r(\vc{\beta}) } :~
	 \vc{\beta} \in \op{dom}(r) \setminus \{\vc{0}\} \}. 
	\end{align*}
	Then, $\eta\geq \eta_0$ implies $\vc{\beta}^\star = 0$ is an optimal solution.
\end{lemma}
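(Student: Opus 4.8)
The plan is to show that $\vc{\beta}^\star = \vc{0}$ achieves the minimal objective value by a direct comparison argument. First I would evaluate the objective at $\vc{\beta} = \vc{0}$: since $r(\vc{0}) = 0$, the objective equals $f(\mt{X}\vc{0}) + \eta\, r(\vc{0}) = f(\vc{0})$. So the claim reduces to showing that $f(\vc{0}) \leq f(\mt{X}\vc{\beta}) + \eta\, r(\vc{\beta})$ for every $\vc{\beta}$, i.e. that $\vc{0}$ is a global minimizer of the objective.

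Next I would split into two cases according to whether $\vc{\beta} \in \op{dom}(r)$. If $\vc{\beta} \notin \op{dom}(r)$, then $r(\vc{\beta}) = +\infty$ (recall $r$ is $\Rext_+$-valued), and since $f(\mt{X}\vc{\beta}) > -\infty$ need not hold in general — but actually we only need the objective to be $+\infty$ there, which follows as long as $f(\mt{X}\vc{\beta}) \neq -\infty$; if $f$ can take $-\infty$ this needs a word, but typically $f$ is proper so this is fine, and even $f(\mt{X}\vc{\beta})=-\infty$ combined with $\eta r(\vc\beta)=+\infty$ would need care. The substantive case is $\vc{\beta} \in \op{dom}(r) \setminus \{\vc{0}\}$ (the case $\vc{\beta} = \vc{0}$ being trivial). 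There, $r(\vc{\beta}) > 0$: this uses that $r$ vanishes only where appropriate — actually we need $r(\vc\beta)>0$ for $\vc\beta\ne\vc 0$, which one should note follows from level-boundedness of the polyhedral gauge in the setting of \eqref{prob:loss-reg}, or simply restrict the supremum's denominator to be positive. Given $r(\vc{\beta}) > 0$, the definition of $\eta_0$ as a supremum yields
\begin{align*}
\frac{f(\vc{0}) - f(\mt{X}\vc{\beta})}{r(\vc{\beta})} \leq \eta_0 \leq \eta,
\end{align*}
and multiplying through by $r(\vc{\beta}) > 0$ gives $f(\vc{0}) - f(\mt{X}\vc{\beta}) \leq \eta\, r(\vc{\beta})$, which rearranges to exactly $f(\vc{0}) \leq f(\mt{X}\vc{\beta}) + \eta\, r(\vc{\beta})$, as desired.

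The main obstacle — really the only subtlety — is handling the extended-real-valued arithmetic cleanly: one must make sure the quotient defining $\eta_0$ is well-posed (the denominator $r(\vc{\beta})$ is strictly positive on $\op{dom}(r)\setminus\{\vc 0\}$, using level-boundedness), that $f(\vc 0)$ is finite (guaranteed by $\vc 0 \in \op{dom} f$ together with $f$ being $\Rext$-valued, so $f(\vc 0) < +\infty$, and we may take $f(\vc 0) > -\infty$ or handle that degenerate case separately), and that the case $\vc{\beta}\notin\op{dom}(r)$ is dispatched by the objective being $+\infty$ there. Once these bookkeeping points are settled, the inequality chain above is immediate and the argument is complete; I would present it in the two-case form (inside/outside $\op{dom}(r)$, with $\vc{\beta}=\vc{0}$ separated out) to keep the extended-arithmetic entirely transparent.
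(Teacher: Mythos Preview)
Your proposal is correct and follows essentially the same direct comparison argument as the paper: show $f(\vc{0}) \leq f(\mt{X}\vc{\beta}) + \eta\, r(\vc{\beta})$ for all $\vc{\beta}$ by using the definition of $\eta_0$ as a supremum together with $\eta \geq \eta_0$ and $r(\vc{0})=0$. The paper's proof is simply the terse one-line inequality chain $f(\mt{X}\vc{\beta}) + \eta r(\vc{\beta}) \geq f(\mt{X}\vc{\beta}) + \eta_0 r(\vc{\beta}) \geq f(\vc{0})$ for nonzero $\vc{\beta}\in\op{dom}(r)$, without the extended-real bookkeeping you (rightly) flag.
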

\begin{proof}[Proof of~\autoref{lem:eta-upperlimit-general}]
Note that $\eta \geq \eta_0$ implies 
$f(\mt{X}\vc{\beta}) + \eta r(\vc{\beta}) \geq f(\mt{X}\vc{\beta}) + \eta_0 r(\vc{\beta}) \geq f(\vc{0}) = f(\vc{0}) + \eta \, r(\vc{0})$ for all nonzero $\vc{\beta}\in\op{dom}(r)$, where we used $r(\vc{0})=0$. Therefore, $\vc{0}$ is an optimal solution. 
\end{proof}

\begin{lemma}\label{lem:eta-0-convex}
Consider \eqref{prob:loss-reg} where $f:\mathbb{R}^n\to\Rext$ is convex and $r:\mathbb{R}^n\to\Rext_+$ is a gauge function where $r(\vc{\beta})=0$ implies $\vc{\beta}=\vc{0}$. Then, $\eta > \inf\{ r^\circ(-\mt{X}^T\vc{g}):~ \vc{g}\in\partial f(\vc{0})\}$ implies $\vc{\beta}^\star = \vc{0}$ is the unique solution.   
\end{lemma}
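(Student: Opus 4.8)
The plan is to avoid invoking any subdifferential sum rule (and hence any constraint qualification) and instead bound the objective $F(\vc\beta)\coloneqq f(\mt X\vc\beta)+\eta\, r(\vc\beta)$ directly from below. Since $f$ and $r$ are convex, $F$ is convex, so it suffices to show $F(\vc\beta) > F(\vc 0) = f(\vc 0)$ for every $\vc\beta\neq\vc 0$; this gives at once that $\vc\beta^\star=\vc 0$ is optimal and that it is the only optimal solution. Note that one could alternatively try to deduce optimality of $\vc 0$ from \autoref{lem:eta-upperlimit-general} together with a minimax identity identifying $\eta_0$ with $\inf_{\vc g\in\partial f(\vc 0)} r^\circ(-\mt X^T\vc g)$, but the direct route is shorter and also yields the strict inequality needed for uniqueness.

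First I would extract a good subgradient. The hypothesis $\eta>\inf\{r^\circ(-\mt X^T\vc g):~\vc g\in\partial f(\vc 0)\}$ forces $\partial f(\vc 0)\neq\emptyset$ (otherwise the infimum is $+\infty$ and the hypothesis is vacuous), hence $f(\vc 0)$ is finite, and it provides some $\vc g\in\partial f(\vc 0)$ with $c\coloneqq r^\circ(-\mt X^T\vc g)<\eta$; in particular $c<\infty$, i.e.\ $-\mt X^T\vc g\in\op{dom}r^\circ$, which is exactly what makes the gauge--polar inequality \eqref{eq:gauge-polar-ineq} applicable with this $\vc g$.

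Now fix $\vc\beta\neq\vc 0$. If $\vc\beta\notin\op{dom}r$ then $r(\vc\beta)=+\infty$ and $F(\vc\beta)=+\infty>f(\vc 0)$, so assume $\vc\beta\in\op{dom}r$. The subgradient inequality for $f$ at $\vc 0$ gives $f(\mt X\vc\beta)\geq f(\vc 0)+\langle\vc g,\mt X\vc\beta\rangle=f(\vc 0)+\langle\mt X^T\vc g,\vc\beta\rangle$, while \eqref{eq:gauge-polar-ineq} applied to $-\mt X^T\vc g\in\op{dom}r^\circ$ and $\vc\beta\in\op{dom}r$ gives $\langle\mt X^T\vc g,\vc\beta\rangle=-\langle-\mt X^T\vc g,\vc\beta\rangle\geq -r^\circ(-\mt X^T\vc g)\,r(\vc\beta)=-c\,r(\vc\beta)$. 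Combining, $F(\vc\beta)=f(\mt X\vc\beta)+\eta\,r(\vc\beta)\geq f(\vc 0)+(\eta-c)\,r(\vc\beta)$. Since $\vc\beta\neq\vc 0$ and $r(\vc\beta)=0$ would force $\vc\beta=\vc 0$ by hypothesis, we have $r(\vc\beta)>0$; together with $\eta-c>0$ this yields $F(\vc\beta)>f(\vc 0)=F(\vc 0)$.

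I do not expect a genuine obstacle here: the whole argument is immediate once the right subgradient $\vc g$ is selected, and the only points requiring a little care are the bookkeeping showing that the hypothesis makes $\partial f(\vc 0)$ nonempty and $r^\circ(-\mt X^T\vc g)$ finite (so \eqref{eq:gauge-polar-ineq} is legitimate), and the separate, trivial handling of $\vc\beta\notin\op{dom}r$. This is essentially the ``convex, clean'' counterpart of the sharper but more delicate thresholds on $\eta$ that appear in \autoref{thm:main-detailed}.
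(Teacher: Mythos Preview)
Your proof is correct and follows essentially the same route as the paper: pick $\vc g\in\partial f(\vc 0)$ with $r^\circ(-\mt X^T\vc g)<\eta$, combine the subgradient inequality for $f$ at $\vc 0$ with the gauge--polar inequality \eqref{eq:gauge-polar-ineq}, and conclude $F(\vc\beta)>F(\vc 0)$ whenever $r(\vc\beta)>0$. Your version is slightly more explicit about the bookkeeping (nonemptiness of $\partial f(\vc 0)$, finiteness of $r^\circ(-\mt X^T\vc g)$, the case $\vc\beta\notin\op{dom}r$), but the argument is the same.
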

\begin{proof}[Proof of \autoref{lem:eta-0-convex}]
Convexity of $f$ and the inequality in \eqref{eq:gauge-polar-ineq} yield $f(\mt{X}\vc{\beta})
\geq f(\vc{0}) + \langle \vc{g}, \mt{X}\vc{\beta}\rangle
\geq f(\vc{0}) - 
r^\circ(-\mt{X}^T\vc{g})\cdot r(\vc{\beta})$ for all $\vc{g}\in\partial f(\vc{0})$ and all $\vc{\beta}\in\op{dom} (r)$. Therefore, $f(\mt{X}\vc{\beta})+ \eta r(\vc{\beta}) > f(\vc{0})$ for all $\beta$ with $r(\vc{\beta})>0$. This establishes the claim. 
\end{proof}

\subsection{The Assumptions and The Main Theorem}\label{sec:conds-thms}
Before presenting our results, we discuss our two main assumptions here.

\paragraph{On the Loss.}
We require the following assumption on the loss function. This condition holds for any convex function which attains its minimum, and for any star-convex function \cite[Definition 1]{nesterov2006cubic}, but the requirements are weaker than convexity and star-convexity and seem to be mild for any {\em fidelity} or {\em loss} function in estimation. Recall that a set $C\subseteq \mathbb{R}^n$ is {\em star-convex} if there exists a point $\vc{u}_0\in C$ such that the line segments between $\vc{u}_0$ and any $\vc{u}\in C$ are contained in $C$. 

\begin{setup}
\label{cond:f-star}
Given a function $f:\mathbb{R}^n\to\Rext$ whose domain is star-convex with respect to some $\vc{y}\in\mathbb{R}^n$, assume the followings hold: 
\begin{itemize}

\item 
There exists some $\psilow\in \mathbb{R}$ for which 
\[
f( \lambda \vc{u} + (1-\lambda)\vc{y} )  - \psilow 
\leq \lambda \cdot 
( f(\vc{u})   - \psilow )
\] 
for all $\vc{u}\in \op{dom}f$ and all $\lambda \in (0,1)$. 
Equivalently, there exists some $\psilow\in \mathbb{R}$ for which $\op{epi}f\coloneqq \{(\vc{u},t):~f(\vc{u})\leq t\}$ is star-convex with respect to $(\vc{y},\psilow)$. 

\item $f$ is non-decreasing on all open line segments going out of $\vc{y}$; i.e., 
\[
f( \lambda \vc{u} + (1-\lambda)\vc{y} ) \leq f(\vc{u})
\] for all $\vc{u}\in \op{dom}f$ and all $\lambda\in (0,1)$. 
\end{itemize}
\end{setup}

Consider the sublevel set $S_{\leq} = \{\vc{u}\in\op{dom}f:~ f(\vc{u}) \leq \psilow\}$ and the superlevel set $S_{\geq} = \{\vc{u}\in\op{dom}f:~ f(\vc{u}) \geq \psilow\}$. Then, on $S_{\leq}$ the second condition implies the first and on $S_{\geq}$ the first assumption implies the second. Therefore, \autoref{cond:f-star} can be equivalently stated using these sets.

Next, suppose $f$ satisfies the first condition of \autoref{cond:f-star} with some $\psilow$. Then; 
\begin{itemize}

\item  
$\psilow = f(\vc{y})=\inf f$ is equivalent to $f$ being {\em star-convex} with respect to $\vc{y}$; see \autoref{sec:facts} for a list of examples. Note that any convex function with a nonempty set of minimizers is star-convex hence satisfies \autoref{cond:f-star} with $\psilow = \inf f$ and $\vc{y}$ being any of its minimizers. As another example, any positively homogeneous function of any order $c\geq 1$ is star-convex with respect to the origin. 

\item  
When $\psilow > \inf f$, then \autoref{cond:f-star} allows for a more relaxed behavior within the level set $S_{\leq} = \{\vc{u}\in\op{dom}f:~ f(\vc{u}) \leq \psilow\}$. See \autoref{fig:f-star} for an illustration. 

\item The first condition of \autoref{cond:f-star} can be viewed as an additive relaxation of star-convexity, while multiplicative relaxations also exist in the literature. 

\end{itemize}

Lastly, consider the closures of a partition of $[0, \infty)$ into nonempty intervals and consider functions defined on each of these closed segments where each satisfies the inequality in the first part of \autoref{cond:f-star} for some $\psilow$. Then, for the maximum of these values of $\psilow$, the concatenation of these functions also satisfies the same inequality. See \autoref{fig:f-star} for an illustration.

\begin{figure}[h!]
	\centering
	
	\begin{tikzpicture}
    
    \def\yz{-1.3}  
	\def\ymin{-1.7}
	\def\plo{-.3}

	\newcommand{\drawCut}[2]{
		\draw[dotted, thick] (0,\plo) -- (#1,#2);
	}	

	\newcommand*{\ExtractCoordinate}[1]{\path (#1); \pgfgetlastxy{\XCoord}{\YCoord};}	
	\newcommand{\drawBezstar}[6]{
		\draw[postaction={decorate,decoration={markings,
    			mark=at position 0.25 with {\coordinate (A);},
    			mark=at position 0.50 with {\coordinate (B);},
    			mark=at position 0.75 with {\coordinate (C);},
    			mark=at position 0.01 with {\coordinate (D);}
  						}}] 
  						(#1,#2) .. controls (#3,#4) and ({#1+#5-#3},{#2+#6-#4}) .. (#5,#6);
		\ExtractCoordinate{$(A)$};
		\drawCut{\XCoord}{\YCoord}
		\ExtractCoordinate{$(B)$};
		\drawCut{\XCoord}{\YCoord}
		\ExtractCoordinate{$(C)$};
		\drawCut{\XCoord}{\YCoord}
		\ExtractCoordinate{$(D)$};
		\drawCut{\XCoord}{\YCoord}
	}

	
	\draw[domain=0:1,  variable=\x, 
			decoration={post length=3mm,
                 pre length=3mm,markings,
    			mark=at position 0.01 with {\coordinate (A);},
    			mark=at position 0.50 with {\coordinate (B);},
    			mark=at position 0.95 with {\coordinate (C);}
  						}, postaction={decorate} ] 
  			plot ({\x}, {log2(1+1)/log2(3.5)-1.65   });

	\ExtractCoordinate{$(A)$};
	\drawCut{\XCoord}{\YCoord}
	\ExtractCoordinate{$(B)$};
	\drawCut{\XCoord}{\YCoord}
	\ExtractCoordinate{$(C)$};
	\drawCut{\XCoord}{\YCoord}
	\drawCut{1}{{log2(1+1)/log2(3.5)-1.65}}
	\drawCut{1}{{log2(1+1)/log2(3.5)-1.5}}

	\draw[domain=1:2.5, variable=\x, 
			decoration={post length=3mm,
                 pre length=3mm,markings,
    			mark=at position 0.05 with {\coordinate (A);},
    			mark=at position 0.50 with {\coordinate (B);},
    			mark=at position 0.99 with {\coordinate (C);}
  						}, postaction={decorate} ] 
				plot ({\x}, {sqrt(sqrt(\x-.95))-1.56}); 
	\ExtractCoordinate{$(A)$};
	\drawCut{\XCoord}{\YCoord}
	\ExtractCoordinate{$(B)$};
	\drawCut{\XCoord}{\YCoord}
	\ExtractCoordinate{$(C)$};
	\drawCut{\XCoord}{\YCoord}

	\draw (2.5,-.3)--(3,-.3);
	\drawCut{2.5}{-.3}
	\drawBezstar{3}{-.3}{3.9}{-.1}{5.5}{1.7}

	\draw (5.5,1.7) -- ({5.5*1.1},{1.7*1.1});
	\drawCut{5.5}{1.7}
	\drawCut{5.5*1.07}{1.7*1.07}
	
	\draw [gray, ->] (-.5,\yz) -- (6.5,\yz) 
		node[label={[black,label distance=0cm]0:$\lambda$}] {} ;
	\draw [gray, ->] (0,\ymin) -- (0,2);
	\node[label={[label distance=0cm]180:$\psilow$}] (psilow) at (0,\plo) {};

	\end{tikzpicture}

	\caption{An example of $f(\lambda \vc{u} + (1-\lambda)\vc{y})$, for some fixed $\vc{u}$, where $f$ satisfies \autoref{cond:f-star}. }
	\label{fig:f-star}	
\end{figure}

\paragraph{On Data.} Next, we state our assumption on data, namely the design matrix $\mt{X}$ and a  point $\vc{y}$ related to the reference point used by the loss function as characterized in \autoref{cond:f-star}.   
\begin{setup}[Vertex Non-cover Condition]\label{condn:ver-pos}
Given $\mt{X}= \begin{bmatrix}
\vc{x}_1 & \cdots & \vc{x}_p
\end{bmatrix}\in\mathbb{R}^{n\times p}$ and $\vc{y}\in\mathbb{R}^n$, assume 
$\vc{y}\neq \vc{x}_i$ for all $i\in[p]$ and 
\[
\op{ver}(\op{conv}( \{\vc{x}_1, \ldots, \vc{x}_p, \vc{y}\} )) = 
\{\vc{x}_1, \ldots, \vc{x}_p, \vc{y}\}. 
\]
\end{setup}
Let us provide a sufficient condition for \autoref{condn:ver-pos} to illustrate the naturalness of \autoref{cond:f-star} in various settings. 
Note that when the output of an estimator is not equivariant under scaling of the input data, it is common to consider a {\em standardization} step on the input; e.g., see \cite{hastie2009elements}. In such cases, \autoref{condn:ver-pos} amounts to simply insuring that $\vc{y}$ is not equal to any of $\vc{x}_1, \ldots, \vc{x}_p$. Similarly, when working with equal-norm frames, a finite difference matrix, matrices encoding flow constraints, among many other examples, the columns of $\mt{X}$ are already normalized and \autoref{condn:ver-pos} holds naturally when $\vc{y}$ is being appropriately normalized (or $\eta$ is below a threshold; as we will see in \autoref{thm:main-detailed} and \autoref{lem:perspective-props}.)  
More generally, when the columns of the design matrix $\mt{X}$ belong to the boundary of a compact convex set with a `curved' boundary, \autoref{condn:ver-pos} is easy to verify. 
\begin{lemma}[A Sufficient Condition]\label{lem:ver-on-bdry}
Suppose $\vc{x}_1,\ldots, \vc{x}_p$ are on the boundary of a strictly convex set $C$ (i.e., all boundary points of $C$ are exposed. It is enough to assume $\vc{x}_1,\ldots, \vc{x}_p$ are exposed boundary points of $C$.) 
Then, for all $i\in[p]$, we have $C \cap (\vc{x}_i - T(\vc{x}_i; \op{conv}(\mt{X}))) = \{\vc{x}_i\}$. Moreover, \autoref{condn:ver-pos} holds for $\vc{x}_1,\ldots, \vc{x}_p$ and any $\vc{y} \in  C \setminus \op{conv}(\mt{X})$. 
\end{lemma}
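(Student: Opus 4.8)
The plan is to let the single hypothesis that is actually used — each $\vc{x}_i$ is an \emph{exposed} point of $C$ — do all the work, through the supporting hyperplane it provides. For each $i$, fix $\vc{a}_i\in\mathbb{R}^n$ and $b_i\in\mathbb{R}$ with $\langle\vc{a}_i,\vc{x}_i\rangle = b_i$ and $\langle\vc{a}_i,\vc{u}\rangle < b_i$ for every $\vc{u}\in C\setminus\{\vc{x}_i\}$. Under the stated form of the hypothesis, ``all boundary points of $C$ are exposed'', such a pair exists for every $\vc{x}_i\in\partial C$; but, as the lemma itself notes, only the exposedness of $\vc{x}_1,\dots,\vc{x}_p$ will be used.

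For the first assertion I would argue as follows. Since $\vc{x}_1,\dots,\vc{x}_p\in C$ we have $\op{conv}(\mt{X})\subseteq C$, so $\op{conv}(\mt{X})-\vc{x}_i$ sits inside the closed convex cone $K_i\coloneqq\{\vc{w}:\langle\vc{a}_i,\vc{w}\rangle\le 0\}$. Because $K_i$ is closed under nonnegative scaling and under closure, the definition $T(\vc{x}_i;\op{conv}(\mt{X})) = \op{cl}\{\vc{w}: \vc{x}_i+\lambda\vc{w}\in\op{conv}(\mt{X})\text{ for some }\lambda>0\}$ yields $T(\vc{x}_i;\op{conv}(\mt{X}))\subseteq K_i$, hence $\vc{x}_i - T(\vc{x}_i;\op{conv}(\mt{X}))\subseteq\{\vc{z}:\langle\vc{a}_i,\vc{z}\rangle\ge b_i\}$. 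Any $\vc{z}$ in this halfspace that also lies in $C$ satisfies $\langle\vc{a}_i,\vc{z}\rangle = b_i$ (since $\langle\vc{a}_i,\cdot\rangle\le b_i$ on $C$), so $\vc{z}=\vc{x}_i$ by the choice of $\vc{a}_i$. Conversely $\vc{0}\in T(\vc{x}_i;\op{conv}(\mt{X}))$ gives $\vc{x}_i\in C\cap(\vc{x}_i-T(\vc{x}_i;\op{conv}(\mt{X})))$, so the intersection is exactly $\{\vc{x}_i\}$.

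For the second assertion, take any $\vc{y}\in C\setminus\op{conv}(\mt{X})$. Then $\vc{y}\neq\vc{x}_i$ for all $i$ since each $\vc{x}_i\in\op{conv}(\mt{X})$. Writing $P\coloneqq\{\vc{x}_1,\dots,\vc{x}_p,\vc{y}\}$, and using that $\op{ver}(\op{conv}(P))\subseteq P$ always holds, it remains to check that each element of $P$ is a vertex of the polytope $\op{conv}(P)$. Each $\vc{x}_i$ is exposed, hence an extreme point of $C$; since $\op{conv}(P)\subseteq C$ (all of $\vc{x}_1,\dots,\vc{x}_p,\vc{y}$ lie in the convex set $C$) and an extreme point of $C$ that lies in a convex subset of $C$ is extreme in that subset, $\vc{x}_i$ is an extreme point — equivalently a vertex — of $\op{conv}(P)$. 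The point $\vc{y}$ is a vertex of $\op{conv}(P)$ because a point of a finite set is a vertex of its convex hull precisely when it is not in the convex hull of the remaining points, and $\vc{y}\notin\op{conv}(\{\vc{x}_1,\dots,\vc{x}_p\})=\op{conv}(\mt{X})$ by hypothesis. Hence $\op{ver}(\op{conv}(P))=P$, which is \autoref{condn:ver-pos}.

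I expect the only genuine friction to be in the first assertion, where one must be sure that passing to the tangent cone respects the ambient containment $\op{conv}(\mt{X})\subseteq\{\langle\vc{a}_i,\cdot\rangle\le b_i\}$. Routing this through a single supporting hyperplane — rather than through the edge directions of the polytope $\op{conv}(\mt{X})$ at $\vc{x}_i$ — makes the step immediate, since a halfspace with the origin on its boundary is a closed convex cone and therefore absorbs both the positive hull and the closure appearing in the definition of $T(\cdot\,;\cdot)$. Everything else — ``exposed $\Rightarrow$ extreme'', ``extreme points of a polytope are its vertices'', and ``a point of a finite set is a vertex of its hull iff it is not a convex combination of the others'' — is standard and routine.
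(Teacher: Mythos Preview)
Your proof is correct. The route differs from the paper's in a way worth noting. For the first assertion, the paper argues by contradiction: if some $\vc{z}\neq\vc{x}_i$ lies in $C\cap(\vc{x}_i-T(\vc{x}_i;\op{conv}(\mt{X})))$, then (unwinding the tangent cone as in the proof of \autoref{lem:negTangent}) one obtains $\vc{x}_i\in\op{conv}(\{\vc{z},\vc{x}_1,\dots,\vc{x}_p\}\setminus\{\vc{x}_i\})$, contradicting the exposing hyperplane at $\vc{x}_i$. Your argument instead uses the exposing hyperplane up front to trap $T(\vc{x}_i;\op{conv}(\mt{X}))$ inside a closed halfspace through the origin, which is cleaner and sidesteps the convex-combination manipulation. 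For the second assertion, the paper \emph{deliberately} takes ``a longer route to demonstrate the role of tangent cones and the previous claim'': it assumes some $\vc{x}_i$ fails to be a vertex, extracts a convex representation with positive weight on $\vc{y}$, rewrites it as $\vc{x}_i-\vc{y}\in T(\vc{x}_i;\op{conv}(\mt{X}))$, and then invokes the first part to force $\vc{y}=\vc{x}_i$. Your direct argument (extreme in $C$ $\Rightarrow$ extreme in any convex subset containing the point; $\vc{y}\notin\op{conv}(\mt{X})$ $\Rightarrow$ $\vc{y}$ is a vertex) is shorter and self-contained, at the cost of not exhibiting the link between \autoref{condn:ver-pos} and the tangent-cone characterization that the paper later exploits in \autoref{lem:negTangent} and \autoref{lem:negTangent-extray}.
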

\begin{proof}[Proof of \autoref{lem:ver-on-bdry}]
Observe that $\vc{x}_1\in C \cap (\vc{x}_1 - T(\vc{x}_1; \op{conv}(\mt{X})))$. 
Contrapositively, suppose there exists $\vc{z}\neq \vc{x}_1$ where $\vc{z}\in C \cap (\vc{x}_1 - T(\vc{x}_1; \op{conv}(\mt{X})))$. Therefore, similar to the proof of \autoref{lem:negTangent}, we can conclude that $\vc{x}_1$ is in the convex hull of $\{\vc{z}, \vc{x}_2, \ldots, \vc{x}_p\}$. On the other hand, since $\vc{x}_1$ is an exposed point of $C$, there exists a hyperplane that separates it from the rest of $C$, and specifically from $\vc{z}, \vc{x}_2, \ldots, \vc{x}_p$. This is a contradiction, which establishes the first claim. 
For the second part, we take a longer route to demonstrate the role of tangent cones and the previous claim. Take any $\vc{y} \in  C \setminus \op{conv}(\mt{X})$ and assume that $\vc{x}_1$ is not a vertex of $\op{conv}(\vc{y}, \vc{x}_1, \ldots, \vc{x}_p)$. Therefore, there exists $\vc{\lambda}\geq \vc{0}$ with $\vc{1}^T\vc{\lambda}=1$ for which $\vc{x}_1 = \lambda_1 \vc{y} + \sum_{i=2}^p \lambda_i \vc{x}_i$. Note that $\vc{x}_1$ cannot be in the convex hull of $\vc{x}_2, \ldots, \vc{x}_p$ due to the exposedness assumption; therefore, $\lambda_1$ has to be positive. 
Moreover, we equivalently have $\vc{x}_1 + \lambda_1 (\vc{x}_1-\vc{y}) = \sum_{i=1}^p \lambda_i\vc{x}_i \in \op{conv}(\mt{X})$. Therefore, since $\lambda_1>0$, $\vc{x}_1-\vc{y} \in T(\vc{x}_1; \op{conv}(\mt{X}))$ which implies $\vc{y}\in C \cap (\vc{x}_1-T(\vc{x}_1; \op{conv}(\mt{X})))=\{\vc{x}_1\}$ where we used the first part. Recall that $\vc{y}\not\in \op{conv}(\mt{X})$, hence this is a contradiction, establishing the second claim.
\end{proof}
Note that \autoref{lem:ver-on-bdry} is not necessary for \autoref{condn:ver-pos}; e.g., see the illustrations in \autoref{sec:illust}. 
We provide necessary conditions and further sufficient conditions for \autoref{condn:ver-pos} in \autoref{sec:originT}.

\paragraph{The Main Theorem.} 
With considerations in \autoref{cond:f-star} and \autoref{condn:ver-pos}, we state our main guarantee.

\begin{theorem}[Main Theorem]\label{thm:main-detailed}
Given $\mt{X} = \begin{bmatrix} \vc{x}_1 & \cdots & \vc{x}_p \end{bmatrix}\in\mathbb{R}^{n\times p}$, $f:\mathbb{R}^n\to\Rext$, and $\eta>0$, consider \eqref{prob:loss-reg-lin-pos}. 
Suppose $f$ satisfies \autoref{cond:f-star} with some $\vc{y}\in\mathbb{R}^n$ and $\psilow\in\mathbb{R}$. 
Moreover, suppose that there exists $\gamma\in[0,1]$ and $\vc{h}\in \op{dom}((f_\gamma)^\star)$, where 
\[
f_\gamma(\vc{u}) \coloneqq f( \vc{u} ) - \gamma \cdot \dd f(\vc{u})(\vc{u}),
\]
for which $\eta$ satisfies 
\begin{align}\label{eq:cond-eta}
	(1-\gamma)\eta \leq  
	-\min_{i\in[p]} \langle\vc{x}_i,\vc{h}\rangle
	<\alpha \cdot \left[ - (f_\gamma)^\star (\vc{h})  - \psilow \right] -  \gamma \eta \,,
\end{align} 
for some $\alpha >0$. 
Assume that $\mt{X}$ and $\alpha \vc{y}$ satisfy \autoref{condn:ver-pos}. 
Moreover, assume $f$ and $\mt{X}$ satisfy the assumptions of \autoref{lem:opt-conds-pos}.  
Consider the convex cone 
\[
\T \coloneqq 
\bigl\{ \sum_{i=1}^p \lambda_i (\vc{x}_i - \alpha \vc{y} ) :~ \lambda_i \geq 0   \bigr\},
\]
and assume for some $\cal{I}\subseteq [p]$, if $\{t(\vc{x}_i- \alpha\vc{y}):~ t\geq 0\}$ is an extreme ray of $\T$ then $i\in\cal{I}$. 
Then, the two optimization problems in \eqref{prob:loss-reg-lin-pos} and \eqref{prob:loss1F-I} have the same optimal values and the same sets of optimal solutions; i.e., $\cal{I}$ is a superset of the supports of all optimal solutions for \eqref{prob:loss-reg-lin-pos}.  
\end{theorem}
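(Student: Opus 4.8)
The plan is to prove the stronger claim that \emph{every} minimizer $\vc\beta^\star$ of \eqref{prob:loss-reg-lin-pos} has $\op{supp}\vc\beta^\star\subseteq\cal I$; the asserted equivalence with \eqref{prob:loss1F-I} then follows at once, since the feasible set of \eqref{prob:loss1F-I} lies inside that of \eqref{prob:loss-reg-lin-pos} and contains all of its minimizers. If $\vc\beta^\star=\vc 0$ there is nothing to prove; otherwise set $\vc u^\star:=\mt X\vc\beta^\star$ and $s:=\langle\vc 1,\vc\beta^\star\rangle>0$, so that $\vc u^\star-\alpha s\vc y=\sum_i\beta_i^\star(\vc x_i-\alpha\vc y)\in\T$.

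The first step (one of the propositions of \autoref{sec:props}) is purely geometric and isolates the role of \autoref{condn:ver-pos}: since $\alpha\vc y$ is a vertex of $\op{conv}(\{\vc x_1,\dots,\vc x_p,\alpha\vc y\})$ it is strictly separated from $\op{conv}(\mt X)$, which exhibits a hyperplane strictly supporting the cone $\T$ at the origin, so $\T$ is pointed and each of its rays is a nonnegative combination of its extreme rays — all indexed within $\cal I$. Hence it is enough to rule out $\beta_j^\star>0$ when $\{t(\vc x_j-\alpha\vc y):t\ge 0\}$ is not an extreme ray of $\T$, i.e.\ $\vc x_j-\alpha\vc y=\sum_{i\in\cal I}\mu_i(\vc x_i-\alpha\vc y)$ with $\mu_i\ge 0$; equivalently $\vc x_j=\sum_i\mu_i\vc x_i+(1-\mu)\alpha\vc y$ with $\mu:=\sum_i\mu_i$, and because $\vc x_j$ is a vertex it cannot be a convex combination of the remaining $\vc x_i$ and $\alpha\vc y$, which forces $\mu>1$.

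The remaining steps (the other propositions) close the case by contradiction. With $\mu>1$ as above, the path $\vc\beta(t):=\vc\beta^\star-t\vc e_j+\tfrac t\mu\sum_{i\in\cal I}\mu_i\vc e_i$ is feasible for $t\in[0,\beta_j^\star]$, keeps $\langle\vc 1,\vc\beta(t)\rangle\equiv s$, and gives $\mt X\vc\beta(t)=\vc u^\star-\tfrac{t(\mu-1)}{\mu}(\vc x_j-\alpha\vc y)$, so optimality of $\vc\beta^\star$ forces $f$ to be nondecreasing along this segment out of $\vc u^\star$. Separately, \eqref{eq:opt-conds-pos} gives $\dd f(\vc u^\star)(\vc u^\star)\ge-\eta s$; rewriting through $f_\gamma(\vc u^\star)=f(\vc u^\star)-\gamma\,\dd f(\vc u^\star)(\vc u^\star)$ and the Fenchel--Young inequality for $f_\gamma$ at the distinguished $\vc h$ turns \eqref{eq:cond-eta} into a quantitative relation among $f(\vc u^\star)$, $s$, and $\sum_i\beta_i^\star\langle\vc x_i,\vc h\rangle\ge s\min_i\langle\vc x_i,\vc h\rangle$. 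Feeding this relation, the additive star-convexity $f(\lambda\vc u^\star+(1-\lambda)\vc y)\le\psilow+\lambda\bigl(f(\vc u^\star)-\psilow\bigr)$, and the monotonicity of $f$ along rays from $\vc y$ (both from \autoref{cond:f-star}) into a feasible point built from this path and a contraction of its image toward $\vc y$, one obtains an objective value \emph{strictly} below $f(\vc u^\star)+\eta s$ — contradicting optimality. Hence no such $j$ exists and $\op{supp}\vc\beta^\star\subseteq\cal I$.

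The main obstacle is exactly this last combination. Taken alone, each tool is too weak: the mass-preserving perturbation yields only a non-strict inequality along one segment (consistent with $\vc\beta^\star$ staying optimal), \eqref{eq:opt-conds-pos} controls only one subderivative, and \autoref{cond:f-star} constrains $f$ only along rays through $\vc y$. The strict improvement has to be squeezed from the tension between $f$ being nondecreasing toward $\alpha\vc y-\vc x_j$ and nonincreasing toward $\vc y$ at $\vc u^\star$, balanced against the \emph{two-sided} interval \eqref{eq:cond-eta}; here the pair $(\gamma,\vc h)$ serves to replace $f$ by a tight affine lower estimate on which the accounting can be carried out uniformly over all minimizers, and the placement of $\alpha\vc y$ (rather than $\vc y$) in $\T$ is what keeps the perturbation inside a feasible set compatible with that estimate. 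For a convex $f$ attaining its minimum — e.g.\ the least-squares loss of \autoref{thm:main} — strict convexity produces the strict inequality directly, which is the clearest way to see the mechanism; in general the interval condition plays that role.
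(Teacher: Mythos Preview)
Your overall architecture is recognisably that of the paper: a geometric step isolating \autoref{condn:ver-pos} and the inequality $\mu>1$, a perturbation step exploiting optimality together with \autoref{cond:f-star}, and a Fenchel--Young step through \eqref{eq:opt-conds-pos}. But the perturbation you pick is the wrong one, and the closing ``combination'' step is not an argument --- it is a hope.

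Your path $\vc\beta(t)$ is mass-preserving ($\langle\vc 1,\vc\beta(t)\rangle\equiv s$) and moves $\mt X\vc\beta(t)$ along the direction $\alpha\vc y-\vc x_j$. From this, optimality indeed gives $f$ nondecreasing along that particular segment; but \autoref{cond:f-star} says nothing about that direction --- it constrains $f$ only along rays \emph{through $\vc y$}. So the ``tension'' you invoke between monotonicity toward $\alpha\vc y-\vc x_j$ and monotonicity toward $\vc y$ cannot be cashed in: there is no inequality linking the two without further structure on $f$ (and the theorem does not assume any). In particular your path never produces the two concrete facts the contradiction actually needs, namely $\langle\vc 1,\vc\beta^\star\rangle\le 1$ and $\cal L(\vc\beta^\star)\le\psilow$. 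The paper obtains both from a \emph{single, different} perturbation: using $\vc x_j-\vc y=\sum_{i\ne j}\lambda_i(\vc x_i-\vc y)$ (at $\alpha=1$), set $\tau:=1+\beta_j^\star(\vc 1^T\vc\lambda-1)>1$ and $\widetilde{\vc\beta}:=\bigl(0,\,(\beta_i^\star+\beta_j^\star\lambda_i)/\tau\bigr)_{i\ne j}$. This is \emph{not} mass-preserving --- one has $\tau(\langle\vc 1,\widetilde{\vc\beta}\rangle-1)=\langle\vc 1,\vc\beta^\star\rangle-1$ --- but its image $\mt X\widetilde{\vc\beta}=\vc y-\tfrac1\tau(\vc y-\vc u^\star)$ is exactly a contraction of $\vc u^\star$ toward $\vc y$, so both parts of \autoref{cond:f-star} apply directly and yield $\langle\vc 1,\vc\beta^\star\rangle\le 1$ and $\cal L(\vc\beta^\star)\le\psilow$. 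Those two bounds, plugged into the lower bound $\cal L(\vc\beta^\star)\ge\bigl[-(f_\gamma)^\star(\vc h)-\etaone-\gamma\eta\bigr]+(1-\langle\vc 1,\vc\beta^\star\rangle)\bigl[\etaone-(1-\gamma)\eta\bigr]$ coming from \eqref{eq:opt-conds-pos} and Fenchel--Young, contradict the strict right inequality in \eqref{eq:cond-eta}.

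A second, smaller issue: by working with $\alpha\vc y$ in $\T$ while \autoref{cond:f-star} is stated at $\vc y$, you have a mismatch that never gets resolved. The paper handles this cleanly by first proving the case $\alpha=1$ and then invoking the epi-multiplication invariances of \autoref{lem:perspective-props}: $(\alpha\star f)$ satisfies \autoref{cond:f-star} at $\alpha\vc y$ with $\alpha\psilow$, and minimizers of $\cal L$ and $\alpha\star\cal L$ share the same support. Your sketch tries to carry a general $\alpha$ through the whole argument and the bookkeeping does not close.
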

\begin{proof}[Proof of \autoref{thm:main-detailed}]
If $\vc{\beta}^\star=\vc{0}$ is the unique solution, then the claim holds trivially; the second optimization problem has additional constraints compared to the first problem (entries not in $\cal{I}$ required to being zero) which are satisfied by the optimal solution to the first problem. Therefore, the two problems are equivalent. Therefore, suppose there exists a nonzero optimal solution $\vc{\beta}^\star$.

Consider any such optimal solution and suppose $\beta_i^\star\neq 0$. By \autoref{lemma:normless1-genloss}, if $\vc{x}_i - \vc{y} \not\in \op{ext\,ray\,}\T$ then 
$\cal{L}(\vc{\beta}^\star) \leq \psilow $ and $0< \langle \vc{1}, \vc{\beta}^\star \rangle \leq 1$. 
Combining these with the assumptions on $\eta$ in the statement of the theorem as well as the inequality in \autoref{lem:opt-pos-gen} yields a contradiction. Therefore, $\vc{x}_i - \vc{y} \in \op{ext\,ray\,}\T$ which implies $i\in \cal{I}$. Equivalently, if $i\not\in\cal{I}$, we can guarantee that $\beta^\star_i=0$ for any optimal solution $\vc{\beta}^\star$. This proves a special case of the the theorem in which $\alpha$ has been set to $1$. Then, use \autoref{lem:perspective-props} to state the theorem for any $\alpha>0$. 
\end{proof}
A few remarks are in order. 
\begin{itemize}

\item 
There is no need for computing $(f_\gamma)^\star$ in \autoref{thm:main-detailed}. Any lower bound on $f_\gamma \geq F$ provides a lower bound for $-(f_{\gamma})^\star \geq -F^\star$ which can be used in \eqref{eq:cond-eta} for a weaker result. See \autoref{sec:f-bnd}; or \autoref{cor:Lqq-regression} for an example.  

\item When $f$ grows fast out of $\vc{y}$, a smaller value of $\gamma$ helps keep $f_\gamma$ bounded from below by a `large' enough envelope function. This in turn affects $f_\gamma^\star$ and helps us get a wider interval in \eqref{eq:cond-eta}. See \autoref{condn:f-bnd}, and the discussion right after, for a simplified statement.

\item 
We exploit an important property of support recovery in \autoref{thm:main-detailed}; that for any minimizer of $\cal{L}$ there exists a minimizer of $(\alpha\star \cal{L})$ with the same support, as shown in the first assertion of \autoref{lem:perspective-props}. This is the role of $\alpha$ in the statement of the theorem.

\item Note that if $f$ is convex and defined everywhere on $\op{col}_+(\mt{X})$ and $r$ is definite, then by \autoref{lem:eta-0-convex} there exists a threshold for $\eta$ above which $\vc{0}$ is the unique optimal solution and the claim of the theorem holds trivially. Therefore, whenever we use \autoref{condn:f-bnd} and choose $\cal{H} = (1-\gamma) \partial f(\vc{0})$, as in some of the examples we present below, the left-hand side inequality in \eqref{eq:cond-eta} is not needed.

\item 
The provided interval for $\eta$ is not sharp, and it is possible for our guarantee to hold for all $\eta$, for some choices of data $\mt{X}$ and $\vc{y}$ and loss function $f$. 
Note that if the interval in \eqref{eq:cond-eta} is nonempty for some value of $\alpha$ then a larger value of $\alpha$ also satisfies these inequalities. Therefore, in situations where $\mt{X}$ and $\alpha\vc{y}$ satisfy \autoref{condn:ver-pos} for all $\bar{\alpha}\geq \alpha$ (e.g., see the illustrations in \autoref{sec:illust}), and the loss function is as in the previous item (hence the left-hand side inequality is not needed), then the superset guarantee of the theorem will hold for all $\eta$.

\item 
The support of the optimal solutions of \eqref{prob:loss-reg-lin-pos} need not be monotonic (nested) with respect to $\eta$, i.e., entries may fall in and out of support as $\eta$ grows. For example, this is well-known for lasso. Therefore, our result does not generalize for all $\eta$ in general even though increasing $\eta$ (above the stated upper bound) increases regularization and generally yields smaller supports. Nonetheless, we do not believe that the current requirement is necessary (when considering all $\mt{X}$ and $\vc{y}$ and not as in the previous item), as discussed in \autoref{rem:eta-bound} for the quadratic loss.

\item 
Our result is in a regime of small regularization parameter; not in a limit sense, and only in the sense of being below an explicit threshold. See the experimental results reported in \autoref{plt:sph-compare-etacv-add} which compares the upper bound in \autoref{thm:main} with a cross-validated value of $\eta$.

\item The assumptions and the proof of \autoref{thm:main-detailed} in some sense decouple $f$ from the gauge function corresponding to $\op{conv}(\mt{X})$ in the following alternative representation for \eqref{prob:loss-reg-lin-pos}; 
\begin{align}\label{eq:gauge-reform}
\min_{\vc{w}} ~~ f(\vc{w}) + \eta \cdot \gauge(\vc{w}; \op{conv}(\mt{X})) , 
\end{align}

\item 
Note that the way we define $\cal{I}$ allows for having two distinct indices $i,j\in\cal{I}$ representing the same ray (when they are aligned.)

\end{itemize}

As promised on page \pageref{prob:loss1F-I}, we now provide an explicit description of the class $\FC(\mt{X},\vc{y})$. We avoid this statement within the theorems (e.g., those in \autoref{sec:cor}) to make the results  more accessible. 

\begin{definition}[The Main Class]\label{def:class}
Given any $\mt{X}\in\mathbb{R}^{n\times p}$ and any $\vc{y}\in\mathbb{R}^n$, define a class of extended-real valued functions $\FC(\mt{X},\vc{y})$ over $\mathbb{R}^n$ as in the following. For any $\overline{f}:\mathbb{R}^n\to\Rext$, we have $\overline{f}\in\FC(\mt{X},\vc{y})$ if and only if there exists, 
\[
	f:\mathbb{R}^n\to\Rext,~ 
	\eta>0,~
	\alpha>0,~
	\psilow\in\mathbb{R},~
	\gamma\in[0,1],~
	\vc{h}\in\op{dom}(f_\gamma)^\star , 
\]
for which
\begin{itemize}
\item $\overline{f} = f/\eta$, 
\item $f$, $\vc{y}/\alpha$, and $\psilow$ satisfy \autoref{condn:ver-pos},  
\item $(1-\gamma)\eta 
	\leq  \rc(- \mt{X}^T\vc{h}) <
	\alpha \cdot \left[ - (f_\gamma)^\star (\vc{h})  - \psilow \right] -  \gamma \eta$, where $f_\gamma(\vc{u}) \coloneqq f( \vc{u} ) - \gamma \cdot \dd f(\vc{u})(\vc{u})$, and,
\item $f$ and $\mt{X}$ satisfy the assumptions of \autoref{lem:opt-conds-pos}. 	
\end{itemize}
\end{definition}

As an example, using the results of the next section, we can show that $\FC_{1}(\mt{X},\vc{y}) \subset \FC(\mt{X},\vc{y})$ for all $\mt{X}$ and $\vc{y}$, where $\FC_1$ is defined next. 
\begin{definition}\label{def:class-cvx}
Given any $\mt{X}\in\mathbb{R}^{n\times p}$ and any $\vc{y}\in\mathbb{R}^n$, define a class of extended-real valued convex functions $\FC_{1}(\mt{X},\vc{y})$ over $\mathbb{R}^n$ as in the following. For any $\overline{f}:\mathbb{R}^n\to\Rext$, we have $\overline{f}\in\FC_{1}(\mt{X},\vc{y})$ if and only if there exists, 
\[
	f:\mathbb{R}^n\to\Rext,~ 
	\eta>0,~
	\alpha>0,~
	\gamma\in[0,1],
\]
for which
\begin{itemize}
\item $\overline{f} = f/\eta$, 
\item $f$ is star-convex with respect to $\vc{y}/\alpha$, 

\item Interval $\left[ 
	\, (1-\gamma)\eta 
	\,,\,
	 \alpha (f_\gamma)^{\star\star}(\vc{0})  - \alpha f(\vc{y}/\alpha)  -  \gamma \eta \,\right)$ has a nonempty intersection with the interval $\{ \rc(- \mt{X}^T\vc{h}) :~ \vc{h}\in \partial (f_\gamma)^{\star\star}(\vc{0}) \}$, where $f_\gamma(\vc{u}) \coloneqq f( \vc{u} ) - \gamma \cdot \dd f(\vc{u})(\vc{u})$, and,
\item $\op{dom}f$ cannot be separated from the range of $\mt{X}$. 	
\end{itemize}
\end{definition}

\subsection{Corollaries and Special Cases}\label{sec:cor}

We list a few corollaries and special cases of \autoref{thm:main-detailed}.

First, as mentioned before, 
if the columns of $\mt{X}$ have been already standardized, as common in pre-processing for regression, we can choose $\alpha = 1/\norm{\vc{y}}_2$. In such case, \autoref{condn:ver-pos} will be satisfied if $\vc{y}$ is not a multiple of any of the columns of $\mt{X}$. 

\subsubsection{Corollary; Bounding the Conjugate} \label{sec:f-bnd}
Note that $g_1\leq g_2$ implies $g_2^\star \leq g_1^\star$ and $\op{dom}g_1^\star \subseteq \op{dom}g_2^\star$, for any two functions $g_1$ and $g_2$. Therefore, if $F\leq f_\gamma$, we can choose $\vc{h}\in \op{dom}F^\star$ and replace $(f_\gamma)^\star (\vc{h})$ with $F^\star (\vc{h})$ in \eqref{eq:cond-eta} to get a new (weaker) result. For example, consider \autoref{condn:f-bnd} which makes use of a convex envelope function.

\begin{setup}\label{condn:f-bnd}
For a function $f:\mathbb{R}^n\to\Rext$, assume that the following holds for all $\vc{u}\in\op{dom} f$: 
\begin{align*}
f_\gamma(\vc{u}) \coloneqq f( \vc{u} ) - \gamma \cdot \dd f(\vc{u})(\vc{u}) \geq H(\vc{u}) + \psiup
\end{align*}
for some $\gamma\in [0,1]$, $\psiup\in\mathbb{R}$, as well as a proper lsc convex function $H$ whose conjugate is non-positive on its domain. In other words, $H$ can be represented as 
	\begin{align*}
	H(\vc{u})  = \sup_{\vc{h}\in \cal{H}}\, \langle \vc{h}, \vc{u}\rangle - \hat{H}(\vc{h}) 
	\end{align*}
	for some convex function $\hat{H}:\mathbb{R}^n\to\mathbb{R}$ and some closed convex set $\cal{H}\subset \mathbb{R}^n$, where $\hat{H}(\vc{h})\leq 0$ for all $\vc{h}\in\cal{H}$. 
\end{setup}
Suppose $f$ is regular and the convex hull of $f_\gamma$ is proper; through the choice of $\gamma$. 
Let us provide some high-level recipes in relation to \autoref{condn:f-bnd}. 
For example, for any value of $\psiup \leq -\inf (f_\gamma)^\star$ we can define $\cal{H}\coloneqq \{\vc{h}:~ (f_\gamma)^\star (\vc{h}) \leq - \psiup  \} \neq \emptyset$, a sublevel set, and 
\begin{align*}
H^\star(\vc{h}) \coloneqq \begin{cases}
(f_\gamma)^\star (\vc{h}) + \psiup & \vc{h}\in\cal{H}, \\
+\infty & \vc{h}\not\in\cal{H}.
\end{cases}
\end{align*}
Observe that the above satisfies $(f_\gamma)^\star \leq H^\star-\psiup$ hence $f_\gamma \geq (f_\gamma)^{\star\star} \geq  H + \psiup$, and $H^\star$ is non-positive on its domain. We can simplify the matters here and, with the same $\cal{H}$, choose $H^\star(\vc{h}) = \indic(\vc{h}; \cal{H})$. In that case, we get $H(\vc{u}) = \sigma(\vc{u}; \cal{H})$, where $\sigma$ denotes the support set. This choice of $H$ is sublinear (convex and positively homogeneous). 
As yet another special case, we can choose $\psiup = -\inf(f_\gamma)^\star = (f_\gamma)^{\star\star}(\vc{0})$, $\cal{H} = \Argmin (f_\gamma)^\star = \partial (f_\gamma)^{\star\star}(\vc{0})$, and $H(\vc{u}) = \sigma(\vc{u}; \cal{H})= \dd (f_\gamma)^{\star\star}(\vc{0})(\vc{u})$, where we used \cite[Theorem 11.8, Theorem 8.30]{rockafellar2009variational}. We have used this last choice in defining $\FC_1$ in \autoref{def:class-cvx}. 
See \autoref{fig:example-H-gamma} for illustrations of the relationship between $\gamma$ and $\psiup$.

If \autoref{condn:f-bnd} holds, then $- (f_\gamma)^\star (\vc{h})$ in \eqref{eq:cond-eta} can be replaced by the constant $\psiup$, leading to a weaker result. In such case, the requirement on $\eta$ in \autoref{thm:main-detailed} can be stated as in \autoref{condn:eta}.

\begin{setup}
\label{condn:eta}
Consider a function $f$ that satisfies \autoref{condn:f-bnd} with some $\gamma\in [0,1]$, $\psiup\in\mathbb{R}$, and convex function $H$, and satisfies \autoref{cond:f-star} with some $\psilow$. 
Consider $r(\vc{\beta}) = \vc{1}^T\vc{\beta} + \indic (\vc{\beta}; \mathbb{R}_+^p)$. Assume the following two intervals intersect:
\[
\big[\, (1-\gamma)\eta \,,\,  \alpha\psiup-\alpha\psilow - \gamma \eta \,\big) 
\cap 
\big\{ \rc(- \mt{X}^T\vc{h}):~ \vc{h}\in \cal{H} \big\}
\neq \emptyset\, . 
\]
\end{setup}
Consider $A \coloneqq \{ \rc(- \mt{X}^T\vc{h}):~ \vc{h}\in \cal{H}\}$ which is an interval. 
When $\gamma\in (0,1)$, consider $\etabnd \coloneqq \max \{ 
\min ( \etaone/(1-\gamma) \,,\, (\alpha\psiup-\alpha\psilow- 
\etaone )/\gamma ):~
\etaone \in A \}$. Note that the objective function in the maximization is concave in $\etaone$ (a scalar.) Therefore, if $\alpha(1-\gamma)(\psiup-\psilow) \in A$, then $\etabnd = \alpha(1-\gamma)(\psiup-\psilow)$. Otherwise, the maximum will be attained at one of the two extremes of $A$. 

Using \autoref{lem:perspective-props}, if $\etabnd>0$ then it is possible to transform the original optimization problem \eqref{prob:loss-reg} into a new one, through the choice of $\alpha$, without changing the support of the optimal solutions, so that the new $\etabnd$ becomes $\max(A)$. The price we pay is that this transformation scales $\vc{y}$ to $\alpha\vc{y}$ and we will need to verify \autoref{condn:ver-pos} for $\alpha\vc{y}$. As we will show in \autoref{lem:perspective-props}, \autoref{cond:f-star} and \autoref{condn:f-bnd} are invariant under these transformations.

\begin{figure}
	\centering
	\includegraphics[clip, trim=4.9cm 8cm 4.2cm 7cm, width=.19\textwidth]{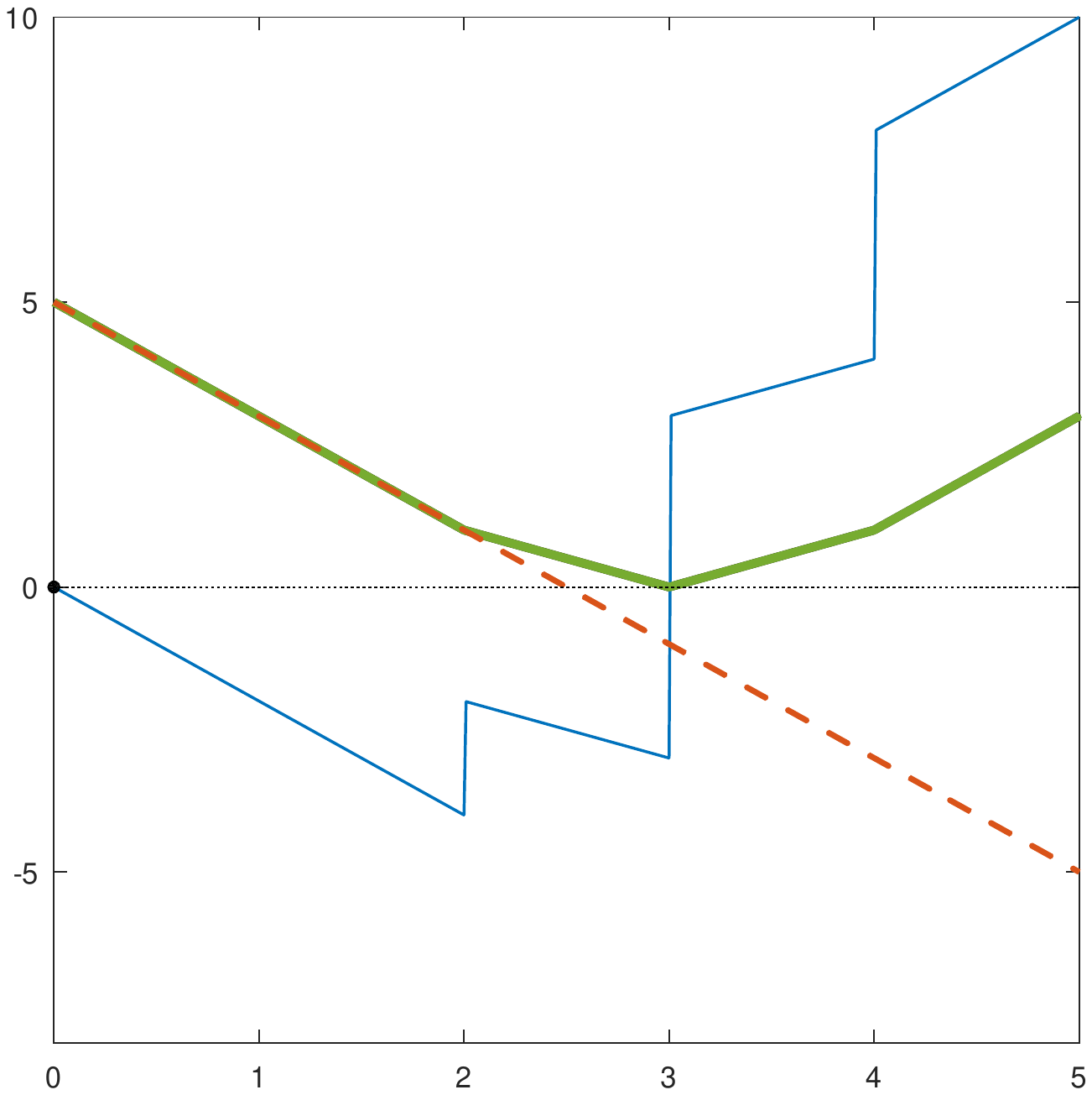}
	\includegraphics[clip, trim=4.9cm 8cm 4.2cm 7cm, width=.19\textwidth]{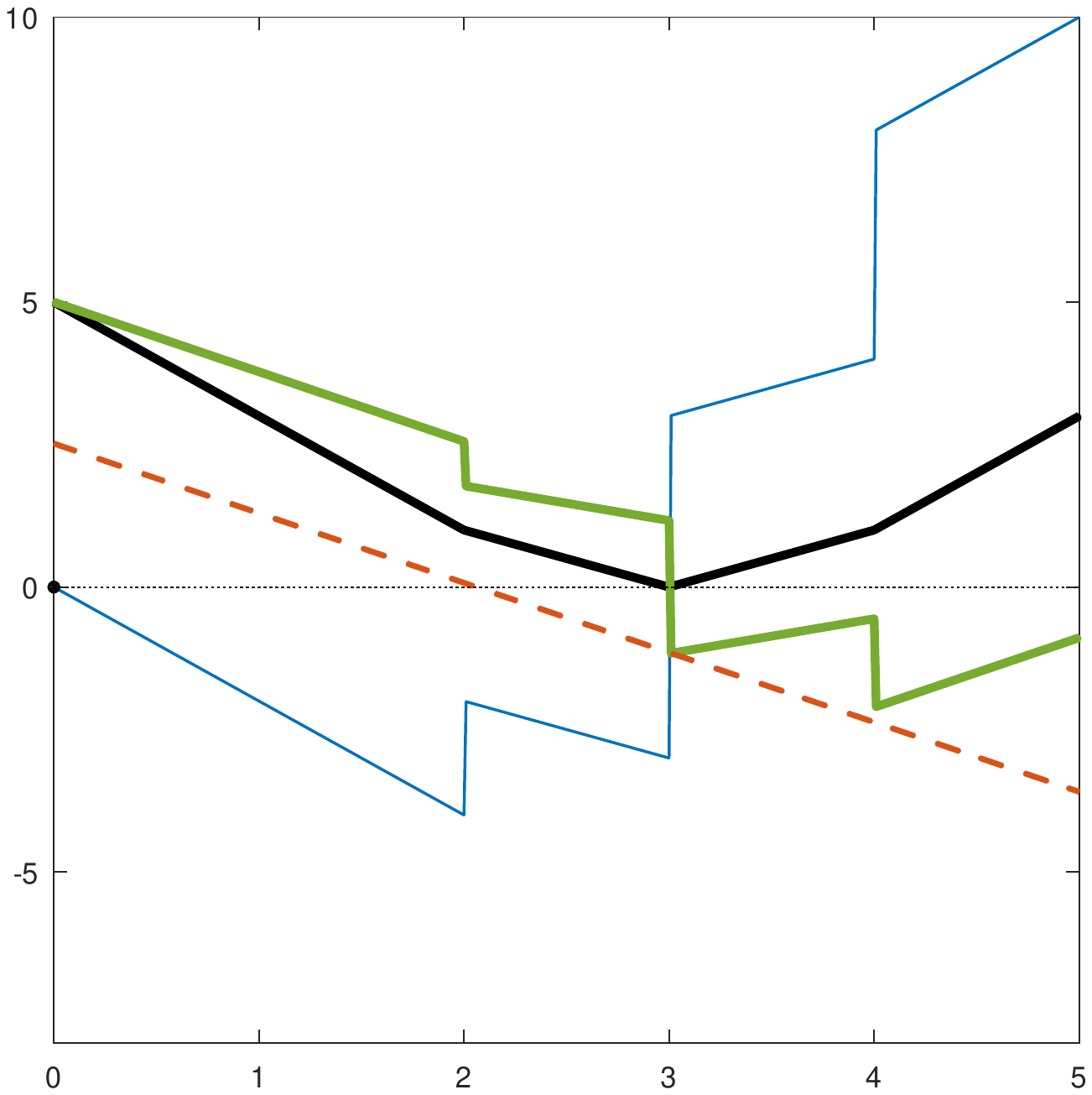}
	\includegraphics[clip, trim=4.9cm 8cm 4.2cm 7cm, width=.19\textwidth]{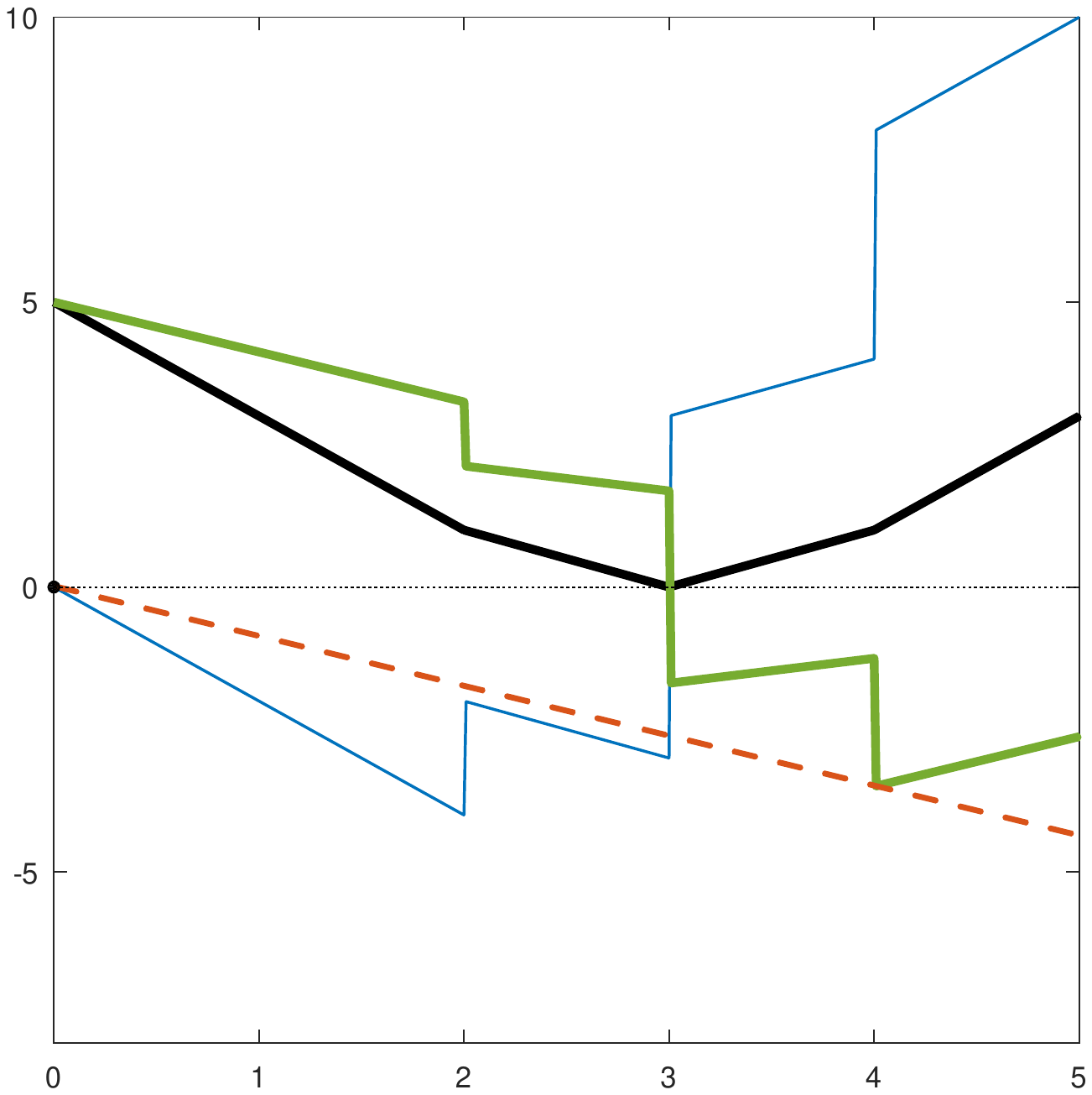}
	\includegraphics[clip, trim=4.9cm 8cm 4.2cm 7cm, width=.19\textwidth]{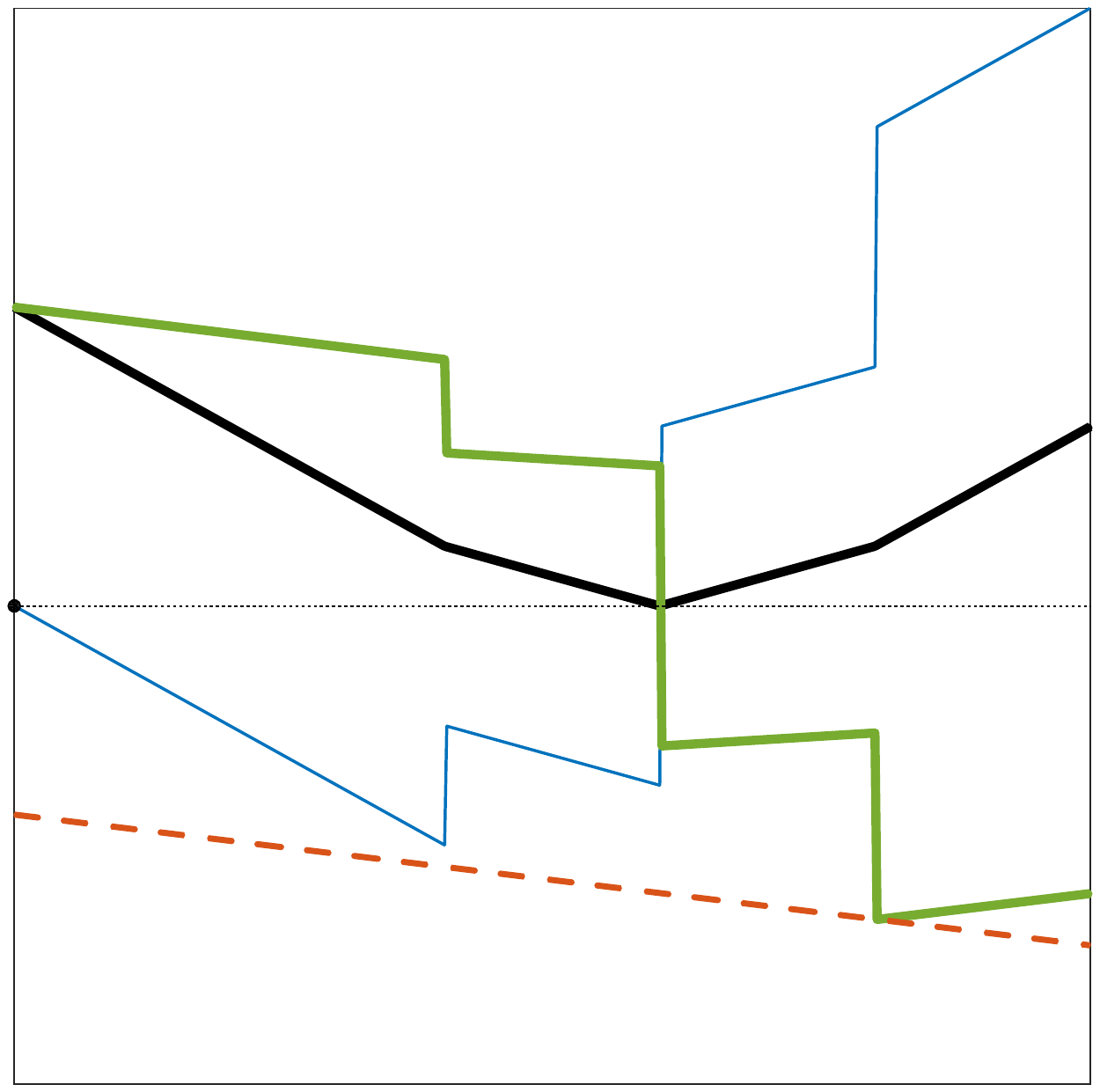}
	\includegraphics[clip, trim=4.9cm 8cm 4.2cm 7cm, width=.19\textwidth]{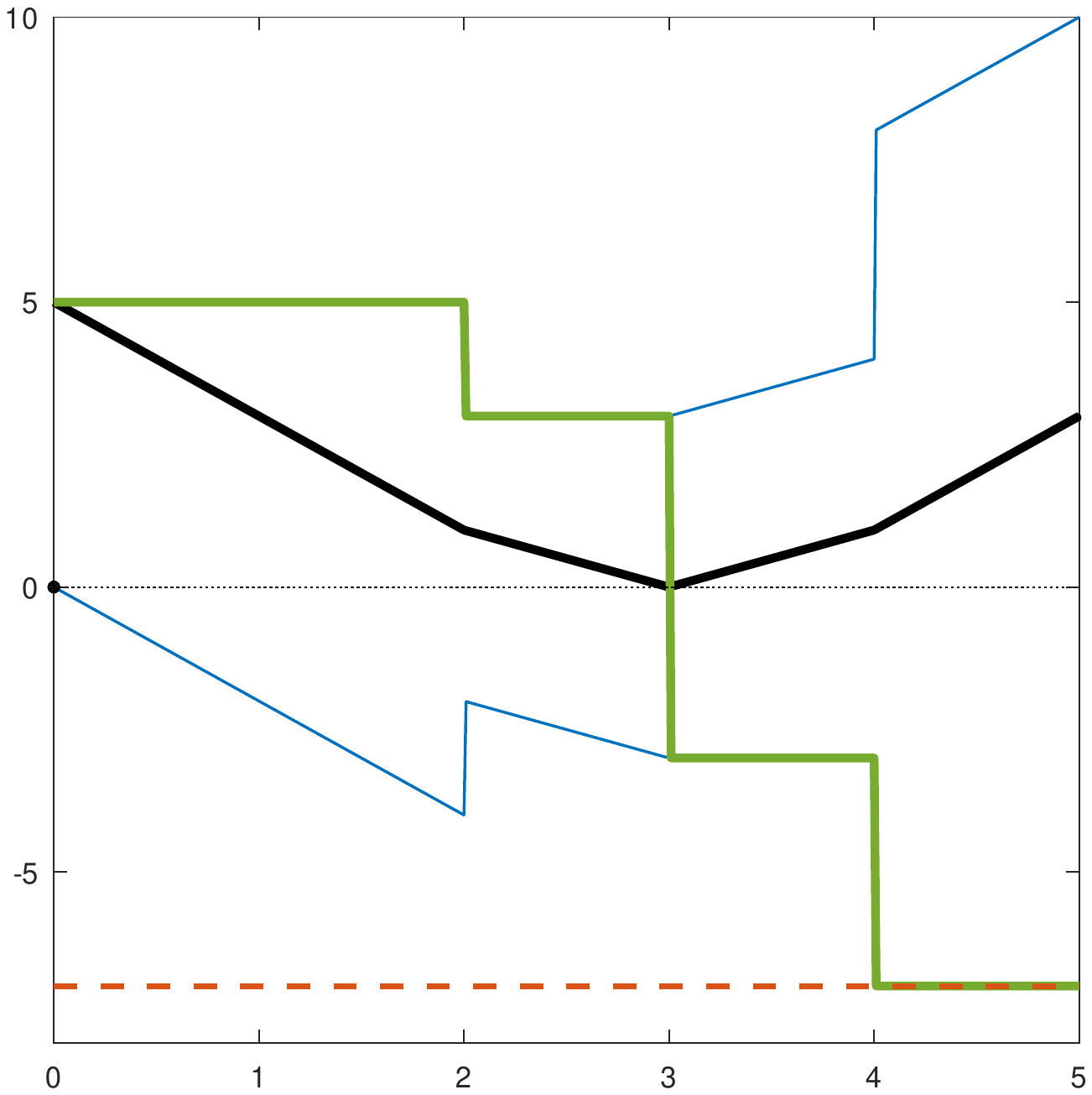}	
\\
	\includegraphics[clip, trim=4.9cm 8cm 4.2cm 7cm, width=.19\textwidth]{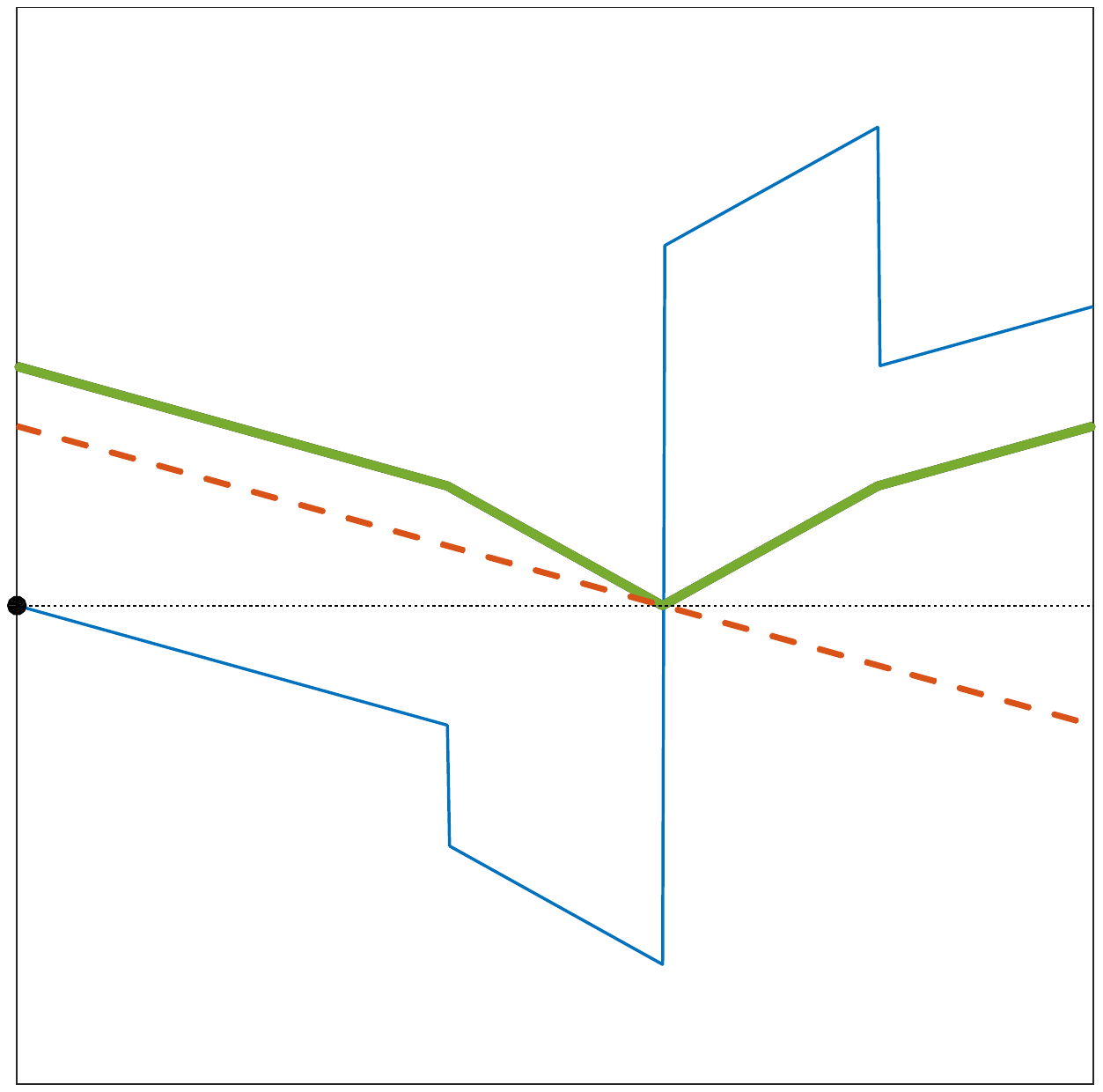}
	\includegraphics[clip, trim=4.9cm 8cm 4.2cm 7cm, width=.19\textwidth]{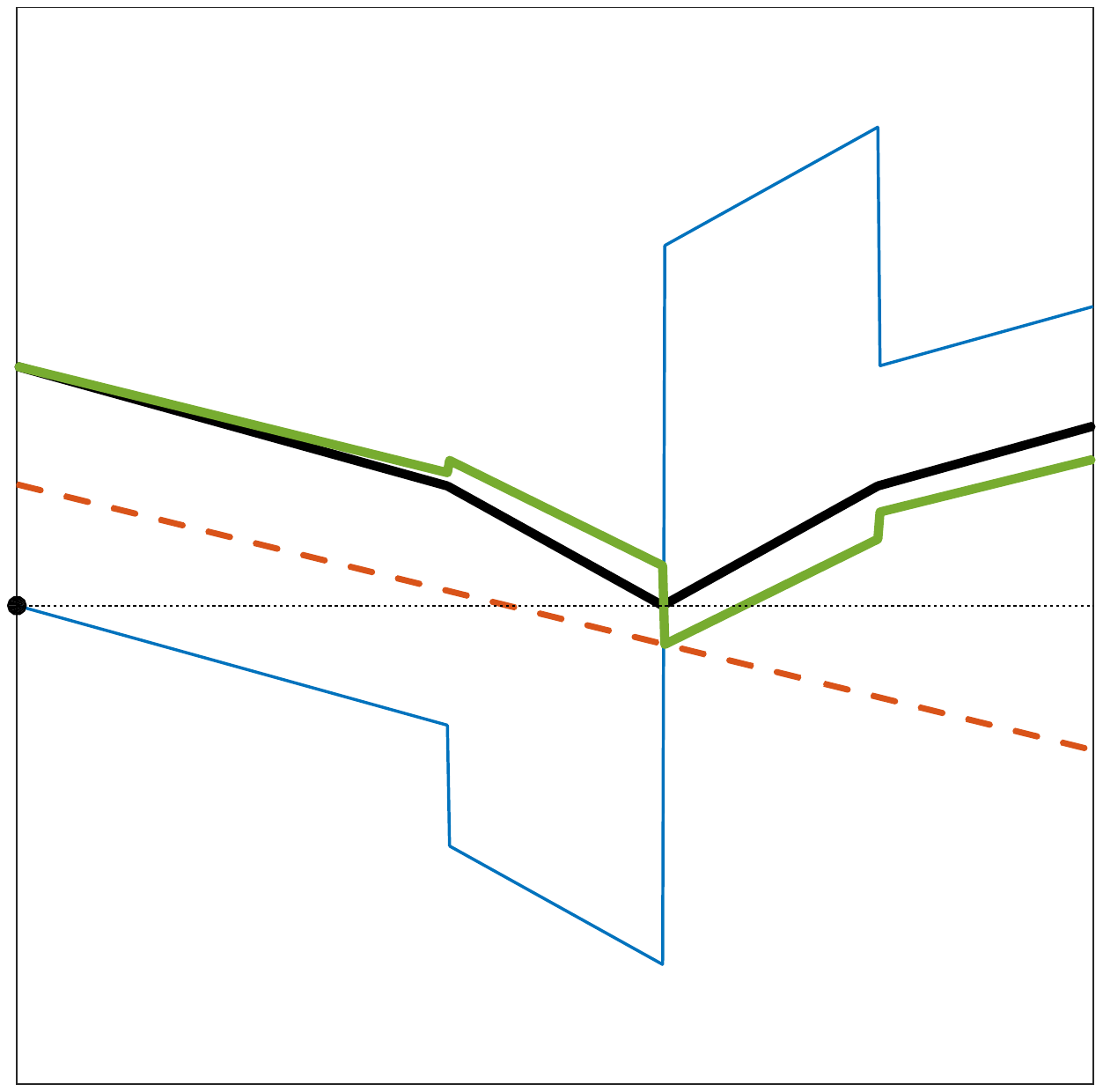}
	\includegraphics[clip, trim=4.9cm 8cm 4.2cm 7cm, width=.19\textwidth]{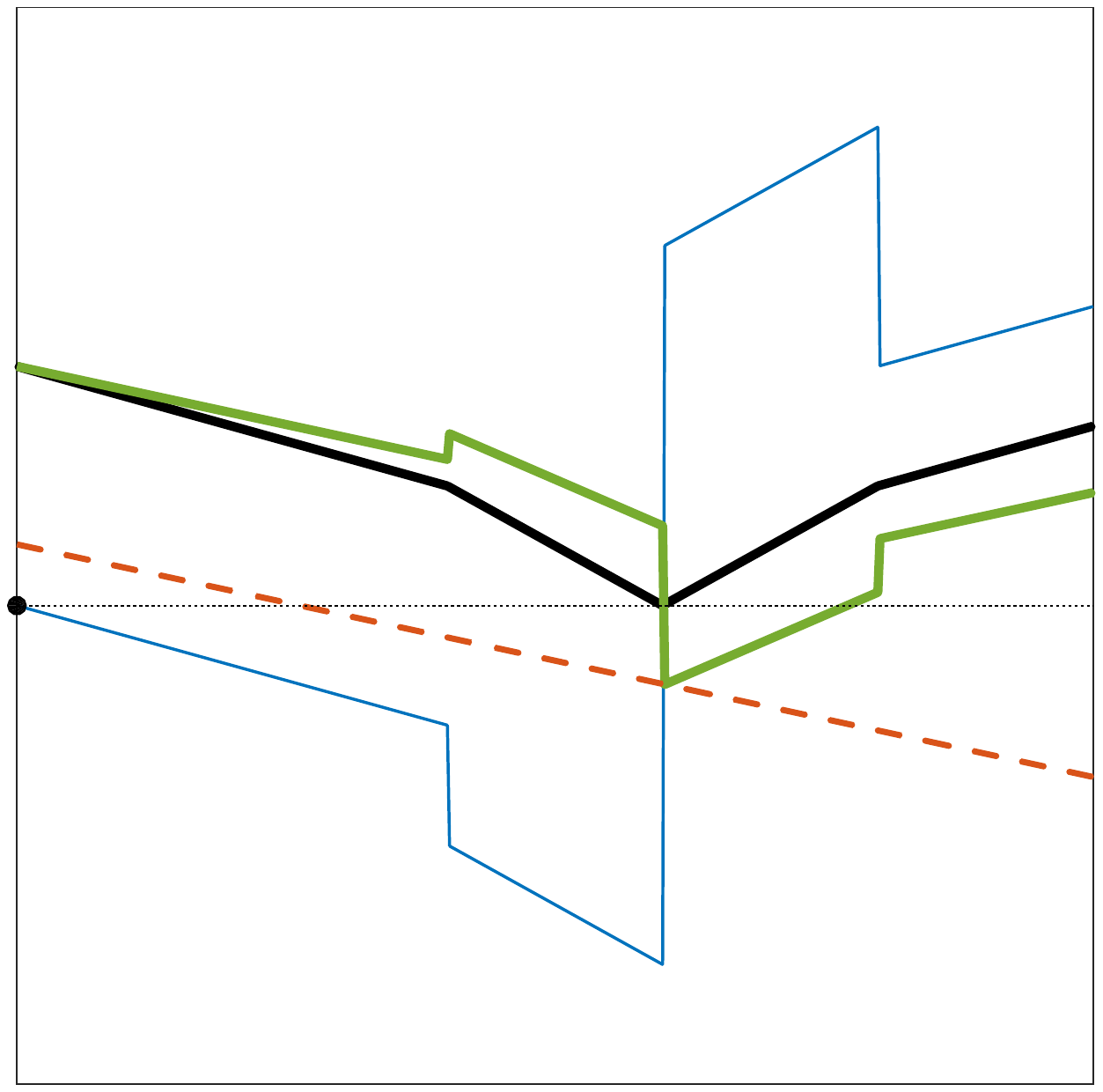}
	\includegraphics[clip, trim=4.9cm 8cm 4.2cm 7cm, width=.19\textwidth]{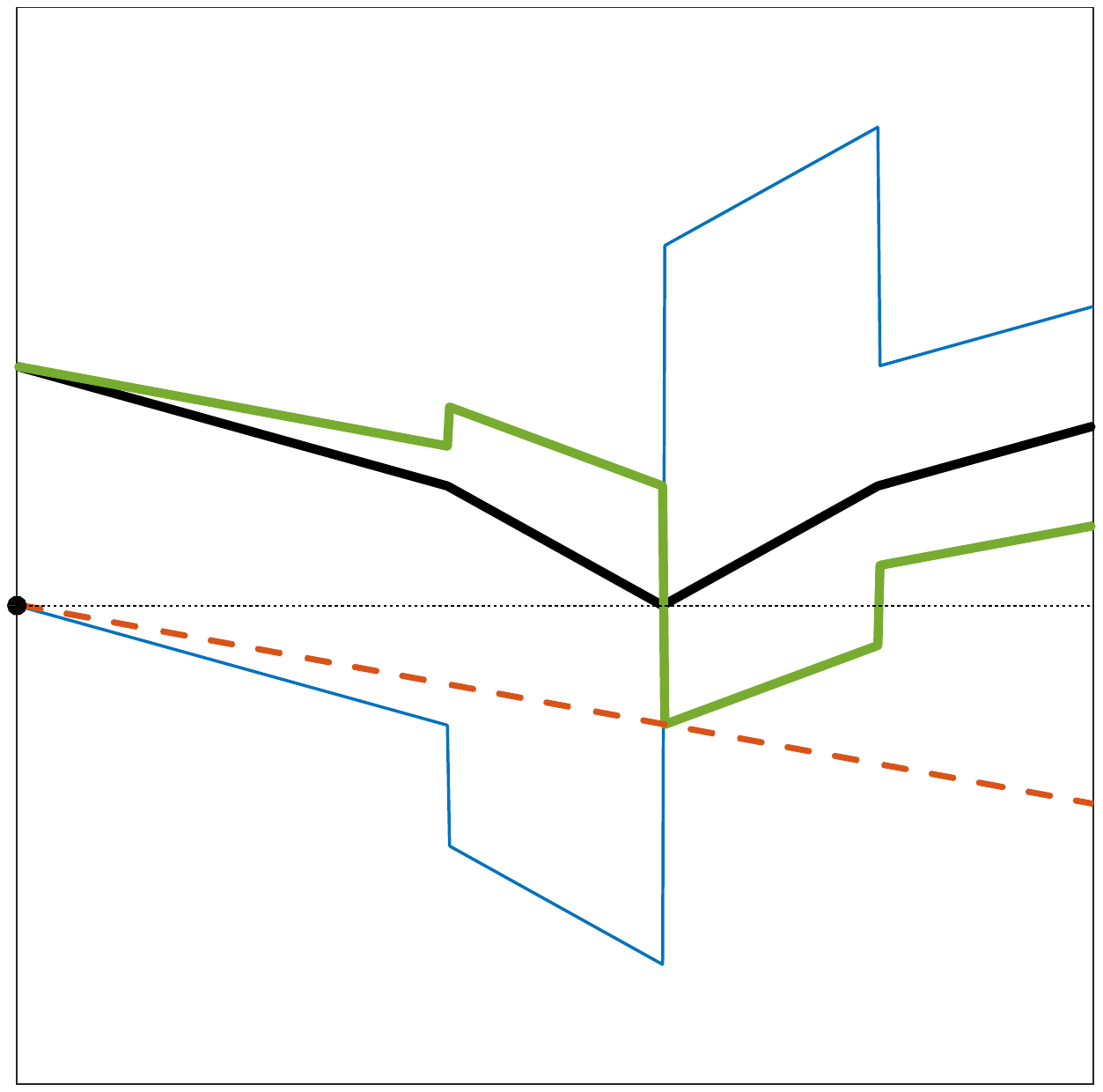}
	\includegraphics[clip, trim=4.9cm 8cm 4.2cm 7cm, width=.19\textwidth]{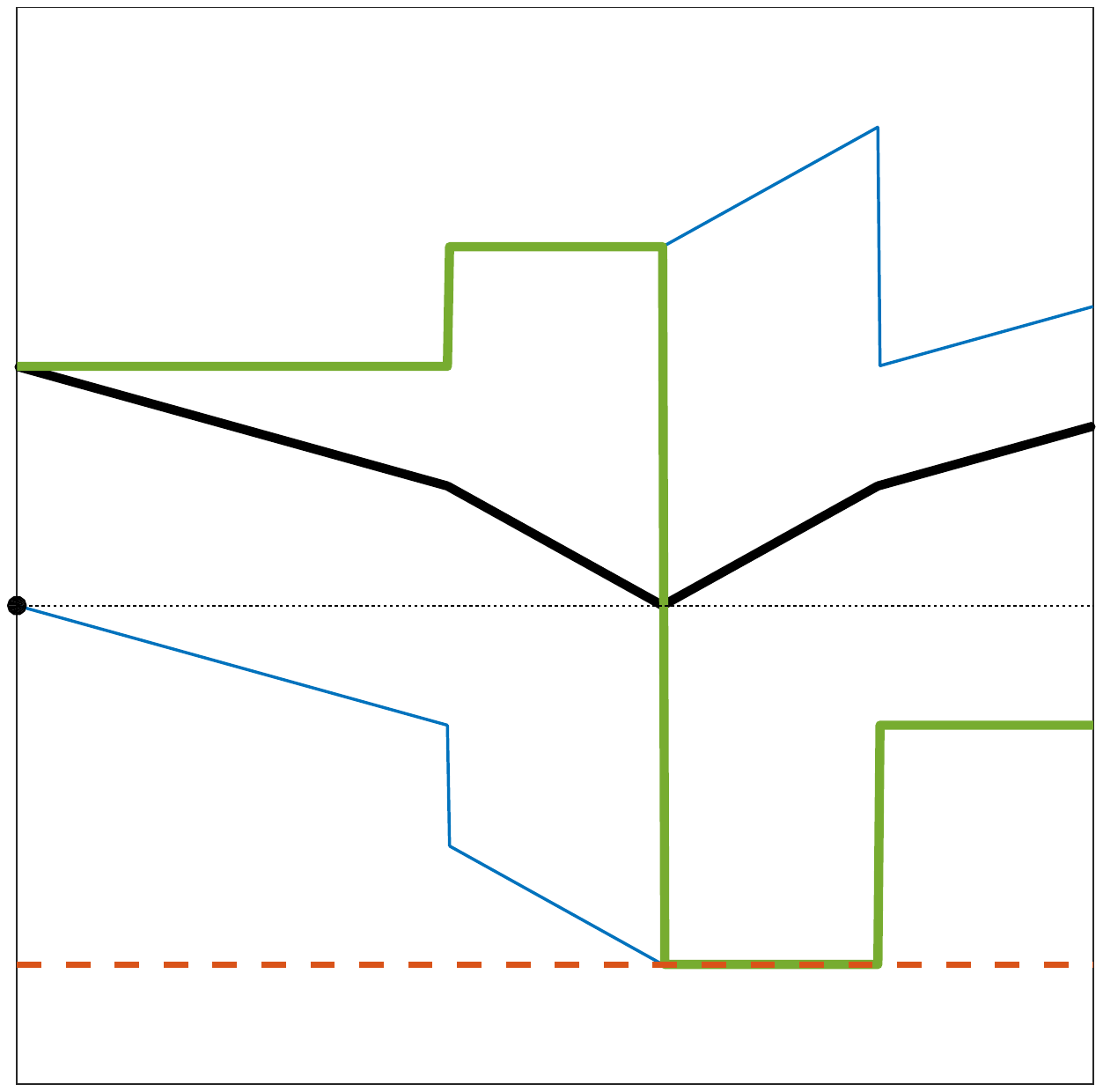}	
	\caption{$\gamma$ versus $\psiup$. 
	In each picture, the origin is marked by a black circle on the left. $f$ is the piecewise linear function with $f(3)=\inf f=0$, breakpoints at $\{2,3,4\}$, and slopes in $\{\pm1,\pm2\}$, in black. 
	$\dd f(u)(u)$ is in blue (not changing across the pictures in each row) and $f_\gamma$ is in green. $H(u)+\psiup$ is the red dashed line and it lower bounds $f_\gamma$. 
	Top row: $f(2)=f(4)=1$, $\psilow=0$, and from left to right, 
	$\gamma=0,\nicefrac{7}{18},\nicefrac{9}{16},\nicefrac{25}{32},1$ and 
	$\psiup = 5,\nicefrac{5}{2}, 0, \nicefrac{-7}{2},-7$. 
	Bottom row: $f(2)=f(4)=2$, $\psilow=1$, and from left to right, 
	$\gamma=0,\nicefrac{1}{9},\nicefrac{2}{9},\nicefrac{1}{3},1$, and 
	$\psiup=4,2,1,0,-6$. 
	In the first three columns, $\psiup\geq\psilow$. 	
	In all of the pictures, $\hat{H}\equiv 0$ and $\cal{H}=\{\gamma-1\}$. 		
	}
	\label{fig:example-H-gamma}
\end{figure}

\subsubsection{Example; Regularized Least-Squares}
Consider $f(\vc{u}) = \norm{\vc{u}-\vc{y}}_2^2$. This function is convex and has a minimum at $\vc{y}$, hence it satisfies \autoref{cond:f-star} with the same $\vc{y}$ and with $\psilow=0$. 
Moreover, 
$f_\gamma(\vc{u}) \coloneqq f( \vc{u} ) - \gamma \cdot \dd f(\vc{u})(\vc{u}) = \norm{\vc{u}-\vc{y}}_2^2 - \gamma \langle \vc{u}, 2(\vc{u}-\vc{y})\rangle = (1-2\gamma) \norm{\vc{u}}_2^2 + \norm{\vc{y}}_2^2 - 2(1-\gamma) \langle \vc{u}, \vc{y}\rangle$. Choosing $\gamma=1/2$ yields $f_{1/2}(\vc{u}) = \norm{\vc{y}}_2^2 -\langle \vc{u}, \vc{y}\rangle$ and $f_{1/2}^\star(-\vc{y}) = -  \norm{\vc{y}}_2^2$ with $\op{dom} f_{1/2}^\star =\{-\vc{y}\}$. 
Therefore, we can state a corollary of \autoref{thm:main-detailed} for the case of least-squares. To strengthen the following result, we assume without any loss in generality, that $\vc{y}\in \op{col}(\mt{X})$; otherwise, extract the orthogonal component from the squared norm. 
\begin{theorem}\label{thm:LS-gendata}
Given $\mt{X} = \begin{bmatrix} \vc{x}_1 & \cdots & \vc{x}_p \end{bmatrix}\in\mathbb{R}^{n\times p}$ and $\vc{y}\in \op{col}(\mt{X})$, assume that $\mt{X}$ and $\alpha \vc{y}$ satisfy \autoref{condn:ver-pos} for some $\alpha >0$. 
Furthermore, suppose $\eta>0$ satisfies 
\begin{align}\label{eq:eta-condn-LS}
	 \eta    &< 2\alpha \norm{\vc{y}}_2^2  -  2 (\max_{i\in[p]}\, \vc{y}^T\vc{x}_i) \,.
\end{align}
Consider the convex cone 
$
\T \coloneqq 
\bigl\{ \sum_{i=1}^p \lambda_i (\vc{x}_i - \alpha \vc{y} ) :~ \lambda_i \geq 0   \bigr\}
$ 
and assume for some $\cal{I}\subseteq [p]$, if $\{t(\vc{x}_i- \alpha\vc{y}):~ t\geq 0\}$ is an extreme ray of $\T$ then $i\in\cal{I}$. 
Then, the two optimization problems in \eqref{prob:loss-reg-lin-pos} and \eqref{prob:loss1F-I}, with $f(\vc{u})=\norm{\vc{u}-\vc{y}}_2^2$, have the same optimal values and the same sets of optimal solutions; i.e., $\cal{I}$ is a superset of the supports of all optimal solutions for \eqref{prob:loss-reg-lin-pos}.  
\end{theorem}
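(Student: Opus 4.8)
The idea is to show that Theorem~\ref{thm:LS-gendata} is a direct instance of the main theorem, Theorem~\ref{thm:main-detailed}, by supplying the data $\gamma$, $\psilow$, $\vc{h}$ (equivalently, by verifying the hypotheses of \autoref{cond:f-star} and the $\eta$-interval \eqref{eq:cond-eta}) for the loss $f(\vc{u})=\norm{\vc{u}-\vc{y}}_2^2$. First I would record that $f$ is convex with unique minimizer $\vc{y}$, so \autoref{cond:f-star} holds with this same $\vc{y}$ and $\psilow=0$ (a convex function with a minimizer is star-convex, and star-convexity implies monotonicity along rays out of $\vc{y}$; both bullets of \autoref{cond:f-star} then hold). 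Next I would compute, exactly as in the paragraph preceding the theorem, $f_\gamma(\vc{u}) = (1-2\gamma)\norm{\vc{u}}_2^2 + \norm{\vc{y}}_2^2 - 2(1-\gamma)\langle\vc{u},\vc{y}\rangle$, using $\dd f(\vc{u})(\vc{u}) = \langle 2(\vc{u}-\vc{y}),\vc{u}\rangle$ (valid since $f$ is smooth hence regular, so \eqref{eq:def-subderivative} applies). Choosing $\gamma = 1/2$ kills the quadratic term and gives the affine function $f_{1/2}(\vc{u}) = \norm{\vc{y}}_2^2 - \langle\vc{u},\vc{y}\rangle$, whose conjugate is $f_{1/2}^\star(\vc{h}) = -\norm{\vc{y}}_2^2$ with $\op{dom}f_{1/2}^\star = \{-\vc{y}\}$; so the only admissible choice is $\vc{h} = -\vc{y}$.

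With $\gamma=1/2$, $\psilow=0$, and $\vc{h}=-\vc{y}$ fixed, I would then unwind the chain of inequalities \eqref{eq:cond-eta}. The middle quantity is $-\min_{i\in[p]}\langle\vc{x}_i,\vc{h}\rangle = -\min_i\langle\vc{x}_i,-\vc{y}\rangle = \max_i\langle\vc{x}_i,\vc{y}\rangle$. The right-hand quantity is $\alpha\big[-f_{1/2}^\star(\vc{h}) - \psilow\big] - \gamma\eta = \alpha\norm{\vc{y}}_2^2 - \tfrac12\eta$; the upper-bound inequality $\max_i\langle\vc{x}_i,\vc{y}\rangle < \alpha\norm{\vc{y}}_2^2 - \tfrac12\eta$ rearranges precisely to $\eta < 2\alpha\norm{\vc{y}}_2^2 - 2\max_i\vc{y}^T\vc{x}_i$, which is the hypothesis \eqref{eq:eta-condn-LS}. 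The left-hand inequality $(1-\gamma)\eta = \tfrac12\eta \le \max_i\langle\vc{x}_i,\vc{y}\rangle$ needs a word: since $f$ is convex and finite everywhere on $\op{col}_+(\mt{X})$ and $r$ is definite, by the remark following Theorem~\ref{thm:main-detailed} (invoking \autoref{lem:eta-0-convex}) the left-hand inequality of \eqref{eq:cond-eta} can be dispensed with — above the threshold where $\vc{0}$ is the unique optimum the conclusion is trivial, and below it the left inequality is automatically implied. So effectively only \eqref{eq:eta-condn-LS} is required.

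Finally I would check the remaining structural hypotheses of Theorem~\ref{thm:main-detailed}: \autoref{condn:ver-pos} for $\mt{X}$ and $\alpha\vc{y}$ is assumed outright; the assumptions of \autoref{lem:opt-conds-pos} hold because $f$ is convex (so the third bullet there is met via $\op{dom}f = \mathbb{R}^n$ not being separable from the range of $\mt{X}$, and $f$ is subdifferentially regular everywhere being smooth convex); and the reduction to $\vc{y}\in\op{col}(\mt{X})$ is justified by splitting $\norm{\mt{X}\vc{\beta}-\vc{y}}_2^2 = \norm{\mt{X}\vc{\beta}-P\vc{y}}_2^2 + \norm{(I-P)\vc{y}}_2^2$ with $P$ the projector onto $\op{col}(\mt{X})$, the second term being constant. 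Then Theorem~\ref{thm:main-detailed} applies verbatim with the cone $\T$ and index set $\cal{I}$ as stated, yielding the conclusion. I do not anticipate a genuine obstacle here — the content is entirely in the main theorem — but the one point requiring care is confirming that $\op{dom}f_{1/2}^\star$ really is the singleton $\{-\vc{y}\}$ (so that $\vc{h}=-\vc{y}$ is forced) and that this is the correct $\vc{h}$ to plug into the middle term of \eqref{eq:cond-eta}; everything else is bookkeeping.
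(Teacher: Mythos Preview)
Your proposal is correct and follows exactly the paper's approach: the paragraph preceding \autoref{thm:LS-gendata} performs precisely the computation you outline (convexity gives \autoref{cond:f-star} with $\psilow=0$; $\gamma=1/2$ makes $f_{1/2}$ affine with $\op{dom}f_{1/2}^\star=\{-\vc{y}\}$ and $f_{1/2}^\star(-\vc{y})=-\norm{\vc{y}}_2^2$), and the remark after \autoref{thm:main-detailed} invoking \autoref{lem:eta-0-convex} is exactly how the paper dispenses with the left inequality in \eqref{eq:cond-eta} when $\cal{H}=(1-\gamma)\partial f(\vc{0})=\{-\vc{y}\}$. The remaining structural checks (regularity, \autoref{condn:ver-pos}, the $\op{col}(\mt{X})$ reduction) are handled just as the paper indicates.
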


Specializing \autoref{thm:LS-gendata} to when all of the columns of $\mt{X}$ are on the unit sphere and $\alpha = 1/\norm{\vc{y}}_2$ leads to \autoref{thm:main}. Note that, in \autoref{thm:main}, the bound on $\eta$ prohibits $\vc{y}$ to coincide with any of the columns of $\mt{X}$ as required in \autoref{condn:ver-pos}.

In \autoref{thm:LS-gendata}, we choose $\gamma=1/2$ which then leads to the optimal choice of $\cal{H}=\{-\vc{y}\}$. In \autoref{rem:eta-bound}, we examine the case where $\gamma \in (0,1/2)$ and we show that it is possible to arrive at a larger interval for $\eta$ compared to the one in \autoref{thm:LS-gendata}. It can be seen from the calculations in \autoref{rem:eta-bound} that a tighter analysis, resulting in a weaker requirement on $\eta$, is possible. Such analysis would incorporate properties of $\mt{X}$ in choosing $\gamma$. 

\subsubsection{Example; Regularized Mahalanobis Distance}
Consider $f(\vc{u}) = \vc{u}^T \mt{A} \vc{u} -2 \vc{u}^T\vc{b} + c$ for some positive definite matrix $\mt{A}$, vector $\vc{b}\neq\vc{0}$, and scalar $c$. The unique minimizer of $f$ is $\vc{y}\coloneqq \mt{A}^{-1}\vc{b}$ and $f(\vc{y})= c- \vc{b}^T\mt{A}^{-1}\vc{b}$. Moreover, the smallest scalar $\psilow$ for which \autoref{cond:f-star} holds is $\psilow = f(\vc{y})$. As mentioned before, since $\psilow=\inf f$, the second condition of \autoref{cond:f-star} is implied by the first one.

Next, consider $\gamma = 1/2$ and $f_{1/2}(\vc{u}) \coloneqq f( \vc{u} ) - \frac{1}{2} \cdot \dd f(\vc{u})(\vc{u}) 
= c- \vc{u}^T\vc{b}$ and $f_{1/2}^\star(-\vc{y}) = -c$ and $\op{dom} f_{1/2}^\star =\{-\vc{b}\}$. 
Therefore, we can state a corollary of \autoref{thm:main-detailed} for the case of Mahalanobis distance. To strengthen our result, we assume without any loss in generality, that $\vc{b}\in \op{col}(\mt{A})$. Then, we can state a similar result to \autoref{thm:LS-gendata} for regularized Mahalanobis distance minimization but with the following conditions on $\eta$ replacing \eqref{eq:eta-condn-LS}:
\[
\eta < 2\alpha \vc{b}^T\mt{A}^{-1}\vc{b} -  2 (\max_{i\in[p]}\, \vc{b}^T\vc{x}_i) \,.
\]

\subsubsection{Example; Regularized $\ell_q^q$ Regression} 
Consider $f(\vc{u}) = \norm{\vc{u}-\vc{y}}_q^q$ with $q\geq 2$. Let us work with the stronger condition in \autoref{condn:f-bnd} instead of $f_\gamma$ itself. Let us choose $\cal{H} = (1-\gamma)\partial f(\vc{0})$ and $\hat{H}\equiv 0$. 
Since the function is separable, it suffices to study each of its parts separately. For the moment, consider $f(u) = \abs{u-y}^q$ where $y\in\mathbb{R}$. 
\begin{lemma}\label{lem:lqq-scalar}
For any $q\geq 2$ and $y\in\mathbb{R}$, consider $f(u) = \abs{u-y}^q$ and 
\[F(u) \coloneqq f(u) - \gamma u f'(u) - (1-\gamma) u f'(0).\]  
For $\gamma = 1/(2q-2)$, we have $uF'(u)\geq 0$ and $F(u)\geq F(0)$ for all $u$.  
\end{lemma}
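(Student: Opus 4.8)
The plan is to reduce both assertions of \autoref{lem:lqq-scalar} to a single one–variable inequality for the map $t\mapsto\abs{t-y}^{q-2}$, and then dispatch that inequality by elementary convexity (divided–difference) estimates.

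\emph{Reduction.} Since $q\ge 2$, $f(u)=\abs{u-y}^q$ is $C^2$ on $\mathbb{R}$: indeed $f''(u)=q(q-1)\abs{u-y}^{q-2}$ extends continuously (it is $0$ at $u=y$ when $q>2$, and $f''\equiv 2$ when $q=2$). Hence $F$ is $C^1$ with
\[
F'(u)=(1-\gamma)\bigl(f'(u)-f'(0)\bigr)-\gamma\,u\,f''(u).
\]
First I would observe that $uF'(u)\ge 0$ for all $u$ already yields $F(u)\ge F(0)$ for all $u$: it forces $F'\ge 0$ on $(0,\infty)$ and $F'\le 0$ on $(-\infty,0)$, so $F$ is nondecreasing on $\mathbb{R}_+$ and nonincreasing on $\mathbb{R}_-$. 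So it suffices to prove $uF'(u)\ge0$, which is trivial at $u=0$; for $u\ne0$, dividing by $u^2>0$, using $f'(u)-f'(0)=\int_0^u f''(t)\,\dd t$, and using $\gamma/(1-\gamma)=1/(2q-3)$ (which is exactly what $\gamma=1/(2q-2)$ buys us) together with the explicit $f''$, the inequality $uF'(u)\ge0$ becomes equivalent to
\[
\frac1u\int_0^u \abs{t-y}^{m}\,\dd t\ \ge\ \frac{\abs{u-y}^{m}}{2m+1},\qquad m\coloneqq q-2\ge 0 .
\]

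\emph{Proving the integral inequality.} By the substitution $t\mapsto-t$ both sides are invariant under $(u,y)\mapsto(-u,-y)$, so I may assume $u>0$; the left side is then the average of $g(t)=\abs{t-y}^m$ over $[0,u]$. I split on the location of $y$ relative to $[0,u]$. Case (a), $y\le 0$: here $\int_0^u(t-y)^m\,\dd t=\tfrac{(u-y)^{m+1}-(-y)^{m+1}}{m+1}$; since $\phi(t)\coloneqq t^{m+1}$ is convex on $[0,\infty)$ (as $m+1\ge1$) and $0\le -y<u-y$, the secant slope of $\phi$ over the right sub-interval $[-y,u-y]$ dominates its slope over $[0,u-y]$, i.e.\ $\tfrac{(u-y)^{m+1}-(-y)^{m+1}}{u}\ge(u-y)^m$; dividing by $m+1$ and using $\tfrac1{m+1}\ge\tfrac1{2m+1}$ closes this case. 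Case (b), $y\ge u$: symmetric, with $g$ decreasing: by convexity of $\phi$ the slope over $[y-u,y]$ is at least its slope over the adjacent interval $[0,y-u]$, which equals $(y-u)^m$, so $\tfrac{y^{m+1}-(y-u)^{m+1}}{u}\ge(y-u)^m$, and the rest is as in (a) (the sub-case $y=u$ is trivial, both sides being $0$). Case (c), $0<y<u$: now $\int_0^u\abs{t-y}^m\,\dd t=\tfrac{y^{m+1}+(u-y)^{m+1}}{m+1}$, so after clearing $u>0$ the target is $\Psi(u)\coloneqq\tfrac{y^{m+1}+(u-y)^{m+1}}{m+1}-\tfrac{u(u-y)^m}{2m+1}\ge0$ on $(y,\infty)$; computing $\Psi'(u)=\tfrac{m(u-y)^{m-1}(u-2y)}{2m+1}$ shows $\Psi$ decreases on $(y,2y)$ and increases on $(2y,\infty)$, so its minimum over $[y,\infty)$ is $\Psi(2y)=\tfrac{2m\,y^{m+1}}{(m+1)(2m+1)}\ge0$ (and $\Psi(y)=\tfrac{y^{m+1}}{m+1}\ge0$ covers the endpoint). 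The case $q=2$ (i.e.\ $m=0$) falls out of the same computations, and indeed there $F$ is identically constant, so both claims are trivial.

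I expect the main obstacle to be case (c): unlike (a) and (b), the integrand $g$ has its zero strictly inside $[0,u]$, so the convex comparison against $t^{m+1}$ no longer isolates $\abs{u-y}^m$ cleanly and one genuinely needs the monotonicity analysis of $\Psi$, whose minimizer sits at the interior point $u=2y$ — which is precisely where the constant $\gamma=1/(2q-2)$ is tight. A secondary point requiring care is that $g(t)=\abs{t-y}^m$ is not convex when $q<3$ (that is, $0\le m<1$), so the whole argument must be routed through convexity of $t\mapsto t^{m+1}$, which holds for every $m\ge0$; one should also check that $F$ is genuinely $C^1$ at $u=y$ so that the passage from ``$uF'\ge0$'' to ``$F\ge F(0)$'' is legitimate.
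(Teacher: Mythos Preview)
Your proof is correct and follows a genuinely different route from the paper's. The paper computes $F'$ and $F''$ explicitly, observes that $F''$ vanishes only at $u=y$ and at $u_+=y(1-2\gamma)/(1-\gamma q)$, and then argues by checking the sign of $F'$ at these two stationary points (together with $F'(0)=0$) that $u=0$ is a global minimum of $F$. With $\gamma=1/(2q-2)$ one finds $u_+=2y$, which is exactly the interior minimizer of your $\Psi$ in case~(c); so the two proofs pivot on the same critical configuration, but reach it through different mechanisms.

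Your reduction to the averaging inequality $\tfrac{1}{u}\int_0^u\abs{t-y}^{m}\,\dd t\ge \abs{u-y}^{m}/(2m+1)$ is clean and makes the role of $\gamma$ transparent via $\gamma/(1-\gamma)=1/(2m+1)$; the convexity-of-$t^{m+1}$ argument in cases~(a) and~(b) is elegant and avoids the sign bookkeeping the paper needs. The paper's approach, in turn, is shorter once one is willing to compute $F''$ and trace signs, and it makes explicit where the choice of $\gamma$ enters (through $yF'(u_+)\ge0$). One small remark: your aside that $\gamma=1/(2q-2)$ is ``tight'' at $u=2y$ is not quite right for $q>2$, since $\Psi(2y)=\tfrac{2m\,y^{m+1}}{(m+1)(2m+1)}>0$ there; the constant is tight only at $q=2$. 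This does not affect the argument.
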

\begin{proof}[Proof of \autoref{lem:lqq-scalar}]
Observe that 
\begin{align*}
F(u) &= 
\abs{u-y}^{q-2} [(1-\gamma q)u-y] \cdot (u-y)  
+ (1-\gamma) q \abs{y}^{q-2} yu \,,\\
F'(u) &= 
\abs{u-y}^{q-2} [ (1-\gamma q)u - (1-\gamma)y ] \cdot q 
+ (1-\gamma) q \abs{y}^{q-2} y  \,,\\
F''(u) &=
\abs{u-y}^{q-4} [(1-\gamma q)u - (1-2\gamma)y]\cdot q(q-1)(u-y)\,.
\end{align*}
Moreover, $F(0)=\abs{y}^q$, $F'(0)=0$. 
It is easy to verify the claim for $q=2$ and $\gamma = 1/2$. Therefore, consider $q>2$. Furthermore, if $y=0$ and $\gamma q <1$ then both of the claims hold. Therefore, assume $y\neq 0$ and $q>2$. 
 
Define $u_+\coloneqq y(1-2\gamma)/(1-\gamma q)$. Note that $F''$ has roots only at $y$ and $u_+$. Therefore, $F'$ has stationary points only at these roots. Since $yF'(y)>0$, if we choose $\gamma$ such that $yF'(u_+)\geq 0$, $(1-2\gamma)/(1-\gamma q)>0$, and $\gamma q<1$, then $u=0$ will be a global optimum for $F$; i.e., $F(u)\geq F(0)$ for all $u$. We have 
\[
yF'(u_+) = q \abs{y}^q \left[ 1-\gamma - \gamma ( \frac{1-2\gamma}{1-\gamma q}-1 )^{q-2} \right]. 
\]
Algebraic manipulations establish that $\gamma = 1/(2q-2)$ guarantees $yF'(u_+)\geq 0$. 
\end{proof}
\begin{corollary}\label{cor:Lqq-regression}
Consider $f(\vc{u}) = \norm{\vc{u}-\vc{y}}_q^q$ with $q\geq 2$. 
Consider $\gamma = 1/(2q-2)$. 
Then, $f_\gamma(\vc{u}) 
\geq (1-\gamma)\dd f(\vc{0})(\vc{u}) 
+ \norm{\vc{y}}_q^q
= -q(1-\gamma) \norm{\vc{y}}_q^{q-2} \langle\vc{y},\vc{u}\rangle
+ \norm{\vc{y}}_q^q$ for all $\vc{u}$. 
\end{corollary}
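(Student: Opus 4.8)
The plan is to reduce the vector statement to the scalar inequality of \autoref{lem:lqq-scalar} by exploiting separability of $f(\vc{u}) = \norm{\vc{u}-\vc{y}}_q^q = \sum_{i=1}^n \abs{u_i-y_i}^q$. First I would record that, for $q\geq 2$, each summand $g_i(u)\coloneqq\abs{u-y_i}^q$ is continuously differentiable on $\mathbb{R}$ with $g_i'(u) = q\abs{u-y_i}^{q-2}(u-y_i)$; hence $f$ is smooth on all of $\mathbb{R}^n$, in particular regular, and by \eqref{eq:def-subderivative} we have $\dd f(\vc{u})(\vc{w}) = \langle\nabla f(\vc{u}),\vc{w}\rangle$ for all $\vc{u},\vc{w}$. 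Therefore $f_\gamma$ is itself separable,
\[
f_\gamma(\vc{u}) \;=\; f(\vc{u}) - \gamma\langle\nabla f(\vc{u}),\vc{u}\rangle \;=\; \sum_{i=1}^n\bigl(g_i(u_i) - \gamma\,u_i\,g_i'(u_i)\bigr).
\]

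Next I would apply \autoref{lem:lqq-scalar} coordinatewise with $\gamma = 1/(2q-2)$. Setting $F_i(u)\coloneqq g_i(u) - \gamma u g_i'(u) - (1-\gamma)u g_i'(0)$, the lemma gives $F_i(u)\geq F_i(0) = \abs{y_i}^q$ for every $u\in\mathbb{R}$, i.e. $g_i(u_i)-\gamma u_i g_i'(u_i) \geq \abs{y_i}^q + (1-\gamma)u_i g_i'(0)$. Summing over $i$ and using $g_i'(0) = -q\abs{y_i}^{q-2}y_i$ together with $\sum_i\abs{y_i}^q = \norm{\vc{y}}_q^q$ yields
\[
f_\gamma(\vc{u}) \;\geq\; \norm{\vc{y}}_q^q + (1-\gamma)\sum_{i=1}^n g_i'(0)\,u_i \;=\; \norm{\vc{y}}_q^q + (1-\gamma)\langle\nabla f(\vc{0}),\vc{u}\rangle \;=\; \norm{\vc{y}}_q^q + (1-\gamma)\,\dd f(\vc{0})(\vc{u}),
\]
which is exactly the asserted envelope bound; the closed form displayed in the statement is just $\nabla f(\vc{0}) = -q\bigl(\abs{y_i}^{q-2}y_i\bigr)_{i=1}^n$ substituted in.

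There is no serious obstacle here — the corollary repackages the scalar estimate of \autoref{lem:lqq-scalar} via separability — but two points merit a line of care. First, the identity $\dd f(\vc{u})(\vc{u}) = \langle\nabla f(\vc{u}),\vc{u}\rangle$ rests on $f$ being smooth, hence regular, which holds because $\abs{\cdot}^q$ is $C^1$ for $q>1$; the genuine role of the hypothesis $q\geq 2$ is in \autoref{lem:lqq-scalar}, where the choice $\gamma = 1/(2q-2)$ is precisely what forces each scalar $F_i$ to attain its global minimum at $0$. Second, $\op{dom}f = \mathbb{R}^n$, so ``for all $\vc{u}$'' is unambiguous, and the bound is exactly in the envelope form required by \autoref{condn:f-bnd} with $\cal{H} = (1-\gamma)\partial f(\vc{0})$ and $\hat{H}\equiv 0$, as announced just before the corollary. (Sanity check: at $q=2$, $\gamma = 1/2$ and the inequality collapses to the exact identity $f_{1/2}(\vc{u}) = \norm{\vc{y}}_2^2 - \langle\vc{y},\vc{u}\rangle$ used in the least-squares example.)
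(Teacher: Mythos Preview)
Your proposal is correct and follows precisely the route the paper intends: the text immediately before \autoref{lem:lqq-scalar} already announces that separability reduces the problem to the scalar case, and the paper states the corollary without further proof, leaving exactly the coordinatewise application of the lemma that you spell out. You have simply made explicit what the paper leaves implicit.
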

With the above at hand, we can state a parallel to \autoref{thm:LS-gendata} for regularized $\ell_q^q$ regression, with the only difference in the bound on $\eta$, namely
\begin{align*}
\eta < 
\norm{\vc{y}}_q^{q-2} \cdot 
\left( 
\alpha (2q-2) \norm{\vc{y}}_q^2 - q(2q-3) (\max_{i\in[p]}\, \vc{y}^T\vc{x}_i)
\right) 
\end{align*}
in stead of \eqref{eq:eta-condn-LS}. 
With $q=2$, the above reduces to \eqref{eq:eta-condn-LS}.

\subsubsection{Example; Regularized Bregman Divergence.}

Consider a differentiable convex function $F:\mathbb{R}^n\to\mathbb{R}$ and a vector $\vc{y}$ and define $f(\vc{u}) =  F(\vc{u}) - F(\vc{y}) - \langle \nabla F(\vc{y}) , \vc{u}-\vc{y}\rangle $. Note that $\vc{y}$ is a minimizer of $f$ for which $f(\vc{y})=0$. Moreover, $\psilow=0$. Furthermore, for any choice $\gamma$, we get
\[
f_\gamma (\vc{u})
= F(\vc{u}) - F(\vc{y}) 
- \langle \gamma \nabla F(\vc{u}) + (1-\gamma)\nabla F(\vc{y}) , \vc{u}\rangle + \langle \nabla F(\vc{y}) , \vc{y}\rangle. 
\]
For example, for any $q\geq 2$, consider the Bregman divergence corresponding to $F(\vc{u}) = \norm{\vc{u}}_q^q=\sum_{i=1}^n u_i^q$ and grounded at $\vc{y}$. Note that $\langle \nabla F(\vc{u}), \vc{u}\rangle = q \cdot F(\vc{u})$ for all $\vc{u}$. Therefore, 
we can choose $\gamma = 1/q$, $\cal{H} = (1-\gamma) (\nabla F(\vc{0})-\nabla F(\vc{y}))$, $\hat{H}\equiv 0$, and $\psiup = (q-1)\norm{\vc{y}}_q^q$, in \autoref{condn:f-bnd}. Then, we can state a parallel to \autoref{thm:LS-gendata} for the regularized Bregman divergence minimization, corresponding to $F(\vc{u}) = \norm{\vc{u}}_q^q$, with the only difference in the bound on $\eta$, namely
\begin{align*}
\eta < 
q(q-1)\norm{\vc{y}}_q^{q-2} \cdot 
\left( 
\alpha  \norm{\vc{y}}_q^2 -  \max_{i\in[p]}\, \vc{y}^T\vc{x}_i
\right) . 
\end{align*}
in stead of \eqref{eq:eta-condn-LS}. 
With $q=2$, the above reduces to \eqref{eq:eta-condn-LS}.

\section{Identifying the Extreme Rays}\label{sec:ray-id}
The characterization provided in \autoref{sec:main} can be directly turned into an algorithm, as hinted in the statement of \autoref{thm:main-detailed} and presented as the procedure in \autoref{proc:pseudo}. More specifically, we can form the cone $\T$, as a conic combination of shifted columns of $\mt{X}$, and identify its extreme rays, followed by solving a smaller optimization problem. The main challenge in such approach is an efficient identification of the extreme rays. 

This problem is equivalent to vertex identification for the convex hull of a given set of points. For each ray, we test whether it belongs to the conic hull of the other rays. For example, we can solve a linear feasibility problem $\min_{\vc{\lambda}} \{ 0 :~ \vc{b} = \mt{A}\vc{\lambda} ,~\vc{\lambda}\geq \vc{0}\}$ where $\vc{b}$ is any one of $\vc{x}_i-\alpha\vc{y}$, $i\in [p]$, and $\mt{A}$ collates the rest of such vectors. We can alternatively use a quadratic program for minimizing the distance of the query point to the convex hull of the other points; e.g., see \cite{clarkson2010coresets,kalantari2015characterization} among others. In any case, this way, finding all of the extreme rays takes 
\[p \cdot \cc(p-1,n),\] 
where $\cc(p-1,n)$ is the cost of solving the aforementioned linear feasibility problem or some other conic hull membership test.  

\paragraph{Output-Sensitivity.} 
On the other hand, we can employ the output-sensitive approach of \cite{clarkson1994more,Ottmann1995Enumerating}. Suppose we have access to a cone membership test which also provides a separating hyperplane when the queried point does not belong to the conic hull; e.g., as for the aforementioned linear feasibility program or distance minimization oracles. Then, instead of testing each ray against {\em all other rays}, we can maintain a list of {\em extreme rays identified so far}, and for each next ray, determine whether it is in their conic hull or not. If it is, then we discard this ray (some care is needed; see \autoref{line:if-in-pos} in \autoref{alg:clarkson1994more}), and if it is not, we can get an infeasibility certificate (a separating hyperplane) which then allows for identifying an extreme ray using a maximum angular search (a modification of the maximum inner product search); some care in needed, see \autoref{line:extrayid-Ij} in \autoref{alg:clarkson1994more}. This way, the overall cost reduces to 
\[p\cdot \cc(s, n),\] 
plus the linear cost of maximum angular searches, where $s$ is the {\em actual number of extreme rays of the cone}. This, for a fixed $n$, is $O(sp)$ time. Further improvements 
to these original output-sensitive methods exist \cite{chan1996output}. Several approximation and rounding techniques can be used to make the feasibility problems or the step of maximum angular search faster; e.g., see \cite{kumar2013fast} for various implementation details. As another example, as proposed by \cite{jalali2017subspace}, the linear programs can be solved only approximately using any feasible method while yielding the exact feasibility/infeasibility result as well as a correct infeasibility certificate (hence maintaining the output-sensitivity property of the aforementioned algorithm.) See the optimization program (P3) in \cite[Section 7]{jalali2017subspace} as well as the discussions in there. Even the order in which we process the points affects the count of extreme rays so far; we can first seek a small but diverse set of extreme rays and then process the columns of $\mt{X}$ that are least likely to correspond to extreme rays, such as those with the smallest inner products with $\vc{y}$ when $\mt{X}$ is symmetric;  resembling ideas in correlation screening. We leave this direction for now as future work.

\paragraph{The Angular Maximization Oracle.}
It is easy to extend \cite{clarkson1994more}'s algorithm to the case of conic hulls instead of convex hulls. While the first oracle mentioned above can be similarly implemented as a linear feasibility program, the second oracle has to be modified. In essence, instead of linear optimization over generating rays, we need to work with a base of the cone; e.g. see \cite{kumar2013fast}. Any $\vc{g}$ in the relative interior of the dual cone defines a base for a pointed convex cone. With such $\vc{g}$, define the {\em angular maximization oracle} (AMO) as 
\begin{align}\label{eq:AMO}
\op{AMO} (\vc{v}; \cal{Z},\vc{g})	\coloneqq \op{argmax}_{\vc{z}\in\cal{Z}} \frac{ \langle \vc{v} , \vc{z} \rangle }{ \langle \vc{g} , \vc{z} \rangle }, 
\end{align}
where we assume $\op{pos}(\op{conv}(\cal{Z}))$ is a pointed cone, $\vc{0}\not\in \cal{Z}$, and $\vc{g}$ is any point in $\op{rel\,int}( \cal{Z}^\star)$. 
Observe that the above is equivalent to 
\begin{align}\label{eq:AMO-equiv}
\max_{\vc{z}\in\cal{Z},\theta} \big\{ \theta :~ \langle \vc{v}- \theta \vc{g} , \vc{z} \rangle \geq 0 \big\}. 
\end{align}
The optimal value of \eqref{eq:AMO-equiv} is equal to $-\lambda_{\min}(-\vc{v}; \cal{Z}^\star)$ for the function $\lambda_{\min}$ defined in \cite{renegar2016efficient}.

\paragraph{More on \autoref{alg:clarkson1994more}.}
The required input $\vc{g}\in\op{rel\,int}(\cal{Z}^\star)$ means that \autoref{alg:clarkson1994more} requires $\op{pos}\op{conv}(\cal{Z})$ to be a pointed cone. 
Note that $\vc{g}\in\op{rel\,int}(\cal{Z}^\star)$ is equivalent to having a positive inner product with all $\vc{z}\in\cal{Z}$. When $\theta=1$, we have a certificate that $\vc{v}$ has a positive inner product with at least one member of $\cal{Z}$; i.e., $\vc{v}\not\in \cal{Z}^\circ$. Therefore, we can use $\vc{v}$ to identify an extreme ray among such members of $\cal{Z}$. Note that Maximum Cosine Similarity Search (MCSS) is not useful for our purposes, as $\vc{v}$ might belong to $\op{pos}\op{conv}(\cal{Z})$ in which case MCSS will not output an extreme ray. Instead, we use the aforementioned Angular Maximization Oracle, which finds an extreme point of a linear base of $\op{pos}\op{conv}(\cal{Z})$, namely the base defined by normal $\vc{g}$. However, the AMO may result in more than one ray within a face of $\op{pos}\op{conv}(\cal{Z})$; it is easy to see that we in fact get a face. In such case, we feed these maximizers into $\extrayid$ to identify the extreme rays. Note that the extreme rays of a face of a cone are extreme rays for the cone itself. Moreover, $\vc{g}$ is a valid input for this subset of $\cal{Z}$. 
Furthermore, note that in both of the unions in \autoref{line:Ij} and \autoref{line:IIj} of \autoref{alg:clarkson1994more} are disjoint unions; no repetitive elements are being added. Finally, initializing $\cal{I}$ is helpful. For example, when $\cal{I}=\emptyset$ and all points are multiples of each other, then all points will be maximizers on \autoref{line:Jj} of \autoref{alg:clarkson1994more} and we would call the function on the same set; which is problematic.

Given $\mt{X}$ and $\vc{y}$, \autoref{condn:ver-pos} guarantees that $\T$ is a pointed cone. Therefore, the relative interior of its dual cone has a nonzero member. However, it is not necessarily always trivial to generate such $\vc{g}$, for $\vc{z}_i \coloneqq \vc{x}_i-\vc{y}$, $i\in[p]$. On the other hand, in some practical cases of interest such task is easy. 
For example, when all of $\vc{x}_i$ and $\vc{y}$ have unit $\ell_2$ norms, or when $\norm{\vc{y}}_2^2 > \max_{i\in[p]} \langle \vc{y}, \vc{x}_i \rangle$ then we can use $\vc{g}= -\vc{y}$.


\section{The Three Main Ingredients of \autoref{thm:main-detailed}}\label{sec:props}

In this section, we prove the three main propositions used in the proof of \autoref{thm:main-detailed}. Recall the definition $\cal{L}(\vc{\beta}) \coloneqq f(\mt{X}\vc{\beta}) + \eta r(\vc{\beta}) - \eta$. 

\begin{proposition}\label{lemma:normless1-genloss}
Assume \autoref{cond:f-star} holds for the extended real valued function $f:\mathbb{R}^n\to\Rext$ with some $\psilow$ and some $\vc{y}\in\mathbb{R}^n$. Moreover, assume $\mt{X}\in\mathbb{R}^{n\times p}$ and the same $\vc{y}$ satisfy \autoref{condn:ver-pos}. Define
\[
\T \coloneqq \op{pos}\op{conv}\bigl\{ \vc{x}_i - \vc{y} :~ i\in[p]   \bigr\}. 
\]
Consider any nonzero global minimizer $\vc{\beta}^\star$ for the optimization program in \eqref{prob:loss-reg-lin-pos}, and without loss of generality assume $\beta^\star_1 \neq 0$. If $\vc{x}_1 - \vc{y} \not\in \op{ext\,ray\,}\T$ then $\cal{L}(\vc{\beta}^\star) \leq \psilow $ and $0< \langle \vc{1}, \vc{\beta}^\star \rangle \leq 1$. 
\end{proposition}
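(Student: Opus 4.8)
The plan is to derive the two claimed inequalities from a single geometric observation: if $\vc{x}_1 - \vc{y}$ is not an extreme ray of $\T$, then $\vc{x}_1 - \vc{y}$ lies in the cone generated by the \emph{other} shifted columns, i.e. $\vc{x}_1 - \vc{y} = \sum_{i\geq 2} \mu_i(\vc{x}_i - \vc{y})$ for some $\mu_i \geq 0$. (Here I would invoke, or reprove inline, the standard fact that a generator of a pointed polyhedral cone which is not on an extreme ray is a nonnegative combination of the remaining generators; pointedness comes from \autoref{condn:ver-pos}, which the hypotheses supply. Some care is needed if several columns coincide with $\vc{x}_1$, but then one of those duplicates can play the role of the extreme ray, or the argument goes through with a degenerate combination — this is the kind of edge case I would handle with a short remark rather than a separate case analysis.) Rearranging gives $\vc{x}_1 = \big(1 - \sum_{i\geq 2}\mu_i\big)\vc{y} + \sum_{i\geq 2}\mu_i \vc{x}_i$, an affine combination of $\{\vc{y}, \vc{x}_2, \dots, \vc{x}_p\}$ with coefficients summing to $1$; since $\mt{X}$ and $\vc{y}$ satisfy \autoref{condn:ver-pos}, $\vc{x}_1$ is a vertex of $\op{conv}(\{\vc{x}_1,\dots,\vc{x}_p,\vc{y}\})$, so this affine combination \emph{cannot} be convex — hence $\sum_{i\geq 2}\mu_i > 1$, meaning the coefficient on $\vc{y}$ is negative. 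Write $\vc{x}_1 - \vc{y}$ as a nonnegative combination of the $\vc{x}_i - \vc{y}$, $i \ge 2$, with total weight $M := \sum_{i \ge 2}\mu_i > 1$.

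Next I would perturb the optimal solution along this direction. Define $\vc{\beta}(t)$ by decreasing $\beta_1^\star$ toward $0$ and correspondingly increasing the coordinates $i \geq 2$ in proportion to the $\mu_i$, so that $\mt{X}\vc{\beta}(t)$ moves along the segment from $\mt{X}\vc{\beta}^\star$ (at $t = 0$) in a way that keeps $\vc{\beta}(t) \in \mathbb{R}_+^p$ and lets us reach, at $t = 1$, a point $\vc{\beta}'$ with $\beta_1' = 0$ and $\mt{X}\vc{\beta}' = \mt{X}\vc{\beta}^\star + \beta_1^\star(\vc{x}_1 - \vc{y}) - \beta_1^\star\sum_{i\ge2}\mu_i(\vc{x}_i - \vc{y})$... — actually the cleaner bookkeeping is: replacing the ``mass'' $\beta_1^\star$ on column $1$ by masses $\beta_1^\star \mu_i$ on columns $i \ge 2$ leaves $\mt{X}\vc{\beta}$ exactly unchanged (since $\vc{x}_1 = \vc{y} + \sum \mu_i(\vc{x}_i - \vc{y})$ contributes... ), but changes $\langle\vc{1},\vc{\beta}\rangle$ by $\beta_1^\star(M - 1) > 0$, which would contradict optimality of $\vc{\beta}^\star$ unless we also exploit the freedom to slide along $\vc{y}$. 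So the right move is the \emph{other} direction: absorb some of the mass into a scaling toward $\vc{y}$. Concretely I would consider the family $\mt{X}\vc{\beta}(\lambda) = \lambda\,\mt{X}\vc{\beta}^\star + (1-\lambda)\vc{y}$ for $\lambda \in (0,1)$, realized by a feasible $\vc{\beta}(\lambda) \in \mathbb{R}_+^p$ with $\langle\vc1,\vc\beta(\lambda)\rangle$ strictly smaller than $\langle\vc1,\vc\beta^\star\rangle$ whenever $\langle\vc1,\vc\beta^\star\rangle > 1$; such a $\vc\beta(\lambda)$ exists precisely because the non-extremality of $\vc x_1 - \vc y$ gives us the slack to rewrite $\vc y$ itself as reachable (with total $\ell_1$ weight $< \langle\vc1,\vc\beta^\star\rangle$) from the columns — this is where $M > 1$ is used quantitatively. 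Feeding $\mt{X}\vc\beta(\lambda)$ into \autoref{cond:f-star}: the second bullet (non-decreasing along segments out of $\vc y$) controls $f$ when $\cal{L}(\vc\beta^\star) > \psilow$, and the first bullet (the additive-star-convexity inequality) controls it when $\cal{L}(\vc\beta^\star) \le \psilow$; in the former regime one gets a strict decrease of the objective, contradicting optimality, which forces $\cal{L}(\vc\beta^\star) \le \psilow$ and, separately, $\langle\vc1,\vc\beta^\star\rangle \le 1$. The strict positivity $\langle\vc1,\vc\beta^\star\rangle > 0$ is immediate since $\vc\beta^\star \neq \vc0$ and $\vc\beta^\star \ge \vc0$.

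The main obstacle I anticipate is making the ``reachability with smaller $\ell_1$ weight'' step fully rigorous and sharp — i.e. constructing the feasible family $\vc\beta(\lambda)$ so that simultaneously (a) $\mt{X}\vc\beta(\lambda)$ traces the segment from $\mt{X}\vc\beta^\star$ to a point of the form $\mu\vc y$ (or to $\vc y$ after renormalizing), (b) $\vc\beta(\lambda) \ge \vc0$ throughout, and (c) the $\ell_1$ norm behaves monotonically and crosses $1$ exactly in the way needed to pin down both conclusions. This requires carefully combining the representation $\vc x_1 - \vc y = \sum_{i\ge2}\mu_i(\vc x_i-\vc y)$ with the optimal $\vc\beta^\star$, and it is plausible the paper isolates the purely geometric part as a separate lemma (something like ``\autoref{lem:negTangent}'' referenced elsewhere) and cites it here. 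A secondary subtlety is the interplay between the two bullets of \autoref{cond:f-star} at the threshold $\psilow$: one must be careful that the same perturbation direction delivers a contradiction in the $\cal L(\vc\beta^\star) > \psilow$ case, so that only $\cal L(\vc\beta^\star)\le\psilow$ survives — the reformulation of \autoref{cond:f-star} in terms of $S_{\le}$ and $S_{\ge}$ given right after the setup is exactly what makes this clean, and I would lean on it.
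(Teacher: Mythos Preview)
Your strategy is the paper's strategy: write $\vc{x}_1-\vc{y}=\sum_{i\ge2}\mu_i(\vc{x}_i-\vc{y})$ with $\mu_i\ge0$, use \autoref{condn:ver-pos} to force $M:=\sum_{i\ge2}\mu_i>1$, build a feasible competitor whose image under $\mt{X}$ lies on the open segment from $\mt{X}\vc{\beta}^\star$ toward $\vc{y}$, and compare objectives using the two bullets of \autoref{cond:f-star}. Two concrete points, however, are muddled.

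First, the competitor is not a family: there is exactly one natural $\widetilde{\vc\beta}$, obtained by \emph{substituting} the representation of $\vc{x}_1$ into $\mt{X}\vc{\beta}^\star$. Setting $\tau:=1+\beta_1^\star(M-1)>1$, one gets $\widetilde\beta_1=0$, $\widetilde\beta_i=(\beta_i^\star+\beta_1^\star\mu_i)/\tau\ge0$, and then $\mt{X}\widetilde{\vc\beta}=\tfrac{1}{\tau}\mt{X}\vc\beta^\star+(1-\tfrac{1}{\tau})\vc{y}$ together with the exact bookkeeping identity $\langle\vc1,\widetilde{\vc\beta}\rangle-1=\tfrac{1}{\tau}\bigl(\langle\vc1,\vc\beta^\star\rangle-1\bigr)$. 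Your attempt to first ``replace mass on column $1$'' and then separately ``slide toward $\vc{y}$'' is exactly this single substitution, but splitting it in two obscures the identity and leads you to speak of a $\lambda$-family that you never construct for $\lambda\neq1/\tau$.

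Second, the logic of applying \autoref{cond:f-star} is inverted. It is \emph{not} a case split on whether $\cal{L}(\vc\beta^\star)>\psilow$. Optimality gives a single inequality,
\[
f(\mt{X}\vc\beta^\star)-f(\mt{X}\widetilde{\vc\beta})\ \le\ \eta\bigl(1-\tfrac{1}{\tau}\bigr)\bigl(1-\langle\vc1,\vc\beta^\star\rangle\bigr),
\]
and then the two bullets are applied in sequence: the second bullet (monotonicity along rays out of $\vc{y}$) makes the left side nonnegative, yielding $\langle\vc1,\vc\beta^\star\rangle\le1$ unconditionally; the first bullet (epigraph star-convexity at $(\vc{y},\psilow)$) bounds $f(\mt{X}\widetilde{\vc\beta})-\tfrac{1}{\tau}f(\mt{X}\vc\beta^\star)\le(1-\tfrac{1}{\tau})\psilow$, which after rearrangement of the same optimality inequality gives $\cal{L}(\vc\beta^\star)\le\psilow$. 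Neither conclusion is obtained by contradiction, and neither bullet is invoked ``when $\cal{L}\le\psilow$'' or ``when $\cal{L}>\psilow$.''
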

We provide a constructive proof of \autoref{lemma:normless1-genloss}. 
\begin{proof}[Proof of \autoref{lemma:normless1-genloss}]
First, we examine the data in light of \autoref{condn:ver-pos} and the assumption on $\vc{x}_1-\vc{y}$ not being an extreme ray. 
Note that, in \autoref{condn:ver-pos}, $\vc{y}$ is assumed to be outside of the convex hull of the columns of $\mt{X}$. Therefore, $\T$ is a pointed convex polyhedral cone. 
The assumption on $\vc{x}_1-\vc{y}$ implies that there exists $ \vc{\lambda} \geq \vc{0}$ such that 
$\vc{x}_1 - \vc{y} = \sum_{i=2}^p \lambda_i (\vc{x}_i-\vc{y})$. 
Setting $\lambda_1  = 0$, the above implies 
\begin{align}\label{eq:x1-repr}
\vc{x}_1 = (1-\vc{1}^T\vc{\lambda})\vc{y} + \sum_{i=2}^p \lambda_i \vc{x}_i.
\end{align}
If $1-\vc{1}^T\vc{\lambda}\geq 0$ then $\vc{x}_1$ is in the convex hull of $\vc{y}$ and other columns of $\mt{X}$ which contradicts \autoref{condn:ver-pos}. 
Therefore, $\vc{1}^T\vc{\lambda} > 1$. 

Next, define $\vc{e}^\star\coloneqq \vc{y} - \mt{X}\vc{\beta}^\star$. 
Considering \eqref{eq:x1-repr}, we get
\begin{align*}
\vc{y} 
= \vc{e}^\star + \beta^\star_1 \vc{x}_1 + \sum_{i=2}^p  \beta^\star_i \vc{x}_i 
= \vc{e}^\star + \beta^\star_1 \left( (1-\vc{1}^T\vc{\lambda})\vc{y} + \sum_{i=2}^p \lambda_i \vc{x}_i \right) + \sum_{i=2}^p  \beta^\star_i \vc{x}_i 
\end{align*}
which, for $\tau\coloneqq 1 + \beta^\star_1 (\vc{1}^T\vc{\lambda}-1)> 1$, gives 
\begin{align}\label{eq:rep1}
	\vc{y} 
	= \frac{1}{\tau} \vc{e}^\star + \sum_{i=2}^p  \frac{\beta^\star_i + {\beta^\star_1} \lambda_i}{\tau}   \vc{x}_i	
	= \frac{1}{\tau} \vc{e}^\star + \mt{X} \widetilde{\vc{\beta}},
\end{align}
where $\widetilde{\beta}_1 \coloneqq 0$ and $\widetilde{\beta}_i \coloneqq  (\beta^\star_i + {\beta^\star_1} \lambda_i)/\tau \geq 0$ for $i=2,\ldots,p$. 
Note that $\widetilde{\vc{\beta}}\in\mathbb{R}_+^p$ is feasible (regardless of $\vc{\lambda})$: it is non-negative and $\mt{X}\widetilde{\vc{\beta}} = \vc{y}- \frac{1}{\tau} \vc{e}^\star$ lies on the open line segment between $\vc{y}$ and another point in the domain of $f$, namely $\mt{X}\vc{\beta}^\star$. 
Moreover, $\tau(\langle \vc{1}, \widetilde{\vc{\beta}}\rangle -1) = \langle\vc{1},\vc{\beta}^\star\rangle - 1$. 
\autoref{eq:rep1} is another noisy representation of $\vc{y}$ in terms of the columns of $\mt{X}$ in addition to 
\begin{align}\label{eq:rep2}
\vc{y} = \vc{e}^\star + \sum_{i=1}^p \beta^\star_i \vc{x}_i. 
\end{align}

With these representations of the data, we turn into the optimization program and the properties of the function. 
By optimality of $\vc{\beta}^\star$ (implying $\cal{L}(\vc{\beta}^\star)\leq \cal{L}(\widetilde{\vc{\beta}})$), and combining the two representations in~\eqref{eq:rep1} and~\eqref{eq:rep2}, we have 
\begin{align*} 
\cal{L}(\vc{\beta}^\star)
&=
f(\mt{X}\vc{\beta}^\star) + \eta \langle \vc{1}, \vc{\beta}^\star \rangle  -\eta \\
&=
f(\vc{y} - \vc{e}^\star) + \eta \langle \vc{1}, \vc{\beta}^\star \rangle  -\eta \\
& 
\leq   
f(\vc{y} - \frac{1}{\tau}\vc{e}^\star)
+ \eta \langle \vc{1}, \widetilde{\vc{\beta}} \rangle  -\eta \\
&= 
f(\vc{y} - \frac{1}{\tau}\vc{e}^\star)
+ \frac{\eta}{\tau}
(\langle \vc{1}, \vc{\beta}^\star \rangle +{\beta^\star_1}(\vc{1}^T\vc{\lambda} - 1)) -\eta  \\
&= 
f(\vc{y} - \frac{1}{\tau}\vc{e}^\star)
+ \frac{\eta}{\tau} \langle \vc{1}, \vc{\beta}^\star \rangle - \frac{\eta}{\tau} . 
\end{align*}
Therefore, 
\begin{align}\label{eq:f-vs-beta}
f(\vc{y} - \vc{e}^\star)
- f(\vc{y} - \frac{1}{\tau}\vc{e}^\star)
\leq 
\eta \cdot (1-\frac{1}{\tau}) \cdot(1-\langle \vc{1}, \vc{\beta}^\star \rangle )
\end{align}
Recall that $\tau>1$. 
By our second assumption on $f$ in \autoref{cond:f-star}, the left-hand side of \eqref{eq:f-vs-beta} is non-negative. 
Therefore, we get 
$\langle \vc{1}, \vc{\beta}^\star \rangle \leq 1$. 
Moreover, we have 
\begin{align*}
(1-\frac{1}{\tau}) \cal{L}(\vc{\beta}^\star) 
\leq  
f(\vc{y} - \frac{1}{\tau}\vc{e}^\star) - \frac{1}{\tau} f(\vc{y}-\vc{e}^\star)
\leq 
(1-\frac{1}{\tau})  \psilow
\end{align*}
where the first inequality is a re-arrangement of \eqref{eq:f-vs-beta} and the second inequality is an implication of our our first assumption on $f$ in \autoref{cond:f-star}.  
Therefore, $\cal{L}(\vc{\beta}^\star)\leq \psilow$. 
\end{proof}

\begin{remark}[An Alternative Proof for \autoref{lemma:normless1-genloss}, assuming convexity]\label{rem:pf-normless1-genloss-subdiff}
When $f$ is proper, lsc, convex, and finite everywhere, and $\vc{y}$ is a minimizer of $f$ over any superset of 
$\op{col}_+(\mt{X})$, we can simplify the proof. Consider the aforementioned proof of \autoref{lemma:normless1-genloss} up to the point where we established $\vc{1}^T\vc{\lambda}>1$ right after \eqref{eq:x1-repr}. 

Consider the optimality conditions: there exists $\vc{g}\in (\partial f)(\mt{X}\vc{\beta}^\star)$ for which $\mt{X}^T\vc{g} + \eta \vc{1} \geq \vc{0}$, $\langle \vc{g}, \mt{X}\vc{\beta}^\star \rangle+\eta \vc{1}^T\vc{\beta}^\star = 0$, and, $\beta_i^\star >0$ implies $\vc{x}_i^T\vc{g} = -\eta$. Taking the inner product of both sides of \eqref{eq:x1-repr} with $\vc{g}$ yields
\[
-\eta = \langle \vc{x}_1 , \vc{g} \rangle 
= (1-\vc{1}^T\vc{\lambda}) \langle \vc{y} , \vc{g} \rangle
+ \sum_{i=2}^p \lambda_i \langle \vc{x}_i , \vc{g} \rangle 
\geq (1-\vc{1}^T\vc{\lambda}) \langle \vc{y} , \vc{g} \rangle - \eta \vc{1}^T \vc{\lambda}, 
\]
where we used $\beta_1^\star >0$, $\lambda_i\geq 0$ for $i\in[p]$, $\lambda_1 =0$, and the aforementioned optimality condition. Re-arrangement, and the fact that $\vc{1}^T\vc{\lambda}-1 >0$ yields $\langle \vc{y} , \vc{g} \rangle  \geq -\eta$. 
Finally, consider a first-order expansion of $f$ at $\mt{X}\vc{\beta}^\star$ (from convexity) to get
\begin{align*}
	f(\vc{y}) 
	\geq f(\mt{X}\vc{\beta}^\star)  + \langle \vc{g}, \vc{y} - \vc{X}\vc{\beta}^\star\rangle 
	\geq f(\mt{X}\vc{\beta}^\star) - \eta + \eta \vc{1}^T\vc{\beta}^\star = \cal{L}(\vc{\beta}^\star). 
\end{align*} 
Combining the above with $f(\vc{y}) \leq f(\mt{X}\vc{\beta}^\star)$ and $\eta>0$ yields $\vc{1}^T\vc{\beta}^\star\leq 1$. Note that $\psilow= f(\vc{y})$ here. 
\end{remark}

As a result of \autoref{lemma:normless1-genloss}, some of the results of this paper may be stated only on $\{\vc{\beta}\in\mathbb{R}_+^p:~\vc{1}^T\vc{\beta}\leq 1\}$ to enable us to deal with unbounded sublevel sets. 

\begin{proposition}
\label{lem:opt-pos-gen}
Suppose $f$ satisfies the conditions of \autoref{lem:opt-conds-pos}. Then, for any $\gamma\in[0,1]$ and for any local minimum of \eqref{prob:loss-reg-lin-pos}, we have
\begin{align*}
    \cal{L}(\vc{\beta}^\star)  
    \geq 
    \big[  
	- (f_\gamma)^\star(\vc{h})    
    -  \etaone - \gamma \eta \big] 
    +  (1-\vc{1}^T\vc{\beta}^\star) \big[ \etaone - (1-\gamma)\eta \big],
\end{align*}
where $\etaone = \rc(- \mt{X}^T\vc{h})$ for any choice of $\vc{h}\in \op{dom}(f_\gamma^\star) \neq \emptyset$. 
\end{proposition}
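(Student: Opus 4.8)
\emph{Plan of proof.} The idea is to feed the first-order optimality inequality of \autoref{lem:opt-conds-pos} into a Fenchel--Young estimate for $f_\gamma$, using that the subderivative $\dd f(\vc{u})(\cdot)$ is positively homogeneous so that $\dd f(\vc{u}^\star)(\vc{u}^\star)$ is exactly the quantity controlled by optimality.

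First I would note that a local minimizer $\vc{\beta}^\star$ of \eqref{prob:loss-reg-lin-pos} is feasible, hence $\vc{\beta}^\star\in\mathbb{R}_+^p$ and the objective is finite there; writing $\vc{u}^\star\coloneqq\mt{X}\vc{\beta}^\star$, we have $\cal{L}(\vc{\beta}^\star)=f(\vc{u}^\star)+\eta\,\vc{1}^T\vc{\beta}^\star-\eta$ with $f$ finite at $\vc{u}^\star$. By the very definition of $f_\gamma$, $f(\vc{u}^\star)=f_\gamma(\vc{u}^\star)+\gamma\,\dd f(\vc{u}^\star)(\vc{u}^\star)$. Next, for the chosen $\vc{h}\in\op{dom}(f_\gamma^\star)$, the conjugate (Fenchel--Young) inequality gives $f_\gamma(\vc{u}^\star)\geq\langle\vc{h},\vc{u}^\star\rangle-(f_\gamma)^\star(\vc{h})$; and since $\langle\vc{h},\vc{u}^\star\rangle=\langle\mt{X}^T\vc{h},\vc{\beta}^\star\rangle\geq -\,r(\vc{\beta}^\star)\,\rc(-\mt{X}^T\vc{h})=-\etaone\,\vc{1}^T\vc{\beta}^\star$ by the gauge--polar inequality \eqref{eq:gauge-polar-ineq} applied with $r(\vc{\beta})=\vc{1}^T\vc{\beta}+\indic(\vc{\beta};\mathbb{R}_+^p)$ (whose polar is $\max_i(\cdot)_i$, so that $\rc(-\mt{X}^T\vc{h})=\etaone$), we obtain a lower bound on $f(\vc{u}^\star)$ in terms of $\etaone$, $(f_\gamma)^\star(\vc{h})$, $\vc{1}^T\vc{\beta}^\star$, and $\gamma\,\dd f(\vc{u}^\star)(\vc{u}^\star)$.

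Then I would invoke \autoref{lem:opt-conds-pos}, whose hypotheses are assumed, to get $\dd f(\vc{u}^\star)(\vc{u}^\star)\geq-\eta\,\vc{1}^T\vc{\beta}^\star$, and multiply by $\gamma\geq0$. Substituting all of this into $\cal{L}(\vc{\beta}^\star)=f(\vc{u}^\star)+\eta\,\vc{1}^T\vc{\beta}^\star-\eta$ yields
\[
\cal{L}(\vc{\beta}^\star)\ \geq\ -(f_\gamma)^\star(\vc{h})-\eta+\vc{1}^T\vc{\beta}^\star\big[(1-\gamma)\eta-\etaone\big],
\]
and a one-line rearrangement, splitting $\eta=\gamma\eta+(1-\gamma)\eta$ and factoring out $(1-\vc{1}^T\vc{\beta}^\star)$, turns this into the claimed inequality. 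I do not expect a genuine obstacle: the only care needed is the sign bookkeeping around $\gamma$ (so that the optimality bound on $\dd f(\vc{u}^\star)(\vc{u}^\star)$ survives multiplication by $\gamma$) and checking that the conditions imposed here are exactly those required by \autoref{lem:opt-conds-pos}.
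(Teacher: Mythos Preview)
Your proposal is correct and follows essentially the same approach as the paper: both combine the optimality inequality from \autoref{lem:opt-conds-pos}, the Fenchel--Young inequality for $f_\gamma$, and the gauge--polar inequality \eqref{eq:gauge-polar-ineq}, then rearrange. The paper starts from the optimality inequality (multiplied by $\gamma$) and inserts the other two bounds into an add-and-subtract identity, whereas you start from the decomposition $f=f_\gamma+\gamma\,\dd f(\cdot)(\cdot)$ and apply each bound separately; the resulting intermediate inequality $f(\mt{X}\vc{\beta}^\star)+(\gamma\eta+\etaone)\vc{1}^T\vc{\beta}^\star\geq -(f_\gamma)^\star(\vc{h})$ is identical in both, and the final display is just its rearrangement.
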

\begin{proof}[Proof of \autoref{lem:opt-pos-gen}]
The optimality condition in \autoref{lem:opt-conds-pos} reads as  
\[
\eta \vc{1}^T\vc{\beta}^\star + \dd f( \mt{X}\vc{\beta}^\star )(\mt{X}\vc{\beta}^\star ) \geq 0.
\] 
Therefore, for any $\gamma\in[0,1]$, 
\begin{align*}
    \gamma \cdot \eta \vc{1}^T\vc{\beta}^\star 
    &\geq - \gamma \cdot \dd f( \mt{X}\vc{\beta}^\star )(\mt{X}\vc{\beta}^\star )
    \\
    &= -\langle -\mt{X}^T\vc{h}, \vc{\beta}^\star \rangle      
    -  f( \mt{X}\vc{\beta}^\star )  
    + \left[ f( \mt{X}\vc{\beta}^\star )
    	- \gamma \cdot \dd f (\mt{X}\vc{\beta}^\star)(\mt{X}\vc{\beta}^\star) 
    	- \langle\vc{h}, \mt{X}\vc{\beta}^\star \rangle  \right] 
    \\
    &\geq 
    - (\vc{1}^T\vc{\beta}^\star) \cdot \etaone
    -  f( \mt{X}\vc{\beta}^\star ) 
     - (f_\gamma)^\star(\vc{h})
\end{align*}
where we used the gauge polarity (since $\vc{\beta} \in \mathbb{R}_+^p = \op{dom} r$) in \eqref{eq:gauge-polar-ineq} as well as the definition of the conjugate function at $\vc{h}\in \op{dom}(f_\gamma)^\star$. The domain of the conjugate function is non-empty as $f_\gamma$ is proper. 
Rearrangement yields 
$f(\mt{X}\vc{\beta}^\star) + (\gamma \eta +\etaone) (\vc{1}^T\vc{\beta}^\star) \geq - (f_\gamma)^\star(\vc{h})$ which in turn is equivalent to the claimed inequality without further bounding. 
\end{proof}

\begin{proposition}\label{lem:perspective-props}
Suppose $r:\mathbb{R}^n\to\Rext$ is positively homogeneous and $f:\mathbb{R}^n\to\Rext$. Consider $\cal{L}(\vc{\beta}) \coloneqq f(\mt{X}\vc{\beta}) + \eta r(\vc{\beta}) - \eta$. 
For any fixed value of $\alpha>0$, the following statements hold: 
\begin{itemize}

\item $\vc{\beta}^\star$ is a minimizer of $\cal{L}$ if and only if $\alpha \vc{\beta}^\star$ is a minimizer of $(\alpha\star \cal{L})$.

\item If $f$ satisfies \autoref{cond:f-star} with $\vc{y}$ and $\psilow$, then $(\alpha\star f)$ also satisfies \autoref{cond:f-star} with $\alpha\vc{y}$ and $\alpha\psilow$.

\item $(\alpha\star f)_\gamma = (\alpha\star f_\gamma)$ for all $\gamma\in\mathbb{R}$. 

\item If $f$ satisfies \autoref{condn:f-bnd} with $\gamma$, $\psiup$, and $H$ where $\op{dom}(H^\star) = \cal{H}$, then $(\alpha\star  f)$ also satisfies \autoref{condn:f-bnd} with $\gamma$, $\alpha\psiup$, and $(\alpha\star H)$ where $\op{dom} ((\alpha\star H)^\star) = \cal{H}$.

\end{itemize}
\end{proposition}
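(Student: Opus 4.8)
The plan is to establish the four assertions in sequence, using only the definition $(\alpha\star f)(\vc{u}) = \alpha f(\vc{u}/\alpha)$, the identity $(\alpha\star g)^\star = \alpha g^\star$ recalled in \autoref{sec:notation}, and the observation that epi-multiplication by $\alpha>0$ acts on the epigraph by the invertible linear map $(\vc{u},t)\mapsto(\alpha\vc{u},\alpha t)$; in particular it sends $\op{dom}f$ to $\alpha\cdot\op{dom}f$ and preserves properness, lower semicontinuity, convexity, and star-convexity of epigraphs. The third assertion carries the only nontrivial computation and then feeds directly into the fourth.

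For the first item, I would note that $(\alpha\star\cal{L})(\alpha\vc{\beta}) = \alpha\,\cal{L}(\vc{\beta})$ straight from the definition, that $\vc{\beta}\mapsto\alpha\vc{\beta}$ is a bijection of $\mathbb{R}^p$, and that $t\mapsto\alpha t$ is an increasing bijection of $\Rext$; composing these three facts gives that $\vc{\beta}^\star$ minimizes $\cal{L}$ if and only if $\alpha\vc{\beta}^\star$ minimizes $\alpha\star\cal{L}$. (Positive homogeneity of $r$ is precisely what makes $\alpha\star\cal{L}$ again a gauge-regularized loss, namely $(\alpha\star f)(\mt{X}\vc{\beta}) + \eta r(\vc{\beta}) - \alpha\eta$, but it is not needed for the minimizer statement.) For the second item, I would take $\vc{u}\in\op{dom}(\alpha\star f) = \alpha\cdot\op{dom}f$, write $\vc{u} = \alpha\vc{v}$ with $\vc{v}\in\op{dom}f$, and use $\lambda\vc{u} + (1-\lambda)(\alpha\vc{y}) = \alpha(\lambda\vc{v} + (1-\lambda)\vc{y})$: multiplying each of the two inequalities of \autoref{cond:f-star} for $f$ at $\vc{v}$ by $\alpha>0$ reproduces them for $\alpha\star f$ at $\vc{u}$, with $\psilow$ replaced by $\alpha\psilow$; star-convexity of $\alpha\cdot\op{dom}f$ about $\alpha\vc{y}$ is immediate.

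For the third item, I would compute the subderivative of $\alpha\star f$ from its definition: at any $\vc{u}$ with $\vc{u}/\alpha\in\op{dom}f$, and for $\tau\searrow0$, $\vc{v}\to\vc{w}$,
\[
\tfrac{1}{\tau}\bigl((\alpha\star f)(\vc{u}+\tau\vc{v}) - (\alpha\star f)(\vc{u})\bigr) = \alpha\cdot\tfrac{1}{\tau}\bigl(f(\vc{u}/\alpha + \tau(\vc{v}/\alpha)) - f(\vc{u}/\alpha)\bigr),
\]
and the substitution $\vc{v}' = \vc{v}/\alpha$ --- a homeomorphism turning $\vc{v}\to\vc{w}$ into $\vc{v}'\to\vc{w}/\alpha$ and leaving the collection of admissible sequences unchanged --- yields $\dd(\alpha\star f)(\vc{u})(\vc{w}) = \alpha\,\dd f(\vc{u}/\alpha)(\vc{w}/\alpha)$. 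Specializing $\vc{w} = \vc{u}$ and substituting into the definition of $f_\gamma$ gives
\[
(\alpha\star f)_\gamma(\vc{u}) = \alpha f(\vc{u}/\alpha) - \gamma\alpha\,\dd f(\vc{u}/\alpha)(\vc{u}/\alpha) = \alpha\,f_\gamma(\vc{u}/\alpha) = (\alpha\star f_\gamma)(\vc{u}).
\]

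For the fourth item, I would first invoke $(\alpha\star H)^\star = \alpha H^\star$, so $\op{dom}((\alpha\star H)^\star) = \op{dom}(H^\star) = \cal{H}$ (unchanged since $\alpha>0$) and $(\alpha\star H)^\star$ is still non-positive there; moreover $\alpha\star H$ is proper, lsc, and convex, and the representation $(\alpha\star H)(\vc{u}) = \alpha H(\vc{u}/\alpha) = \sup_{\vc{h}\in\cal{H}}\langle\vc{h},\vc{u}\rangle - \alpha\hat{H}(\vc{h})$ exhibits it in the form required by \autoref{condn:f-bnd}, with $\alpha\hat{H}\le 0$ on $\cal{H}$. Finally, combining the third item with the hypothesis $f_\gamma\ge H+\psiup$ on $\op{dom}f$, for $\vc{u} = \alpha\vc{v}\in\op{dom}(\alpha\star f)$ we get $(\alpha\star f)_\gamma(\vc{u}) = \alpha f_\gamma(\vc{v})\ge \alpha H(\vc{v}) + \alpha\psiup = (\alpha\star H)(\vc{u}) + \alpha\psiup$, which is the claim. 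The only step requiring genuine care is the change of variables inside the $\liminf$ defining the subderivative in the third item; every remaining assertion is a one-line consequence of the definition of $\alpha\star$ and of $(\alpha\star g)^\star = \alpha g^\star$.
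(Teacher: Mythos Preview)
Your proof is correct and follows essentially the same approach as the paper's: each item is handled by unwinding the definition of epi-multiplication, with the third item established via the subderivative identity $\dd(\alpha\star f)(\vc{u})(\vc{w}) = \alpha\,\dd f(\vc{u}/\alpha)(\vc{w}/\alpha)$ (equivalently, the paper's $\dd(\alpha\star f)(\alpha\vc{u})(\alpha\vc{w}) = \alpha\,\dd f(\vc{u})(\vc{w})$) and the fourth via $(\alpha\star H)^\star = \alpha H^\star$. Your treatment is slightly more explicit than the paper's, particularly in spelling out the $\liminf$ computation and in observing that positive homogeneity of $r$ is not actually needed for the minimizer equivalence.
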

\begin{proof}[Proof of \autoref{lem:perspective-props}]
First, optimality of $\vc{\beta}^\star$ for $\cal{L}$ implies 
$f(\mt{X}\vc{\beta}^\star) +\eta r(\vc{\beta}^\star) - \eta \leq f(\mt{X}\vc{\beta}) +\eta r(\vc{\beta}) - \eta$ for all $\vc{\beta}$, which in turn implies 
$\alpha f(\frac{1}{\alpha}\mt{X}(\alpha\vc{\beta}^\star)) + \eta r(\alpha\vc{\beta}^\star) - \alpha\eta \leq \alpha f(\frac{1}{\alpha}\mt{X}(\alpha\vc{\beta})) + \eta r(\alpha\vc{\beta}) - \alpha\eta$ for all $\vc{\beta}$, which by a change of variable $\vc{\beta}$ to $\vc{\beta}/\alpha$ (since $\alpha>0$ is fixed) implies
\[
(\alpha\star \cal{L})(\alpha\vc{\beta}^\star) = 
\alpha f(\frac{1}{\alpha}\mt{X}(\alpha\vc{\beta}^\star)) + \eta r(\alpha\vc{\beta}^\star) - \alpha\eta \leq \alpha f(\frac{1}{\alpha}\mt{X}\vc{\beta}) + \eta r(\vc{\beta}) - \alpha\eta
= (\alpha\star \cal{L})(\vc{\beta}), 
\] 
for all $\vc{\beta}\in\op{dom}(\alpha\star f) = \alpha \cdot \op{dom} f$. Since $\op{dom}f$ is star-convex with respect to $\vc{y}$, we have $\op{dom}(\alpha\star f)$ is star-convex with respect to $\alpha\vc{y}$. If $f$ satisfies the first condition in \autoref{cond:f-star} with $\vc{y}$ and $\psilow$ then it is easy to see that $(\alpha\star f)$ satisfies the same condition with $\alpha \vc{y}$ and $\alpha \psilow$. Similarly, the second part of \autoref{cond:f-star} holds for $(\alpha\star  f)$ with $\alpha \vc{y}$ if it holds for $f$ with $\vc{y}$. 

From the definition of subderivatives (\cite[Definition 8.1]{rockafellar2009variational}), it is clear that $\dd (\alpha\star f) (\alpha\vc{u})(\alpha\vc{w}) = \alpha \cdot \dd f(\vc{u})(\vc{w})$. $(\alpha\star f)_\gamma = (\alpha\star f_\gamma)$ for all $\gamma\in\mathbb{R}$. Moreover, if $f$ satisfies the inequality in \autoref{condn:f-bnd} with $\gamma$, $\psiup$, and $H$, then $(\alpha\star f)$ satisfies the same inequality with $\gamma$, $\alpha\psiup$, and $(\alpha\star H)$. 
Note that epi-multiplication by $\alpha>0$ scales the conjugate function by $\alpha$; \cite[Equation 11(3)]{rockafellar2009variational}. Therefore, the conjugate of $(\alpha\star H)$ has the same domain as the conjugate of $H$ and is non-positive there.
\end{proof}


\section{More On the Geometric Condition}
\label{sec:originT}
In this section, we provide further insight on \autoref{condn:ver-pos} as well as on the cone $\T$. While \autoref{proc:pseudo}, in combination with \autoref{thm:main-detailed}, provides a `computational' approach for identifying a superset of supports, the results in this section are mostly `theoretical' (see the remarks right after \autoref{cor:Ky1WT}) and pertain to providing further insight. More specifically, we define a few more geometric objects, in relation to $\op{conv}(\mt{X})$ and $\vc{y}$, and using these objects, we provide (i) a sufficient condition for \autoref{condn:ver-pos}, as well as, (ii) a complete characterization of $\T$ in a restricted setup. In this restricted setup, we can establish an interesting connection between the solutions to the regularized problem in \eqref{prob:loss-reg-lin-pos} and solutions to the constrained problem (gauge minimization subject to affine constraints) in \eqref{eq:minL1-constrained-pos} corresponding to $f(\vc{u}) = \indic(\vc{u}; \{\vc{y}\})$. \autoref{def:K-defs} gathers these definitions and \autoref{cor:Fy-subset-T-ifnoncover} and \autoref{cor:Ky1WT} on page \pageref{cor:Fy-subset-T-ifnoncover} state a summary of the results in this section.

\begin{definition}\label{def:K-defs}
For a given (bounded) polytope $\Kc\subset\mathbb{R}^n$; 
\begin{itemize}
\item Consider the corresponding gauge function defined as $\gauge(\vc{y}; \Kc) \coloneqq \inf\{\alpha\geq 0:~ \vc{y}\in \alpha \Kc\}$ over the extended real line where $\op{dom}\gauge(\cdot\,;\Kc)= \op{pos}(\Kc)$. By definition, $\vc{y}=\vc{0}$ if and only if $\gauge(\vc{y}; \Kc)=0$. Moreover, $\vc{y}/\gauge(\vc{y}; \Kc) \in \Kc$ for all $\vc{0}\neq \vc{y}\in \op{pos}(\Kc)$. 

\item 
Consider a facet description for $\Kc$ as 
\[
\Kc = \bigcap_{i=1}^K \{\vc{z}:~ \langle \vc{h}_i, \vc{z}\rangle \leq b_i \} ,
\] 
where $\vc{h}_i\neq \vc{0}$ and $b_i\in\mathbb{R}$, for all $i\in[K]$. Note that $\vc{0}\in\Kc$ implies $b_i\geq 0$ for all $i\in[K]$.

\end{itemize}
For $\vc{0}\neq \vc{y}\in \op{pos}(\Kc)$ (i.e., all $\vc{y}$ with $0<\gauge(\vc{y}; \Kc)<\infty$); 
\begin{itemize}

\item 
Denote by $\cal{J}(\vc{y})\subset [K]$ the set of indices for active constraints at $\vc{y}$; i.e., $i \in \cal{J}(\vc{y})$ if and only if $\langle \vc{h}_i, \vc{y}\rangle = b_i \cdot \gauge(\vc{y}; \Kc)$. 
Moreover, define $\cal{J}_1(\vc{y}) \coloneqq \{i\in \cal{J}(\vc{y}) :~ b_i\neq 0\}$. 

\item 
Define
\begin{align*}
	\msh(\vc{y}) \coloneqq \bigcup_{ i\in \cal{J}_1(\vc{y}) } \left\{ 
		\vc{x}\in \op{ver}(\Kc):~ \langle \vc{h}_i, \vc{x}\rangle = b_i 
	\right\} 
	= \left\{ 
		\vc{x}\in \op{ver}(\Kc):~ \cal{J}_1(\vc{x})\cap \cal{J}_1(\vc{y}) \neq \emptyset \right\} . 
\end{align*}

\item 
Define
\begin{align*}
	\mshbar(\vc{y}) \coloneqq 
	\left\{ \vc{x}\in \op{ver}(\Kc)\setminus\{\vc{0}\}:~ \cal{J}(\vc{x})\cap \cal{J}(\vc{y}) \neq \emptyset \right\}. 
\end{align*}

\item 
Remove the hyperplanes indexed by $\cal{J}(\vc{y})$ from the description of $\Kc$ and denote the new polyhedral set by $\Kc[{\backslash \vc{y}}]$. In other words, 
\begin{align*}
	\Kc[{\backslash \vc{y}}] 
	&\coloneqq 
	\{\vc{z}:~ \langle \vc{h}_i, \vc{z}\rangle \leq b_i ~~ \forall \, i\not\in \cal{J}(\vc{y})\} . 
\end{align*} 
Note that $\Kc[{\backslash \vc{y}}]$ may not be polytope; e.g. when $\Kc$ is the unit cube and the entries of $\vc{y}$ have distinct absolute values. In other words, $\gauge(\vc{z}; \Kc[{\backslash \vc{y}}])$ can be zero even if $\vc{z}$ is nonzero.

\item Define $\cal{F}(\vc{y}) = \cal{H}(\vc{y}) \cap \Kc$ where 
\[
	\cal{H}(\vc{y}) \coloneqq \bigl\{ \vc{z}:~  	
	\langle \vc{h}_i, \vc{z}\rangle = b_i 
	~~ \forall i\in\cal{J}(\vc{y})
	\bigr\}.
\]
Note that, $\vc{z}\in \cal{F}(\vc{y}) $ implies $\cal{J}(\vc{y})\subseteq \cal{J}(\vc{z})$.

\end{itemize}
\end{definition}

Note that a facet description of $\Kc$ may be derived by considering all the extreme points and extreme rays of the set $\overline{\Kc}^\circ$ where $\overline{\Kc} \coloneqq \op{conv}(\{\vc{0}\}\cup \Kc) = \{\vc{u}:~ \gauge(\vc{u};\Kc)\leq 1\}$. Similarly, $\cal{J}(\vc{y})$, hence all of the objects above, can also be defined through $\partial \gauge(\vc{y}; \Kc)$. In fact, $\msh(\vc{y})$ and $\mshbar(\vc{y})$ are closely related to the union of subdifferentials for the polar gauge at subgradients of $\gauge(\vc{y}; \Kc)$. 

\begin{remark}\label{rem:0int-JJ1}
When $\vc{0}\in \op{int}\Kc$, or equivalently when $b_i>0$ for all $i\in[K]$, we have $\cal{J}_1(\vc{y})=\cal{J}(\vc{y})$ which in turn implies $\msh(\vc{y}) = \mshbar(\vc{y})$, for all $\vc{0}\neq \vc{y}\in\op{pos}(\Kc)$. 
\end{remark}

\begin{lemma}\label{lem:J1y-nonempty}
For $\vc{0}\neq \vc{y}\in \op{pos}(\Kc)$ holds: $\cal{J}(\vc{y}) \neq \emptyset$, $\cal{J}_1(\vc{y}) \neq \emptyset$, $\vc{0}\not\in \cal{F}(\vc{y})$, and $\op{ver}(\cal{F}(\vc{y}))\subseteq \msh(\vc{y})$. 
\end{lemma}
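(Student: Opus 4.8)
The plan is to set $\lambda \coloneqq \gauge(\vc{y};\Kc)$; since $\vc{0} \neq \vc{y} \in \op{pos}(\Kc)$ we have $\lambda \in (0,\infty)$, and all four assertions then follow by unwinding the definitions in \autoref{def:K-defs}, the only real work being to pin down a single active facet with a nonzero right-hand side. For the nonemptiness of $\cal{J}(\vc{y})$ and $\cal{J}_1(\vc{y})$ (the former following from the latter, as $\cal{J}_1(\vc{y})\subseteq\cal{J}(\vc{y})$), I would note that for $\alpha>0$ one has $\vc{y}\in\alpha\Kc$ iff $\langle \vc{h}_i,\vc{y}\rangle\le \alpha b_i$ for all $i\in[K]$; grouping these by the sign of $b_i$, the indices with $b_i>0$ impose lower bounds $\alpha\ge \langle\vc{h}_i,\vc{y}\rangle/b_i$, those with $b_i<0$ impose upper bounds, and those with $b_i=0$ impose the $\alpha$-free condition $\langle \vc{h}_i,\vc{y}\rangle\le 0$. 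The admissible set of $\alpha$ is a nonempty interval (nonempty because $\vc{y}\in\op{pos}(\Kc)$) whose left endpoint $\gauge(\vc{y};\Kc)=\lambda$ is dictated solely by the lower bounds of the first group. Because $\lambda>0$ while $\vc{y}\neq\vc{0}$ rules out $\gauge(\vc{y};\Kc)=0$ (by \autoref{def:K-defs}), that finite maximum must be attained at some $i_0$ with $b_{i_0}>0$ and $\langle \vc{h}_{i_0},\vc{y}\rangle = b_{i_0}\lambda = b_{i_0}\gauge(\vc{y};\Kc)$, so $i_0\in\cal{J}(\vc{y})$ with $b_{i_0}\neq 0$, i.e. $i_0\in\cal{J}_1(\vc{y})$.

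For $\vc{0}\notin\cal{F}(\vc{y})$, I would observe that $\vc{0}\in\cal{F}(\vc{y})\subseteq\cal{H}(\vc{y})$ would give $0=\langle \vc{h}_i,\vc{0}\rangle=b_i$ for every $i\in\cal{J}(\vc{y})$, hence $\cal{J}_1(\vc{y})=\emptyset$, contradicting the previous step. For $\op{ver}(\cal{F}(\vc{y}))\subseteq\msh(\vc{y})$, I would first record that each inequality $\langle\vc{h}_i,\cdot\rangle\le b_i$ is valid on $\Kc$, so $\Kc\cap\{\vc{z}:\langle\vc{h}_i,\vc{z}\rangle=b_i\}$ is a face of $\Kc$, and therefore $\cal{F}(\vc{y})=\Kc\cap\bigcap_{i\in\cal{J}(\vc{y})}\{\vc{z}:\langle\vc{h}_i,\vc{z}\rangle=b_i\}$ is a face of $\Kc$; it is nonempty because $\vc{y}/\lambda=\vc{y}/\gauge(\vc{y};\Kc)\in\Kc$ and $\langle\vc{h}_i,\vc{y}/\lambda\rangle=b_i$ for all $i\in\cal{J}(\vc{y})$ by the definition of $\cal{J}(\vc{y})$. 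Hence $\op{ver}(\cal{F}(\vc{y}))\subseteq\op{ver}(\Kc)$, and for any $\vc{v}\in\op{ver}(\cal{F}(\vc{y}))$ and any $i\in\cal{J}_1(\vc{y})$ (which is nonempty by the first step) we have $\vc{v}\in\cal{F}(\vc{y})\subseteq\cal{H}(\vc{y})$ and $i\in\cal{J}(\vc{y})$, so $\langle\vc{h}_i,\vc{v}\rangle=b_i$; thus $\vc{v}$ lies in $\{\vc{x}\in\op{ver}(\Kc):\langle\vc{h}_i,\vc{x}\rangle=b_i\}\subseteq\msh(\vc{y})$, as desired.

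The main obstacle is the first step: one must make sure the infimum defining $\gauge(\vc{y};\Kc)$ is genuinely attained at a facet with $b_i\neq 0$, and is not merely a limit forced by facets through the origin --- where the corresponding inequality is scale-invariant along the ray through $\vc{y}$ and so never constrains $\alpha$. The equivalence ``$\gauge(\vc{y};\Kc)=0\iff\vc{y}=\vc{0}$'' recorded in \autoref{def:K-defs}, ultimately a consequence of $\Kc$ being bounded, is exactly what rules that out; the remaining steps are routine once one knows that $\cal{F}(\vc{y})$ is a nonempty face of $\Kc$ (so its vertices are vertices of $\Kc$) and that it lies in the common zero set of the affine functionals $\langle\vc{h}_i,\cdot\rangle - b_i$, $i\in\cal{J}(\vc{y})$.
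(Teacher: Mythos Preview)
Your proof is correct and follows essentially the same approach as the paper. The paper's proof argues $\cal{J}_1(\vc{y})\neq\emptyset$ by contradiction (if all active constraints had $b_i=0$, one could shrink $\gauge(\vc{y};\Kc)$), whereas you argue it directly by grouping the facet inequalities by the sign of $b_i$ and observing that only the $b_i>0$ group can force a positive lower bound on $\alpha$; these are two phrasings of the same observation. The paper leaves the last two assertions ($\vc{0}\notin\cal{F}(\vc{y})$ and $\op{ver}(\cal{F}(\vc{y}))\subseteq\msh(\vc{y})$) implicit, while you spell them out; your arguments for those are exactly the intended ones.
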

\begin{proof}[Proof of \autoref{lem:J1y-nonempty}]
Since $\vc{y}/\gauge(\vc{y}; \Kc)\in\Kc$, we have $\cal{J}(\vc{y}) \neq \emptyset$. If $\cal{J}_1(\vc{y}) = \emptyset$, then $\langle \vc{h}_i,\vc{y} \rangle < b_i \cdot \gauge(\vc{y}; \Kc)$ for $i \not\in \cal{J}(\vc{y})$, while $\langle \vc{h}_i,\vc{y} \rangle=b_i\cdot \gauge(\vc{y}; \Kc)=0$ for $i \in \cal{J}(\vc{y})$. Since $\vc{0}\neq \vc{y}\in \op{pos}(\Kc)$, there exists $0<\epsilon< \gauge(\vc{y}; \Kc)$ where $\langle \vc{h}_i,\vc{y} \rangle \leq b_i \cdot (\gauge(\vc{y}; \Kc)-\epsilon)$ for all $i$. This implies $\vc{y}/(\gauge(\vc{y}; \Kc)-\epsilon)\in\Kc$ and contradicts the optimality criterion defining $\gauge(\vc{y}; \Kc)$. 
\end{proof}

\begin{remark}\label{rem:Fyfacet}
When $\cal{F}(\vc{y})$ is a facet of $\Kc$, and not a lower dimensional face, then $\cal{J}(\vc{y})=\cal{J}_1(\vc{y})$ is a singleton; assuming a non-redundant face description for $\Kc$. In such case, when $\vc{0}\neq \vc{y}\in\op{pos}(\Kc)$, we have $\op{ver}(\cal{F}(\vc{y})) =
\msh(\vc{y}) =
\mshbar(\vc{y})$. 
\end{remark}

\begin{lemma}\label{lem:Fy-facet-def}
For $\vc{0}\neq \vc{y}\in \op{pos}(\Kc)$, $\cal{F}(\vc{y})$ is a face of $\Kc$ and $\vc{y}/\gauge(\vc{y}; \Kc) \in \cal{F}(\vc{y})$. Moreover, $\cal{F}(\vc{y})$ is the unique proper face of $\Kc$ whose relative interior contains $\vc{y}/\gauge(\vc{y}; \Kc)$. 
\end{lemma}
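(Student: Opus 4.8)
Throughout, write $\vc{p}\coloneqq\vc{y}/\gauge(\vc{y};\Kc)$ and $\cal{J}\coloneqq\cal{J}(\vc{y})$; recall from \autoref{lem:J1y-nonempty} that $\cal{J}\neq\emptyset$, and from \autoref{def:K-defs} that $\vc{0}\in\Kc$ (so each $b_i\geq 0$) and that $\vc{p}\in\Kc$ with $\langle\vc{h}_i,\vc{p}\rangle=b_i$ exactly for $i\in\cal{J}$. The plan is to establish, in order: (i) $\cal{F}(\vc{y})$ is a face of $\Kc$ with $\vc{p}\in\cal{F}(\vc{y})$; (ii) $\vc{p}\in\op{relint}\cal{F}(\vc{y})$; (iii) $\cal{F}(\vc{y})$ is a proper face; and (iv) it is the only proper face with $\vc{p}$ in its relative interior. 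For (i), I would take $\vc{c}\coloneqq\sum_{i\in\cal{J}}\vc{h}_i$ and $\delta\coloneqq\sum_{i\in\cal{J}}b_i$; since each inequality $\langle\vc{h}_i,\cdot\rangle\leq b_i$ is valid on $\Kc$, so is $\langle\vc{c},\cdot\rangle\leq\delta$, and for $\vc{z}\in\Kc$ one has $\langle\vc{c},\vc{z}\rangle=\delta$ if and only if $\langle\vc{h}_i,\vc{z}\rangle=b_i$ for every $i\in\cal{J}$ (the individual slacks are nonnegative and sum to zero). Hence $\cal{F}(\vc{y})=\Kc\cap\{\vc{z}:\langle\vc{c},\vc{z}\rangle=\delta\}$ is a face of $\Kc$, and $\vc{p}\in\cal{F}(\vc{y})$ because $\vc{p}\in\Kc$ and $\langle\vc{h}_i,\vc{p}\rangle=b_i$ for $i\in\cal{J}$.

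For (ii), I would work inside the affine subspace $L\coloneqq\{\vc{z}:\langle\vc{h}_i,\vc{z}\rangle=b_i\ \forall i\in\cal{J}\}$, which contains $\vc{p}$, so that $\cal{F}(\vc{y})=\Kc\cap L=\{\vc{z}\in L:\langle\vc{h}_i,\vc{z}\rangle\leq b_i\ \forall i\notin\cal{J}\}$. For each $i\notin\cal{J}$ the inequality $\langle\vc{h}_i,\vc{p}\rangle<b_i$ is strict, so by continuity and finiteness of $[K]$ there is $\rho>0$ such that every $\vc{z}\in L$ with $\norm{\vc{z}-\vc{p}}<\rho$ still satisfies $\langle\vc{h}_i,\vc{z}\rangle\leq b_i$ for all $i\notin\cal{J}$, hence lies in $\cal{F}(\vc{y})$. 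Thus $\cal{F}(\vc{y})$ contains a relative ball of $L$ around $\vc{p}$, which forces $\op{aff}\cal{F}(\vc{y})=L$ and therefore $\vc{p}\in\op{relint}\cal{F}(\vc{y})$.

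For (iii), it is enough to see that $\vc{p}\notin\op{relint}\Kc$: since $\gauge(\vc{y};\Kc)$ is the infimum of the $\alpha>0$ with $\vc{y}/\alpha\in\Kc$ and $\vc{y}\neq\vc{0}$, the points $\vc{y}/\alpha$ with $\alpha\uparrow\gauge(\vc{y};\Kc)$ lie outside $\Kc$ and converge to $\vc{p}$; as $\vc{0},\vc{p}\in\op{aff}\Kc$, the whole line through them lies in $\op{aff}\Kc$, so these points are in $\op{aff}\Kc\setminus\Kc$ and $\vc{p}$ is not a relative-interior point. Combined with (ii) this yields $\cal{F}(\vc{y})\neq\Kc$, and being nonempty it is a proper face. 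For (iv), I would first show $\cal{F}(\vc{y})$ is the minimal face of $\Kc$ containing $\vc{p}$: any face through $\vc{p}$ has the form $\Kc\cap\{\vc{z}:\langle\vc{c}',\vc{z}\rangle=\delta'\}$ for a valid inequality $\langle\vc{c}',\cdot\rangle\leq\delta'$, and by the Farkas characterization of valid inequalities $\vc{c}'=\sum_i\mu_i\vc{h}_i$ with $\mu_i\geq 0$ and $\sum_i\mu_ib_i\leq\delta'$; evaluating at $\vc{p}$ and using $\langle\vc{c}',\vc{p}\rangle=\delta'$ forces, by complementary slackness, $\mu_i=0$ for $i\notin\cal{J}$, whence that face contains all of $\cal{F}(\vc{y})$. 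Now if $G'$ is any proper face with $\vc{p}\in\op{relint}G'$, then $G'$ is a face through $\vc{p}$, so $\cal{F}(\vc{y})\subseteq G'$; were the inclusion strict, $\cal{F}(\vc{y})$ would be a proper face of the polytope $G'$ and hence disjoint from $\op{relint}G'$, contradicting $\vc{p}\in\cal{F}(\vc{y})\cap\op{relint}G'$; so $G'=\cal{F}(\vc{y})$. (Alternatively, (iii)--(iv) follow at once from (ii) together with the standard fact that the relative interiors of the faces of a polytope partition it; see, e.g., \cite{rockafellar1970convex}.)

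The routine parts are continuity and boundedness arguments; the only step with real content is the identification of $\cal{F}(\vc{y})$ as the minimal face through $\vc{p}$ in (iv), since that is where the precise link between a supporting functional and the facet normals $\vc{h}_i$ (Farkas duality and complementary slackness) enters -- or, if one prefers to black-box it, the partition-of-relative-interiors theorem for polytopes. Two points not to overlook: $\cal{J}(\vc{y})\neq\emptyset$, needed so that $\vc{c}$ genuinely cuts out a hyperplane, comes from \autoref{lem:J1y-nonempty}, and the standing assumption $\vc{0}\in\Kc$ is what keeps the ray $\mathbb{R}_{>0}\vc{p}$ inside $\op{aff}\Kc$ in step (iii).
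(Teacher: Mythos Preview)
Your argument is correct and more explicit than the paper's. One citation slip: \autoref{def:K-defs} does \emph{not} posit $\vc{0}\in\Kc$; the sentence you have in mind is only the observation ``$\vc{0}\in\Kc$ implies $b_i\geq 0$.'' That said, your use of $\vc{0}\in\Kc$ in step~(iii) is not gratuitous --- without it the properness claim can fail (take $\Kc=\{1\}\times[0,1]\subset\mathbb{R}^2$ and $\vc{y}=(2,1)$: then $\vc{p}=(1,\tfrac12)\in\op{relint}\Kc$ and $\cal{F}(\vc{y})=\Kc$), so the lemma itself tacitly needs either $\vc{0}\in\Kc$ or full-dimensionality of $\Kc$; you have simply made that explicit.

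On method, the paper's proof is shorter and proceeds a bit differently. For the first assertion it observes that $\cal{F}(\vc{y})=\cal{H}(\vc{y})\cap\Kc$ is an intersection of facets, hence a face; you instead sum the active constraints to exhibit $\cal{F}(\vc{y})$ as the tight set of a single valid inequality $\langle\vc{c},\cdot\rangle\leq\delta$. For the uniqueness claim, the paper argues minimality directly --- any strictly smaller face through $\vc{p}$ would force some $j\notin\cal{J}(\vc{y})$ to be tight at $\vc{p}$, which it is not --- and then appeals to the face lattice, leaving the relative-interior statement implicit. You take the longer, self-contained route: an explicit $\rho$-ball argument for $\vc{p}\in\op{relint}\cal{F}(\vc{y})$, a separate properness step via the gauge, and a Farkas/complementary-slackness argument for minimality. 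Both reach the same conclusion; the paper's buys brevity by leaning on standard polytope facts, while yours spells those facts out.
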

\begin{proof}[Proof of \autoref{lem:Fy-facet-def}]
Recall the definition of $\cal{F}(\vc{y})$. Note that $\cal{H}(\vc{y})$ is an intersection of halfspaces defining $\Kc$. Therefore, $\cal{H}(\vc{y}) \cap \Kc$ is an intersection of facets of $\Kc$. Therefore, $\cal{F}(\vc{y})$ is a face of $\Kc$. 
Regarding $\vc{y}/\gauge(\vc{y}; \Kc) \in \cal{F}(\vc{y})$; it is clearly in $\Kc$. On the other hand, for any $i\in \cal{J}(\vc{y})$, we have $\langle \vc{h}_i, \vc{y}/\gauge(\vc{y}; \Kc)\rangle = b_i$. Therefore, it is in $\cal{H}(\vc{y})$. The claim is established. 

Since the face lattice is a complete lattice ordered with respect to inclusion, it remains to show that there are no smaller faces of $\Kc$ than $\cal{F}(\vc{y})$ that contain $\vc{y}/\gauge(\vc{y}; \Kc)$. But if that is the case, there would exist $j\not\in \cal{J}(\vc{y})$ for which the inequality is tight which is a contradiction with the definition of $\cal{J}(\vc{y})$. This finishes the proof.  
\end{proof}

\subsection{A Sufficient Condition}

\begin{lemma}\label{lem:0K-Ky1}
Suppose $\vc{0}\neq \vc{y}\in \op{pos}(\Kc)$ and $\vc{0}\in\Kc$. If $\gauge(\vc{y}; \Kc[{\backslash \vc{y}}] )<1$ then $\langle\vc{h}_i, \vc{y}\rangle < b_i$ (strict) for all $i\not\in \cal{J}(\vc{y})$. 
\end{lemma}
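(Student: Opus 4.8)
The statement is a short unwinding of the definitions in \autoref{def:K-defs}, so the plan is to make the two relevant facts explicit and combine them. First I would record the elementary consequences of the hypotheses: since $\vc{0}\in\Kc$ we get $b_i\geq 0$ for every $i\in[K]$, and since $\vc{0}\neq\vc{y}\in\op{pos}(\Kc)=\op{dom}\gauge(\cdot\,;\Kc)$ we get $0<\gauge(\vc{y};\Kc)<\infty$ (the left inequality via the equivalence $\vc{y}=\vc{0}\iff\gauge(\vc{y};\Kc)=0$ stated in the definition). Hence $\vc{y}/\gauge(\vc{y};\Kc)\in\Kc$, so $\langle\vc{h}_i,\vc{y}\rangle\leq b_i\,\gauge(\vc{y};\Kc)$ for all $i\in[K]$, and by the very definition of $\cal{J}(\vc{y})$ this inequality is \emph{strict} for every $i\not\in\cal{J}(\vc{y})$.

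Next I would unpack the hypothesis $\gauge(\vc{y};\Kc[{\backslash\vc{y}}])<1$. Because $\Kc[{\backslash\vc{y}}]\supseteq\Kc\ni\vc{0}$ is nonempty and $0\cdot\Kc[{\backslash\vc{y}}]=\{\vc{0}\}$, the infimum being below $1$ yields a scalar $\alpha$ with $0<\alpha<1$ and $\vc{y}\in\alpha\,\Kc[{\backslash\vc{y}}]$ (positivity of $\alpha$ being forced by $\vc{y}\neq\vc{0}$). Equivalently $\vc{y}/\alpha\in\Kc[{\backslash\vc{y}}]$, which by the defining inequalities of $\Kc[{\backslash\vc{y}}]$ means $\langle\vc{h}_i,\vc{y}\rangle\leq\alpha\,b_i$ for every $i\not\in\cal{J}(\vc{y})$.

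Finally I would combine the two facts on an arbitrary index $i\not\in\cal{J}(\vc{y})$, splitting on the sign of $b_i\geq 0$: if $b_i>0$ then $\langle\vc{h}_i,\vc{y}\rangle\leq\alpha b_i<b_i$ since $\alpha<1$; if $b_i=0$ then the strict inequality from the first step gives $\langle\vc{h}_i,\vc{y}\rangle<b_i\,\gauge(\vc{y};\Kc)=0=b_i$. Either way $\langle\vc{h}_i,\vc{y}\rangle<b_i$, which is the claim. The only point requiring a little care — and it is not a real obstacle — is the extraction of the scalar $\alpha<1$ from the hypothesis together with the observation that $\alpha$ cannot be $0$; everything else is routine.
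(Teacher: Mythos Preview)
Your proof is correct and follows essentially the same approach as the paper's: both extract from $\gauge(\vc{y};\Kc[{\backslash\vc{y}}])<1$ a scalar strictly below $1$ (you call it $\alpha$, the paper writes it as $1-\epsilon$) to obtain $\langle\vc{h}_i,\vc{y}\rangle\leq\alpha b_i$ for $i\notin\cal{J}(\vc{y})$, and then split on whether $b_i>0$ or $b_i=0$, handling the latter via the strict inequality coming from the definition of $\cal{J}(\vc{y})$.
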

\begin{proof}[Proof of \autoref{lem:0K-Ky1}]
Since $\vc{0}\in\Kc$, we have $b_i\geq 0$ for all $i\in[K]$. Then, $\gauge(\vc{y}; \Kc[{\backslash \vc{y}}] )<1$ implies that for some small $\epsilon>0$ we have $\gauge(\vc{y}/(1-\epsilon); \Kc[{\backslash \vc{y}}] )<1$, which is equivalent to: $\langle \vc{h}_i , \vc{y}/(1-\epsilon) \rangle \leq b_i$ for all $i\not\in \cal{J}(\vc{y})$. If $b_i\neq 0$ then we get $\langle \vc{h}_i , \vc{y} \rangle < b_i$. If $b_i=0$, since $i\not\in \cal{J}(\vc{y})$, we get $\langle \vc{h}_i , \vc{y} \rangle < b_i \cdot \gauge(\vc{y}; \Kc) = 0$. Therefore, $\langle \vc{h}_i , \vc{y} \rangle < b_i$ for all $i\not\in \cal{J}(\vc{y})$. 
\end{proof}

The following provides a sufficient condition for \autoref{condn:ver-pos}. 

\begin{proposition}[A Sufficient Condition]\label{lem:K-y-norm-notcover}
Suppose $\vc{0}\neq \vc{y}\in \op{pos}(\Kc)$ and $\vc{0}\in\Kc$. 
If $\gauge(\vc{y}; \Kc[{\backslash \vc{y}}] )<1$, then 
\[
\op{ver}(\Kc) \setminus \{ \vc{y}/\gauge(\vc{y}; \Kc) \}
\subseteq \op{ver}( \op{conv}(\Kc\cup\{\vc{y}\}) ) .
\]
Furthermore, if $\vc{y}$ is not a multiple of a vertex of $\Kc$ and $\gauge(\vc{y}; \Kc)>1$ then, 
\[
\op{ver}(\Kc) \cup \{ \vc{y} \}
= \op{ver}( \op{conv}(\Kc\cup\{\vc{y}\}) ) .
\]
\end{proposition}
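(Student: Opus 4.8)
The plan is to derive the first inclusion by showing that passing from $\Kc$ to $Q:=\op{conv}(\Kc\cup\{\vc{y}\})$ cannot ``absorb'' any vertex of $\Kc$ other than possibly $\vc{y}/\gauge(\vc{y};\Kc)$, and then to obtain the equality statement from this together with the elementary inclusion $\op{ver}(Q)\subseteq\op{ver}(\Kc)\cup\{\vc{y}\}$. If $\gauge(\vc{y};\Kc)\leq 1$ then $\vc{y}\in\Kc$, so $Q=\Kc$ and the first inclusion is immediate; hence for the substantive part I may assume $\gauge(\vc{y};\Kc)>1$, that is, $\vc{y}\notin\Kc$.

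Fix $\vc{v}\in\op{ver}(\Kc)$ with $\vc{v}\neq\vc{y}/\gauge(\vc{y};\Kc)$. Since $Q=\op{conv}\big(\op{ver}(\Kc)\cup\{\vc{y}\}\big)$ and $\vc{v}$ is one of these generators, $\vc{v}\in\op{ver}(Q)$ holds unless $\vc{v}\in\op{conv}\big((\op{ver}(\Kc)\setminus\{\vc{v}\})\cup\{\vc{y}\}\big)$. I would assume the latter for contradiction and write $\vc{v}=\mu\vc{y}+(1-\mu)\vc{w}$ with $\vc{w}\in\op{conv}(\op{ver}(\Kc)\setminus\{\vc{v}\})\subseteq\Kc$ and $\mu\in[0,1]$; then $\mu>0$ because $\vc{v}$, being a vertex of $\Kc$, is not in the convex hull of the other vertices, and $\mu<1$ because $\vc{v}\in\Kc$ while $\vc{y}\notin\Kc$. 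Plugging this into each defining inequality $\langle\vc{h}_i,\cdot\rangle\leq b_i$ of $\Kc$ and using $\langle\vc{h}_i,\vc{w}\rangle\leq b_i$ gives $\langle\vc{h}_i,\vc{v}\rangle-b_i\leq\mu(\langle\vc{h}_i,\vc{y}\rangle-b_i)$; in particular, for every $i$ with $\langle\vc{h}_i,\vc{v}\rangle=b_i$ the left side vanishes and $\mu>0$ forces $\langle\vc{h}_i,\vc{y}\rangle\geq b_i$.

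This is the single place the hypothesis enters: since $\vc{0}\neq\vc{y}\in\op{pos}(\Kc)$, $\vc{0}\in\Kc$, and $\gauge(\vc{y};\Kc[{\backslash \vc{y}}])<1$, \autoref{lem:0K-Ky1} gives $\langle\vc{h}_i,\vc{y}\rangle<b_i$ for all $i\notin\cal{J}(\vc{y})$, so $\langle\vc{h}_i,\vc{y}\rangle\geq b_i$ implies $i\in\cal{J}(\vc{y})$. Combining, every facet inequality tight at $\vc{v}$ lies in $\cal{J}(\vc{y})$, hence is also tight at $\vc{y}/\gauge(\vc{y};\Kc)$ (this is precisely the definition of $\cal{J}(\vc{y})$). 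Since $\vc{y}/\gauge(\vc{y};\Kc)\in\Kc$, this point therefore belongs to the minimal face of $\Kc$ containing $\vc{v}$; as $\vc{v}$ is a vertex, that minimal face is $\{\vc{v}\}$, so $\vc{y}/\gauge(\vc{y};\Kc)=\vc{v}$, contradicting the choice of $\vc{v}$. Phrasing this last step through minimal faces (rather than ``the tight constraints cut out $\vc{v}$ uniquely'') keeps it valid with no full-dimensionality assumption on $\Kc$, and it simultaneously covers the degenerate case $\vc{v}=\vc{0}$, where ``tight at $\vc{v}$'' simply means $b_i=0$.

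For the equality statement, recall the standard inclusion $\op{ver}(Q)\subseteq\op{ver}(\Kc)\cup\{\vc{y}\}$ (vertices of a convex hull lie among its generators); it remains to prove $\op{ver}(\Kc)\cup\{\vc{y}\}\subseteq\op{ver}(Q)$ under the extra hypotheses $\gauge(\vc{y};\Kc)>1$ and that $\vc{y}$ is not a positive multiple of a vertex of $\Kc$. The last hypothesis forces $\vc{y}/\gauge(\vc{y};\Kc)\notin\op{ver}(\Kc)$ (otherwise $\vc{y}=\gauge(\vc{y};\Kc)\cdot(\vc{y}/\gauge(\vc{y};\Kc))$ would be a positive multiple of that vertex), so the first part already yields $\op{ver}(\Kc)\subseteq\op{ver}(Q)$; and $\vc{y}\notin\Kc$ forces $\vc{y}\in\op{ver}(Q)$, since otherwise $\op{ver}(Q)\subseteq\op{ver}(\Kc)$ would give $Q=\op{conv}(\op{ver}(Q))\subseteq\Kc$, contradicting $\vc{y}\in Q\setminus\Kc$. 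Hence $\op{ver}(Q)=\op{ver}(\Kc)\cup\{\vc{y}\}$. The only delicate points I anticipate are keeping straight that $\cal{J}(\vc{y})$ is defined through the scaled point $\vc{y}/\gauge(\vc{y};\Kc)$ rather than $\vc{y}$ itself, and making the minimal-face step airtight; everything else is routine polyhedral manipulation, with \autoref{lem:0K-Ky1} packaging the one use of the hypothesis $\gauge(\vc{y};\Kc[{\backslash \vc{y}}])<1$.
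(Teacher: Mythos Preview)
Your proof is correct and follows essentially the same approach as the paper's: assume a vertex $\vc{v}\neq\vc{y}/\gauge(\vc{y};\Kc)$ is absorbed, write it as a convex combination with positive weight on $\vc{y}$, and use the strict inequality $\langle\vc{h}_i,\vc{y}\rangle<b_i$ for $i\notin\cal{J}(\vc{y})$ (packaged in \autoref{lem:0K-Ky1}) to force the set of constraints tight at $\vc{v}$ into $\cal{J}(\vc{y})$, yielding $\vc{y}/\gauge(\vc{y};\Kc)=\vc{v}$. The only organizational difference is that the paper runs the two steps in the opposite order---it first argues $\cal{J}(\vc{x}_1)\not\subseteq\cal{J}(\vc{y})$ via the minimal-face observation and then picks a single $j\in\cal{J}(\vc{x}_1)\setminus\cal{J}(\vc{y})$ to contradict the convex-combination identity---whereas you derive $\cal{J}(\vc{v})\subseteq\cal{J}(\vc{y})$ from the convex combination and then invoke the minimal face; your explicit handling of the trivial case $\gauge(\vc{y};\Kc)\leq 1$ and of $\vc{v}=\vc{0}$ is a small bonus in clarity.
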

\begin{proof}[Proof of \autoref{lem:K-y-norm-notcover}]
Denote by $\vc{x}_1,\ldots,\vc{x}_p$ the list of vertices of $\Kc$ (without repetition.) 
Consider any vertex of $\Kc$ that is not equal to $\vc{y}/\gauge(\vc{y}; \Kc)$, namely $\vc{x}_1$. Contrapositively, suppose $\vc{x}_1$ is not a vertex of $\op{conv}(\Kc\cup\{\vc{y}\})$. Therefore, there exists $\vc{\lambda}\geq \vc{0}$ with $\vc{1}^T\vc{\lambda}=1$ where 
$\vc{x}_1 = \lambda_1 \vc{y}+ \sum_{i=2}^p \lambda_i\vc{x}_i$.  
Since $\vc{x}_1$ is a vertex of $\Kc$, $\lambda_1$ has to be nonzero. 
Next, 
we claim that $\cal{J}(\vc{x}_1) \cap \cal{J}(\vc{y})^c \neq \emptyset$. Otherwise, $\cal{J}(\vc{x}_1) \subseteq \cal{J}(\vc{y})$ which implies that $\vc{y}/\gauge(\vc{y}; \Kc)$ is a vertex and equal to $\vc{x}_1$ which is contradiction. 

Finally, 
take any $j\in \cal{J}(\vc{x}_1)$, $j\not\in \cal{J}(\vc{y})$, and take the inner product of the expression in the first paragraph with $\vc{h}_j$ to get 
$
b_j = \langle \vc{h}_j, \vc{x}_1 \rangle
=  \lambda_1 \langle \vc{h}_j, \vc{y} \rangle 
+ \sum_{i=2}^p  \lambda_i \langle \vc{h}_j, \vc{x}_i \rangle
< \lambda_1 b_j +  \sum_{i=2}^p  \lambda_i b_j = b_j 
$, 
where we used $\lambda_1\neq 0$ and the assumption $\gauge(\vc{y}; \Kc[{\backslash \vc{y}}] )<1$. This is a contradiction which finishes the proof for the first assertion. The second assertion is straightforward as $\gauge(\vc{y}; \Kc)>1$ implies that $\vc{y}$ is a vertex of $\op{conv}(\Kc\cup\{\vc{y}\})$. Moreover, $\op{ver}( \op{conv}(\Kc\cup\{\vc{y}\}) ) \subseteq \op{ver}(\Kc) \cup \{ \vc{y} \}$. 
\end{proof}

\subsection{A Characterization of the Extreme Rays}

\begin{proposition}[Extreme rays via a union of facets]\label{lem:extrays-union-faces}
For $\vc{0}\neq \vc{y}\in \op{pos}(\Kc)$ with $\gauge(\vc{y}; \Kc)>1$, assume 
\begin{align}\label{cond:verKY-notcover}
\op{ver}(\op{conv}(\Kc \cup \{\vc{y}\})) = \op{ver}(\Kc) \cup \{\vc{y}\}.
\end{align}
Then, for any $\vc{x}\in \msh(\vc{y})$, direction $\vc{x}-\vc{y}$ defines an extreme ray of $\T(\vc{y}) \coloneqq \op{pos}(\Kc-\{\vc{y}\})$.
\end{proposition}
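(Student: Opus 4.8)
The plan is to show the negation directly, namely that $\vc{x}-\vc{y}$ is \emph{not} a nonnegative combination of the remaining generators of $\T(\vc{y})$. Enumerate $\op{ver}(\Kc)=\{\vc{x}_1,\dots,\vc{x}_p\}$, so that $\T(\vc{y})=\op{pos}\{\vc{x}_j-\vc{y}:j\in[p]\}$ (since $\Kc$ is the convex hull of its vertices), and assume without loss of generality $\vc{x}=\vc{x}_1$. I would first record the two facts needed at the very end: (a) $\T(\vc{y})$ is pointed, because $\gauge(\vc{y};\Kc)>1$ forces $\vc{y}\notin\Kc$ and hence $\vc{y}$ is strictly separated from the compact set $\Kc$, exactly as in the proof of \autoref{lemma:normless1-genloss}; and (b) the generators $\vc{x}_j-\vc{y}$ span pairwise distinct rays, since $\vc{x}_j-\vc{y}=c(\vc{x}_k-\vc{y})$ with $c>0$, $j\neq k$, would make $\vc{x}_j$ coincide with $\vc{x}_k$ or one of them lie strictly between $\vc{y}$ and the other (contradicting that both are vertices of $\op{conv}(\Kc\cup\{\vc{y}\})$ by \eqref{cond:verKY-notcover}), while $c<0$ would put a whole line in $\T(\vc{y})$. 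For a pointed polyhedral cone with distinct generating rays, a generator spans an extreme ray if and only if it is not in the positive hull of the others, so it suffices to exclude $\vc{x}_1-\vc{y}\in\op{pos}\{\vc{x}_j-\vc{y}:j\neq1\}$.

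Suppose, toward a contradiction, that $\vc{x}_1-\vc{y}=\sum_{j\neq1}\lambda_j(\vc{x}_j-\vc{y})$ with all $\lambda_j\geq0$, and put $\Lambda\coloneqq\sum_{j\neq1}\lambda_j$, so that $\vc{x}_1=(1-\Lambda)\vc{y}+\sum_{j\neq1}\lambda_j\vc{x}_j$. The hypothesis $\vc{x}_1\in\msh(\vc{y})$ enters here: it supplies an index $i_0\in\cal{J}_1(\vc{y})$ with $\langle\vc{h}_{i_0},\vc{x}_1\rangle=b_{i_0}$; since $i_0\in\cal{J}(\vc{y})$ we also have $\langle\vc{h}_{i_0},\vc{y}\rangle=b_{i_0}\gauge(\vc{y};\Kc)$, and since $i_0\in\cal{J}_1(\vc{y})$ (so $b_{i_0}\neq0$) together with $\vc{0}\in\Kc$ (so $b_{i_0}\geq0$) we get $b_{i_0}>0$. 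I would then take the inner product of $\vc{x}_1=(1-\Lambda)\vc{y}+\sum_{j\neq1}\lambda_j\vc{x}_j$ with $\vc{h}_{i_0}$, subtract $b_{i_0}\gauge(\vc{y};\Kc)$ from both sides, and use $\langle\vc{h}_{i_0},\vc{x}_j\rangle\leq b_{i_0}$; the one-line estimate that results is $b_{i_0}(1-\gauge(\vc{y};\Kc))\leq\Lambda\,b_{i_0}(1-\gauge(\vc{y};\Kc))$, and since the common factor is strictly negative this forces $\Lambda\leq1$.

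With $\Lambda\leq1$, the weights $1-\Lambda$ and $\lambda_j$ in $\vc{x}_1=(1-\Lambda)\vc{y}+\sum_{j\neq1}\lambda_j\vc{x}_j$ are nonnegative and sum to $1$, so this exhibits $\vc{x}_1$ as a convex combination of $\vc{y},\vc{x}_2,\dots,\vc{x}_p$, all of which lie in $\op{conv}(\Kc\cup\{\vc{y}\})$ and all of which differ from $\vc{x}_1$ (indeed $\vc{y}\neq\vc{x}_1$ because $\vc{y}\notin\Kc$). But by \eqref{cond:verKY-notcover}, $\vc{x}_1$ is a vertex — hence an extreme point — of $\op{conv}(\Kc\cup\{\vc{y}\})$, so no such representation can exist. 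This contradiction proves that $\vc{x}-\vc{y}$ spans an extreme ray of $\T(\vc{y})$.

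The crux I anticipate is the sign bookkeeping that yields $b_{i_0}>0$: the estimate in the second paragraph really needs it (if $b_{i_0}\leq0$ the inequality is vacuously true and gives nothing), and this is precisely where the restriction to $\cal{J}_1(\vc{y})$ rather than $\cal{J}(\vc{y})$ in the definition of $\msh(\vc{y})$ (to ensure $b_{i_0}\neq0$) and the ambient assumption $\vc{0}\in\Kc$ (to ensure $b_{i_0}\geq0$) are both used. The auxiliary facts — pointedness of $\T(\vc{y})$, distinctness of the generating rays, and the extreme-point property of a polytope vertex — are standard and I would dispatch them briefly.
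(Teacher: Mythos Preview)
Your proof is correct and essentially identical to the paper's: both argue by contradiction using the same two ingredients---the vertex-noncover hypothesis \eqref{cond:verKY-notcover} and the inner product with the facet normal $\vc{h}_{i_0}$ witnessing $\vc{x}\in\msh(\vc{y})$---differing only in the order (the paper first rules out $\Lambda\leq1$ directly via \eqref{cond:verKY-notcover}, then assumes $\Lambda>1$ and pairs with $\vc{h}_{i_0}/b_{i_0}$ to obtain $1<1$, whereas you use the pairing first to force $\Lambda\leq1$ and then invoke \eqref{cond:verKY-notcover}). Your explicit bookkeeping on $b_{i_0}>0$ is apt; the paper's proof also divides by $b_{i_0}$ and tacitly relies on the same sign.
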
 
Note that \eqref{cond:verKY-notcover} does not necessarily imply $\gauge(\vc{y}; \Kc)>1$ unless $\vc{0}\in\Kc$. 
\begin{proof}[Proof of \autoref{lem:extrays-union-faces}]
Since $\gauge(\vc{y}; \Kc)>1$, we have $\vc{x}\neq \vc{y}$. 
Contrapositively, suppose $\vc{x}-\vc{y}$ is not an extreme ray. Therefore, there exists $K\in[n]$, and $\vc{x}_1, \ldots, \vc{x}_K\in\Kc$, all different from $\vc{x}$, and $\vc{\lambda}\in\mathbb{R}^K_+$, such that $\vc{x}-\vc{y} = \sum_{k=1}^K \lambda_k (\vc{x}_k-\vc{y})$, or equivalently, 
\[
\vc{x} = (1- \vc{1}^T\vc{\lambda}) \vc{y} + \sum_{k=1}^K \lambda_k \vc{x}_k . 
\]
Observe that $\vc{\lambda} \neq \vc{0}$ since $\vc{x}\in\Kc$ implies $\gauge(\vc{x};\Kc)\leq 1$ while the first assumption requires $\gauge(\vc{y};\Kc)>1$. If $\vc{1}^T\vc{\lambda} \leq 1$, then the above implies that $\vc{x}$, a vertex of~$\Kc$, is in the convex hull of $\vc{y}$ and $\vc{x}_1, \ldots, \vc{x}_K\in \Kc$, which contradicts \eqref{cond:verKY-notcover}. 
Therefore, $\vc{1}^T\vc{\lambda} >1$. Taking the inner product of both sides with 
$\vc{z}= \vc{h}_i/b_i$, for any $i\in\cal{J}_1(\vc{y})$ corresponding to $\vc{x}$ in the definition of $\msh(\vc{y})$, yields 
\[
1 = (1- \vc{1}^T\vc{\lambda}) \langle \vc{y}, \vc{z}\rangle + \sum_{k=1}^K \lambda_k \langle \vc{x}_k, \vc{z} \rangle
< 1- \vc{1}^T\vc{\lambda} + \sum_{k=1}^K \lambda_k = 1, 
\]
using $\langle \vc{y}, \vc{z}\rangle >1$ and $\langle \vc{x}_k, \vc{z} \rangle \leq 1$ for $k\in[K]$. 
The strict inequality implies a contradiction.
\end{proof}

\begin{proposition}\label{prop:Wbar-eq-erT}
Suppose $\vc{0}\in\Kc$, $\vc{0}\neq \vc{y}\in\op{pos}(\Kc)$, $\gauge(\vc{y}; \Kc[{\backslash \vc{y}}] )<1$, $\gauge(\vc{y}; \Kc)>1$, and $\vc{0}\neq \vc{x}\in\op{ver}(\Kc)$. 
If $\vc{x}-\vc{y}$ defines an extreme ray of $\T(\vc{y}) \coloneqq \op{pos}(\Kc-\{\vc{y}\})$ then $\vc{x}\in \mshbar(\vc{y})$. 
\end{proposition}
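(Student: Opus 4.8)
The plan is to prove the contrapositive: assuming $\vc{x}\in\op{ver}(\Kc)$, $\vc{x}\neq\vc{0}$, but $\vc{x}\notin\mshbar(\vc{y})$, I will show that $\vc{x}-\vc{y}$ is \emph{not} an extreme ray of $\T(\vc{y})$. By definition of $\mshbar(\vc{y})$ (\autoref{def:K-defs}), the condition $\vc{x}\notin\mshbar(\vc{y})$ together with $\vc{x}\in\op{ver}(\Kc)\setminus\{\vc{0}\}$ means $\cal{J}(\vc{x})\cap\cal{J}(\vc{y})=\emptyset$. So none of the facets of $\Kc$ active at $\vc{x}$ is active at $\vc{y}$; equivalently, every facet active at $\vc{x}$ is strictly inactive at $\vc{y}/\gauge(\vc{y};\Kc)$. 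First I would record what the hypotheses $\gauge(\vc{y};\Kc[{\backslash\vc{y}}])<1$ and $\gauge(\vc{y};\Kc)>1$ give us: by \autoref{lem:0K-Ky1}, $\langle\vc{h}_i,\vc{y}\rangle<b_i$ strictly for all $i\notin\cal{J}(\vc{y})$, and in particular for all $i\in\cal{J}(\vc{x})$ since $\cal{J}(\vc{x})\subseteq\cal{J}(\vc{y})^c$. Also $\vc{y}$ is a vertex of $\op{conv}(\Kc\cup\{\vc{y}\})$ since $\gauge(\vc{y};\Kc)>1$, so \eqref{cond:verKY-notcover} holds by the inclusion $\op{ver}(\op{conv}(\Kc\cup\{\vc{y}\}))\subseteq\op{ver}(\Kc)\cup\{\vc{y}\}$.

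**The construction: pushing $\vc{x}$ slightly off along $\vc{x}-\vc{y}$.** The idea is to show that the point $\vc{x}+\epsilon(\vc{x}-\vc{y})$ still lies in $\Kc$ for some small $\epsilon>0$. If so, then $\vc{x}+\epsilon(\vc{x}-\vc{y})\in\Kc$ can be written as a convex combination of vertices of $\Kc$, none of which is $\vc{x}$ with coefficient $1$ (indeed, since $\vc{x}$ is a vertex and $\vc{x}+\epsilon(\vc{x}-\vc{y})\neq\vc{x}$, the representation cannot be $\vc{x}$ itself); then $\vc{x}+\epsilon(\vc{x}-\vc{y})-\vc{y}=(1+\epsilon)(\vc{x}-\vc{y})$ is a nonnegative combination of vectors $\vc{x}_k-\vc{y}$ with $\vc{x}_k\in\Kc$, $\vc{x}_k\neq\vc{x}$, exhibiting $\vc{x}-\vc{y}$ as a conic combination of other generators of $\T(\vc{y})$ — hence not an extreme ray. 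To verify $\vc{x}+\epsilon(\vc{x}-\vc{y})\in\Kc$ I check each facet inequality $\langle\vc{h}_i,\cdot\rangle\le b_i$. For $i\in\cal{J}(\vc{x})$ we have $\langle\vc{h}_i,\vc{x}\rangle=b_i$ (since $\vc{x}$ is a vertex of $\Kc$ it satisfies $\gauge(\vc{x};\Kc)\le 1$, but I should be careful — I actually want $\vc{x}\in\Kc$ with the active facets being exactly those in $\cal{J}(\vc{x})$, which requires $\gauge(\vc{x};\Kc)=1$; this holds because $\vc{x}$ is a vertex of $\Kc$ and $\vc{0}\in\Kc$ with $\vc{x}\neq\vc{0}$, so $\vc{x}$ lies on $\partial\Kc$). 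Then $\langle\vc{h}_i,\vc{x}+\epsilon(\vc{x}-\vc{y})\rangle=b_i+\epsilon(b_i-\langle\vc{h}_i,\vc{y}\rangle)\le b_i$ precisely when $\langle\vc{h}_i,\vc{y}\rangle\ge b_i$; but we showed $\langle\vc{h}_i,\vc{y}\rangle<b_i$, so this would \emph{violate} the inequality! So the naive push in the direction $\vc{x}-\vc{y}$ goes the \emph{wrong} way.

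**Correcting the direction — the real argument.** The fix is to push in the direction $\vc{y}-\vc{x}$ instead... no — that leaves $\Kc$ immediately at a non-active facet only if... Let me reconsider: the right approach is to observe that, since $\cal{J}(\vc{x})\cap\cal{J}(\vc{y})=\emptyset$ and all facets active at $\vc{x}$ are strictly slack at $\vc{y}/\gauge(\vc{y};\Kc)$, the segment from $\vc{x}$ toward $\vc{y}/\gauge(\vc{y};\Kc)$ immediately enters the relative interior of $\Kc$; but more to the point, I want to express $\vc{x}-\vc{y}$ using \emph{other} extreme rays. The correct move: look at the face $\cal{F}(\vc{x})$ of $\Kc$ containing $\vc{x}$, or rather at $\vc{x}$ being a vertex whose active set is disjoint from $\cal{J}(\vc{y})$; then by \autoref{lem:K-y-norm-notcover} (first assertion), every vertex of $\Kc$ except possibly $\vc{y}/\gauge(\vc{y};\Kc)$ remains a vertex of $\op{conv}(\Kc\cup\{\vc{y}\})$, which is consistent but not yet enough. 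I believe the cleanest route is: consider the point $\vc{x}$ and a small perturbation $\vc{x}'=\vc{x}+\epsilon\vc{v}$ where $\vc{v}$ is chosen so that $\vc{x}'\in\Kc$, $\vc{x}'\neq\vc{x}$, and crucially $\vc{x}'-\vc{y}$ is a positive multiple of $\vc{x}-\vc{y}$ plus an element of $\T(\vc{y})$ generated by vertices $\neq\vc{x}$ — and the natural candidate is to move \emph{along} a facet of $\Kc$ active at $\vc{x}$. Since $\cal{J}(\vc{x})\neq[K]$ (because $\vc{x}$ is a vertex hence on only finitely many facets and $\Kc$ is full-dimensional — or if $\cal{J}(\vc{x})$ exhausts everything, $\Kc$ would be a single point, excluded), there is a facet $F_i$, $i\in\cal{J}(\vc{x})$, and since $F_i\cap\Kc$ is a facet not containing $\vc{y}/\gauge(\vc{y};\Kc)$, the vertex $\vc{x}$ is not extreme in the direction of $\vc{x}-\vc{y}$ within that facet's supporting structure.

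**Assessment of the main obstacle.** The genuinely delicate step — and where I would spend the most care — is pinning down \emph{which} direction to perturb in and producing an explicit conic decomposition $(1+\epsilon)(\vc{x}-\vc{y})=\sum_k\lambda_k(\vc{x}_k-\vc{y})$ with $\vc{x}_k\in\op{ver}(\Kc)\setminus\{\vc{x}\}$, or more simply $\vc{x}-\vc{y}=\mu(\vc{x}_0-\vc{y})+\vc{w}$ with $\vc{w}\in\T(\vc{y})$, $\mu>0$, and the representation not reducing to $\vc{x}-\vc{y}$ itself. The correct formalization is: since $\gauge(\vc{y};\Kc[{\backslash\vc{y}}])<1$ and $\gauge(\vc{y};\Kc)>1$, the ray from $\vc{y}$ through $\vc{x}$ exits $\Kc$ at $\vc{x}$ only via facets active at $\vc{x}$; as those are strictly slack at $\vc{y}$, continuing past $\vc{x}$ leaves $\Kc$, but extending the segment $[\vc{y},\vc{x}]$ \emph{backward slightly beyond proportion} and using that $\vc{x}$ is interior to the face structure transverse to $\cal{J}(\vc{y})$ — I would instead argue directly via the polar: $\vc{x}-\vc{y}\in\op{ext\,ray}\,\T(\vc{y})$ iff there is a supporting functional $\vc{w}$ with $\langle\vc{w},\vc{x}_k-\vc{y}\rangle\le 0$ for all $k$ and $=0$ only on the ray; such $\vc{w}$ must lie in $\sum_{i\in\cal{J}(\vc{x})}\mathbb{R}_+\vc{h}_i$ up to the $\vc{y}$-shift, and I show the disjointness $\cal{J}(\vc{x})\cap\cal{J}(\vc{y})=\emptyset$ forces $\langle\vc{w},\vc{x}_k-\vc{y}\rangle=0$ for more than just the ray (picking up the vertex $\vc{x}'$ adjacent to $\vc{x}$ along a facet in $\cal{J}(\vc{x})$ — which satisfies $\langle\vc{w},\vc{x}'-\vc{y}\rangle=0$ as well since $\vc{x}'$ shares that facet), contradicting extremality. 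This last functional/polarity argument is what I expect to require the most careful bookkeeping, but it is the natural dual counterpart of \autoref{lem:extrays-union-faces} and should go through cleanly once the active-set disjointness is exploited.
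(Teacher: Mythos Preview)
You set up the contrapositive correctly and extract the right consequence of \autoref{lem:0K-Ky1}: for every $i\in\cal{J}(\vc{x})$ one has $\langle\vc{h}_i,\vc{y}\rangle<b_i$ strictly. But from that point on the argument does not close.

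\textbf{The correct perturbation is the one you dismissed.} The paper moves from $\vc{x}$ \emph{toward} $\vc{y}$, taking $\vc{z}\coloneqq(1-\epsilon)\vc{x}+\epsilon\vc{y}$, and shows $\vc{z}\in\op{int}\Kc$ for small $\epsilon>0$. Your worry that this ``leaves $\Kc$ immediately at a non-active facet'' is unfounded: for $i\in\cal{J}(\vc{x})$ one has $\langle\vc{h}_i,\vc{z}\rangle=(1-\epsilon)b_i+\epsilon\langle\vc{h}_i,\vc{y}\rangle<b_i$ by the strict inequality you already derived; for $i\notin\cal{J}(\vc{x})$ one has $\langle\vc{h}_i,\vc{x}\rangle<b_i$ strictly, and since $\langle\vc{h}_i,\vc{y}\rangle$ is finite, $\langle\vc{h}_i,\vc{z}\rangle<b_i$ holds for $\epsilon$ small enough. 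Thus $\vc{z}\in\op{int}\Kc$ (when $\op{int}\Kc\neq\emptyset$; the empty-interior case is trivial since then $\mshbar(\vc{y})=\op{ver}(\Kc)\setminus\{\vc{0}\}$). Then $\vc{z}-\vc{y}=(1-\epsilon)(\vc{x}-\vc{y})$ lies in $\op{int}(\Kc-\vc{y})$, hence is a nontrivial midpoint of two elements of $\T(\vc{y})$ not on the ray $\mathbb{R}_+(\vc{x}-\vc{y})$, so $\vc{x}-\vc{y}$ is not extreme. You had this direction in hand and abandoned it.

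\textbf{Your dual sketch is incorrect as stated.} The claim that an adjacent vertex $\vc{x}'$ sharing one facet $i\in\cal{J}(\vc{x})$ with $\vc{x}$ must satisfy $\langle\vc{w},\vc{x}'-\vc{y}\rangle=0$ fails: $\vc{w}=\sum_{j\in\cal{J}(\vc{x})}\mu_j\vc{h}_j$ is a combination of \emph{all} active normals, and $\vc{x}'$ typically lies strictly inside the other facets $j\neq i$, giving $\langle\vc{w},\vc{x}'\rangle<\langle\vc{w},\vc{x}\rangle$. There is, however, a clean dual argument hiding nearby: if $\vc{x}-\vc{y}$ were an extreme ray, some nonzero $\vc{w}$ would satisfy $\langle\vc{w},\vc{u}-\vc{y}\rangle\le 0$ for all $\vc{u}\in\Kc$ with equality at $\vc{u}=\vc{x}$; this forces $\vc{w}\in N_\Kc(\vc{x})=\op{cone}\{\vc{h}_i:i\in\cal{J}(\vc{x})\}$ and $\langle\vc{w},\vc{x}\rangle=\langle\vc{w},\vc{y}\rangle$, i.e., $\sum_i\mu_i(b_i-\langle\vc{h}_i,\vc{y}\rangle)=0$. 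Each summand is strictly positive unless $\mu_i=0$, so $\vc{w}=\vc{0}$, a contradiction. This is exactly the polar of the paper's interior-point argument, but it is not the adjacent-vertex argument you wrote.
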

\begin{proof}[Proof of \autoref{prop:Wbar-eq-erT}]
Note that $\op{int}\Kc = \emptyset$ implies $\mshbar(\vc{y}) = \op{ver}(\Kc)\setminus \{\vc{0}\}$ and the claim holds trivially. Suppose otherwise. 
We claim that, with the given assumptions, if $\vc{x}\not\in \mshbar(\vc{y})$ then there exists $\epsilon\in(0,1)$ for which $\epsilon \vc{y}+ (1-\epsilon) \vc{x}\in \op{int}\Kc$. In such case, $(\epsilon \vc{y}+ (1-\epsilon) \vc{x})-\vc{y}  = (1-\epsilon)(\vc{x}-\vc{y})$ cannot be an extreme ray of $\op{pos}(\Kc-\{\vc{y}\})$. This is equivalent to $\vc{x}-\vc{y}$ not being an extreme ray.

Let us prove the claim. 
For a value of $\epsilon$, to be chosen, define $\vc{z}\coloneqq \epsilon\vc{y}+ (1-\epsilon)\vc{x}$. 
If $i\in \cal{J}(\vc{x})$ then $i\not\in \cal{J}(\vc{y})$; by our assumptions on $\vc{x}$ and the definition of $\mshbar(\vc{y})$. Therefore, using \autoref{lem:0K-Ky1}, we get $\langle\vc{h}_i, \vc{y}\rangle < b_i = \langle\vc{h}_i, \vc{x}\rangle$ which implies $\langle\vc{h}_i, \vc{z}\rangle < b_i$. 
On the other hand, if $i\not\in \cal{J}(\vc{x})$ then $\langle\vc{h}_i, \vc{x}\rangle < b_i$. Moreover, $\vc{y}\in\op{pos}(\Kc)$ implies $\gauge(\vc{y};\Kc)$ is finite. Therefore, for a small enough $\epsilon>0$ we get $\langle\vc{h}_i, \vc{z}\rangle < b_i$. 
All in all, we have $\langle\vc{h}_i, \vc{z}\rangle < b_i$ for all $i$. 
Now, by $\vc{0}\in\Kc$ we have (i) $b_i\geq 0$ for all $i$ and (ii) $\vc{z}\not\in \op{int}\Kc$ implies $\gauge(\vc{z}; \Kc)\geq 1$. Therefore, $\vc{z}\not\in \op{int}\Kc$ implies $\langle\vc{h}_i, \vc{z}\rangle < b_i \leq b_i \cdot \gauge(\vc{z}; \Kc)$, for all $i$, which implies $\vc{z}\in \op{int}\Kc$, a contradiction. Therefore, $\vc{z}\in \op{int}\Kc$ as claimed.
\end{proof}

All in all, we can state the following summary (of \autoref{lem:J1y-nonempty}, \autoref{lem:K-y-norm-notcover}, \autoref{lem:extrays-union-faces}, and \autoref{prop:Wbar-eq-erT}) regarding the objects defined in \autoref{def:K-defs}. Note that some of the above results have been weakened in the following; e.g., see \autoref{cond:verKY-notcover} and illustrations in \autoref{sec:illust}, as well as \autoref{lem:extrays-union-faces}, for some cases outside of the scope of \autoref{cor:Fy-subset-T-ifnoncover}.   
\begin{proposition}\label{cor:Fy-subset-T-ifnoncover}
Suppose $\vc{0}\in\Kc$, $\vc{0}\neq \vc{y}\in\op{pos}(\Kc)$,  
$\vc{y}\not\in \Kc$, 
and $\vc{y}$ is not a multiple of a vertex of $\Kc$. Consider \autoref{def:K-defs}. Moreover, assume 
\begin{align}\label{eq:Kyless1}
\gauge(\vc{y}; \Kc[{\backslash \vc{y}}] )<1.
\end{align}
Then, for $\cal{X}(\vc{y}) \coloneqq \{ \vc{x}\in \op{ver}(\Kc)\setminus\{\vc{0}\} :~ \vc{x}-\vc{y}\in \op{ext\,ray} \op{pos}(\Kc-\vc{y}) \}$, we have 
\begin{align}\label{cond:verKY-notcover-2}
\op{ver}(\Kc) \cup \{ \vc{y} \} = \op{ver}( \op{conv}(\Kc\cup\{\vc{y}\}) ) ,
\end{align}
and, 
\begin{align}
\op{ver}(\cal{F}(\vc{y})) \subseteq
\msh(\vc{y}) \subseteq
\cal{X}(\vc{y}) \subseteq
\mshbar(\vc{y}). 
\end{align}
\end{proposition}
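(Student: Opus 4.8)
The proposition is a consolidation of \autoref{lem:J1y-nonempty}, \autoref{lem:K-y-norm-notcover}, \autoref{lem:extrays-union-faces}, and \autoref{prop:Wbar-eq-erT}, so the plan is largely to check that each of those results is applicable under the hypotheses stated here and then chain them together. The only step that is not pure bookkeeping is securing the numerical hypothesis $\gauge(\vc{y}; \Kc)>1$ needed by three of those results. I would extract it as follows: since $\Kc$ is a polytope containing $\vc{0}$, it coincides with $\{\vc{u}:~\gauge(\vc{u};\Kc)\leq 1\}$, so the hypothesis $\vc{y}\notin\Kc$ forces $\gauge(\vc{y};\Kc)>1$, while $\vc{y}\in\op{pos}(\Kc)$ guarantees $\gauge(\vc{y};\Kc)<\infty$ and $\vc{y}\neq\vc{0}$ gives $\gauge(\vc{y};\Kc)>0$. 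After this observation no further polytope geometry is needed.

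For \eqref{cond:verKY-notcover-2} I would simply invoke the ``Furthermore'' clause of \autoref{lem:K-y-norm-notcover}, whose hypotheses are precisely $\vc{0}\in\Kc$, $\vc{0}\neq\vc{y}\in\op{pos}(\Kc)$, the assumption $\gauge(\vc{y};\Kc[{\backslash \vc{y}}])<1$ of \eqref{eq:Kyless1}, the assumption that $\vc{y}$ is not a multiple of a vertex of $\Kc$, and $\gauge(\vc{y};\Kc)>1$ just established; its conclusion is exactly $\op{ver}(\Kc)\cup\{\vc{y}\}=\op{ver}(\op{conv}(\Kc\cup\{\vc{y}\}))$.

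It then remains to build the chain $\op{ver}(\cal{F}(\vc{y}))\subseteq\msh(\vc{y})\subseteq\cal{X}(\vc{y})\subseteq\mshbar(\vc{y})$ one inclusion at a time. The leftmost inclusion is the final assertion of \autoref{lem:J1y-nonempty}. For $\msh(\vc{y})\subseteq\cal{X}(\vc{y})$: given $\vc{x}\in\msh(\vc{y})$, by definition $\vc{x}\in\op{ver}(\Kc)$ and $\langle\vc{h}_i,\vc{x}\rangle=b_i$ for some $i\in\cal{J}_1(\vc{y})$ with $b_i\neq0$; since $\langle\vc{h}_i,\vc{0}\rangle=0$ this simultaneously shows $\vc{x}\neq\vc{0}$, so $\vc{x}\in\op{ver}(\Kc)\setminus\{\vc{0}\}$, and then \autoref{lem:extrays-union-faces} — applicable because $\gauge(\vc{y};\Kc)>1$ and \eqref{cond:verKY-notcover-2} hold — gives that $\vc{x}-\vc{y}$ is an extreme ray of $\op{pos}(\Kc-\{\vc{y}\})$, i.e. $\vc{x}\in\cal{X}(\vc{y})$. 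For the rightmost inclusion, given $\vc{x}\in\cal{X}(\vc{y})$, so $\vc{0}\neq\vc{x}\in\op{ver}(\Kc)$ with $\vc{x}-\vc{y}$ an extreme ray of $\op{pos}(\Kc-\{\vc{y}\})$, I would apply \autoref{prop:Wbar-eq-erT}, whose hypotheses $\vc{0}\in\Kc$, $\vc{0}\neq\vc{y}\in\op{pos}(\Kc)$, $\gauge(\vc{y};\Kc[{\backslash \vc{y}}])<1$, and $\gauge(\vc{y};\Kc)>1$ are all in force, to conclude $\vc{x}\in\mshbar(\vc{y})$. The single point that repays a moment's attention — the ``hard'' part, such as it is — is the verification $\vc{x}\neq\vc{0}$ for $\vc{x}\in\msh(\vc{y})$, needed so that $\vc{x}$ qualifies for $\cal{X}(\vc{y})$ and for \autoref{prop:Wbar-eq-erT}; as noted this is immediate from $b_i\neq0$ for $i\in\cal{J}_1(\vc{y})$, and in fact establishes outright that $\vc{0}\notin\msh(\vc{y})$.
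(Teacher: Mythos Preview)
Your proposal is correct and follows exactly the route the paper intends: the proposition is stated as a summary of \autoref{lem:J1y-nonempty}, \autoref{lem:K-y-norm-notcover}, \autoref{lem:extrays-union-faces}, and \autoref{prop:Wbar-eq-erT}, and you chain them precisely as required, including the derivation of $\gauge(\vc{y};\Kc)>1$ from $\vc{0}\in\Kc$ and $\vc{y}\notin\Kc$ (the paper makes the same observation in the proof of \autoref{cor:Ky1WT}). Your explicit check that $\vc{x}\in\msh(\vc{y})$ forces $\vc{x}\neq\vc{0}$, needed so that $\vc{x}$ lands in $\cal{X}(\vc{y})$, is a small but genuine detail the paper leaves implicit.
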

\begin{corollary}[A complete characterization of $\T$]\label{cor:Ky1WT}
Suppose $\vc{0}\in\op{int}\Kc$, $\vc{y} \not\in \Kc$, and $\vc{y}$ is not a multiple of a vertex of $\Kc$. Consider \autoref{def:K-defs}. Moreover, assume $\gauge(\vc{y}; \Kc[{\backslash \vc{y}}] )<1$. 
Then, $\op{ver}(\Kc) \cup \{ \vc{y} \} = \op{ver}( \op{conv}(\Kc\cup\{\vc{y}\}) )$, and, 
\begin{align}
\op{ver}(\cal{F}(\vc{y})) \subseteq
\msh(\vc{y}) =
\cal{X}(\vc{y}) =
\mshbar(\vc{y}). 
\end{align}
Furthermore, if $\cal{F}(\vc{y})$ is a facet (and not a lower dimensional face), then
\begin{align}
\op{ver}(\cal{F}(\vc{y})) =
\msh(\vc{y}) =
\cal{X}(\vc{y}) =
\mshbar(\vc{y}). 
\end{align}
\end{corollary}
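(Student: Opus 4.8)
The plan is to obtain \autoref{cor:Ky1WT} as a direct specialization of \autoref{cor:Fy-subset-T-ifnoncover}, sharpened by \autoref{rem:0int-JJ1} and \autoref{rem:Fyfacet}. First I would check that the hypotheses of \autoref{cor:Fy-subset-T-ifnoncover} all hold: $\vc{0}\in\op{int}\Kc$ gives $\vc{0}\in\Kc$ and $\op{pos}(\Kc)=\mathbb{R}^n$, hence $\vc{y}\in\op{pos}(\Kc)$; the assumption $\vc{y}\notin\Kc$ forces $\vc{y}\neq\vc{0}$ (otherwise $\vc{0}\in\op{int}\Kc\subseteq\Kc$) and $\gauge(\vc{y};\Kc)>1$; and the two remaining hypotheses — that $\vc{y}$ is not a multiple of a vertex of $\Kc$, and $\gauge(\vc{y};\Kc[{\backslash\vc{y}}])<1$ — are carried over verbatim. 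Applying \autoref{cor:Fy-subset-T-ifnoncover} then yields simultaneously the vertex identity $\op{ver}(\Kc)\cup\{\vc{y}\}=\op{ver}(\op{conv}(\Kc\cup\{\vc{y}\}))$ and the chain of inclusions $\op{ver}(\cal{F}(\vc{y}))\subseteq\msh(\vc{y})\subseteq\cal{X}(\vc{y})\subseteq\mshbar(\vc{y})$.

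Next I would collapse the chain using the stronger hypothesis $\vc{0}\in\op{int}\Kc$, equivalently $b_i>0$ for every $i\in[K]$. By \autoref{rem:0int-JJ1} this gives $\cal{J}_1(\vc{y})=\cal{J}(\vc{y})$, hence $\msh(\vc{y})=\mshbar(\vc{y})$. Since $\msh(\vc{y})$ and $\mshbar(\vc{y})$ sit at the two ends of the inclusion chain, the equality between them forces $\msh(\vc{y})=\cal{X}(\vc{y})=\mshbar(\vc{y})$, while $\op{ver}(\cal{F}(\vc{y}))\subseteq\msh(\vc{y})$ is retained; this is precisely the first displayed conclusion.

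Finally, for the facet case I would invoke \autoref{rem:Fyfacet}: fixing a non-redundant facet description of $\Kc$, if $\cal{F}(\vc{y})$ is a facet rather than a lower-dimensional face then $\op{ver}(\cal{F}(\vc{y}))=\msh(\vc{y})=\mshbar(\vc{y})$, and combining this with the equalities already established gives $\op{ver}(\cal{F}(\vc{y}))=\msh(\vc{y})=\cal{X}(\vc{y})=\mshbar(\vc{y})$. I expect no substantive obstacle in this argument: it is an assembly of \autoref{cor:Fy-subset-T-ifnoncover} with the two remarks, and the only care needed is verifying the well-definedness of $\cal{J}(\vc{y})$, $\msh(\vc{y})$, $\mshbar(\vc{y})$, $\cal{X}(\vc{y})$ (i.e., that $\vc{y}\in\op{pos}(\Kc)\setminus\{\vc{0}\}$, which is automatic here) and threading the inclusions and equalities correctly.
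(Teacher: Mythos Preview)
Your proposal is correct and follows essentially the same approach as the paper: verify the hypotheses of \autoref{cor:Fy-subset-T-ifnoncover}, apply it to obtain the vertex identity and the inclusion chain, collapse the chain via \autoref{rem:0int-JJ1} (since $\vc{0}\in\op{int}\Kc$ gives $\msh(\vc{y})=\mshbar(\vc{y})$), and finish the facet case with \autoref{rem:Fyfacet}. Your write-up is somewhat more explicit in checking that $\vc{y}\in\op{pos}(\Kc)\setminus\{\vc{0}\}$ and $\gauge(\vc{y};\Kc)>1$, but the substance is identical.
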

\begin{proof}[Proof of \autoref{cor:Ky1WT}]
As mentioned in \autoref{rem:0int-JJ1}, when $\vc{0}\in \op{int}\Kc$, the two sets $\msh(\vc{y})$ and $\mshbar(\vc{y})$ coincide, for all $\vc{y}\neq\vc{0}$. Note that since $\vc{0}\in\Kc$, $\vc{y}\not\in\Kc$ is equivalent to $\gauge(\vc{y}; \Kc)>1$. Then, apply \autoref{cor:Fy-subset-T-ifnoncover} to establish the first claim. 
The second claim can be established by \autoref{rem:Fyfacet}.
\end{proof}
A few remarks are in order.

\begin{itemize}
\item 
\autoref{cor:Ky1WT} provides a complete characterization of the extreme rays of $\T$ under the assumption in \eqref{eq:Kyless1}. Moreover, it establishes that \eqref{eq:Kyless1} is a sufficient condition for \autoref{condn:ver-pos}; i.e., for \eqref{cond:verKY-notcover-2} with the appropriate definition of $\Kc$. However, (i) \eqref{eq:Kyless1} is not necessary for \autoref{condn:ver-pos} (e.g., see \autoref{sec:illust}) and, (ii) as established in \autoref{thm:main-detailed}, when \autoref{condn:ver-pos} holds, we can identify a superset of the support of optimal solutions by finding the extreme rays of $\T$, for example using \autoref{alg:clarkson1994more}, and without reliance on a characterization as in \autoref{cor:Ky1WT}. Nonetheless, \autoref{cor:Ky1WT} should be viewed as a useful understanding for $\T$ under a restricted setup described by \eqref{eq:Kyless1}. 

\item 
As we will see in the next section, $\cal{F}(\vc{y})$ appears in the characterization for the support of the solutions of the constrained problem (gauge minimization subject to affine constraints) in \eqref{eq:minL1-constrained-pos}; \autoref{lem:constrained-Fy-supp}. On the other hand, $\T$ appears in our characterization for the solutions of \eqref{prob:loss-reg-lin-pos} and of \eqref{eq:minL1-constrained-pos}; in \autoref{thm:main-detailed} and \autoref{thm:constrained}, respectively. While computing $\cal{F}(\vc{y})$ is equivalent to solving the original problem, the characterization in terms of $\T$ provides a way to compute a superset of the support without solving the original problem. \autoref{cor:Fy-subset-T-ifnoncover} provides a framework to understand how these geometric objects could be understood in relation to each other, in a restricted setup described by \eqref{eq:Kyless1}.  

\item 
As an example, for the second part of \autoref{cor:Ky1WT}, when vertices of $\Kc$ and $\vc{y}$ are in so-called `general position,' $\cal{F}(\vc{y})$ is a facet. 

\item 
To gain intuition about $\msh$, we can examine it for a random polytope. For example, \cite[Theorem~10]{Reitzner2003Random} states that when $\vc{x}_1,\ldots,\vc{x}_p$ and $\vc{y}$ are random points on the boundary of any sufficiently smooth and curved convex set (see the statement of his theorem), then the expected number of facets of $\op{conv}(\{\vc{x}_1,\ldots,\vc{x}_p\})$ seen by $\vc{y}$ tends to a constant, as $p\to\infty$, that only depends on the dimension $n$ and is independent of the original convex set or the density function used for drawing the random points. 
Note that when $\vc{0}$ is in the interior of the polytope, $\msh(\vc{y})$ is contained in the union of such facets; by \autoref{lem:ver-on-bdry} and \autoref{lem:extrays-union-faces}. 

\item 
See \autoref{sec:illust} for illustrations and discussion of some corner cases relevant to the results of this section. For example, these examples illustrate that (i) all of the inclusion relations in \autoref{cor:Fy-subset-T-ifnoncover} could be strict in general, (ii) without the assumptions of \autoref{cor:Fy-subset-T-ifnoncover}, we may not get the claimed inclusions, and, (iii) the converse of \autoref{lem:K-y-norm-notcover} may not hold. 

\end{itemize}

\subsection{Further Results}

\begin{lemma}[A Necessary Condition]\label{lem:negTangent}
If $\mt{X}$ and $\vc{y}$ satisfy the conditions of \autoref{condn:ver-pos} then 
\begin{align}
\vc{y} \not\in \vc{x}_i - T(\vc{x}_i; \op{conv}(\mt{X})),
\end{align}
for all $i\in [p]$, where $T(\vc{x}; C)$ denotes the tangent cone to $C$ at $\vc{x}$. 
\end{lemma}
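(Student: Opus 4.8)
The plan is to argue by contradiction, reducing the non-membership to the extreme-point property asserted in \autoref{condn:ver-pos}. Suppose the conclusion fails for some index, which we relabel as $1$, so that $\vc{y}\in \vc{x}_1 - T(\vc{x}_1; \op{conv}(\mt{X}))$, i.e.\ $\vc{x}_1-\vc{y}\in T(\vc{x}_1; \op{conv}(\mt{X}))$. By \autoref{condn:ver-pos} this is a nonzero direction, since $\vc{x}_1\neq\vc{y}$.

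The first step is to turn the tangent-cone membership into an honest feasible step. Since $\op{conv}(\mt{X})$ is a polytope, at any of its points $\vc{v}$ the cone of feasible directions $\{\vc{w}:~\vc{v}+\tau\vc{w}\in\op{conv}(\mt{X})\text{ for some }\tau>0\}$ is already closed --- it is precisely the intersection of the halfspaces of $\op{conv}(\mt{X})$ that are active at $\vc{v}$ --- so it equals $T(\vc{v};\op{conv}(\mt{X}))=\op{pos}(\op{conv}(\mt{X})-\vc{v})$. Taking $\vc{v}=\vc{x}_1$, there is $\tau>0$ with $\vc{x}_1+\tau(\vc{x}_1-\vc{y})\in\op{conv}(\mt{X})$, i.e.\ $(1+\tau)\vc{x}_1-\tau\vc{y}=\sum_{i=1}^p\nu_i\vc{x}_i$ for some $\vc{\nu}\in\mathbb{R}_+^p$ with $\vc{1}^T\vc{\nu}=1$.

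Now I would solve for $\vc{x}_1$. Moving the $\vc{x}_1$-term to the left and using $0\le\nu_1\le1<1+\tau$, so that $1+\tau-\nu_1>0$, division gives
\[
\vc{x}_1=\frac{\tau}{1+\tau-\nu_1}\,\vc{y}+\sum_{i=2}^{p}\frac{\nu_i}{1+\tau-\nu_1}\,\vc{x}_i,
\]
with nonnegative coefficients summing to $1$ (using $\vc{1}^T\vc{\nu}=1$) and a strictly positive weight $\tau/(1+\tau-\nu_1)$ on $\vc{y}$. Hence $\vc{x}_1\in\op{conv}(\{\vc{y},\vc{x}_2,\ldots,\vc{x}_p\})$; together with $\vc{x}_1\ne\vc{y}$ this forces at least one $\vc{x}_j$ ($j\ge2$) to carry positive weight, so $\vc{x}_1$ is a nontrivial convex combination of two distinct points of $\{\vc{x}_1,\ldots,\vc{x}_p,\vc{y}\}$ and therefore not an extreme point of $\op{conv}(\{\vc{x}_1,\ldots,\vc{x}_p,\vc{y}\})$. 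This contradicts $\op{ver}(\op{conv}(\{\vc{x}_1,\ldots,\vc{x}_p,\vc{y}\}))=\{\vc{x}_1,\ldots,\vc{x}_p,\vc{y}\}$ from \autoref{condn:ver-pos}, finishing the proof. (If several columns coincide with $\vc{x}_1$, one first eliminates all of them from the right-hand side, which is possible since the total weight there, $1$, is strictly less than $1+\tau$.)

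The only delicate point is the first step: a priori $T(\vc{x}_1;\op{conv}(\mt{X}))$ is the \emph{closure} of the feasible-direction cone, so I must use the polyhedrality of $\op{conv}(\mt{X})$ to know that no closure is needed and an actual feasible step $\vc{x}_1+\tau(\vc{x}_1-\vc{y})\in\op{conv}(\mt{X})$ exists; everything afterward is elementary rearrangement. This is exactly the reduction --- ``$\vc{x}_1$ is a convex combination of $\vc{y}$ and the remaining columns'' --- that is invoked in the proof of \autoref{lem:ver-on-bdry}.
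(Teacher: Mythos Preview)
Your proof is correct and follows essentially the same route as the paper: assume $\vc{x}_1-\vc{y}\in T(\vc{x}_1;\op{conv}(\mt{X}))$, get a feasible step $\vc{x}_1+\tau(\vc{x}_1-\vc{y})=\sum_i\nu_i\vc{x}_i$, and rearrange to exhibit $\vc{x}_1$ as a convex combination of $\vc{y}$ and the remaining columns, contradicting \autoref{condn:ver-pos}. The only difference is that you make explicit the polyhedrality argument (no closure needed in the tangent cone), which the paper leaves implicit.
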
 
\begin{proof}[Proof of \autoref{lem:negTangent}]
Suppose $\vc{y} \in \vc{x}_1 - T(\vc{x}_1; \op{conv}(\mt{X}))$. Therefore, there exists $\theta>0$ and $\vc{\lambda}\geq \vc{0}$ with $\vc{1}^T\vc{\lambda}=1$ for which $\vc{x}_1 + \theta (\vc{x}_1-\vc{y}) = \sum_{i=1}^p \lambda_i \vc{x}_i$. 
Rearrangements yields 
\[
\vc{x}_1 = \frac{\theta}{1+\theta-\lambda_1}\vc{y} + \sum_{i=2}^p \frac{\lambda_i}{1+\theta-\lambda_1} \vc{x}_i, 
\]
which contradicts \autoref{condn:ver-pos}. In the above, we used $\theta>0$ and $0\leq \lambda_1\leq 1$ to get $1+\theta-\lambda_1>0$.
\end{proof}

Complementary to \autoref{lem:negTangent}, we can state the following result. Note that we have $\vc{x}_i - T(\vc{x}_i; \op{conv}(\mt{X}))$ in \autoref{lem:negTangent}, while we have $\vc{x}_i + T(\vc{x}_i; \op{conv}(\mt{X}))$ in \autoref{lem:negTangent-extray}. \autoref{lem:negTangent-extray} is closely related to $\Kc[{\backslash \vc{y}}]$. 
\begin{lemma}\label{lem:negTangent-extray}
Under \autoref{condn:ver-pos}, if $\vc{y} \not\in \vc{x}_i + T(\vc{x}_i; \op{conv}(\mt{X}))$ then $\vc{x}_i-\vc{y}$ is an extreme ray of~$\T$. 
\end{lemma}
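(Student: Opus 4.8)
The plan is to argue by contradiction, reusing the template of the proof of \autoref{lem:extrays-union-faces}. Suppose the direction $\vc{x}_i-\vc{y}$ does not span an extreme ray of $\T=\op{pos}\op{conv}\{\vc{x}_j-\vc{y}:j\in[p]\}$. By \autoref{condn:ver-pos} the point $\vc{y}$ lies outside $\op{conv}(\mt{X})$, so $\T$ is a pointed polyhedral cone (exactly as observed at the start of the proof of \autoref{lemma:normless1-genloss}); hence non-extremality supplies coefficients $\lambda_j\geq 0$, $j\neq i$, with $\vc{x}_i-\vc{y}=\sum_{j\neq i}\lambda_j(\vc{x}_j-\vc{y})$. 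Setting $\mu\coloneqq\sum_{j\neq i}\lambda_j$ and rearranging gives $\vc{x}_i=(1-\mu)\vc{y}+\sum_{j\neq i}\lambda_j\vc{x}_j$.

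First I would dispose of the case $\mu\leq 1$: then the coefficients $1-\mu$ and $(\lambda_j)_{j\neq i}$ are nonnegative and sum to $1$, so the identity writes $\vc{x}_i$ as a convex combination of $\vc{y}$ and $\{\vc{x}_j:j\neq i\}$, contradicting the requirement in \autoref{condn:ver-pos} that $\vc{x}_i$ be a vertex of $\op{conv}(\{\vc{x}_1,\dots,\vc{x}_p,\vc{y}\})$. Hence $\mu>1$, and dividing the identity by $\mu$ produces a genuine convex combination: the point $\vc{z}\coloneqq\tfrac1\mu\vc{x}_i+(1-\tfrac1\mu)\vc{y}=\sum_{j\neq i}\tfrac{\lambda_j}{\mu}\vc{x}_j$ belongs to $\op{conv}(\mt{X})$.

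The key step is to turn this into a tangent-cone membership. Since $\vc{z}-\vc{x}_i=(1-\tfrac1\mu)(\vc{y}-\vc{x}_i)$ and both $\vc{x}_i$ and $\vc{z}$ lie in the convex set $\op{conv}(\mt{X})$, the defining property of the tangent cone (with the parameter equal to $1$) gives $\vc{z}-\vc{x}_i\in T(\vc{x}_i;\op{conv}(\mt{X}))$; as $1-\tfrac1\mu>0$ and tangent cones are closed under positive scaling, $\vc{y}-\vc{x}_i\in T(\vc{x}_i;\op{conv}(\mt{X}))$, i.e.\ $\vc{y}\in\vc{x}_i+T(\vc{x}_i;\op{conv}(\mt{X}))$, contradicting the hypothesis. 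Therefore $\vc{x}_i-\vc{y}$ is an extreme ray of $\T$.

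The proof is short and uses only \autoref{condn:ver-pos} and the definition of the tangent cone; the only delicate point is the precise meaning of ``extreme ray'' when some generators $\vc{x}_j-\vc{y}$ are parallel, which I would treat exactly as in \autoref{lemma:normless1-genloss} and \autoref{lem:extrays-union-faces}, namely by reading non-extremality of the ray through $\vc{x}_i-\vc{y}$ as the representation $\vc{x}_i-\vc{y}\in\op{pos}\{\vc{x}_j-\vc{y}:j\neq i\}$ used above. I do not anticipate any genuine obstacle beyond this bookkeeping.
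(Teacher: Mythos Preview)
Your argument is correct and matches the paper's own proof almost line for line: assume non-extremality to obtain $\vc{x}_i-\vc{y}=\sum_{j\neq i}\lambda_j(\vc{x}_j-\vc{y})$, invoke \autoref{condn:ver-pos} to force $\mu=\sum_{j\neq i}\lambda_j>1$, then observe that $\vc{x}_i+\theta(\vc{y}-\vc{x}_i)=\sum_{j\neq i}(\lambda_j/\mu)\vc{x}_j\in\op{conv}(\mt{X})$ for $\theta=1-1/\mu>0$, which places $\vc{y}-\vc{x}_i$ in the tangent cone. Your worry about parallel generators is moot here, since \autoref{condn:ver-pos} forces all of $\vc{x}_1,\dots,\vc{x}_p,\vc{y}$ to be distinct vertices of their convex hull, so $\vc{x}_i-\vc{y}=c(\vc{x}_j-\vc{y})$ with $c>0$ and $i\neq j$ would put one of $\vc{x}_i,\vc{x}_j$ on the segment between the other and $\vc{y}$.
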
 
\begin{proof}[Proof of \autoref{lem:negTangent-extray}]
Suppose $\vc{x}_1-\vc{y}$ is not an extreme ray of $\T$. Therefore, there exists $\vc{\lambda}\geq \vc{0}$ for which $\vc{x}_1-\vc{y} = \sum_{i=2}^p \lambda_i(\vc{x}_i\vc{y})$. \autoref{condn:ver-pos} requires $\vc{1}^T\vc{\lambda}>1$. Therefore, for $\theta\coloneqq 1 - 1/(\vc{1}^T\vc{\lambda})>0$ we have $\vc{x}_1 + \theta (\vc{y}-\vc{x}_1) =\sum_{i=2}^p \lambda_i\vc{x}_i\in \op{conv}(\mt{X})$. This implies $\vc{y}-\vc{x}_1 \in T(\vc{x}_1; \op{conv}(\mt{X}))$. 
\end{proof}

\begin{lemma}\label{lem:y-out-convX}
$\vc{y}\in\op{conv}(\begin{bmatrix}\vc{0}&\mt{X}\end{bmatrix}) 
\implies \norm{\vc{y}}_2^2 \leq \max_{i\in[p]} \langle \vc{y}, \vc{x}_i \rangle
\iff
\vc{y}\not\in \op{rel\,int}\T^\star$.
\end{lemma}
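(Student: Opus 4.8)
The statement has two parts: the implication that $\vc{y}\in\op{conv}(\begin{bmatrix}\vc{0}&\mt{X}\end{bmatrix})$ forces $\norm{\vc{y}}_2^2 \le \max_{i\in[p]}\langle\vc{y},\vc{x}_i\rangle$, and the equivalence of $\norm{\vc{y}}_2^2 \le \max_{i\in[p]}\langle\vc{y},\vc{x}_i\rangle$ with $\vc{y}\notin\op{rel\,int}\T^\star$. The plan is to dispatch the first by expanding $\norm{\vc{y}}_2^2$ along a representation of $\vc{y}$ as a subconvex combination of the columns of $\mt{X}$, and the second by unwinding the description of the relative interior of $\T^\star$ and reducing everything to a sign comparison on the numbers $\langle\vc{y},\vc{x}_i-\vc{y}\rangle$. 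Neither half is deep; I expect the only delicate points to be the sign of $\max_i\langle\vc{y},\vc{x}_i\rangle$ in the first half and the pointedness of $\T$ in the second.

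For the implication, note that $\op{conv}(\begin{bmatrix}\vc{0}&\mt{X}\end{bmatrix})=\{\sum_{i=1}^p\mu_i\vc{x}_i:\ \mu_i\ge 0,\ \sum_{i=1}^p\mu_i\le 1\}$, with the $\vc{0}$ column absorbing the slack in the simplex constraint. So if $\vc{y}$ lies in this set, write $\vc{y}=\sum_i\mu_i\vc{x}_i$ with $\mu_i\ge 0$ and $\sum_i\mu_i\le 1$, and then $\norm{\vc{y}}_2^2=\langle\vc{y},\sum_i\mu_i\vc{x}_i\rangle=\sum_i\mu_i\langle\vc{y},\vc{x}_i\rangle\le\bigl(\sum_i\mu_i\bigr)\max_{i\in[p]}\langle\vc{y},\vc{x}_i\rangle$. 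If $\vc{y}=\vc{0}$ the inequality reads $0\le 0$; otherwise $\norm{\vc{y}}_2^2>0$ together with the previous chain forces $\max_i\langle\vc{y},\vc{x}_i\rangle>0$, and then $\sum_i\mu_i\le 1$ gives $\norm{\vc{y}}_2^2\le\max_{i\in[p]}\langle\vc{y},\vc{x}_i\rangle$.

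For the equivalence, recall $\T=\op{pos}\op{conv}\{\vc{x}_i-\vc{y}:i\in[p]\}$, and that under \autoref{condn:ver-pos} (which I take to be standing in this section, as it is for \autoref{lem:negTangent} and \autoref{lem:negTangent-extray}) we have $\vc{y}\notin\op{conv}(\mt{X})$, so $\T$ is a pointed polyhedral cone with nonzero generators $\vc{x}_i-\vc{y}$. Hence $\T^\star$ is full-dimensional and, using homogeneity and closedness of $\T$, $\op{rel\,int}\T^\star=\op{int}\T^\star=\{\vc{v}:\langle\vc{v},\vc{w}\rangle<0\text{ for all nonzero }\vc{w}\in\T\}=\{\vc{v}:\langle\vc{v},\vc{x}_i-\vc{y}\rangle<0\text{ for all }i\in[p]\}$, the last equality because the $\vc{x}_i-\vc{y}$ are nonzero and generate $\T$. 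Taking $\vc{v}=\vc{y}$ gives $\vc{y}\in\op{rel\,int}\T^\star\iff\langle\vc{y},\vc{x}_i\rangle<\norm{\vc{y}}_2^2$ for every $i\iff\max_{i\in[p]}\langle\vc{y},\vc{x}_i\rangle<\norm{\vc{y}}_2^2$; negating both sides gives $\vc{y}\notin\op{rel\,int}\T^\star\iff\norm{\vc{y}}_2^2\le\max_{i\in[p]}\langle\vc{y},\vc{x}_i\rangle$, which is the claim.

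The main obstacle is just the bookkeeping behind the description of $\op{rel\,int}\T^\star$: the clean form ``$\langle\vc{v},\vc{x}_i-\vc{y}\rangle<0$ for all $i$'' hinges on $\T$ being pointed, which is what removes lineality directions, and hence all implicit equalities, from the halfspace description of $\T^\star$. Without pointedness, an index $i$ with $\vc{x}_i-\vc{y}$ in the lineality space of $\T$ would contribute an equality constraint, and the equivalence can genuinely break at the boundary case $\langle\vc{y},\vc{x}_i\rangle=\norm{\vc{y}}_2^2$. So the robust way to present this is to carry \autoref{condn:ver-pos} (or at least $\vc{y}\neq\vc{x}_i$ for all $i$ together with pointedness of $\T$) as a hypothesis, after which the proof is just the three short computations above.
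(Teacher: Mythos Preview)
Your proof is correct. For the equivalence with $\op{rel\,int}\T^\star$, you and the paper argue the same way---the paper simply writes ``the equivalence follows from the definition of $\T$,'' while you spell out the polyhedral description of $\T^\star$ and are (rightly) more careful in noting that pointedness of $\T$ (from \autoref{condn:ver-pos}) is what ensures the relative interior is cut out by \emph{strict} inequalities on every generator $\vc{x}_i-\vc{y}$. The first implication is where the two proofs diverge: the paper passes through LP duality, writing $\gauge(\vc{y};\op{conv}(\mt{X}))=\max_{\vc\theta}\{\langle\vc\theta,\vc{y}\rangle:\mt{X}^T\vc\theta\le\vc{1}\}$, observing that $\vc{y}\in\op{conv}(\begin{bmatrix}\vc{0}&\mt{X}\end{bmatrix})$ forces this value to be at most $1$, and then plugging in the dual-feasible point $\vc\theta=\vc{y}/\max_i\langle\vc{y},\vc{x}_i\rangle$ (after first checking, as you do, that $\max_i\langle\vc{y},\vc{x}_i\rangle>0$ when $\vc{y}\neq\vc{0}$). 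You instead expand $\norm{\vc{y}}_2^2$ directly along a sub-convex representation $\vc{y}=\sum_i\mu_i\vc{x}_i$, $\sum_i\mu_i\le1$, and bound termwise. Your route is more elementary and entirely primal; the paper's route has the advantage of tying the inequality back to the gauge of $\op{conv}(\mt{X})$, which is the object the rest of the manuscript is organized around.
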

\begin{proof}[Proof of \autoref{lem:y-out-convX}]
The claim holds for $\vc{y}=\vc{0}$; suppose $\vc{y}\neq\vc{0}$. 
Since $\vc{y}\in\op{col}_+(\mt{X})\setminus \{\vc{0}\}$, there exists $\vc{\lambda}\geq \vc{0}$ where $\vc{y}=\mt{X}\vc{\lambda}$. Therefore, $0< \norm{\vc{y}}_2^2 = \langle \vc{y}, \mt{X}\vc{\lambda} \rangle$ which requires $\max_{i\in[p]} \langle \vc{y}, \vc{x}_i \rangle >0$. 

Considering the Lagrangian dual to the optimization problem in the definition of $\gauge(\cdot\,; \op{conv}(\mt{X}))$ (as in \eqref{eq:poly-reg}), and strong duality, yields  
$\gauge(\vc{y}\,; \op{conv}(\mt{X})) = \max_{\vc{\theta}} \{ \langle \vc{\theta},\vc{y}\rangle:~ \mt{X}^T\vc{\theta}\leq \vc{1}\}$. 
Next, 
it is clear from the definition that $\vc{y}\in\op{conv}(\begin{bmatrix}\vc{0}&\mt{X}\end{bmatrix})$ implies $\gauge(\vc{y}\,; \op{conv}(\mt{X}))\leq 1$ which in turn implies $\langle \vc{\theta},\vc{y}\rangle\leq 1$ for all $\vc{\theta}$ with $\mt{X}^T\vc{\theta}\leq \vc{1}$. Plugging $\vc{\theta} = \vc{y} / (\max_{i\in[p]} \langle \vc{y}, \vc{x}_i \rangle)$, which satisfies $\mt{X}^T\vc{\theta}\leq \vc{1}$, provides $\norm{\vc{y}}_2^2 \leq \max_{i\in[p]} \langle \vc{y}, \vc{x}_i \rangle$. The equivalence follows from the definition of $\T$.
\end{proof}

\begin{lemma}\label{lem:on-boundary}
Given $\mt{X}$, consider \eqref{prob:loss-reg-lin-pos}. Then, for any optimal solution $\vc{\beta}^\star$ and any $i\in[p]$ with $\vc{x}_i\neq \vc{0}$, if $\beta^\star_i \neq 0$ then $\vc{x}_i$ is not in the relative interior of $\op{conv}(\mt{X})$.
\end{lemma}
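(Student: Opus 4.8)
The plan is a short exchange argument. Suppose, for contradiction, that $\vc\beta^\star$ is an optimal solution of \eqref{prob:loss-reg-lin-pos} with finite value, that $\beta_i^\star\neq 0$, and that $\vc x_i\in\op{rel\,int}(\op{conv}(\mt X))$. The underlying idea is that a relative-interior point is ``wasteful'' for the $\langle\vc1,\cdot\rangle$ part of the objective: I want to replace the weight $\beta_i^\star$ sitting on the column $\vc x_i$ by a nonnegative combination of the columns that still reproduces $\vc x_i$ but has total weight strictly below $1$. This leaves $\mt X\vc\beta$ (hence the loss $f(\mt X\vc\beta)$) unchanged while strictly decreasing $\langle\vc1,\vc\beta\rangle$, and since $\eta>0$ this strictly decreases the objective, contradicting optimality.

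The first step, which I expect to carry the whole proof, is the geometric claim that $\vc x_i\in\op{rel\,int}(\op{conv}(\mt X))$ implies $\gauge(\vc x_i;\op{conv}(\mt X))<1$; equivalently (recall \eqref{eq:poly-reg}), there exists $\vc\nu\in\mathbb{R}_+^p$ with $\mt X\vc\nu=\vc x_i$ and $\vc1^T\vc\nu<1$. If $\op{conv}(\mt X)$ is full-dimensional this is immediate: a small ball around $\vc x_i$ lies in $\op{conv}(\mt X)$, so $(1+\epsilon)\vc x_i\in\op{conv}(\mt X)$ for some $\epsilon>0$. In general I would invoke the prolongation characterization of relative interiors \cite[\S6]{rockafellar1970convex}: whenever $\vc0\in\op{conv}(\mt X)$ — which holds automatically for the $\ell_1$-type instances with design $[-\mt X\ \ \mt X]$ that motivate \eqref{prob:loss-reg-lin-pos}, and more generally whenever $\op{conv}(\mt X)$ contains the origin — there is $\mu>1$ with $(1-\mu)\vc0+\mu\vc x_i=\mu\vc x_i\in\op{conv}(\mt X)$; writing $\mu\vc x_i$ as a convex combination of the columns and dividing by $\mu$ produces the desired $\vc\nu$ with $\vc1^T\vc\nu=1/\mu<1$.

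Given such $\vc\nu$, I would finish by setting $\vc\beta':=\vc\beta^\star+\beta_i^\star(\vc\nu-\vc e_i)$ and checking three properties: its $i$-th entry equals $\beta_i^\star\nu_i\geq 0$ and its remaining entries equal $\beta_j^\star+\beta_i^\star\nu_j\geq 0$, so $\vc\beta'\in\mathbb{R}_+^p$; $\mt X\vc\beta'=\mt X\vc\beta^\star+\beta_i^\star(\mt X\vc\nu-\vc x_i)=\mt X\vc\beta^\star$; and $\vc1^T\vc\beta'=\vc1^T\vc\beta^\star-\beta_i^\star(1-\vc1^T\vc\nu)<\vc1^T\vc\beta^\star$ because $\beta_i^\star>0$ and $\vc1^T\vc\nu<1$. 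Hence $f(\mt X\vc\beta')+\eta\vc1^T\vc\beta'<f(\mt X\vc\beta^\star)+\eta\vc1^T\vc\beta^\star$, contradicting optimality of $\vc\beta^\star$; therefore $\beta_i^\star=0$.

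The main obstacle is exactly the geometric claim above, and specifically the degenerate configuration in which $\op{conv}(\mt X)$ is lower-dimensional and its affine hull misses the origin: there $\gauge(\vc x_i;\op{conv}(\mt X))$ can equal $1$ even for relative-interior points, so the exchange gains nothing. I would either exclude this configuration by the ambient conventions or check that it does not arise for the instances of \eqref{prob:loss-reg-lin-pos} under consideration.
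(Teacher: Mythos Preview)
Your exchange argument is exactly the paper's proof: write a relative-interior column as a cheaper nonnegative combination of the others and swap. The paper resolves the obstacle you flag---the case where $\op{aff}(\op{conv}(\mt X))$ misses the origin---in one line: for the purposes of \eqref{prob:loss-reg-lin-pos} one may append $\vc 0$ as an extra column without changing the optimization problem (since $\gauge(\cdot;\op{conv}(\mt X))=\gauge(\cdot;\op{conv}(\{\vc 0\}\cup\mt X))$; cf.\ \eqref{eq:gauge-reform}), so one may assume $\vc 0\in\op{conv}(\mt X)$ and your prolongation step goes through verbatim. Note that this trick silently shifts the statement to be about $\op{rel\,int}\op{conv}(\begin{bmatrix}\vc 0 & \mt X\end{bmatrix})$, which is exactly how the lemma is applied immediately afterward; your hesitation about the degenerate configuration is well placed, since without appending $\vc 0$ the exchange can indeed be neutral (e.g., $\vc x_1=(1,1),\vc x_2=(2,1),\vc x_3=(3,1)$).
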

\begin{proof}[Proof of~\autoref{lem:on-boundary}]
For the purpose of \eqref{prob:loss-reg-lin-pos}, $\vc{0}$ can be added as a column of $\mt{X}$ without changing anything; e.g., see \eqref{eq:gauge-reform} together with the discussions right after \autoref{def:K-defs}. Therefore, assume $\vc{0}\in\op{conv}(\mt{X})$. 
Suppose $\vc{x}_1\neq\vc{0}$ is in the relative interior of $\op{conv}(\mt{X})$. Therefore, there exists a small enough ball in the affine hull of $\mt{X}$ and around $\vc{x}_1$ that is inside $\op{conv}(\mt{X})$. In particular, $\epsilon \vc{x}_1$ belongs to such ball for small enough $\epsilon>0$. Therefore, there exists $\vc{\lambda}\geq\vc{0}$ with $\vc{1}^T\vc{\lambda}=1$ and $\lambda_1=0$ where $(1+\epsilon)\vc{x}_1 = \mt{X}\vc{\lambda}$. Using this expression we get, $\mt{X}\vc{\beta}^\star = \mt{X}\widetilde{\vc{\beta}}$ where $\widetilde{\beta}_1\coloneqq 0$ and $\widetilde{\beta}_i\coloneqq \beta_i^\star + \lambda_i \beta_1^\star/(1+\epsilon)$ for $i=2,\ldots,p$. Observe that $\vc{1}^T \widetilde{\vc{\beta}} = \vc{1}^T \vc{\beta}^\star - \epsilon \beta_1^\star /(1+\epsilon)$ which is strictly less than $\vc{1}^T \vc{\beta}^\star$ when $\beta_1^\star>0$. This establishes the claim, using the optimality of $\vc{\beta}^\star$. 
\end{proof}

In light of \autoref{lem:on-boundary}, the process of identifying the interior points for $\op{conv}(\begin{bmatrix}\vc{0} & \mt{X}\end{bmatrix})$ and discarding them, has a persistent reduction property for the problems of the form \eqref{prob:loss-reg-lin-pos}.

\section{Discussions}\label{sec:disc-geom}

\subsection{Other Persistent Reductions}\label{sec:other-PR}
Here, we review two other examples of persistent reductions.

\subsubsection{Polyhedral Gauge Minimization with Affine Constraints}\label{sec:constrained}
Consider the following constrained optimization problem, 
\begin{align}\label{eq:minL1-constrained-pos}
	\min_{\vc{\beta}}& ~\{ \vc{1}^T\vc{\beta} :~
	\mt{X}\vc{\beta} = \vc{y},~ \vc{\beta} \in\mathbb{R}_+^p \}, 
\end{align} 
which can be viewed as an instance of \eqref{prob:loss-reg-lin-pos} with $f(\vc{u}) = \indic(\vc{u}; \{\vc{y}\})$. \cite{jalali2017subspace} (Supplementary Material; Lemma 17) was first to establish a superset guarantee (efficiently computable, in pre-processing) for \eqref{eq:minL1-constrained-pos} but under a specific setup where all of the columns of $\mt{X}$ as well as $\vc{y}$ have unit norms. In \autoref{thm:constrained}, we extend their result under \autoref{condn:ver-pos}. 
While \eqref{eq:minL1-constrained-pos} is an instance of \eqref{prob:loss-reg-lin-pos}, here we provide a geometric proof which combines the following well-known result (a consequence of complementary slackness; e.g., see \cite{soltanolkotabi2012geometric}) and the results of \autoref{sec:originT}. Let us first prove \autoref{lem:constrained-Fy-supp} in our notation.  

\begin{lemma}\label{lem:constrained-Fy-supp}
For any optimal solution of \eqref{eq:minL1-constrained-pos}, namely $\vc{\beta}^\star$, if $\beta^\star_i\neq 0$ then $\vc{x}_i\in\op{ver}(\cal{F}(\vc{y}))$, where $\cal{F}(\vc{y})$ is defined in \autoref{def:K-defs}.  
\end{lemma}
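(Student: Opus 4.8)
The plan is to run the classical complementary-slackness / face argument for gauge minimization and then translate its conclusion into the facet language of \autoref{def:K-defs}. First I set $\Kc\coloneqq\op{conv}(\mt{X})=\op{conv}(\{\vc{x}_1,\ldots,\vc{x}_p\})$, so that $\cal{J}(\vc{y})$, $\cal{H}(\vc{y})$, $\cal{F}(\vc{y})$ are the objects of \autoref{def:K-defs} attached to this $\Kc$; recall that \autoref{condn:ver-pos} is assumed throughout this subsection. The case $\vc{y}=\vc{0}$ is trivial (then $\vc{\beta}^\star=\vc{0}$ by nonnegativity, with empty support), so assume $\vc{y}\neq\vc{0}$. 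Feasibility of \eqref{eq:minL1-constrained-pos} gives $\vc{0}\neq\vc{y}\in\op{col}_+(\mt{X})=\op{pos}(\Kc)$, and by the last line of \eqref{eq:poly-reg} (applied with $\mt{M}=\mt{X}$) the optimal value of \eqref{eq:minL1-constrained-pos} is exactly $\gauge(\vc{y};\Kc)$; write $\lambda^\star\coloneqq\gauge(\vc{y};\Kc)=\vc{1}^T\vc{\beta}^\star>0$. Dividing $\mt{X}\vc{\beta}^\star=\vc{y}$ by $\lambda^\star$ then exhibits
\[
\frac{\vc{y}}{\lambda^\star}\;=\;\sum_{i=1}^p\frac{\beta^\star_i}{\lambda^\star}\,\vc{x}_i
\]
as a convex combination (the weights are nonnegative and sum to one) of the columns of $\mt{X}$, with weights strictly positive exactly on $\op{supp}(\vc{\beta}^\star)$.

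The main step is to force this convex combination into $\cal{F}(\vc{y})$. Fix any $j\in\cal{J}(\vc{y})$; by definition of $\cal{J}(\vc{y})$ this means $\langle\vc{h}_j,\vc{y}\rangle=b_j\gauge(\vc{y};\Kc)$, i.e.\ $\langle\vc{h}_j,\vc{y}/\lambda^\star\rangle=b_j$. Pairing the displayed identity with $\vc{h}_j$ and using $\langle\vc{h}_j,\vc{x}_i\rangle\leq b_j$ for every $i$ (all $\vc{x}_i\in\Kc$), the chain
\[
b_j=\Bigl\langle\vc{h}_j,\frac{\vc{y}}{\lambda^\star}\Bigr\rangle=\sum_{i=1}^p\frac{\beta^\star_i}{\lambda^\star}\langle\vc{h}_j,\vc{x}_i\rangle\;\leq\;\sum_{i=1}^p\frac{\beta^\star_i}{\lambda^\star}\,b_j=b_j
\]
must hold with equality, and since the weights are nonnegative and strictly positive on $\op{supp}(\vc{\beta}^\star)$, this forces $\langle\vc{h}_j,\vc{x}_i\rangle=b_j$ for every $i$ with $\beta^\star_i\neq 0$. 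Letting $j$ range over $\cal{J}(\vc{y})$ yields $\vc{x}_i\in\cal{H}(\vc{y})\cap\Kc=\cal{F}(\vc{y})$ for every such $i$. (Equivalently, using \autoref{lem:Fy-facet-def} one could note that $\vc{y}/\lambda^\star$ lies in the relative interior of the face $\cal{F}(\vc{y})$ of $\Kc$ and invoke the defining property of a face — a point of a face written as a convex combination of points of the polytope with positive weights pulls all those points into the face.)

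It remains to upgrade ``$\vc{x}_i\in\cal{F}(\vc{y})$'' to ``$\vc{x}_i\in\op{ver}(\cal{F}(\vc{y}))$'', and this is the only point at which \autoref{condn:ver-pos} is needed: it makes each $\vc{x}_i$ a vertex of $\op{conv}(\{\vc{x}_1,\ldots,\vc{x}_p,\vc{y}\})$, hence an extreme point of the sub-polytope $\Kc=\op{conv}(\mt{X})$; an extreme point of $\Kc$ that happens to lie in a face $\cal{F}(\vc{y})$ of $\Kc$ is automatically an extreme point, i.e.\ a vertex, of that face. This gives $\vc{x}_i\in\op{ver}(\cal{F}(\vc{y}))$ for every $i\in\op{supp}(\vc{\beta}^\star)$, as claimed. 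I do not expect a genuine obstacle in this proof; the one step that needs care is precisely this last one, since without a hypothesis such as \autoref{condn:ver-pos} (the weaker \autoref{lem:on-boundary} only places a supported $\vc{x}_i$ on the boundary of $\op{conv}(\mt{X})$, not necessarily at a vertex) a supported column could lie on a positive-dimensional face and the statement as phrased would fail.
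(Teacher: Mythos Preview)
Your proof is correct and follows essentially the same complementary-slackness argument as the paper: both identify the optimal value with $\gauge(\vc{y};\Kc)$, write $\vc{y}/\gauge(\vc{y};\Kc)$ as a convex combination of the $\vc{x}_i$'s, and pair with the active facet normals $\vc{h}_j$ to force equality on the support. You are in fact slightly more careful than the paper's own proof—ranging over all $j\in\cal{J}(\vc{y})$ (not just $\cal{J}_1(\vc{y})$) to land exactly in $\cal{H}(\vc{y})\cap\Kc=\cal{F}(\vc{y})$, and explicitly flagging that \autoref{condn:ver-pos} (or at least that each $\vc{x}_i$ be a vertex of $\op{conv}(\mt{X})$) is what upgrades membership in $\cal{F}(\vc{y})$ to membership in $\op{ver}(\cal{F}(\vc{y}))$, a step the paper leaves implicit.
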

\begin{proof}[Proof of \autoref{lem:constrained-Fy-supp}]
For the optimization problem to be feasible and have a nonzero solution, we need $\vc{0}\neq \vc{y}\in \op{pos}(\Kc)$. Then, consider $\cal{F}(\vc{y})$ and any $i\in \cal{J}_1(\vc{y})$ which is nonempty by \autoref{lem:J1y-nonempty}. Since $\vc{y}/\gauge(\vc{y}; \op{conv}(\mt{X})) \in \op{conv}(\mt{X})$, we have  
\begin{align*}
b_i \cdot \gauge(\vc{y}; \op{conv}(\mt{X})) 
= \langle \vc{h}_i , \vc{y}/ \gauge(\vc{y}; \op{conv}(\mt{X})) \rangle  \cdot \gauge(\vc{y}; \op{conv}(\mt{X})) \qquad\qquad\\
= \langle \vc{h}_i , \vc{y} \rangle  
= \langle \vc{h}_i , \mt{X}\vc{\beta}^\star \rangle  
\leq (\vc{1}^T\vc{\beta}^\star) \cdot \max_{j\in[p]} \langle \vc{h}_i , \vc{x}_j \rangle 
\leq  (\vc{1}^T\vc{\beta}^\star) \cdot b_i, 
\end{align*}
for any optimal solution $\vc{\beta}^\star$. 
Since the right and left hand sides are equal, the inequalities should hold with equality which requires $\beta_i^\star =0 $ whenever $\langle \vc{h}_i , \vc{x}_j \rangle < b_i$; i.e., whenever $\vc{x}_j\not\in\op{ver}(\cal{F}(\vc{y}))$. 
\end{proof}

\begin{theorem}\label{thm:constrained}
Given $\mt{X} = \begin{bmatrix} \vc{x}_1 & \cdots & \vc{x}_p \end{bmatrix}\in\mathbb{R}^{n\times p}$ and $\vc{y}\in\mathbb{R}^n$, assume that $\mt{X}$ and $\alpha\vc{y}$ satisfy \autoref{condn:ver-pos}, for some $\alpha>1/\gauge(\vc{y};\op{conv}(\mt{X}))$. Alternatively, if $\vc{0}\in\op{conv}(\mt{X})$, assume that $\mt{X}$ and $\alpha\vc{y}$ satisfy \autoref{condn:ver-pos}, for some $\alpha>0$.
Consider the convex cone 
\[
\T \coloneqq 
\bigl\{ \sum_{i=1}^p \lambda_i (\vc{x}_i - \alpha \vc{y} ) :~ \lambda_i \geq 0   \bigr\}. 
\]
For any optimal solution $\vc{\beta}^\star$, if $\beta_i^\star\neq 0$ then $\vc{x}_i - \alpha\vc{y}$ is an extreme ray of $\T$. 
\end{theorem}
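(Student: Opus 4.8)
The plan is to reduce everything to the already-proved machinery of Section~\ref{sec:originT}, specifically \autoref{lem:constrained-Fy-supp} together with the inclusion $\op{ver}(\cal{F}(\vc{y}))\subseteq\msh(\vc{y})$ from \autoref{lem:J1y-nonempty} and the extreme-ray statement in \autoref{lem:extrays-union-faces}. Take $\Kc\coloneqq\op{conv}(\mt{X})$ (adding $\vc{0}$ as a column when $\vc{0}\notin\Kc$, which does not change \eqref{eq:minL1-constrained-pos}; see the discussion after \autoref{lem:on-boundary}). The first step is the scaling reduction via \autoref{lem:perspective-props}: epi-multiplying the objective of \eqref{eq:minL1-constrained-pos} — equivalently the loss $f(\vc{u})=\indic(\vc{u};\{\vc{y}\})$ — by $\alpha>0$ sends the feasible set $\mt{X}\vc{\beta}=\vc{y}$ to $\mt{X}\vc{\beta}=\alpha\vc{y}$ while preserving the support of every optimal solution. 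Hence it suffices to prove the claim with $\alpha=1$, i.e.\ to show that for any optimal $\vc{\beta}^\star$ of \eqref{eq:minL1-constrained-pos}, $\beta_i^\star\neq0$ forces $\vc{x}_i-\vc{y}$ to be an extreme ray of $\T(\vc{y})=\op{pos}(\Kc-\{\vc{y}\})$.

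Next I would feed the optimal $\vc{\beta}^\star$ through \autoref{lem:constrained-Fy-supp}: since the problem has a nonzero solution, $\vc{0}\neq\vc{y}\in\op{pos}(\Kc)$, and any $i$ with $\beta_i^\star\neq0$ satisfies $\vc{x}_i\in\op{ver}(\cal{F}(\vc{y}))$. By \autoref{lem:J1y-nonempty} we have $\op{ver}(\cal{F}(\vc{y}))\subseteq\msh(\vc{y})$, so $\vc{x}_i\in\msh(\vc{y})$. It remains to invoke \autoref{lem:extrays-union-faces}, which says exactly that for every $\vc{x}\in\msh(\vc{y})$ the direction $\vc{x}-\vc{y}$ is an extreme ray of $\T(\vc{y})$ — provided its two hypotheses hold: (a) $\gauge(\vc{y};\Kc)>1$, and (b) the vertex-non-cover identity $\op{ver}(\op{conv}(\Kc\cup\{\vc{y}\}))=\op{ver}(\Kc)\cup\{\vc{y}\}$. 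Hypothesis (b) is precisely \autoref{condn:ver-pos} applied to $\mt{X}$ and $\vc{y}$ (after the $\alpha=1$ normalization this is what we assumed), once we note $\op{ver}(\op{conv}(\mt{X}))$ may be a proper subset of the columns but that only strengthens what we need. For (a): when $\vc{0}\in\Kc$, $\vc{y}\notin\Kc$ is equivalent to $\gauge(\vc{y};\Kc)>1$, which covers the "alternative" clause; in the general clause the hypothesis is stated directly as $\alpha>1/\gauge(\vc{y};\op{conv}(\mt{X}))$, i.e.\ $\gauge(\alpha\vc{y};\Kc)>1$, which is exactly condition (a) for the scaled point. Chaining these gives $\vc{x}_i-\vc{y}\in\op{ext\,ray}\T(\vc{y})$, and undoing the scaling completes the proof.

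The main obstacle — and the one deserving care — is the bookkeeping around the normalization factor $\alpha$ and the two separate hypothesis regimes ("$\vc{0}\in\op{conv}(\mt{X})$" versus not). One must check that \autoref{lem:perspective-props} applies to the indicator loss $f(\vc{u})=\indic(\vc{u};\{\vc{y}\})$ (it does: $f$ is positively-homogeneity-compatible in the sense needed, and $(\alpha\star f)=\indic(\cdot;\{\alpha\vc{y}\})$), and that the condition "$\alpha>1/\gauge(\vc{y};\op{conv}(\mt{X}))$" is the right translation of "$\gauge(\alpha\vc{y};\Kc)>1$" even when $\vc{0}$ is not automatically inside $\op{conv}(\mt{X})$. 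A secondary subtlety is that \autoref{condn:ver-pos} is stated for the full column list $\{\vc{x}_1,\dots,\vc{x}_p,\vc{y}\}$ rather than for $\op{ver}(\Kc)\cup\{\vc{y}\}$, so one should remark that under \autoref{condn:ver-pos} all columns are in fact vertices, reconciling the notation with \autoref{def:K-defs} and \autoref{lem:extrays-union-faces}. Everything else is a direct citation of the lemmas already in hand.
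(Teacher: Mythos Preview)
Your proposal is correct and follows essentially the same route as the paper's own proof: reduce by scaling (the paper simply observes that scaling $\vc{y}$ in \eqref{eq:minL1-constrained-pos} preserves supports, rather than invoking \autoref{lem:perspective-props}), then chain \autoref{lem:constrained-Fy-supp} $\to$ \autoref{lem:J1y-nonempty} $\to$ \autoref{lem:extrays-union-faces}. Your extra bookkeeping on the two hypothesis regimes and the vertex identity is sound; the one unnecessary step is adjoining $\vc{0}$ as a column, which the paper does not do and which you do not actually need for the argument.
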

Note that forming $\cal{F}(\vc{y})$ is equivalent to solving \eqref{eq:minL1-constrained-pos} while $\T$ provides a computationally appealing alternative.
\begin{proof}[Proof \autoref{thm:constrained}]
\autoref{condn:ver-pos} does not preclude $\gauge(\vc{y}; \op{conv}(\mt{X}))<1$. The condition on $\alpha$ is to ensure that $\vc{y}$ is not `between' the origin and $\op{conv}(\mt{X})$. 
On the other hand, observe that scaling $\vc{y}$ in \eqref{eq:minL1-constrained-pos} by a positive number does not change the support of any of the optimal solutions. Now, \autoref{lem:J1y-nonempty} establishes $\op{ver}(\cal{F}(\vc{y}))\subseteq \msh(\vc{y})$. Therefore, applying \autoref{lem:extrays-union-faces} establishes the claim. 
\end{proof}
As mentioned above, the special case where all $\vc{x}_i$ and $\vc{y}$ are distinct and lie on the unit sphere was originally proved in \cite[Supplementary Material; Lemma 17]{jalali2017subspace}. Note that their proof for \eqref{eq:minL1-constrained-pos} can be simply generalized to allow for unnormalized $\mt{X}$ and $\vc{y}$, similar to the proof of \autoref{thm:main-detailed}, and without using the results of \autoref{sec:originT}.

\subsubsection{Monotone Left Unitarily Invariant Regularization} 
The procedure in Section 5.2 
of \cite{jalali2017variational} considers a regularized loss minimization problem, similar to \eqref{prob:loss-reg}, where the latent variable is a matrix (instead of a vector.) When the regularization function is left unitarily invariant and monotone with respect to the Loewner order (as all variational Gram functions \cite{jalali2017variational} are), then it is possible to certify {\em zero rows} in all of the optimal solutions (after a rotation, derived from the datapoints). Suppose the loss is defined on $\mathbb{R}^n$, the latent variable lives in $\mathbb{R}^{p\times m}$, and the measurements linearly map the latent variable to $n$ real numbers. The reduction in \cite{jalali2017variational} allows for reducing the number of rows of the latent variable from $p$ to $\min\{p, mn\}$. This reduction is unrelated to any other properties of the regularization or the loss other than those mentioned above (hence applies to all such problems), is exact, and can be done before any optimization is being performed. Therefore, it fits into the framework of persistent reductions described in this work. The subspace $S$ in this case is being derived via a QR decomposition applied to a certain matrix constructed from the data; see \cite[Section 5]{jalali2017variational} for details.


\subsection{Future Directions}\label{sec:future}
There seems to be room to generalize or apply the current results in a few directions as discussed next. 

It seems to us that the gauge assumption on the regularization could be relaxed (e.g., to gauge-like regularization). 
Moreover, it seems that the presented interval for $\eta$ can be expanded via alternative proof techniques; we also provide some concrete evidence in \autoref{rem:eta-bound}. 
Furthermore, we believe the discussion in \autoref{sec:originT} could be helpful in extending the results of this paper to regularization with non-polyhedral gauges.  
Lastly, it would be very interesting to quantify how deviation from the assumption in \autoref{condn:ver-pos} affects the outcomes. For example, consider attaching a proxy (could be in relation to convex layers \cite{Ottmann1995Enumerating} of $\T$) to each $\vc{x}_i$ which indicates how far the point lies inside $\T$; extreme rays will be assigned $0$. Then, we ask whether it is possible to use these (theoretical) proxies to provide a robust version of the guarantee provided in this paper. Such guarantee might be useful in devising fast approximation algorithms. 

On the other hand, 
the insight provided in this paper might be beneficial in designing better data augmentations (see \cite[Section 3.2.2]{she2010sparse} for the terminology.) 
Moreover, instead of using extreme ray identification for reduction {\em before} optimization, as in \autoref{proc:pseudo}, it might be possible to make efficient use of our result in designing active/working set methods for optimization. 
Finally, fast implementations of the proposed procedure and numerical comparisons with existing screening methods are of interest.

\subsection{Connections}\label{sec:connections}
In this section, we review a few major lines of work with connections to the focus of this paper and discuss similarities and differences.

\para 
{\em Dimensionality reduction} techniques have been long employed for reducing the dimensionality of the input data while retaining useful aspects of it. When employed before a particular downstream task, one often is careful about the effect of such linear or nonlinear transformation (from dimensionality reduction) on the output of the task at hand; e.g., how would a low-distortion embedding affect the outcome of a downstream clustering task. However, these dimensionality reduction techniques adhere to their own optimality criteria and it is rare that they do not distort the downstream estimation. On the other hand, in this terminology, our work can be viewed as an exact dimensionality reduction procedure specifically designed for a broad family of downstream tasks, namely a broad class of regularized regression problems for variable selection.

\para 
{\em Geometric data summarization} techniques, specifically {\em coresets} \cite{badoiu2002approximate, agarwal2002approximation}, are close to our approach and are designed for the specific downstream tasks, but they only aim for a good approximation; \cite{dasgupta2009sampling,clarkson2010coresets, jaggi2011sparse, reddi2015communication, Huggins2016Coresets, phillips2016coresets, samadian2020unconditional, tukan2020coresets, feldman2020core} and many more recent works in this area. Other methods, based on optimizing a submodular utility function (e.g., see \cite{balkanski2016learning}), also could be viewed as providing approximations when the utility function does not directly correspond to the downstream task. We view this as an important point of difference between our work and coresets. 
It is worth mentioning that in designing coresets, there is an important emphasis on their size being independent of the size of the input data set. The output-sensitive property of our reduction could be viewed as a counterpart to this property, although certifying an actual independence from the size of the data is not possible without assuming a generative model for the data.

\para 
{\em `Safe' screening} methods on the other hand aim for identifying subsets of the futures where none of the active features could be discarded; \cite{Ghaoui-feature-elim,xiang2017screening}. 
While screening methods in general (e.g., see \cite{fan2010selective} and references therein), and ``sure'' screening \cite{fan2008sure} as a successful example, aim for a goal similar to ours in this paper, the heuristic or probabilistic success of most of these methods distinguishes them from our approach.

Our reduction procedure has important differences with many safe screening methods, especially primal-dual methods. Unlike these methods, 
we can choose to screen for {\em any feature}, by performing a conic hull membership test (e.g., solving a linear feasibility program), we can handle all values of $\eta$ below an explicit threshold (see see \cite[Section~7]{xiang2017screening} for performance of screening methods with small $\eta$), and we have an output-sensitive guarantee for termination. Moreover, we can work with streaming features and we do not rely on the existence of a duality framework. 
Admittedly, our superset of the optimal support (corresponding to the set of extreme rays) could be loose when $p$ is not much larger than $n$, we cannot guarantee a persistent reduction for large values of $\eta$ in general problems (see the remarks right after \autoref{thm:main}), and the iteration cost for our screening procedure is higher than many of the duality-based screening methods. 
Furthermore, since our reduction is completely disentangled from any iterative optimization of the given problem, it is applicable to ultra-high dimensional problems where full-scale optimization (e.g., as required in safe elimination) is computationally prohibitive. Moreover, our proofs have allowed us to work with a broad class of non-convex, non-smooth, extended real-valued, and discontinuous loss functions, which would not have been possible had we required the use of Lagrangian duality (as in safe elimination.)

\para 
{\em Persistent relaxations} in integer programming and theoretical computer science, refer to relaxations yielding exact (as opposed to approximate) information on the true solution of the original problem. 
Our main result resembles persistency results in integer programming literature. As an example, \cite{nemhauser1975vertex} mentions ``Our most striking result is that those variables which assume binary values in an optimum (VLP) solution retain the same values in an optimum (VP) solution.'' 
\cite[Section 4]{hammer1984roof} mentions ``The practical significance of persistency as a tool for reducing the problem size is enhanced by the fact\dots that best roofs can be computed in polynomial time''. 
However, this notion of persistency is commonly studied as a property of `relaxations' to hard computational problems. Here, on the other hand, we study persistency in relation to a reduction/simplification of the original problem, helping with the runtime and memory requirements for optimization.

\para 
{\em Persistency} or risk consistency in statistical estimation, on a high level, is however different from our focus. We focus on identifying a true superset of the optimal supports while risk consistency is concerned with the optimal value in relation to the choices in the estimator.

\para 
{\em Geometry of the data}
has been studied in the statistics literature in its effect on estimation procedures. For example, \cite{el2018impact} and related works have studied `non-spherical' setups by assuming elliptic distribution for {\em predictors}. Also see \cite{diaconis1984asymptotics, hall2005geometric}. On the other hand, standardization (with various definitions), very common in regression, has an important effect on the geometry of datapoints (in $\mathbb{R}^n$). Our work is related to the latter category as it is clear from our main building block in \autoref{condn:ver-pos}. However, in contrast with many existing works, we do not make any probabilistic assumption on the data.


\bigskip

On one hand, and as partially discussed above, our result shares many properties with the aforementioned lines of work. 
On the other hand, we see our result as aligned more with understanding ``optimality conditions'' for regularized loss minimization problems (see the introductions in \cite{rockafellar1993lagrange,poliquin1998tilt} for discussions related to our goal), enabling a ``persistent'' reduction for these optimization problems. 
In dealing with any of the problems in this class and before performing any optimization or pre-processing, an extreme ray identification subroutine could be employed on the data to reduce the problem size without discarding any of the yet-unknown optimal set of variables. 
Persistent reductions could be viewed as a middle ground in between computing closed-form solutions (e.g., for ridge regression) and the invoking of general-purpose iterative optimization algorithms (e.g., gradient descent).

\paragraph{Acknowledgement.} 
We would like to thank Thomas Rothvoss for pointing out the resemblance between our superset guarantees and the half-integrality guarantees in the minimum vertex cover problem. This led us to the literature on persistency in integer programming mentioned in \autoref{sec:connections}. 


\newpage
\bibliographystyle{alpha}
\bibliography{PR-VS}

\appendix

\pagebreak
\section{Some Examples}\label{sec:facts}

\paragraph{Subdifferential Regularity.}
Here, we provide a non-exhaustive list of examples that are subdifferentially regular.

\begin{itemize}
\item
A proper convex function $f:\mathbb{R}^n\to \Rext$ is subdifferentially regular at any point in the domain of $f$ where it is locally lsc; \cite[Example 7.27]{rockafellar2009variational}. 

\item 
The maximum of a finite collection of (extended-value) smooth functions defined over a set $C$ is regular wherever $C$ is Clarke regular. Therefore, any smooth function is regular; \cite[Example 7.28]{rockafellar2009variational}. 

\item
Sum of separable functions, each of which regular, is regular; \cite[Proposition 10.5]{rockafellar2009variational}. 

\item 
See \cite[Theorem 10.6]{rockafellar2009variational} for regularity of compositions. As an example, consider $f = g \circ F$ for a proper, lsc function $g:\mathbb{R}^m \to \Rext$ and a smooth mapping $F:\mathbb{R}^n \to \mathbb{R}^m$, and consider a point $\vc{u}$ where $f$ is finite and the Jacobian $\nabla F(\vc{u})$ has rank $m$. Then, $f$ is regular at $\vc{u}$ if and only if $g$ is regular at $F(\vc{u})$; \cite[Exercise 10.7]{rockafellar2009variational}. 

\item 
Amenability implies regularity; \cite[Section 10.F]{rockafellar2009variational}. 

\item 
Also see \cite[Figure 8.2]{rockafellar2009variational} (an example of a discontinuous function) and \cite[Figure 8.3]{rockafellar2009variational} (on domain $x>-1$). 
See \cite{walther2019characterizing} for some further discussions on regularity.

\end{itemize}

\paragraph{Star-convexity.}
Any positively homogeneous function of any order $k\geq 1$ is star-convex with respect to the origin. Applying such function to a vector of star-convex function with the same center provides another star-convex function with the same center. For example, \cite{lee2016optimizing} mentions a few examples of this (without mentioning the more general statement) along with many more examples in \cite[Appendix A, B]{lee2016optimizing}\footnote{\url{https://arxiv.org/abs/1511.04466}}: 

\begin{itemize}

\item For any star-convex functions $f, g$, with global minima $f(\vc{0}) = g(\vc{0}) = 0$, the function $h(\vc{u}) = ( f(\vc{u})^q + g(\vc{u})^q )^{1/q}$ is star-convex, for any $q\in\mathbb{R}$; for $q=0$, $h(\vc{u}) = \sqrt{f(\vc{u})  \cdot g(\vc{u})}$. 

\item $\ell_q$ norm for any $q>0$; e.g., $f(x_1,x_2) = (\sqrt{\abs{x_1}}+ \sqrt{\abs{x_2}})^2$.

\item $h(\vc{u})= \norm{\vc{u}}_\alpha \cdot g( \vc{u}/ \norm{\vc{u}}_\beta)$ for any positive function $g$ defined on the boundary of the unit-norm ball for $\norm{\cdot}_\beta$.  
 
\end{itemize}


\section{On the True Interval for $\eta$}\label{rem:eta-bound}
\autoref{thm:main-detailed} relies on the existence of a set of parameters ($\psilow, \gamma, \dots$) which satisfy certain inequalities involving $\eta$; \autoref{condn:eta} provides a cleaner picture of this effect (in a less general situation.) 
Therefore, in general, our choice of parameters, which is a consequence of how well we understand the loss function at hand, determines how powerful the final guarantee will be. In many cases, we end up picking a convenient set of parameters since optimizing these intervals (requirements for $\eta$) is not easy. Nonetheless, this possibility of improvement provides hope for applicability of our reduction in situations where $\eta$ is above the upper bound in the theorem.

In the following, we consider the special case of least-squares loss and show that the true requirement on $\eta$ is looser than what we present in \autoref{thm:LS-gendata}. 
As it can be seen from the following calculations, while we have a convenient choice for the parameters unrelated to any properties of the data ($\mt{X}$ and $\vc{y}$), it might be possible to make better choices and provide stronger guarantees if we exploit properties of the data in our proofs.

Consider $f(\vc{u}) = \norm{\vc{u}-\vc{y}}_2^2$ where $\psilow=0$. Let us slightly change the notation to simplify the following presentation. For any choice of $\theta>0$ and $\vc{v}\in\mathbb{R}^n$, consider \autoref{condn:f-bnd} with $\gamma = 1/(\theta+1) \in (0,1)$ and $\cal{H} = (1-\gamma)\cdot\partial f(\vc{v})=\{2(1-\gamma)(\vc{v}-\vc{y})\}$. Therefore, $(1+\theta)H(\vc{u}) = 2\theta \langle \vc{u},\vc{v}-\vc{y}\rangle$. Moreover, define $\phi=\psiup-\psilow$ and $\hat{\eta}(\vc{v}) = \etaone/(1-\gamma)=r^\circ(2\mt{X}^T(\vc{y}-\vc{v}))$. With this new notation, $(\phi, \theta, \vc{v})$ is our new set of parameters. 
Let us elaborate on the quantities in \autoref{condn:f-bnd}. 
Define $\etamix \coloneqq (1+\theta) \phi - \theta \hat{\eta}$. In proving our main result in \autoref{thm:main-detailed}, we will need to require
\[
\eta \leq \etabnd \coloneqq \min \{ (1+\theta) \phi - \theta \hat{\eta},  \hat{\eta}\}. 
\] 
Note that both $\etamix$ and $\etabnd$ are functions of our choices for the parameters. Therefore, we seek a set of parameters for which the above upper bound is as large as possible; i.e., as non-restrictive on $\eta$ as possible. 
Consider the problem of maximizing $\etabnd$ over $\theta>0$, $\phi$, and $\vc{v}$. 
Note that the inequality in \autoref{condn:f-bnd} can be equivalently expressed as 
    \[
    \inf_\vc{u} \{ (\theta-1)\norm{\vc{u}}_2^2
    + (\theta+1)\norm{\vc{y}}_2^2
    -2\theta \langle \vc{u},\vc{v}\rangle \} \geq (1+\theta)\phi.  
    \]
    For such $\phi$ to exist we need either $(\theta=1, \vc{v}=\vc{0})$ or $\theta>1$. Let us elaborate on these two cases. 

\paragraph{Case 1.}
	In the first case, with $\theta=1, \vc{v}=\vc{0}$, we get $\phi \leq \norm{\vc{y}}_2^2$. Moreover, $\hat{\eta}(\vc{v}) = r^\circ(2\mt{X}^T\vc{y})$ and 
    \[
    \etamix = (1+\theta) \phi - \theta \hat{\eta}
    \leq 2  \norm{\vc{y}}_2^2 - r^\circ(2\mt{X}^T\vc{y}).
    \]
    Therefore, 
    \begin{align}\label{eq:etabnd-L2sq-case1}
    \etabnd 
    = \min \{ \hat{\eta}, \etamix \}
    \leq \min \{r^\circ(2\mt{X}^T\vc{y}),  
    2  \norm{\vc{y}}_2^2 - r^\circ(2\mt{X}^T\vc{y}) \}.
    \end{align}

\paragraph{Case 2.}
    In the second case, we get $\phi \leq \norm{\vc{y}}_2^2 - \frac{\theta^2}{\theta^2-1}\norm{\vc{v}}_2^2$. 
    Moreover, $\hat{\eta}(\vc{v}) = r^\circ(2\mt{X}^T(\vc{y}-\vc{v}))$ and 
    \begin{align}\label{eq:etamix-L2sq-case2}
    \etamix 
    &= (1+\theta) \phi - \theta \hat{\eta} \nonumber\\
    &\leq (\theta+1)\norm{\vc{y}}_2^2 - \frac{\theta^2}{\theta-1}\norm{\vc{v}}_2^2 
    - \theta  r^\circ(2\mt{X}^T(\vc{y}-\vc{v})) \nonumber\\
    &= (\theta+1)(\norm{\vc{y}}_2^2 - \norm{\vc{v}}_2^2) - \theta  r^\circ(2\mt{X}^T(\vc{y}-\vc{v}))
    - \frac{1}{\theta-1}\norm{\vc{v}}_2^2 .
    \end{align}
    Note that the right hand-side of \eqref{eq:etamix-L2sq-case2} is a concave function over $\theta>1$. 
    We have, 
    \begin{align}\label{eq:etabnd-L2sq-case2}
    \etabnd 
    = \min \{ \hat{\eta}, \etamix \}
    \leq \min \{
    &r^\circ(2\mt{X}^T(\vc{y}-\vc{v})),  \\
    &
    (\theta+1)(\norm{\vc{y}}_2^2 - \norm{\vc{v}}_2^2) - \theta  r^\circ(2\mt{X}^T(\vc{y}-\vc{v}))
    - \frac{1}{\theta-1}\norm{\vc{v}}_2^2   \},\nonumber
    \end{align}
    where the right-hand side of \eqref{eq:etabnd-L2sq-case2}, as a function of $\theta$ and defined over $\theta>1$, is a truncated concave function. 
	Define 
	\[
	\cal{V}\coloneqq \{\vc{v}:~ \norm{\vc{y}}_2^2 \geq \norm{\vc{v}}_2^2 + r^\circ(2\mt{X}^T(\vc{y}-\vc{v})) \, \} \ni \{\vc{y}\},
	\]
    which is a convex set. 
    If $\vc{v}\in\cal{V}$, then it is easy to see that there exists $\theta>1$ for which the two terms in the minimization in \eqref{eq:etabnd-L2sq-case2} become equal. Therefore, if $\vc{v}\in\cal{V}$, the maximum of the right-hand side of \eqref{eq:etabnd-L2sq-case2} over $\theta>1$ is equal to $r^\circ(2\mt{X}^T(\vc{y}-\vc{v}))$. 
    On the other hand, if $\vc{v}\not\in\cal{V}$, the first term in the minimization in \eqref{eq:etabnd-L2sq-case2} is inactive for all $\theta>1$. In that case, the maximum of the second term in the right-hand side of \eqref{eq:etabnd-L2sq-case2} over $\theta>1$, after some algebraic manipulations, is 
    \[
    \norm{\vc{y}}_2^2 - ( \norm{\vc{v}} + [  r^\circ(2\mt{X}^T(\vc{y}-\vc{v})) + \norm{\vc{v}}_2^2 - \norm{\vc{y}}_2^2   ]^{1/2} )^2.
    \]

\paragraph{All in all,} in maximizing the bound on $\etabnd$ over $\theta>0$, we get
\begin{align*}
	\max \etabnd 
	= \max_{\vc{v}\in\mathbb{R}^n} \Big\{ 
		&\min \{r^\circ(2\mt{X}^T\vc{y}), 2  \norm{\vc{y}}_2^2 - r^\circ(2\mt{X}^T\vc{y}) \} ~,~ \\
		&\max_{\vc{v}\in\cal{V}}~ r^\circ(2\mt{X}^T(\vc{y}-\vc{v})) ~,~\\
		&\max_{\vc{v}\not\in\cal{V}}~ \norm{\vc{y}}_2^2 - ( \norm{\vc{v}}_2 + [  r^\circ(2\mt{X}^T(\vc{y}-\vc{v})) + \norm{\vc{v}}_2^2 - \norm{\vc{y}}_2^2   ]^{1/2} )^2
	\Big\}.
\end{align*}
It is easy to see that if $\vc{0}\in\cal{V}$ (if $\vc{0}\not\in\cal{V}$), then the second (third) term in the maximization renders the first one inactive. Therefore,     
\begin{align}\label{eq:eta-bound-L2sq}
	\max \etabnd 
	= \max_{\vc{v}\in\mathbb{R}^n} \Big\{ 
		&\max_{\vc{v}\in\cal{V}}~ r^\circ(2\mt{X}^T(\vc{y}-\vc{v})) ~,~\\
		&\norm{\vc{y}}_2^2 - \min_{\vc{v}\not\in\cal{V}}~  (\norm{\vc{v}}_2 + \left[  r^\circ(2\mt{X}^T(\vc{y}-\vc{v})) + \norm{\vc{v}}_2^2 - \norm{\vc{y}}_2^2   \right]^{1/2} )^2
	\Big\}. \nonumber
\end{align}
In our main theorem, we suffice to a non-optimal bound resulting from the choice of $\vc{v}=\vc{0}$ and $\theta=1$; see \eqref{eq:etabnd-L2sq-case1}. Note that choosing $\vc{v}=\vc{y}$ leads to $\phi<0$ which is not useful. However, as it is clear from the above, a better choice for $(\theta,\vc{v})$, equivalently for $(\gamma, \cal{H})$ is possible.

In \autoref{fig:bestv-LS-n1}, we consider $n=1$, $x=1$, and $y=5$, and plot the corresponding value for each $v\in[-5,6]$ from \eqref{eq:eta-bound-L2sq}. Note that $\cal{V} = \{v:~ \abs{v-1}\leq \abs{y-1}\}$. The maximum is $16$ and is achieved at $v=1-\abs{y-1}$ which is different from $v=0$ we can conveniently choose together with $\theta=1$. 

\begin{figure}
	\centering
	\includegraphics[width=.27\textwidth]{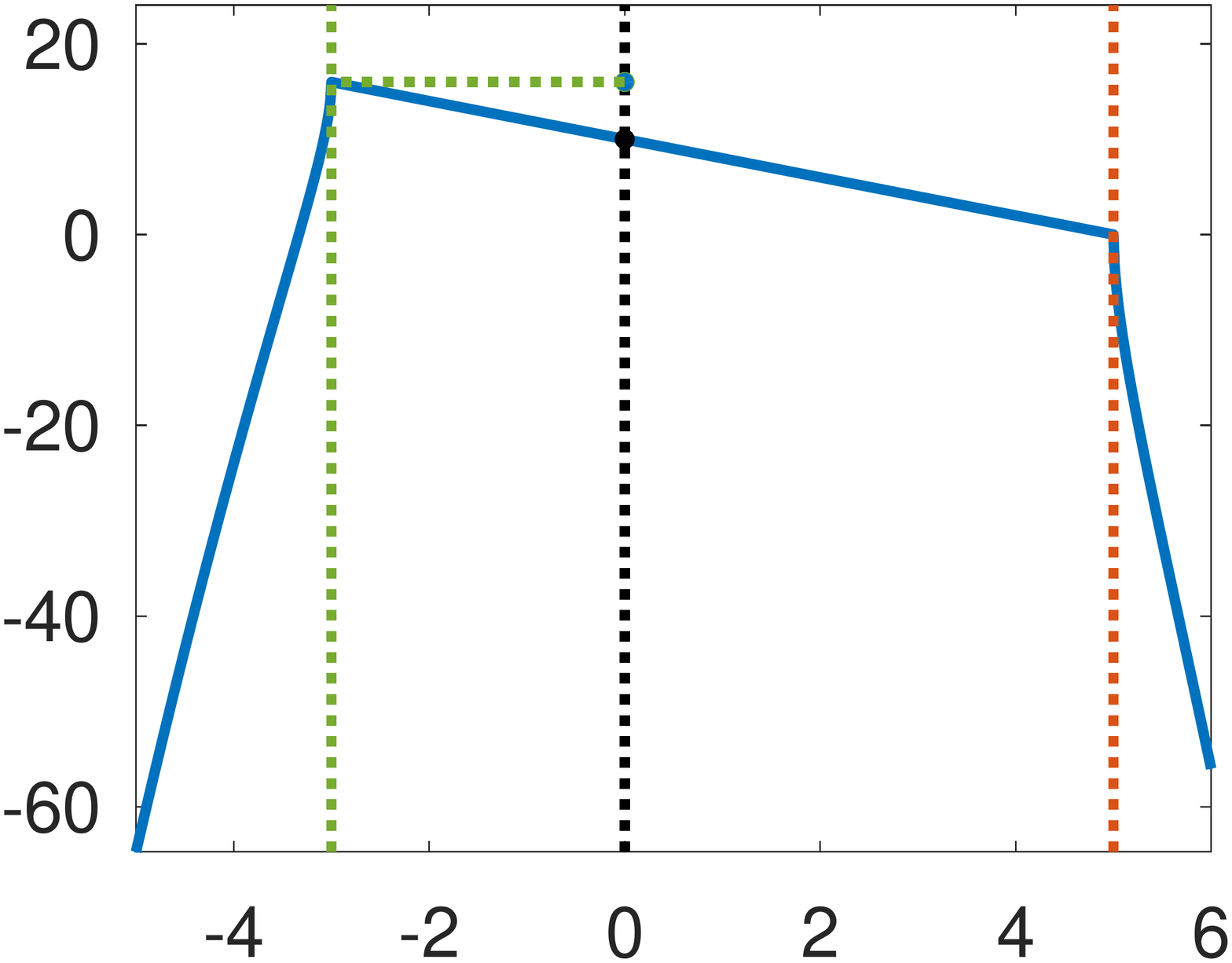}
	\caption{An illustration of the objective in \eqref{eq:eta-bound-L2sq} with $n=1, x=1, y=5$. The horizontal axis denotes $v$. The easy bound ($10$) and the best bound ($16$) have been marked on the vertical axis.}
	\label{fig:bestv-LS-n1}
\end{figure}


\pagebreak
\section{Details for \autoref{plt:sph-compare-etacv-add}}\label{sec:exp-eta-cv-details}
In \autoref{plt:sph-compare-etacv-add}, we consider values $p\in \op{round}(\{1.1^{50}, 1.1^{55}, \ldots, 1.1^{100}\})$ and $n\in \{18,21,\ldots,75\}$ where $\op{round}(\cdot)$ rounds each element of the set to its nearest integer. 

In the first three experiments, for each pair $(n,p)$, in each of the $1000$ random trials, we draw the columns of $\mt{X}$ independently from $N(\vc{0},\mt{I}_n)$. We also draw a noise vector $\vc{\epsilon}$ from $N(\vc{0},\mt{I}_n)$, and a vector $\vc{\beta}\in\mathbb{R}^p$ with $k=\op{round}(\sqrt{p})$ nonzero entries from $N(\vc{0},\mt{I}_k)$, all independent of $\mt{X}$. We normalized $\vc{\beta}$ to have a unit $\ell_2$ norm. We then form $\vc{y} = \mt{X}\vc{\beta}+ \sigma \vc{\epsilon}$ for $\sigma\in\{0.01, 0.1, 1\}$. 
In the fourth experiment, we sample $\vc{y}$ from $N(\vc{0},\mt{I}_n)$, and independent of $\mt{X}$. Then, in all four experiments, we normalize the columns of $\mt{X}$ as well as $\vc{y}$ to have unit $\ell_2$ norm. 
For each instance $(\mt{X},\vc{y})$, we execute the following command in software package R (\texttt{glmnet}, version 4.0-2),
\begin{verbatim}
	cvfit = (cv.glmnet(X, y, alpha = 1, standardize = FALSE, intercept = FALSE, 
                         nlambda = 100, type.measure = `mse', nfolds = 5))
\end{verbatim}
to perform a $5$-fold cross-validation. Due to a difference in constants in the formulation of \texttt{glmnet} and ours, we multiply the optimal output \texttt{cvfit\$lambda.min} by $2n$ to get the cross-validated $\eta_{\rm cv}$. \autoref{plt:sph-compare-etacv-add} illustrates regions of $(n,p)$ for which $\eta_{\rm cv}$ falls within the requirements of \autoref{thm:main-detailed}; the level of black in each pixel corresponds to the fraction of cases in which $\eta_{\rm cv}\leq 2-2\norm{\vc{y}^T\mt{X}}_\infty$; i.e., $\eta_{\rm cv}$ satisfies our requirements.

A few remarks are in order: 

\begin{itemize}

\item It is easy to see that the support to optimal solutions of \eqref{eq:lasso-pos} does not change if both $\vc{y}$ and $\eta$ are multiplied by a scalar $\theta>0$. Observe that $0 < \eta < 2\norm{\vc{y}^T\mt{X}}_\infty$ is equivalent to $0< \theta\eta < 2\norm{ (\theta\vc{y})^T\mt{X}}_\infty$. Therefore, the output of the above command and this experiment stays the same if we input $\vc{y}$ or $\theta\vc{y}$. Therefore, we have made the choice to standardize $\vc{y}$ before running the command. Note that \eqref{eq:cond-eta-thm1} is also invariant under the aforementioned transformation of the problem by $\eta$. 

\item
Similar experiments can be performed to compare our condition on $\eta$ in \autoref{thm:main} with $\eta$ chosen by other methods beyond $K$-fold cross-validation. For example, see the introduction in \cite{chetverikov2020cross} for some pointers to other methods. 

\item
There exist results in the literature on properties of the cross-validated regularization parameter that may aid our comparisons in stead of numerical simulations; e.g., see \cite{arlot2010survey}. However, reviewing such results is out of the scope of this manuscript and we suffice to this numerical demonstration. 

\item 

One can go beyond the comparison of $\eta_{\rm cv}$ and our threshold and compare the support of final solutions from cross-validation with the claimed superset of the supports in our theorem. As we show for one example of a loss, namely the least-squares loss, in \autoref{rem:eta-bound}, the true upper bound on $\eta$ (given in \eqref{eq:eta-bound-L2sq}) is larger than the simple upper bound given in the theorem. More specifically, in proving the theorem, we have chosen to work with a few convenient parameters while better choices may exist. Therefore, it would be informative to directly look at the optimal solution, from cross validation, and compare its support with the set of extreme rays, especially with real data. We leave such comparisons to future work. 

\end{itemize}

The boundary in the left-most plot in \autoref{plt:sph-compare-etacv-add}, corresponding to a low noise regime, appear to follow a behavior of $p= \exp(O(n^\kappa))$ for some $\kappa>1$. In the noiseless or low-noise regime, the only parameter we have is $k$, the sparsity of $\vc{\beta}$. Therefore, in \autoref{plt:sph-compare-etacv-lownoise}, we examine the effect of $k$. In \autoref{plt:sph-compare-etacv-add}, we use $k=\op{round}(\sqrt{p})$ while in \autoref{plt:sph-compare-etacv-lownoise} we use $k=20$ and $k=80$. Observe that the chance of $\eta_{\rm cv}$ falling into the requirements of our theorem slightly decreases as $k$ increases. 

\begin{figure}[h!]
    \centering
	    \begin{tikzpicture}
	  		\node[inner sep=0pt] (A)  {\includegraphics[width=.25\textwidth]{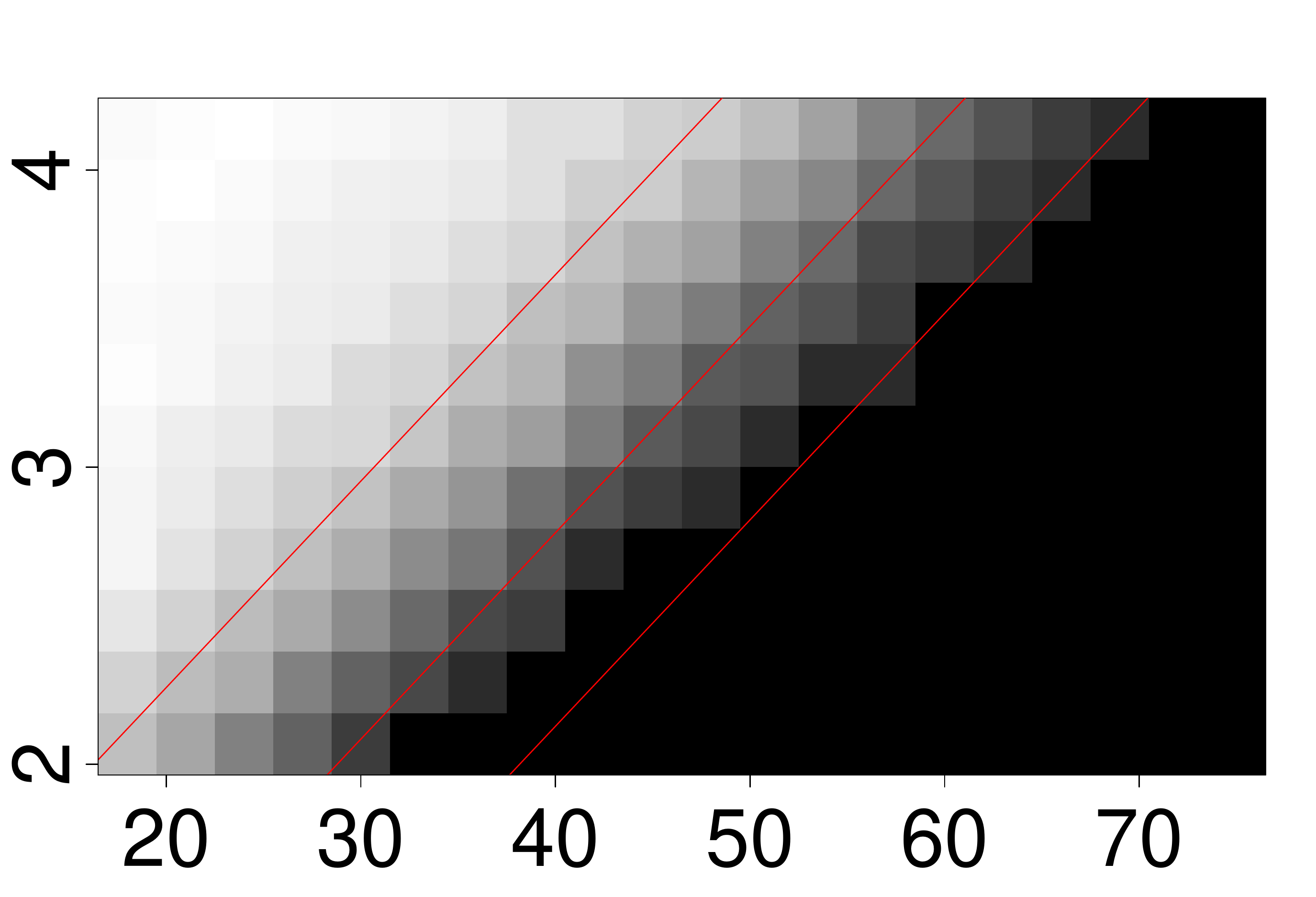}};
	  		\node[xshift=2.5pt,yshift=-14pt] (B) at ($(A.south)!.1!(A.north)$) {\small $n$};
			\node[rotate=90,yshift=15pt] (C) at ($(A.west)!.05!(A.east)$) {\small $\log_{10}(p)$};
	 	\end{tikzpicture}	~
	    \begin{tikzpicture}
	  		\node[inner sep=0pt] (A)  {\includegraphics[width=.25\textwidth]{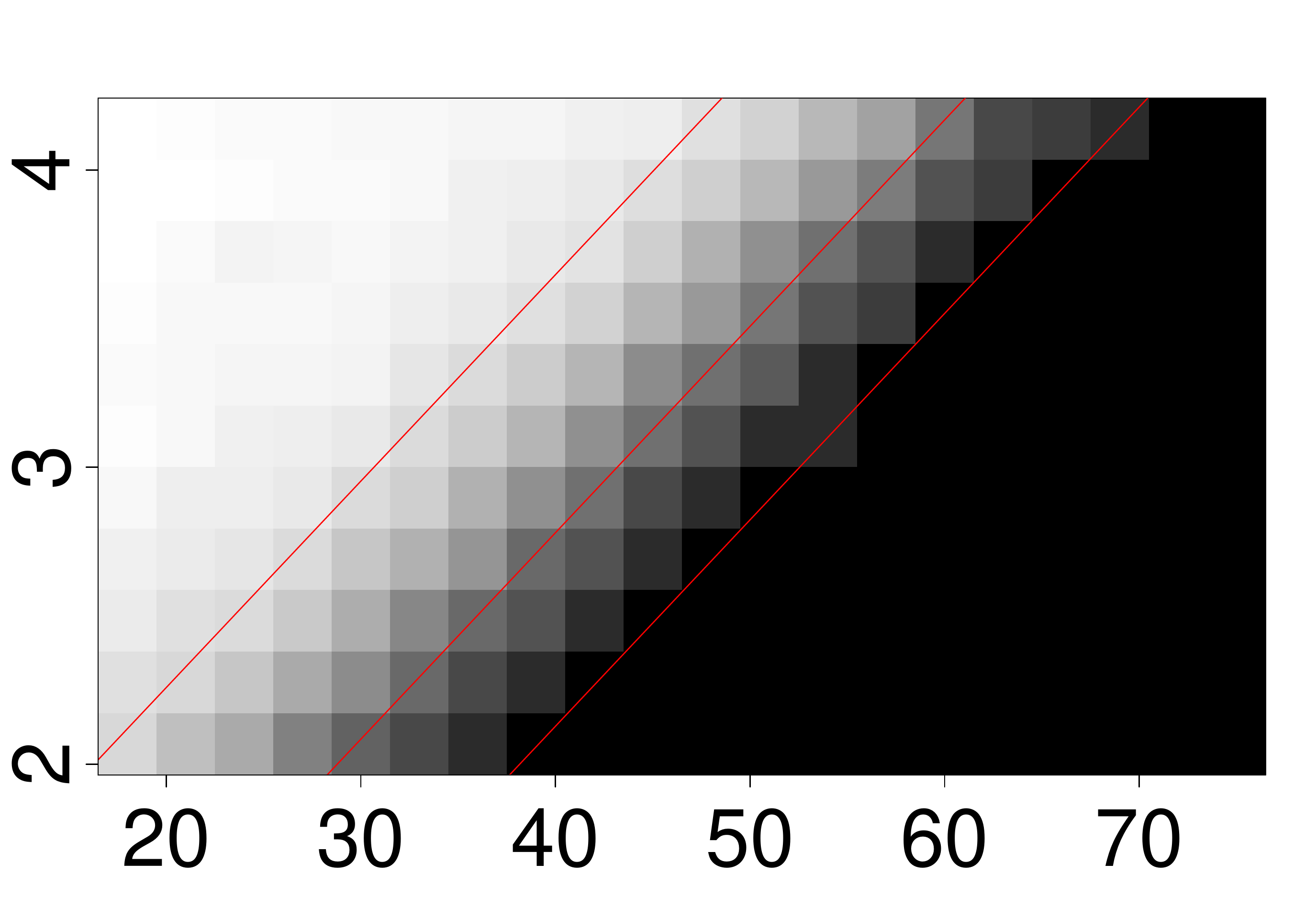}};
	  		\node[xshift=2.5pt,yshift=-14pt] (B) at ($(A.south)!.1!(A.north)$) {\small $n$};
	 	\end{tikzpicture}	
	\caption{The percentage of cases, out of $1000$ random trials, where the $5$-fold cross-validated $\eta_{\rm cv}$ for lasso problem satisfies the requirement of our theorem. Black corresponds to $1$ and white corresponds to $0$. The horizontal and vertical axes correspond to $n$ and $\log_{10}(p)$, respectively. We use $\vc{y} = \mt{X}\vc{\beta}$ and we normalize the columns of $\mt{X}$ and $\vc{y}$ before being fed into the lasso solver. In the left (right) picture $\vc{\beta}$ has $k=20$ ($k=80$) nonzero entries, and is drawn uniformly from the unit sphere.}
	\label{plt:sph-compare-etacv-lownoise}
\end{figure}

\section{Some Illustrations for \autoref{sec:originT}}\label{sec:illust}

In this section, we provide a few illustrations regarding the results of \autoref{sec:originT}, ans specifically in relation to the summary in \autoref{cor:Fy-subset-T-ifnoncover}.

\begin{figure}[h]
	\centering
	\begin{tikzpicture}
	\node[inner sep=0pt] (c1p) at (0,0)
	{\includegraphics[clip, trim=9cm 3.5cm 6.5cm 5cm, width=.3\textwidth]{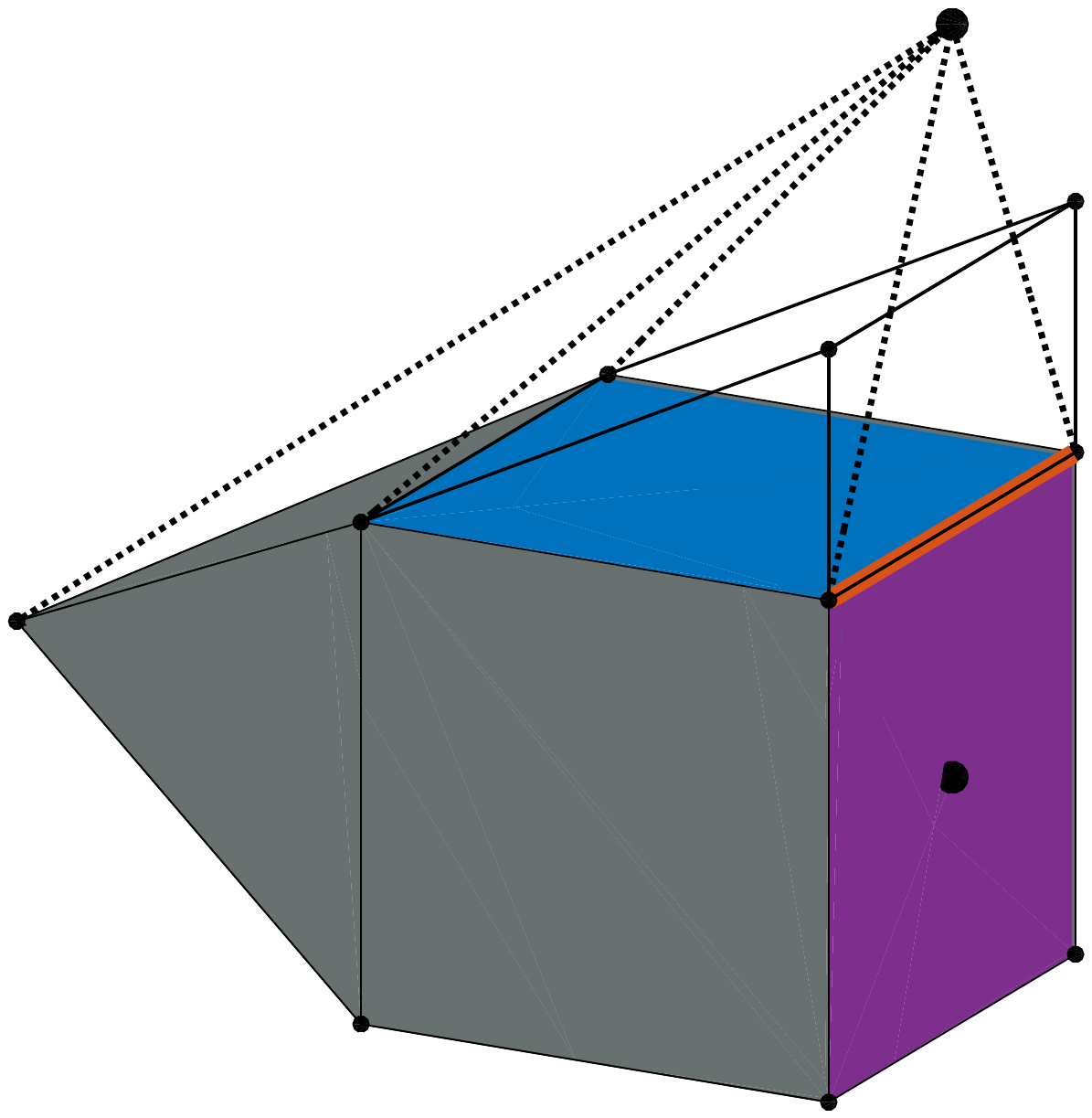}};

	\node[inner sep=0pt] (y) at (c1p.north) [xshift=50pt, yshift=4pt] {$y$};

	\end{tikzpicture}

	\caption{A convex body $\Kc$ with origin on its boundary as well as a point $\vc{y}$ with $\gauge(\vc{y};\Kc)>1$. $\cal{F}(\vc{y})$ is an edge of $\Kc$ in red, $\msh(\vc{y})$ is horizontal facet of $\Kc$ in blue, $\mshbar(\vc{y})$ is the union of $\msh(\vc{y})$ with a vertical facet passing through the origin in purple; $\op{ver}(\cal{F}(\vc{y})) \subsetneq \msh(\vc{y}) \subsetneq \mshbar(\vc{y})$.  \autoref{cond:verKY-notcover} is satisfied for all $\theta\vc{y}$ where $\theta > 1/\gauge(\vc{y};\Kc)$. 
	$\Kc[\backslash\vc{y}]$ is the union of $\Kc$ and the hollow structure above the blue facet. 
	$\gauge(\vc{y}; \Kc[\backslash\vc{y}])>1$, $\msh(\vc{y}) \subsetneq \cal{X}(\vc{y})$, $\cal{X}(\vc{y})\not\subseteq \mshbar(\vc{y})$. For $ \gauge(\vc{y}; \Kc[\backslash\vc{y}]) < 1/\theta < \gauge(\vc{y}; \Kc)$, we have $\op{ver}(\cal{F}(\theta\vc{y})) \subsetneq \msh(\theta\vc{y}) = \cal{X}(\theta\vc{y})\subsetneq \mshbar(\theta\vc{y})$. }
	\label{fig:cube1pyramid}
\end{figure}

\begin{figure}
	\centering
	\begin{tikzpicture}
	\node[inner sep=0pt] (c2p) at (0,0)
	{\includegraphics[clip, trim=4cm 4.8cm 3.3cm 4.7cm, width=.5\textwidth]{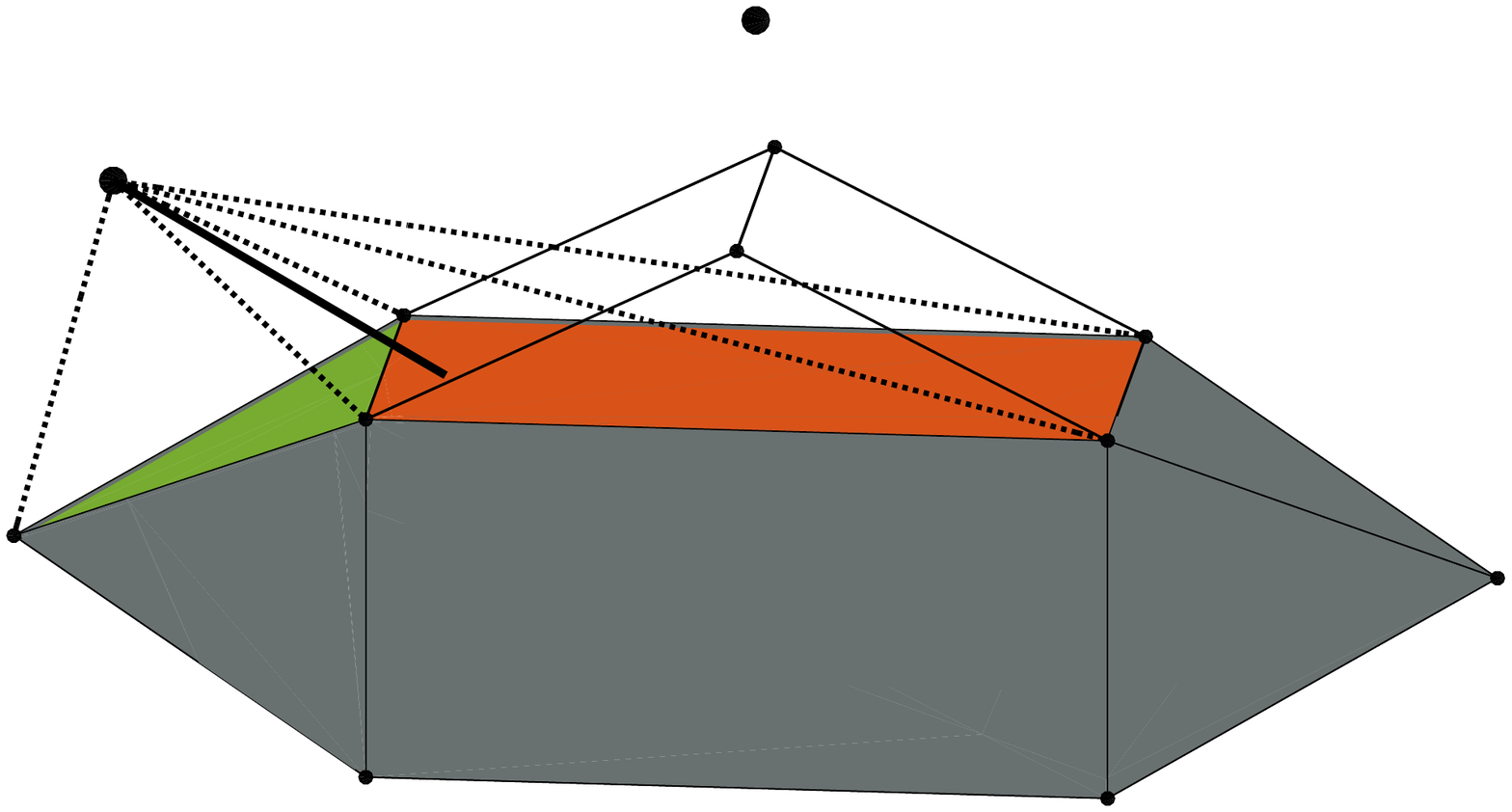}};
	
	\node[inner sep=0pt] (y) at (c2p.north) [xshift=2pt, yshift=-1pt] {$y$};
	\node[inner sep=0pt] (z) at (c2p.north west) [xshift=20pt, yshift=-22pt] {$z$};
	\end{tikzpicture}
	
	\caption{A convex body $\Kc$ symmetric around the origin as well as two points $\vc{y}$ and $\vc{z}$ with $\gauge(\vc{y};\Kc)>1$ and $\gauge(\vc{z};\Kc)>1$. 
	$\cal{F}(\vc{y})=\msh(\vc{y})=\mshbar(\vc{y})
	= \cal{F}(\vc{z})=\msh(\vc{z})=\mshbar(\vc{z})$ is the horizontal facet in red. 
	$\Kc[\backslash\vc{y}]=\Kc[\backslash\vc{z}]$ is the union of $\Kc$ and the hollow structure above the red facet.  
	$\gauge(\vc{z}; \Kc[\backslash\vc{z}])>1$, $\msh(\vc{z}) \subsetneq \cal{X}(\vc{z})\not\subseteq \mshbar(\vc{z})$. Consider lasso with vertices of $\Kc$ as columns of $\mt{X}$, and with $\vc{z}$. For small values of $\eta$, the support of optimal solutions is in the red facet. For large values of $\eta$, the support of optimal solutions is in the green triangular facet. The extreme rays cover vertices of both facets.}
	\label{fig:cube2pyramid}
\end{figure}

\begin{figure}
	\centering
	\begin{tikzpicture}
	\node[inner sep=0pt] (c02p) at (0,0)
	{\includegraphics[clip, trim=4cm 4.8cm 3.3cm 4.7cm, width=.5\textwidth]{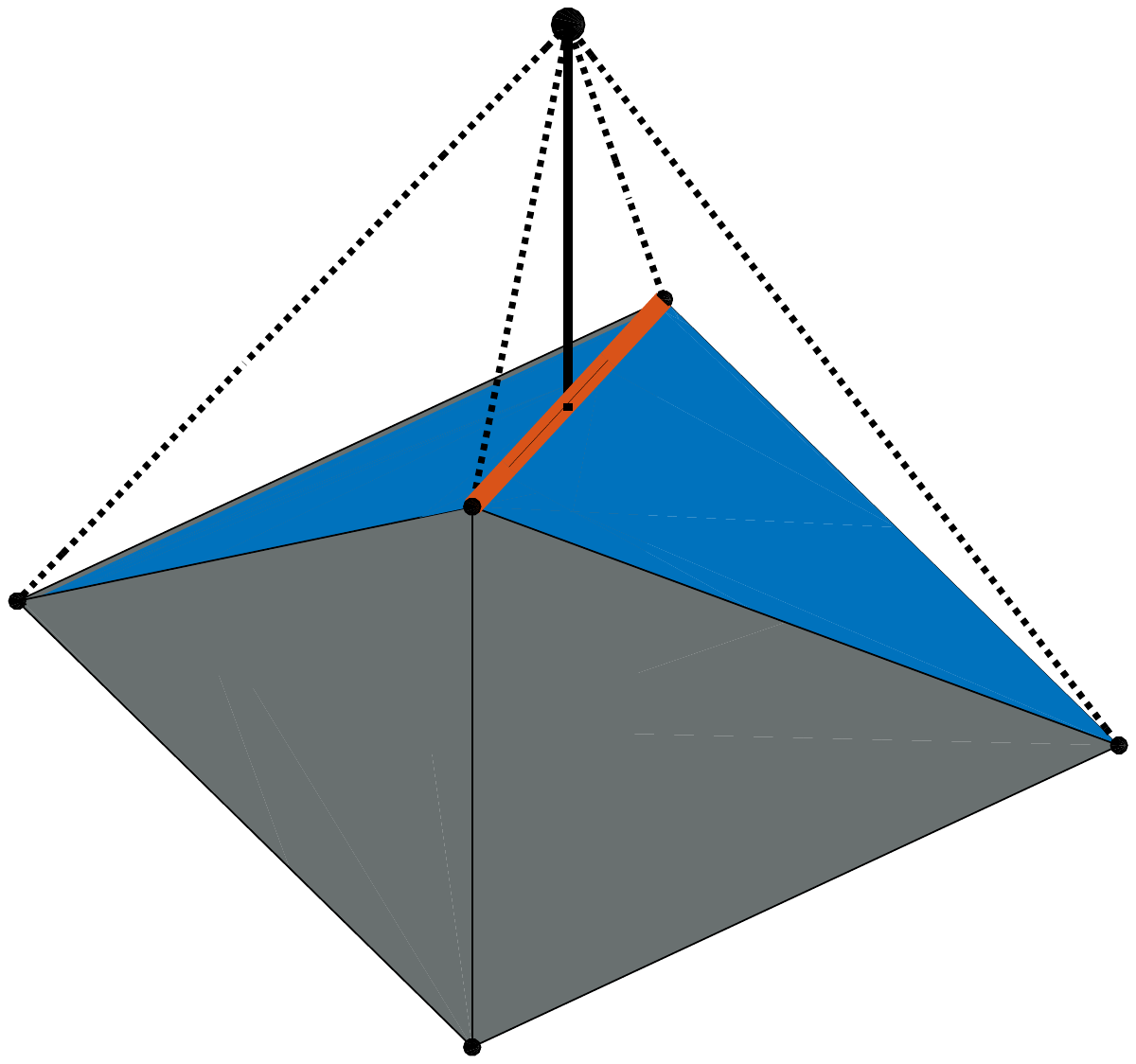}};
	
	\node[inner sep=0pt] (y) at (c02p.north) [xshift=2pt, yshift=0pt] {$y$};
	\end{tikzpicture}
	
	\caption{A convex body $\Kc$ symmetric around the origin as well as a point $\vc{y}$ with $\gauge(\vc{y};\Kc)>1$. 
	$\cal{F}(\vc{y})$ is an edge of $\Kc$ in red, $\msh(\vc{y})=\mshbar(\vc{y})$ is the union of two top facets in blue; $\op{ver}(\cal{F}(\vc{y})) \subsetneq \msh(\vc{y}) = \mshbar(\vc{y})$.  \autoref{cond:verKY-notcover} is satisfied for all $\theta\vc{y}$ where $\theta > 1/\gauge(\vc{y};\Kc)$. 
	$\Kc[\backslash\vc{y}]$ is the upward cone emanating from the bottom vertex of $\Kc$ along the facets of $\Kc$.  
	$\gauge(\vc{y}; \Kc[\backslash\vc{y}]) = 0$, $\msh(\vc{y}) =\mshbar(\vc{y})= \cal{X}(\vc{y})$. In this picture, a small random spherical perturbation of $\vc{y}$ to $\bar{\vc{y}}$ results in, with high probability, $\op{ver}(\cal{F}(\bar{\vc{y}})) = \msh(\bar{\vc{y}}) = \mshbar(\bar{\vc{y}})$ and $\gauge(\bar{\vc{y}}; \Kc[\backslash\bar{\vc{y}}])>1$. 
	 }
	\label{fig:2pyramid}
\end{figure}


\end{document}